\documentclass{article}

\usepackage[top=2.5cm, bottom=2.5cm, left=2.5cm, right=2.5cm]{geometry}
\usepackage[round]{natbib}

\usepackage[utf8]{inputenc}
\usepackage[T1]{fontenc}
\usepackage{lmodern}
\usepackage{url}
\usepackage{booktabs}
\usepackage{amsfonts}
\usepackage{nicefrac}
\usepackage{microtype}
\usepackage{xcolor}
\usepackage{graphicx}
\usepackage{amsmath,amssymb,amsthm,mathtools}
\usepackage{mathrsfs}
\usepackage{bm}
\usepackage{algorithm}
\usepackage{algorithmic}
\usepackage{enumitem}
\usepackage{float}
\usepackage{placeins}
\usepackage{tikz}
\usetikzlibrary{arrows.meta,positioning,calc,fit,backgrounds}
\usepackage{needspace}
\usepackage{multirow}
\usepackage{pgfplots}
\usepgfplotslibrary{groupplots}
\pgfplotsset{compat=1.18}
\usepackage{caption}
\usepackage{pgf}
\usepackage{wrapfig}
\usepackage{tabularx}
\usepackage{colortbl}
\usepackage{array}
\usepackage{bbm}
\usepackage{titletoc}
\usepackage{xspace}
\usepackage{hyperref}
\usepackage{cleveref}
\hypersetup{colorlinks,linkcolor={blue},citecolor={green!50!black},urlcolor={blue}}

\definecolor{cb_orange}{HTML}{E69F00}
\definecolor{cb_blue}{HTML}{56B4E9}
\definecolor{cb_green}{HTML}{009E73}
\definecolor{cb_yellow}{HTML}{F0E442}
\definecolor{cb_dark_blue}{HTML}{0072B2}
\definecolor{cb_dark_orange}{HTML}{D55E00}
\definecolor{cb_pink}{HTML}{CC79A7}
\definecolor{cb_grey}{HTML}{808080}

\newcommand{\R}{\mathbb{R}}

\newcommand{\PA}{\mathrm{Pa}}

\newcommand{\Anc}{\mathrm{Anc}}
\newcommand{\N}{\mathbb{N}}
\newcommand{\cX}{\mathcal{X}}
\newcommand{\cS}{\mathcal S}
\newcommand{\cA}{\mathcal A}

\newcommand{\cV}{\mathcal V}
\newcommand{\cE}{\mathcal E}
\newcommand{\cR}{\mathcal R}
\newcommand{\cO}{\mathcal{O}}

\newcommand{\cD}{\mathcal D}
\newcommand{\cU}{\mathcal U}
\newcommand{\cH}{\mathcal H}
\newcommand{\cN}{\mathcal N}

\newcommand{\cG}{\mathcal G}

\DeclareMathOperator{\Var}{Var}
\DeclareMathOperator{\Cov}{Cov}

\DeclareMathOperator{\tr}{tr}
\DeclareMathOperator{\GP}{GP}
\newcommand{\bF}{\mathbf{F}}

\newcommand{\bX}{\mathbf{X}}
\newcommand{\bx}{\mathbf{x}}

\newcommand{\bY}{\mathbf{Y}}

\newcommand{\bh}{\mathbf{h}}

\newcommand{\bZ}{\mathbf{Z}}

\newcommand{\PP}{\mathbb P}
\newcommand{\EE}{\mathbb E}
\newcommand{\cF}{\mathcal F}
\newcommand{\bU}{\mathbf{U}}
\providecommand{\CH}{\operatorname{Ch}}
\providecommand{\Desc}{\operatorname{Desc}}

\newtheorem{proposition}{Proposition}
\newtheorem{theorem}{Theorem}
\newtheorem{lemma}{Lemma}
\newtheorem{assumption}{Assumption}
\newtheorem{corollary}{Corollary}
\newtheorem{remark}{Remark}
\theoremstyle{definition}
\newtheorem{definition}[theorem]{Definition}

\newenvironment{ack}{\section*{Acknowledgments}}{}

\title{Deep Gaussian Processes on Directed Acyclic Graphs}
\date{}

\author{
Federico L. Perlino\textsuperscript{1}
\quad Oliver Hamelijnck\textsuperscript{2}
\quad Adam M. Johansen\textsuperscript{1}
\quad Theodoros Damoulas\textsuperscript{3,1,2}
\\[2mm]
\textsuperscript{1}Department of Statistics, University of Warwick\\
\textsuperscript{2}AI Team, Unilink Software Ltd\\
\textsuperscript{3}Department of Computer Science, University of Warwick
}

\begin{document}

\maketitle

\begin{abstract}
Many real-world processes can be represented as compositions of functions along a directed acyclic graph (DAG). In causal modelling, these correspond to the underlying mechanisms; in engineering, to multiple fidelity levels; and in gene-regulatory networks, to transcription factors. These functions are partially observed across the DAG, with noisy and heterogeneously sampled measurements, posing significant challenges for reconstruction, uncertainty propagation, and inference. To tackle these challenges, we place priors over functions and naturally arrive at Deep Gaussian Processes over DAGs. We theoretically study their prior-collapse behaviour, and the effect of graph topology and intermediate observations on the preservation of information. We obtain almost-sure lower bounds on the asymptotic frequency of depths at which the distinction between inputs is preserved, identify broad kernel classes for which these hold, and prove an observation by \cite{dunlop2018} on the role of input connections. We offer a structured variational approximation that retains graph dependencies, preserves compositional uncertainty, and captures the explaining-away behaviour of colliders. Finally, we empirically validate our theoretical results and our methodology, and model a latent-collider DAG, a protein signalling network, and a multi-fidelity heavy-ion collision emulation task, attaining state-of-the-art performance while recovering low-fidelity contributions and yielding interpretability over the simulator hierarchy.
\end{abstract}

\section{Introduction}

Various phenomena across the sciences, and beyond, can be represented as
compositions of interdependent latent functions along a Directed Acyclic Graph
(DAG). In probabilistic modelling, the DAG encodes conditional dependencies
between quantities of interest, although the interpretation of these
dependencies varies by setting. In causal models, it represents mechanistic
relations \citep{spirtes2001causation,pearl2009}; in multi-fidelity modelling across
physics and engineering \citep{fernandezgodino2023}, it encodes dependencies
between information sources of different fidelity
\citep{kennedy2000,perdikaris2017,ji2024}; and in systems biology it is
used to describe regulatory links between transcription factors in
gene-regulatory networks \citep{friedman2004}. The graph itself may be
elicited from expert knowledge \citep{delSagrado2003,ohagan2006}, derived
from mechanistic constraints and the natural directionality of the phenomenon
\citep{boudali2005}, or inferred from data through structure discovery
\citep{heckerman1995}. More broadly, DAGs are often formulated by scientists
as explicit representations of their hypotheses \citep{greenland1999}.

In such DAG settings, we rarely have complete, noise-free observations at every node. Data may be available only at a subset of nodes, at varying sample sizes and resolutions, and are often affected by missingness, measurement error, or model discrepancy \citep{little2019,caroll2006,kennedy2001}. Inference then becomes a coupled inverse problem, in which DAG-dependent latent functions at different nodes must be jointly recovered from indirect, heterogeneous evidence. Furthermore, when an observed downstream quantity can be explained by several upstream functions, evidence for one explanation changes the posterior plausibility of the others, which is the classical \textit{explaining-away} effect \citep{pearl1988,lauritzen1996}. At the same time, latent quantities that are never directly observed are typically weakly or non-identifiable \citep{raue2009,allman2009}, and hence collapsing onto any single explanation misrepresents the uncertainty in the system \citep{ustyuzhaninov2020}.

A natural modelling response to these challenges is to place Gaussian Process
priors \citep[e.g.,][]{rasmussen2006gpml} over the latent functions composing
the DAG, endowing each node with a principled probabilistic representation. 
Deep Gaussian Processes (DGPs) are hierarchical compositions of GP mappings
\citep{lawrence2007,damianou2013}. In the standard chain setting, such compositions have been shown to improve contraction rates for compositional targets \citep{finocchio2023,giordano2022}, yet deep GP priors may also collapse \citep{neal1995,duvenaud2014,dunlop2018,tong2021}, failing to preserve information with depth and motivating input (or skip) connections as a practical remedy \citep{neal1995,duvenaud2014}. 
DGPs therefore provide a natural, but delicate, language for compositional
probabilistic modelling. 
We take this viewpoint to define
DGPs on DAGs (DAG-DGPs), where the inductive bias of the system is directly
reflected in the architecture. Compared with a standard chain DGP, a DAG-DGP
exposes two modelling choices that are central in scientific applications.
First, a node may have several parents, so the kernel at that node must
specify how parent contributions are fused. Second,
observations may be available at arbitrary internal nodes, so the model must
propagate uncertainty through the graph while using intermediate measurements
to anchor internal representations.

This viewpoint is related to specialised multi-fidelity and information-fusion models, where lower-fidelity or intermediate simulator outputs are propagated as uncertain inputs to improve high-fidelity emulation \citep{perdikaris2017,cutajar2019}, and to recent graphical multi-fidelity emulators that organise such dependencies over directed trees \citep{ji2024}. These are designed for specific DAG topologies with nested data \citep{LeGratietGarnier2014}, leaving joint latent-function inference and uncertainty composition across a general DAG largely open.

DAGs carry a rich structure, which we exploit by modelling them directly rather than using layerisation, a non-trivial graph-drawing problem utilising dummy nodes to preserve dependencies \citep{sugiyama1981,harrigan2006}.

Posterior inference over the DAG composition of functions is challenging as it requires marginalisation over intermediate latent functions. 

Even in simpler chain DGP settings this difficulty has motivated a large literature on approximate inference, including variational, expectation-propagation, sampling, and doubly stochastic methods \citep{damianou2013,hensman2014,dai2016,bui2016,salimbeni2017,havasi2018,salimbeni2019}. In a DAG the challenge goes beyond layer-wise uncertainty propagation: observed colliders and descendants induce posterior dependence between a priori independent branches, precisely what is needed to represent \textit{explaining-away}. Structured approximations for DGPs have shown that richer posterior dependence is important for calibrated uncertainty and compositional ambiguity \citep{ustyuzhaninov2020,lindinger2020,ober2021}, yet existing constructions target only chain architectures or cross-layer dependence. This motivates tractable, structured variational inference that preserves posterior dependencies across the DAG. We make the following contributions:

\begin{itemize}
\item \textbf{DAG-DGP framework.}
We formulate the first unified DGP framework on DAGs, with noisy observations at arbitrary nodes, 
recovering chain DGPs, multi-fidelity DGPs, and graphical multi-fidelity
emulators as special cases.
\item \textbf{Structured variational inference for DAG-DGPs.}
We offer a structured variational approximation for DAG-DGPs retaining compositional uncertainty \citep{ustyuzhaninov2020} and explaining-away, recovering \citep[Sec.~4.1]{ustyuzhaninov2020}
and our own extension of \cite{salimbeni2017} to DAGs as special cases.
\item \textbf{Theoretical results on DAG-DGPs.} 
We offer lower bounds on the frequency of DAG depths for avoiding prior-collapse and characterise how graph structure, kernel families, and intermediate
observations affect information propagation. We provide explicit lower bounds for bounded-curvature exponential family-observation models.
\item \textbf{Theoretical results on standard DGPs (i.e. chains).} 
We prove non-collapse for the
input-connected chain DGP setting considered by \citep{dunlop2018} and offer the first theoretical account of standard DGPs with intermediate observations.

\item \textbf{Empirical validation and scientific applications.} We 
validate our theory, and demonstrate explaining-away in collider DAGs, while assessing performance, recovery, and scalability on protein-signalling \citep{sachs2005} and multi-fidelity heavy-ion collision  \citep{ji2024}.
\end{itemize}

\section{Deep Gaussian Processes: from Chains to Directed Acyclic Graphs}\label{sec:dag-development}
We begin by reviewing a chain DGP that organises latent variables along a total order of
\(L\) layers, i.e.,
\begin{equation}\label{eq:dgp-recursion}
    F_0(x) = x,\qquad
  F_\ell(x) = f_\ell(F_{\ell-1}(x)),
  \qquad
  f_\ell\sim\GP(0,K_\ell),
  \qquad \ell=1,\ldots,L,
\end{equation}
where $x \in \cX$ is an input case  (e.g., Fig~\ref{fig:dag-precision-overview}(a)). Each latent layer receives a single latent input. The kernel \(K_\ell\) is
therefore defined on the state space of layer \(\ell-1\), and no node has multiple parents.
The data enter separately from this latent recursion. In the standard
supervised formulation, the dataset is \(\cD=(\bX,\bY)\), and observations are linked to the
final layer through an observational distribution \(p(\bY\mid\bF_L)\).
Intermediate observations with their own distributions have been included in specialised models, notably
multi-fidelity DGPs \citep{cutajar2019}, but their placement is then tied to the fidelity
ordering. Despite being flexible for modelling hierarchical structures, the construction
in Eq.~\eqref{eq:dgp-recursion} can suffer from prior collapse
\citep{neal1995,duvenaud2014}, a problem we address in Section~\ref{sec:theory}.  Scalable DGP inference typically augments each layer with inducing variables
and optimises a variational ELBO using Monte Carlo propagation through the
latent GP conditionals \citep{hensman2014,salimbeni2017}. Using Markov chain Monte Carlo can provide fully Bayesian inference, but typically at
substantially higher computational cost \citep{havasi2018,sauer2023}.
\begin{figure}[t]
\centering
\resizebox{\linewidth}{!}{%
\begin{tikzpicture}[font=\small]
\definecolor{cb_orange}{HTML}{E69F00}
\definecolor{cb_blue}{HTML}{56B4E9}
\definecolor{cb_green}{HTML}{009E73}
\definecolor{cb_yellow}{HTML}{F0E442}
\definecolor{cb_darkblue}{HTML}{0072B2}
\definecolor{cb_darkorange}{HTML}{D55E00}
\definecolor{cb_pink}{HTML}{CC79A7}
\definecolor{cb_grey}{HTML}{808080}
\definecolor{cb_black}{HTML}{000000}

\tikzset{
    >=Latex,
    border/.style    ={draw=black!65, line width=0.65pt},
    nzblock/.style   ={fill=black!10, draw=black!35, line width=0.22pt},
    clique/.style    ={fill opacity=1.0, draw=none},
    sepborder/.style ={draw=cb_black, line width=0.7pt},
    clqa/.style={clique, fill=cb_darkblue},
    clqb/.style={clique, fill=cb_orange},
    clqc/.style={clique, fill=cb_green},
    clqd/.style={clique, fill=cb_pink},
    clqe/.style={clique, fill=cb_darkorange},
    clqf/.style={clique, fill=cb_blue},
    clqg/.style={clique, fill=cb_yellow},
    clqh/.style={clique, fill=cb_grey},
    clqi/.style={clique, fill=cb_black},
    dagnode/.style={
        circle,
        draw=black!70,
        fill=black!3,
        minimum size=3.6mm,
        inner sep=0pt,
        line width=0.55pt
    },
    obsnode/.style={
        rectangle,
        draw=black!70,
        fill=black!3,
        minimum size=3.4mm,
        inner sep=0pt,
        line width=0.55pt
    },
    dagedge/.style={
        -{Latex[length=1.6mm,width=1.2mm]},
        draw=black!70,
        line width=0.6pt,
        shorten >=0.5pt,
        shorten <=0.5pt
    },
    moraledge/.style={
        dashed,
        draw=black!55,
        line width=0.6pt
    },
    chordedge/.style={
        densely dotted,
        draw=black,
        line width=0.9pt
    }
}

\newcommand{\cliquecells}[2]{%
  \foreach \ci in {#2}{\foreach \cj in {#2}{%
    \path[#1] (\cj-1,-\ci+1) rectangle ++(1,-1);}}%
}
\newcommand{\cliqueresidual}[3]{%
  \foreach \ci in {#2}{\foreach \cj in {#3}{%
    \path[#1] (\cj-1,-\ci+1) rectangle ++(1,-1);
    \path[#1] (\ci-1,-\cj+1) rectangle ++(1,-1);}}%
}
\newcommand{\sepcells}[1]{%
  \foreach \i/\j in {#1}{%
    \draw[sepborder] (\j-1,-\i+1) rectangle ++(1,-1);}%
}

% ==================================================
% TOP ROW: DAGs
% ==================================================

% --- Panel 1: Chain (4 nodes) ---
\begin{scope}[xshift=0cm, yshift=0.8cm]
\node[dagnode] (a1) at (0.00,1.0) {};
\node[dagnode] (a2) at (0.80,1.0) {};
\node[dagnode] (a3) at (1.60,1.0) {};
\node[obsnode] (a4) at (2.40,1.0) {};
\draw[dagedge] (a1) -- (a2);
\draw[dagedge] (a2) -- (a3);
\draw[dagedge] (a3) -- (a4);
\end{scope}

% --- Panel 2: Disjoint routes ---
\begin{scope}[xshift=3.9cm, yshift=0.8cm]
\node[dagnode] (b1) at (0.45,2.0) {};
\node[dagnode] (b2) at (1.95,2.0) {};
\node[dagnode] (b3) at (0.45,1.0) {};
\node[dagnode] (b4) at (1.95,1.0) {};
\node[obsnode] (b5) at (0.45,0.0) {};
\node[obsnode] (b6) at (1.95,0.0) {};
\draw[dagedge] (b1) -- (b3);
\draw[dagedge] (b3) -- (b5);
\draw[dagedge] (b2) -- (b4);
\draw[dagedge] (b4) -- (b6);
\draw[dagedge] (b1) -- (b6);
\draw[dagedge] (b2) -- (b5);
\draw[moraledge] (b1) -- (b4);
\draw[moraledge] (b2) -- (b3);
\draw[chordedge] (b3) -- (b4);
\end{scope}

% --- Panel 3: V-structure + branching ---
\begin{scope}[xshift=7.8cm, yshift=0.8cm]
\node[dagnode] (c1) at (0.45,2.0) {};
\node[dagnode] (c2) at (1.95,2.0) {};
\node[dagnode] (c3) at (1.20,1.0) {};
\node[obsnode] (c4) at (0.40,0.0) {};
\node[obsnode] (c5) at (1.20,0.0) {};
\node[obsnode] (c6) at (2.00,0.0) {};
\draw[dagedge] (c1) -- (c3);
\draw[dagedge] (c2) -- (c3);
\draw[dagedge] (c3) -- (c4);
\draw[dagedge] (c3) -- (c5);
\draw[dagedge] (c3) -- (c6);
\draw[moraledge] (c1) -- (c2);
\end{scope}

% --- Panel 4: Layered DAG ---
\begin{scope}[xshift=12.35cm, yshift=0.8cm]
\node[dagnode] (d1)  at (0.0,2.0) {};
\node[dagnode] (d2)  at (0.8,2.0) {};
\node[dagnode] (d3)  at (1.6,2.0) {};
\node[dagnode] (d4)  at (2.4,2.0) {};
\node[dagnode] (d5)  at (0.4,1.0) {};
\node[dagnode] (d6)  at (1.2,1.0) {};
\node[dagnode] (d7)  at (2.0,1.0) {};
\node[obsnode] (d8)  at (0.0,0.0) {};
\node[obsnode] (d9)  at (0.8,0.0) {};
\node[obsnode] (d10) at (1.6,0.0) {};
\node[obsnode] (d11) at (2.4,0.0) {};
\draw[dagedge] (d1) -- (d5);
\draw[dagedge] (d2) -- (d5);
\draw[dagedge] (d2) -- (d6);
\draw[dagedge] (d3) -- (d6);
\draw[dagedge] (d3) -- (d7);
\draw[dagedge] (d4) -- (d7);
\draw[dagedge] (d5) -- (d8);
\draw[dagedge] (d5) -- (d9);
\draw[dagedge] (d6) -- (d9);
\draw[dagedge] (d6) -- (d10);
\draw[dagedge] (d7) -- (d10);
\draw[dagedge] (d7) -- (d11);
\draw[moraledge] (d1) -- (d2);
\draw[moraledge] (d2) -- (d3);
\draw[moraledge] (d3) -- (d4);
\draw[moraledge] (d5) -- (d6);
\draw[moraledge] (d6) -- (d7);
\end{scope}

% --- Column labels in the existing gap ---
\node[font=\scriptsize\bfseries, inner sep=0pt] at (1.20, 0.50) {(a)};
\node[font=\scriptsize\bfseries, inner sep=0pt] at (5.10, 0.50) {(b)};
\node[font=\scriptsize\bfseries, inner sep=0pt] at (9.00, 0.50) {(c)};
\node[font=\scriptsize\bfseries, inner sep=0pt] at (13.55,0.50) {(d)};

% ==================================================
% BOTTOM ROW: Precision matrices
% ==================================================

% --- Panel 1 precision: chain (4x4) ---
\begin{scope}[xshift=0cm, x=0.6cm, y=0.6cm]
\node[anchor=south, font=\scriptsize] at (0.5,0.14) {$U_1$};
\node[anchor=south, font=\scriptsize] at (3.5,0.14) {$U_4$};
\node[anchor=east, font=\scriptsize] at (-0.10,-0.5) {$U_1$};
\node[anchor=east, font=\scriptsize] at (-0.10,-3.5) {$U_4$};

\foreach \i/\j in {
    1/1,1/2, 2/1,2/2,2/3, 3/2,3/3,3/4, 4/3,4/4
}{\fill[nzblock] (\j-1,-\i+1) rectangle ++(1,-1);}

\cliquecells{clqa}{1,2}
\cliqueresidual{clqb}{2,3}{3}
\cliqueresidual{clqc}{3,4}{4}

\draw[border] (0,0) rectangle (4,-4);
\sepcells{2/2, 3/3}
\end{scope}

% --- Panel 2 precision: disjoint routes ---
\begin{scope}[xshift=3.9cm, x=0.40cm, y=0.40cm]
\node[anchor=south, font=\scriptsize] at (0.5,0.14) {$U_1$};
\node[anchor=south, font=\scriptsize] at (3.0,0.14) {$\cdots$};
\node[anchor=south, font=\scriptsize] at (5.5,0.14) {$U_6$};
\node[anchor=east, font=\scriptsize] at (-0.10,-0.5) {$U_1$};
\node[anchor=east, font=\scriptsize] at (-0.10,-3.0) {$\vdots$};
\node[anchor=east, font=\scriptsize] at (-0.10,-5.5) {$U_6$};

\foreach \i/\j in {
    1/1,1/3,1/4,1/6,
    2/2,2/3,2/4,2/5,
    3/1,3/2,3/3,3/4,3/5,
    4/1,4/2,4/3,4/4,4/6,
    5/2,5/3,5/5,
    6/1,6/4,6/6
}{\fill[nzblock] (\j-1,-\i+1) rectangle ++(1,-1);}

\cliquecells{clqa}{1,3,4}
\cliqueresidual{clqb}{1,4,6}{6}
\cliqueresidual{clqc}{2,3,4}{2}
\cliqueresidual{clqd}{2,3,5}{5}

\draw[border] (0,0) rectangle (6,-6);
\sepcells{
    1/1, 1/4, 4/1, 4/4,
    3/3, 3/4, 4/3,
    2/2, 2/3, 3/2
}
\end{scope}

% --- Panel 3 precision: V-structure + branching ---
\begin{scope}[xshift=7.8cm, x=0.40cm, y=0.40cm]
\node[anchor=south, font=\scriptsize] at (0.5,0.14) {$U_1$};
\node[anchor=south, font=\scriptsize] at (3.0,0.14) {$\cdots$};
\node[anchor=south, font=\scriptsize] at (5.5,0.14) {$U_6$};
\node[anchor=east, font=\scriptsize] at (-0.10,-0.5) {$U_1$};
\node[anchor=east, font=\scriptsize] at (-0.10,-3.0) {$\vdots$};
\node[anchor=east, font=\scriptsize] at (-0.10,-5.5) {$U_6$};

\foreach \i/\j in {
    1/1,1/2,1/3,
    2/1,2/2,2/3,
    3/1,3/2,3/3,3/4,3/5,3/6,
    4/3,4/4,
    5/3,5/5,
    6/3,6/6
}{\fill[nzblock] (\j-1,-\i+1) rectangle ++(1,-1);}

\cliquecells{clqa}{1,2,3}
\cliqueresidual{clqb}{3,4}{4}
\cliqueresidual{clqc}{3,5}{5}
\cliqueresidual{clqd}{3,6}{6}

\draw[border] (0,0) rectangle (6,-6);
\sepcells{3/3}
\end{scope}

% --- Panel 4 precision: layered DAG ---
\begin{scope}[xshift=12.35cm, x=0.22cm, y=0.22cm]
\node[anchor=south, font=\scriptsize] at (0.5,0.14) {$U_1$};
\node[anchor=south, font=\scriptsize] at (5.5,0.14) {$\cdots$};
\node[anchor=south, font=\scriptsize] at (10.5,0.14) {$U_{11}$};
\node[anchor=east, font=\scriptsize] at (-0.12,-0.5) {$U_1$};
\node[anchor=east, font=\scriptsize] at (-0.12,-5.5) {$\vdots$};
\node[anchor=east, font=\scriptsize] at (-0.12,-10.5) {$U_{11}$};

\foreach \i/\j in {
    1/1,1/2,1/5,
    2/1,2/2,2/3,2/5,2/6,
    3/2,3/3,3/4,3/6,3/7,
    4/3,4/4,4/7,
    5/1,5/2,5/5,5/6,5/8,5/9,
    6/2,6/3,6/5,6/6,6/7,6/9,6/10,
    7/3,7/4,7/6,7/7,7/10,7/11,
    8/5,8/8,
    9/5,9/6,9/9,
    10/6,10/7,10/10,
    11/7,11/11
}{\fill[nzblock] (\j-1,-\i+1) rectangle ++(1,-1);}

\cliquecells{clqa}{1,2,5}
\cliqueresidual{clqb}{2,5,6}{6}
\cliqueresidual{clqc}{2,3,6}{3}
\cliqueresidual{clqd}{3,6,7}{7}
\cliqueresidual{clqe}{3,4,7}{4}
\cliqueresidual{clqf}{5,6,9}{9}
\cliqueresidual{clqg}{6,7,10}{10}
\cliqueresidual{clqh}{5,8}{8}
\cliqueresidual{clqi}{7,11}{11}

\draw[border] (0,0) rectangle (11,-11);
\sepcells{
    2/2, 2/5, 5/2, 5/5,
    2/6, 6/2, 6/6,
    3/3, 3/6, 6/3,
    3/7, 7/3, 7/7,
    5/6, 6/5,
    6/7, 7/6
}
\end{scope}

\end{tikzpicture}%
}
\caption{
DAGs discussed in Secs.~\ref{sec:dag-development},~\ref{sec:dag-dsvi}, 
and~\ref{sec:theory} (top row) and the corresponding block sparsity 
patterns of the structured precision matrix \(\bm{\Lambda}\) (bottom row): 
chain DGP, disjoint routes (Thm.~\ref{def:disjoint-routes}), V-structure 
with branching, and a three-layered DAG. Circles denote latent nodes and 
squares nodes with partial observations. With observations placed at the 
terminal nodes, each DAG coincides with its moralised ancestral graph, so 
every inducing block contributes to \(\bm{\Lambda}\). Dashed grey edges 
denote moralisation (co-parents joined within each V-structure) and dotted 
black edges denote chordal fill-ins added to obtain the chordal completion 
\(\cH\). Colours identify maximal cliques of \(\cH\), with black outlines 
marking separator blocks.
}
\label{fig:dag-precision-overview}
\end{figure}

Motivated by the need to model compositional functions whose dependencies are
specified by a given DAG, we develop a DAG-DGP architecture that propagates
information along the graph-induced partial order, accommodates
multi-parent dependencies through nodewise fusion rules, and incorporates
heterogeneous observations within a single compositional model.
\label{sec:dagdgp}
\paragraph{Setup.}
Let \(\mathcal G=(\mathcal V,\mathcal E)\) be a DAG (with vertices $\mathcal{V}$ and edges $\mathcal{E}$) with roots \(\mathcal R\), non-roots \(\mathcal U=\mathcal V\setminus\mathcal R\). Denote with \(\PA(w)\) the parent set of \(w\in\mathcal U\).
For each node \(w\), let \(\bF_w\in\mathbb R^{n\times d_w}\) collect the latent values of the \(n\) observations, with row \(\smash{F_w^{(i)}\in\mathbb R^{d_w}}\).
Roots \(r\in\cR\) are supplied with design matrices \(\bX_r\in\mathbb R^{n\times d_r}\), encoding deterministic inputs such as covariates, spatial locations, or time indices, and we set \(\bF_r=\bX_r\).
For notational simplicity, observations are indexed over the same set as the corresponding latent values, so a non-root node \(w\) may have response \(\bY_w\in\mathbb R^{n\times d_w}\) along with an observation mask \(\cO_w\subseteq[n]\times[d_w]\).
Only entries in \(\cO_w\) enter the observational distribution, while the remaining entries are treated as missing.
We let \(\cD=\smash{\bigl(\{\bX_r\}_{r\in\cR},\,\{(\bY_w,\cO_w)\}_{w\in\cU}\bigr)}\) denote the full dataset.
\paragraph{DAG-DGP prior.}
Each non-root latent node takes as input the latent values of its parents and therefore operates on the product of the parent latent spaces.
Concretely, for each \(w\in\cU\) we introduce a function \(f_w\colon\prod_{p\in\PA(w)}\R^{d_p}\to\R^{d_w}\) with prior \(f_w\sim\GP(0,K_w)\), where \(K_w\) is defined on the corresponding product space.
The DAG-DGP prior is thus given by the recursion
\begin{equation}
  F_w^{(i)} = f_w\!\bigl(
    \{F_p^{(i)}\}_{p\in\PA(w)}
  \bigr),
  \qquad w\in\cU,\; i\in[n].
  \label{eq:dagdgp-recursion}
\end{equation}
An additive Gaussian innovation at each node can equivalently be absorbed into \(K_w\)~\citep{salimbeni2017} and is omitted throughout.
Since the nodewise GP modules are mutually independent a priori, the joint
prior over the latent nodes factorises over the DAG into a product of Gaussian
conditionals, one per non-root node given its parents; these factors are
precisely those appearing in the posterior of Eq.~\eqref{eq:dagdgp-posterior}.
 
\paragraph{Fusion kernels.} 
In DAGs nodes may have multiple parents (see, e.g., Fig. \ref{fig:dag-precision-overview}), so the way their
contributions are combined is a modelling choice that shapes the inductive
bias of local latent evaluations. We encode this choice through a node-wise fusion rule that directly captures parent contributions and goes beyond the typical concatenations employed in Gaussian process networks (GPNs) \citep{friedman2000, giudice2023, kiroriwal2025bayesian}. The kernel \(K_w\) at node \(w\) is \(\R^{d_w\times d_w}\)-valued positive-semidefinite 
on \(\smash{\prod_{p\in\PA(w)}\R^{d_p}}\). Assigning to each
parent \(p\in\PA(w)\) a kernel
\(\smash{K_w^{(p)}\colon\R^{d_p}\times\R^{d_p}\to\R^{d_w\times d_w}}\)
capturing its isolated contribution, and setting
\begin{equation}
  K_w
  \;:=\;
  \Phi_w\!\Bigl(\bigl\{
    K_w^{(p)}
  \bigr\}_{p\in\PA(w)}\Bigr),
  \label{eq:fusion-kernel}
\end{equation}
where the fusion rule \(\Phi_w\) returns a valid
kernel on the full parent space is a convenient way to achieve this.
The fusion rule determines whether parent effects
enter independently, interact, or gate one another, and can vary
from node to node to reflect heterogeneous domain knowledge across
the graph. Natural instances include additive fusion, which sums the
per-parent contributions and treats them as independent effects \citep{duvenaud2011}; 
product fusion, which multiplies per-parent contributions and thereby allows the effect of each parent to be modulated by the others; and ANOVA-type fusion, which supplements the additive main
effects with explicit pairwise interaction terms \citep{AlvarezRosascoLawrence2012KernelsVectorValuedReview}. More specialised fusion mechanisms can encode domain-specific structure, as in
multifidelity models \citep{perdikaris2017, cutajar2019,ji2024}. 

\paragraph{Intermediate observations.}
Heterogeneous observations are incorporated locally at the nodes where they
are available. For each non-root node \(w\in\cU\) with observations, we
specify a nodewise conditional distribution
\(p_w(\bY_w\mid\bF_w,\cO_w)\) for the observed entries given the latent evaluations matrix
\(\bF_w\).
For nodes without observations, the corresponding factor is set to one as they carry no information.
These nodewise factors anchor internal latent nodes wherever data exist. Thus, the
posterior is
\begin{equation}
  p\,\!\bigl(\{\bF_w\}_{w\in\cU}
    \mid \cD\bigr)
  \;\propto\; \smash{
  \prod_{w\in\cU}
  \left[
    p_0\!\bigl(\bF_w\mid\{\bF_p\}_{p\in\PA(w)}\bigr)\,
    p_w(\bY_w\mid\bF_w,\cO_w)
  \right].}
  \label{eq:dagdgp-posterior}
\end{equation}

\section{Structured Variational Inference for DAG-DGPs}\label{sec:dag-dsvi}
Marginalising over the latent hierarchies makes the posterior in
Eq.~\eqref{eq:dagdgp-posterior} computationally challenging, as in standard
DGPs but over a more complex object. We introduce two doubly stochastic
variational families for DAG-DGPs. The first, DAG-VI, is a mean-field inducing
posterior: a scalable DAG adaptation of \cite{salimbeni2017}. The
second, DAG-SVI, retains more posterior dependencies and is more expressive. The first family is a special case of the second. For each non-root node \(w\), consider $M_w$ inducing
locations \(\bZ_w\) in the input space of each latent, and the corresponding
inducing values \(\bU_w=f_w(\bZ_w)\); write \(\bF\) and \(\bU\) for the
collections of latent and inducing values. Following
\cite{salimbeni2017}, we define:
\begin{equation}
q(\bF,\bU)
:= 
q(\bU)
\prod_{w\in\cU}
p_0\!\left(
\bF_w
\mid
\{\bF_p\}_{p\in\PA(w)},\bU_w
\right).
\label{eq:dag-dsvi-family}
\end{equation}
To complete the specification, we choose \(q(\bU)\) so as to retain the
posterior couplings most directly informed by the evidence. Since only nodes
\(w\) with \(\cO_w\neq\emptyset\) contribute observational terms, we restrict
attention to their ancestral graph. The exact latent-state posterior is Markov
with respect to its moralized graph \citep[e.g.,][]{lauritzen1996}, in which
co-parents of a common child become adjacent. For example, in the collider in
Fig.~\ref{fig:dag-precision-overview}(c), conditioning on an observed
descendant of the child induces posterior dependence between the co-parents, a
phenomenon known as \emph{explaining away}. This moralized graph captures the
posterior couplings closest to the observational distributions, but it need
not be decomposable. We therefore pass to a chordal completion \(\cH\), which
admits a clique-separator Gaussian representation and sparse Cholesky
elimination \citep{lauritzen1996,rue2005}.

Let \(C_1,\dots,C_k\) denote the maximal cliques of \(\cH\), with separators
\(S_i := C_i\cap(C_1\cup\dots\cup C_{i-1})\). Decomposability of \(\cH\) then
yields the clique factorization \citep[Eq.~(2)]{green2013}
\begin{equation}
q_\cH(\bU)
\;=\;
\prod_{i=1}^{k} q\bigl(\bU_{C_i\setminus S_i}\mid\bU_{S_i}\bigr),
\label{eq:clique-factorisation}
\end{equation}
in which each factor is a conditional Gaussian on the clique residual given its separator. Equivalently, \(q_\cH\) is jointly Gaussian \(\mathcal N(\bm m,\bm\Sigma)\) with precision \(\bm\Lambda=\bm\Sigma^{-1}\) supported on \(\cH\); the maximal cliques in Fig.~\ref{fig:dag-precision-overview} are the dense blocks of \(\bm\Lambda\). Inducing blocks lying in a common clique are freely correlated, so explaining-away between co-parents and the couplings induced along ancestral paths to observed nodes are preserved; blocks that share no clique are factorised out, dropping in particular dependencies between disjoint branches of the DAG with no common observed descendant. In the standard chain, \(\cH\) is a path, so DAG-SVI 
recovers the block-tridiagonal precision family of
\citet[Sec.~4.1]{ustyuzhaninov2020}.

The resulting ELBO for our model is, where \(q_\cH(\bF_w)\) is the marginal induced by \(q_\cH(\bU)\),
\begin{equation}
\mathcal L_\cH
=
\sum_{w:\,\cO_w\neq\emptyset}
\mathbb E_{q_\cH(\bF_w)}
\left[
\log p_w(\bY_w\mid\bF_w,\cO_w)
\right]
-
\operatorname{KL}
\left(
q_\cH(\bU)
\,\middle\|\,
\prod_{w\in\cU}p_0(\bU_w)
\right),
\label{eq:structured-elbo}
\end{equation}
and the expectation is estimated with mini-batching by ancestral Monte Carlo over the latent DAG in topological order, while the KL term is analytic.
Imposing the stronger restriction \(\bm\Lambda_{vw}=\bm 0\) for all \(v\neq w\) gives a mean-field approximation over the inducing outputs (DAG-VI), \(q_{\mathrm{VI}}(\bU)=\prod_{w\in\cU}q_w(\bU_w)\), recovering the DAG-DGP adaptation of \cite{salimbeni2017}. This restriction still propagates uncertainty through the DAG, but removes posterior dependence between distinct latent nodes, and therefore cannot represent explaining-away (see App.~\ref{app:explaining-away}) or compositional uncertainty, i.e., posterior uncertainty over the unobserved latent functions \citep{ustyuzhaninov2020}. 

\paragraph{Scaling up to larger DAGs.}
\begin{wrapfigure}[15]{r}{0.40\textwidth}
    \centering
    \includegraphics[width=\linewidth]{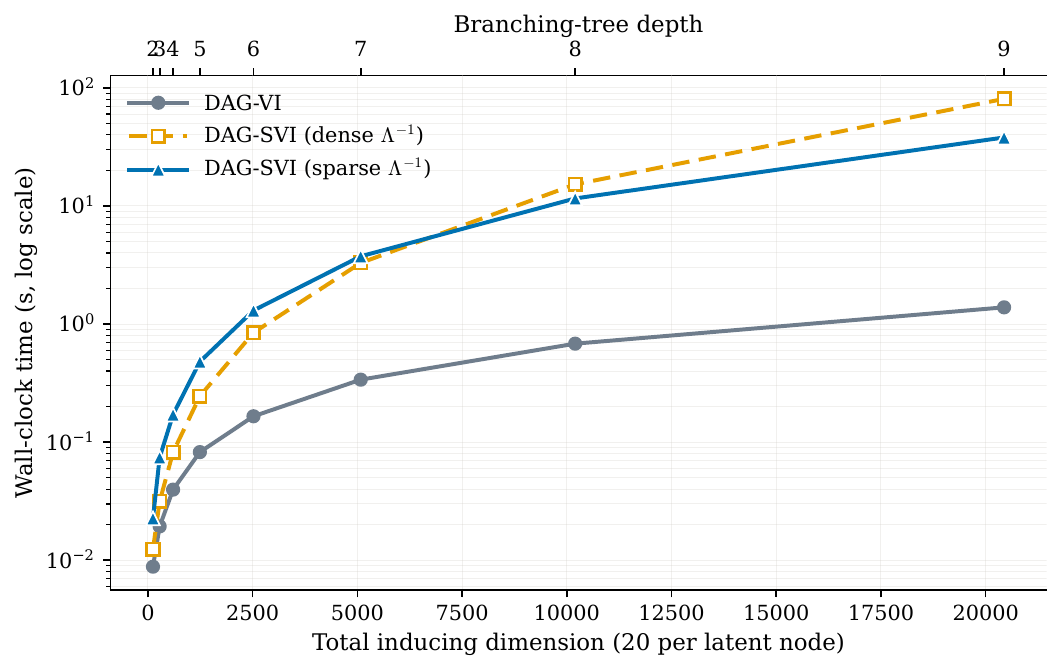}
    \caption{Wall-clock time per ELBO evaluation on increasingly deep branching trees.}
    \label{fig:branching-elbo-scaling}
\end{wrapfigure}
The expectation in Eq.~\eqref{eq:structured-elbo} is estimated by marginal
ancestral sampling (see Prop.~\eqref{prop:app-marginal-ancestral-sampler}).
Since \(q_\cH\) is in canonical form, the sampler requires marginal and
conditional moments, equivalently selected applications of \(\Lambda^{-1}\).
Let \(J=|\cU|\), \(\smash{M=\max_w\dim(\bU_w)}\), and \(K=SB\) for \(S\) Monte
Carlo samples and minibatch size \(B\). For large DAGs, DAG-VI is the most scalable option, achieving $\mathcal{O}(JM^3+KJM^2)$ complexity at the expense of expressivity. DAG-SVI trades higher cost for more expressivity. A dense implementation
forms \(\Sigma=\Lambda^{-1}\), costing \(\mathcal O((JM)^3+K(JM)^2)\) per
ELBO evaluation, and is preferable when \(JM\) is moderate or \(\cH\) is close
to dense. For larger sparse DAGs, the block sparsity of \(\Lambda\)
(Fig.~\ref{fig:dag-precision-overview}) can instead be exploited directly:
sparse Cholesky provides the Gaussian solves required by the sampler, while
Takahashi selected-inversion recursions provide the diagonal covariance blocks
needed for the analytic KL term \citep{takahashi1973, erisman1975}. In
favourable sparse regimes, the base sparse cost is
\(\mathcal O(Jc^2M^3+KJM^2)\), where \(c\) is the largest number of inducing
blocks in any clique of \(\cH\), up to the conditioning-update factors
accounted for in App.~\ref{app:computational-cost}.
Fig.~\ref{fig:branching-elbo-scaling} reports the resulting wall-clock
behaviour on a branching-tree benchmark.
\section{On Prior and Posterior Non-collapse in DAG-DGPs}\label{sec:theory}
A central pathology of DGP priors is the loss of separation between distinct inputs under
repeated composition \citep{duvenaud2014,dunlop2018,tong2021}, hindering
information preservation across depth. Following recent usage \citep{meng2024}, we call
this \emph{prior collapse}. For DAG-DGPs this raises a richer
question: given two cases \(a\neq b\), when does a difference at the roots,
or refreshed by internal observations, remain visible downstream? Following
\cite{dunlop2018}, we measure distinguishability at node \(w\) via the
two-case contrast
\(
  \smash{\Delta_w:=F_w^{(a)}-F_w^{(b)}},
\)
and say \(w\) carries an \(\varepsilon\)-contrast for \((a,b)\) if
\(\|\Delta_w\|_2>\varepsilon\). Input connections in chain DGPs
\citep{neal1995, duvenaud2011} can be viewed as deterministic refreshes of
this contrast; general DAGs admit further such mechanisms via topology,
intermediate observations, and root-dependent fusion kernels.

\subsection{Repeated separating nodes prevent prior collapse}
\label{sec:prior-noncollapse-separating}
Unlike chains, DAGs lack a unique notion of layer. We therefore consider
progressive antichain decompositions
\(
  \cV=\bigsqcup_{\ell=0}^{h-1}\cA_\ell ,
\)
where each \(\cA_\ell\) is an antichain (no two nodes connected by a directed
path) and every node in \(\cA_\ell\) reaches some node in \(\cA_{\ell+1}\).
Each \(\cA_\ell\) acts as a depth slice, recovering single-layer slices in
chains. Every finite DAG admits such a decomposition
(App.~\ref{app:antichain-decomposition}); asymptotic statements are read
along increasing-depth sequences or truncations.

We track the largest contrast on an antichain,
\(
  M_\ell:=\max_{w\in\cA_\ell}\|\Delta_w\|_2 ,
  \label{eq:main-M-ell}
\)
since a single non-collapsed node suffices to distinguish the two cases at
depth~\(\ell\). Prior collapse for \((a,b)\) means \(M_\ell\to0\) a.s.

For a non-root node \(w\), let \(\Gamma_w(a,b)\) be the conditional
covariance of \(\Delta_w\) given parent evaluations
\(\{F_p^{(a)},F_p^{(b)}\}_{p\in\PA(w)}\). We call \(w\)
\(v_\star\)-separating for \((a,b)\) if
\(
  \Gamma_w(a,b)\succeq v_\star I_{d_w}
\)
a.s.; this requires the parents to expose a contrast visible to the kernel
at \(w\). A separating node injects at least \(v_\star\) of conditional
variance into the difference, giving a uniformly positive chance of
counteracting collapse.

\begin{theorem}[Repeated separating nodes prevent prior collapse]
\label{thm:separating-noncollapse}
Under the DAG-DGP prior, suppose that every \(\cA_\ell\) in a
progressive antichain sequence contains at least \(s\ge1\)
\(v_\star\)-separating nodes for the pair \((a,b)\). Then, 
\[
\forall \varepsilon >0:\qquad
\smash{
  \liminf_{m\to\infty}
  \frac{1}{m}
  \sum_{\ell=0}^{m-1}
  \mathbbm{1}\{M_\ell>\varepsilon\}
  \ge
  1-(1-p_\varepsilon)^s
  }
\qquad\text{a.s.}\qquad
\smash{
 p_\varepsilon:=2\Phi_{\mathrm N}\left(-\frac{\varepsilon}{\sqrt{v_\star}}\right)}.
  \label{eq:main-separating-noncollapse}
\]
\end{theorem}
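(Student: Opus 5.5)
The approach is a conditional anti-concentration bound at each depth, followed by a martingale strong law. Let \(\mathcal F_\ell\) be the \(\sigma\)-algebra generated by all latent evaluations \(\{F_w^{(a)},F_w^{(b)}\}\) at nodes in \(\cA_0\cup\dots\cup\cA_\ell\) together with all their ancestors. These ancestors may lie in earlier antichains only if the decomposition is topologically compatible. To be safe, I would define \(\mathcal F_{\ell-1}\) as generated by all nodes not in \(\cA_\ell\cup\dots\) that are ancestors of \(\cA_\ell\), i.e.\ the full ancestral set of \(\cA_\ell\) minus \(\cA_\ell\) itself. The filtration property then needs the ancestral sets to increase with \(\ell\), which is what the progressive condition is meant to supply.

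\textbf{Step 1: one-step anti-concentration.} Fix \(\ell\) and the separating nodes \(w_1,\dots,w_s\in\cA_\ell\). Because \(\cA_\ell\) is an antichain, no \(w_j\) is an ancestor of another. Under the DAG-DGP prior, the GPs \(f_{w_j}\) are mutually independent and independent of everything upstream. Hence, conditionally on the parent evaluations, the contrasts \(\Delta_{w_1},\dots,\Delta_{w_s}\) are independent Gaussians with covariances \(\Gamma_{w_j}(a,b)\succeq v_\star I\).

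For a single node, \(\|\Delta_w\|_2\ge|\langle e_1,\Delta_w\rangle|\), and \(\langle e_1,\Delta_w\rangle\) is a centred Gaussian with variance at least \(v_\star\). A centred Gaussian's probability of exceeding \(\varepsilon\) in absolute value is increasing in its variance, so
\[
\PP(\|\Delta_w\|>\varepsilon\mid\text{parents})\ge 2\Phi_{\mathrm N}(-\varepsilon/\sqrt{v_\star})=p_\varepsilon .
\]
By conditional independence,
\[
\PP(M_\ell\le\varepsilon\mid \mathcal G_\ell)\le(1-p_\varepsilon)^s ,
\]
where \(\mathcal G_\ell\) is the \(\sigma\)-algebra of the parent evaluations. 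Since \(\mathcal G_\ell\) contains everything generated before depth \(\ell\), we get \(\PP(M_\ell>\varepsilon\mid\mathcal F_{\ell-1})\ge q:=1-(1-p_\varepsilon)^s\). The mean-zero property of the conditional law is what makes the non-centrality irrelevant here, because the Gaussian process has mean zero.

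\textbf{Step 2: martingale SLLN.} Set \(I_\ell=\mathbbm 1\{M_\ell>\varepsilon\}\), which is \(\mathcal F_\ell\)-measurable, and \(D_\ell=I_\ell-\EE[I_\ell\mid\mathcal F_{\ell-1}]\). These are bounded martingale differences, so \(\frac1m\sum_{\ell<m}D_\ell\to0\) almost surely. This follows from the Azuma–Hoeffding inequality with Borel–Cantelli, or from the \(L^2\) martingale convergence of \(\sum D_\ell/\ell\) together with Kronecker's lemma. It then gives
\[
\liminf_m \frac1m\sum_{\ell<m} I_\ell\ \ge\ \liminf_m\frac1m\sum_{\ell<m}\EE[I_\ell\mid\mathcal F_{\ell-1}]\ \ge\ q .
\]
The \(\ell=0\) term, for roots, is irrelevant in the limit.

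\textbf{Main obstacle.} The delicate point is the filtration and conditioning bookkeeping. I need \(\mathcal F_{\ell-1}\subseteq\mathcal G_\ell\), meaning every node of \(\cA_0,\dots,\cA_{\ell-1}\) must be determined by the parents of \(\cA_\ell\) and their ancestors, together with functions independent of \(f_{w_j}\). I also need the conditional law of \(\Delta_{w_j}\) given \(\mathcal F_{\ell-1}\vee\mathcal G_\ell\) to remain the GP conditional.

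I would handle this by conditioning on the larger \(\sigma\)-algebra \(\mathcal H_\ell\), generated by all \(f_v\) with \(v\) not a descendant-or-self of the \(w_j\), and the roots. The separating nodes' own functions \(f_{w_j}\) are independent of \(\mathcal H_\ell\), and their parents are \(\mathcal H_\ell\)-measurable. Hence Step 1 holds conditional on \(\mathcal H_\ell\), and by the tower property it also holds conditional on any sub-\(\sigma\)-algebra.

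It then suffices that \(I_0,\dots,I_{\ell-1}\) be \(\mathcal H_\ell\)-measurable. This holds if no node in an earlier antichain is a descendant of a node in \(\cA_\ell\). I would verify this from the progressive decomposition constructed in App.~\ref{app:antichain-decomposition}, read as topologically ordered slices. If that fails, I would instead apply the argument to the \(\sigma\)-algebras \(\mathcal H_\ell\), restricting the sum to a subsequence where the ordering is consistent.
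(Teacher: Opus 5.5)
Your argument matches the paper's proof. Each separating contrast is bounded below by Gaussian anti-concentration of a unit-vector projection. Conditional independence of the fresh modules on the antichain gives the factor $(1-p_\varepsilon)^s$. A strong law for bounded martingale differences, via Azuma--Hoeffding and Borel--Cantelli, finishes.

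The one step you leave open, and offer to retreat from, holds for \emph{every} progressive antichain sequence, not only the one constructed in the appendix. First, chaining $\cA_k\subseteq\Anc(\cA_{k+1})$ with transitivity gives $\cA_k\cup\Anc(\cA_k)\subseteq\Anc(\cA_\ell)$ for all $k<\ell$. Second, no $u\in\Anc(\cA_\ell)$ is a descendant-or-self of any $w\in\cA_\ell$: otherwise $w\sqsubseteq u\sqsubset t$ for some $t\in\cA_\ell$, which is either a directed path inside the antichain or a cycle. So take the paper's $\cF_\ell:=\sigma(f_v:v\in\Anc(\cA_\ell))$; your state-generated version on strict ancestors works equally well. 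Then:
\begin{itemize}
\item the $\cF_\ell$ increase, and $\mathbbm{1}\{M_\ell>\varepsilon\}$ is $\cF_{\ell+1}$-measurable;
\item the parent inputs of each $w\in\cA_\ell$ are $\cF_\ell$-measurable;
\item the modules $f_w$, $w\in\cA_\ell$, are mutually independent and independent of $\cF_\ell$.
\end{itemize}
This is exactly Lemma~\ref{lem:antichain-conditional-bound}. Your subsequence fallback is therefore unnecessary. It would also not give the stated bound, since the frequency runs over all depths.

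Also drop the Step~1 claim that the $\sigma$-algebra $\mathcal G_\ell$ of parent evaluations contains everything generated before depth $\ell$. It is false in general. Parent states do not determine their own ancestors' states, nor the states of earlier nodes that feed only non-separating members of $\cA_\ell$. Since the tower property only passes to coarser $\sigma$-algebras, the one-step bound must be proved given a $\sigma$-algebra that contains the past, such as your $\mathcal H_\ell$ or $\cF_\ell$. Your final paragraph already does this.
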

Hence non-trivial contrasts occur on a positive fraction of depths, ruling
out prior collapse (Fig.~\ref{fig:main-theory-validation}(a)). In a chain
each antichain has a single latent node, and the input connection induces
separation; for the squared-exponential input-connected chain of
\cite[Remark~5(3)]{dunlop2018}, Corollary~\ref{cor:dunlop-skip-noncollapse}
shows this node is separating at every depth, so the bound applies with
\(s=1\). Anchors need not be placed everywhere: if separating nodes occur
infinitely often, permanent collapse is ruled out, and if they occur on a
positive fraction of depths the bound is multiplied by that fraction
(App.~\ref{app:separating}).

Separation is easy to certify if the kernel retains non-degenerate
dependence on root coordinates. Additive and ANOVA root-only components give
transparent sufficient conditions, as their contribution cannot be
cancelled by other parents; this covers fusion kernels in multifidelity
models \citep{perdikaris2017, cutajar2017, ji2024}. Perfectly observed
internal nodes also act as separating coordinates after conditioning
(App.~\ref{app:conditional-refresh}).
\begin{figure}[t]
    \centering
    \includegraphics[
        width=\textwidth,
        trim=3bp 3bp 3bp 3bp,
        clip
    ]{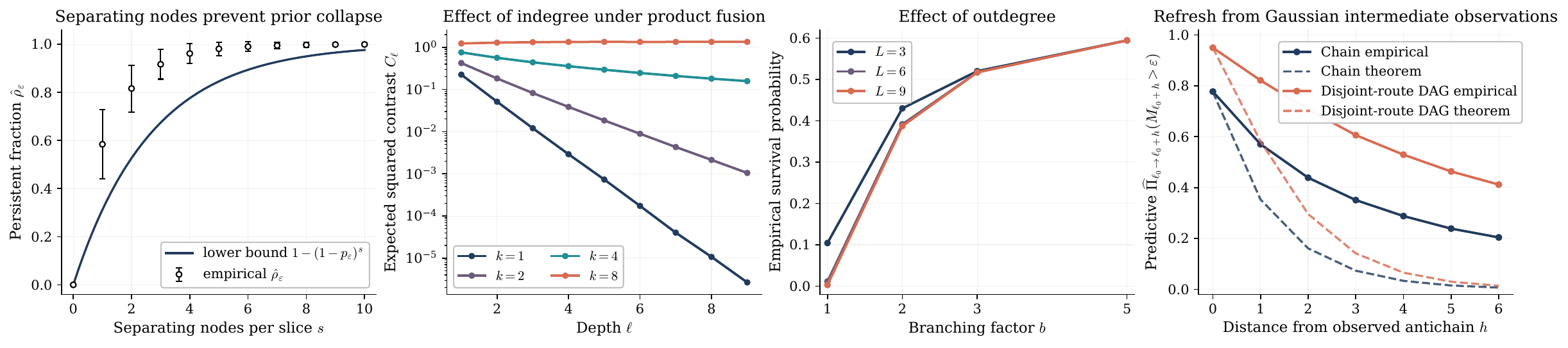}
    \caption{Empirical validation of the main theoretical results in Sec.~\ref{sec:theory}. See also App.~\ref{app:theory-validation}.}
    \label{fig:main-theory-validation}
\end{figure}
\subsection{Effect of the DAG topology}
\label{sec:degree-effects}
Theorem~\ref{thm:separating-noncollapse} identifies anchors against
collapse; we next isolate topological effects, starting with indegree. Consider
a local layered radial block (Fig.~\ref{fig:dag-precision-overview}(d) shows a layered DAG),
where parents of \(\cA_\ell\) lie in \(\cA_{\ell-1}\) and each node uses a
radial kernel on the concatenated parent state. The expected contrast then
admits an explicit recursion (full assumptions in
App.~\ref{app:degree-effects}). For clarity we display product fusion of
squared-exponential parent kernels; App.~\ref{app:degree-effects} extends
the recursion to general monotonic Laplace-radial kernels, including rational-quadratic kernels and layer-dependent
dimensions or hyperparameters.

Let \(k_\ell=\max_{w\in\cA_\ell}|\PA(w)|\) and
\(C_\ell:=\max_{w\in\cA_\ell}\EE[\|\Delta_w\|_2^2]\). For product
squared-exponential fusion with variance \(\tau^2\), length-scale \(\lambda\),
and output dimension \(d\), App.~\ref{app:degree-effects} gives
\[ 
  C_\ell
  \le
  2d\tau^2
  \left[
    1-
    \left(
      1+C_{\ell-1}/({d\lambda^2})
    \right)^{-dk_\ell/2}
  \right].
\]
For \(k_\ell=1\) this recovers the chain recurrence of \cite{tong2021}; for
\(k_\ell>1\), the kernel sees contrast accumulated across
incoming edges, broadening the recurrence
(Fig.~\ref{fig:main-theory-validation}(b)). Near zero the slope is
\(d\tau^2 k_\ell/\lambda^2\), yielding the sufficient contraction criterion
\(d\tau^2\bar k/\lambda^2<1\) with \(\bar k:=\sup_\ell k_\ell\). Indegree
thus enters the contraction threshold linearly, making the chain collapse
mechanism less transferable to high-indegree regions under product fusion.
Additive root-retaining fusion is insensitive to indegree, provided its
root-retaining component is separating
(Prop.~\ref{prop:additive-monotone}, App.~\ref{app:degree-proofs}).

Outdegree provides a complementary mechanism: large outdegree yields many
parallel descendants, hence many conditionally independent chances for a
contrast to survive. In the same layered radial regime, consider a
\(b\)-ary branching subgraph through successive antichains, a topology
arising in various probabilistic models \citep{jordan1994, liu2024}. If the
seed contrast exceeds \(t>0\) with positive probability and the one-step
exceedance probability satisfies \(p_t>1/b\), then with positive probability
the maximum contrast stays above \(t\) at every depth
(Prop.~\ref{prop:outdegree-branching}, App.~\ref{app:outdegree-branching};
Fig.~\ref{fig:main-theory-validation}(c)).

\subsection{Intermediate observations as stochastic skip connections}
\label{sec:posterior-refresh}

Skip connections preserve information by reintroducing input-dependent
variation at later depths; intermediate observations have an analogous
posterior effect. For two cases \(a\neq b\) and an observed internal node
\(u\), after observing \(Y_u^{(a)}, Y_u^{(b)}\) the posterior may assign
substantial mass to
\(
  \|F_u^{(a)}-F_u^{(b)}\|_2>\varepsilon ,
\)
making the observation a local source of separation at \(u\). Being noisy,
it does not guarantee a genuine latent contrast, but updates the posterior
mass on the latent \(\varepsilon\)-contrast event at \(u\). To affect later
antichains, this contrast must then propagate along directed routes.

Let \(\Pi_{\ell_0}\) be the filtering posterior after assimilating
observations on the strict ancestors of \(\cA_{\ell_0}\) and on
\(\cA_{\ell_0}\) itself, \(\mathscr H_{\ell_0}\) the sigma-field of strict
ancestral states, and \(\Pi_{\ell_0\to\ell_1}\) the forward predictive law
from \(\cA_{\ell_0}\) to \(\cA_{\ell_1}\) without assimilating later
observations. An admissible route has off-route parents fixed at the source antichain,
so its contrast can be tracked in isolation; an \(\varepsilon\)-retaining
route with factor \(\rho\) carries an \(\varepsilon\)-contrast from source
to target with probability at least \(\rho\); routes that share no nodes
between source and target propagate independently (definitions in
App.~\ref{app:posterior-refresh}).

\begin{theorem}[Intermediate observations act as stochastic skip connections]
\label{thm:stochastic-skip-main}
Consider two input cases \(a\neq b\), a threshold \(\varepsilon>0\), and
antichain levels \(\ell_1\ge\ell_0\). Assume that \(s\) distinct observed
source nodes \(u_1,\dots,u_s\in\cA_{\ell_0}\) can reach \(s\) distinct target
nodes \(v_1,\dots,v_s\in\cA_{\ell_1}\) through pairwise interior-disjoint
admissible routes \(\gamma_j:u_j\leadsto v_j\). Assume further that each
\(\gamma_j\) is \(\varepsilon\)-retaining with factor \(\rho_j\in[0,1]\).
Define
\(
  E_j
  :=
  \bigl\{
    \|F_{u_j}^{(a)}-F_{u_j}^{(b)}\|_2>\varepsilon
    \bigr\}\), and
\(    
  q_j:=\Pi_{\ell_0}(E_j\mid\mathscr H_{\ell_0}).
\)
Then
\begin{equation}
  \Pi_{\ell_0\to\ell_1}(M_{\ell_1}>\varepsilon)
  \ge
\smash{  
  \EE_{\Pi_{\ell_0}}\! \big[
    1-\prod_{j=1}^s(1-\rho_j q_j)
    \big].
  }
  \label{eq:main-stochastic-skip}
\end{equation}
\end{theorem}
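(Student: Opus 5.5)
The plan is to reduce the claim to independence across routes, conditional on the source antichain, and then integrate.

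\emph{Step 1: condition on the source level.} Work under the joint law in which \(\cA_{\ell_0}\) and all strict ancestral states are drawn from \(\Pi_{\ell_0}\). States at later antichains are then generated by \(\Pi_{\ell_0\to\ell_1}\), i.e.\ by the prior GP conditionals along the DAG. Let \(\mathscr G\) be the sigma-field generated by \(\mathscr H_{\ell_0}\) together with the source states on \(\cA_{\ell_0}\). Every target \(v_j\) lies in \(\cA_{\ell_1}\), so \(M_{\ell_1}\ge\max_j\|\Delta_{v_j}\|_2\). It therefore suffices to bound
\[
\PP\Bigl(\textstyle\bigcup_j\{\|\Delta_{v_j}\|_2>\varepsilon\}\Bigr)\;\ge\;\PP\Bigl(\textstyle\bigcup_j (E_j\cap R_j)\Bigr),
\]
where \(R_j\) is the event that route \(\gamma_j\) carries the contrast from \(u_j\) to \(v_j\). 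Passing to the complement, the right-hand side equals \(1-\EE\bigl[\prod_j \mathbbm 1\{(E_j\cap R_j)^c\}\bigr]\).

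\emph{Step 2: conditional independence across routes.} This is the main obstacle. Routes are admissible, so every off-route parent of a node on \(\gamma_j\) is fixed at the source antichain and is therefore \(\mathscr G\)-measurable. Conditional on \(\mathscr G\), the evolution of \((F^{(a)},F^{(b)})\) along \(\gamma_j\) is then driven only by the GP functions \(f_w\) for \(w\) in the interior of \(\gamma_j\), together with the \(\mathscr G\)-measurable boundary data. Those functions are a priori mutually independent and independent of \(\mathscr G\): under \(\Pi_{\ell_0\to\ell_1}\) no later observations are assimilated, so the prior factorisation in Eq.~\eqref{eq:dagdgp-posterior} persists forward. Interior-disjointness means distinct routes use disjoint collections of \(f_w\).

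One point needs care: if a node downstream of \(\cA_{\ell_0}\) also has ancestors strictly before \(\ell_0\) whose functions were updated, its function must still be an independent prior GP. I would check this via the ancestral-sampling factorisation, since the nodes of \(\gamma_j\) lie strictly after \(\ell_0\) and so carry no assimilated data.

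Once this holds, the indicators \(\mathbbm 1\{E_j\cap R_j\}\) are conditionally independent given \(\mathscr G\).

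\emph{Step 3: retention bound and integration.} The event \(E_j\) is \(\mathscr G\)-measurable. By definition of an \(\varepsilon\)-retaining route, \(\PP(R_j\mid\mathscr G)\ge\rho_j\) on \(E_j\). Hence
\[
\PP\bigl(\textstyle\bigcap_j (E_j\cap R_j)^c\mid\mathscr G\bigr)=\prod_j\bigl(1-\mathbbm 1_{E_j}\PP(R_j\mid\mathscr G)\bigr)\le\prod_j\bigl(1-\rho_j\mathbbm 1_{E_j}\bigr).
\]
Next condition down to \(\mathscr H_{\ell_0}\). Given \(\mathscr H_{\ell_0}\), the source nodes \(u_j\) are distinct members of the antichain \(\cA_{\ell_0}\), so their function evaluations are governed by separate factors. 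I would argue they are conditionally independent given the strict ancestral states, so that the expectation of the product factorises into \(\prod_j(1-\rho_jq_j)\).

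If the filtering posterior couples the \(u_j\) (for example through shared observations), I would fall back on the fact that \(x\mapsto\prod_j(1-\rho_jx_j)\) is multilinear. In that case I would need to verify that the definition of \(q_j\) in the appendix is set up so that the factorisation holds. That is the secondary risk in this plan.

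Finally, take the complement and the outer expectation under \(\Pi_{\ell_0}\) to obtain Eq.~\eqref{eq:main-stochastic-skip}.
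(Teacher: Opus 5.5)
Your proposal is correct and follows the paper's proof essentially step for step. Your $\mathscr G$ is the paper's $\mathscr F_{\ell_0}$, and reading $R_j$ as the target event $\{\|F_{v_j}^{(a)}-F_{v_j}^{(b)}\|_2>\varepsilon\}$ makes your $E_j\cap R_j$ device equivalent to the paper's direct bound $\PP(G_j\mid\mathscr F_{\ell_0})\ge\rho_j\mathbbm 1_{E_j}$; cross-route independence comes from admissibility plus disjoint interior GP modules, and the descent to $\mathscr H_{\ell_0}$ uses exactly the nodewise factorisation the paper proves as Lemma~\ref{lem:filtered-factorization}. Your secondary worry does not arise, because observation factors are nodewise and $\Pi_{\ell_0}$ assimilates nothing below $\cA_{\ell_0}$; the multilinearity fallback would not have rescued a genuinely coupled case anyway, since the factorisation of the conditional expectation needs independence.
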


The bound has a direct interpretation: \(q_j\) is the posterior probability
(after observations up to level \(\ell_0\)) that source \(u_j\) carries a
genuine latent \(\varepsilon\)-contrast, while \(\rho_j\) measures how
likely route \(\gamma_j\) propagates it downstream. Thus \(\rho_j q_j\) is
the contribution of route \(j\), and the product in
\eqref{eq:main-stochastic-skip} is the probability that no route succeeds.
In a chain (Fig.~\ref{fig:dag-precision-overview}(a)) with one observed
internal layer and one partially observed downstream route, this reduces to
\(
  \Pi_{\ell_0\to\ell_1}(M_{\ell_1}>\varepsilon)
  \ge
  \EE_{\Pi_{\ell_0}}[\rho q].
\)
In DAGs---e.g.\ the disjoint-route DAG of
Fig.~\ref{fig:dag-precision-overview}(b)---several observed sources combine
via the probability that at least one propagated contrast reaches the
target antichain; Fig.~\ref{fig:main-theory-validation}(d) compares the
two regimes empirically.

For Gaussian observations \(Y_u^{(i)}=F_u^{(i)}+\xi_u^{(i)}\),
\(\xi_u^{(i)}\sim\cN(0,\sigma_u^2)\), set \(\Gamma=\Gamma_u(a,b)\) and
\(D_Y=Y_u^{(a)}-Y_u^{(b)}\). If \(\Gamma>0\), then conditionally on
\(\mathscr H_{\ell_0}\),
\begin{equation}
  F_u^{(a)}-F_u^{(b)}
  \mid Y_u^{(a)},Y_u^{(b)},\mathscr H_{\ell_0}
  \sim
  \smash[t]{    
  \cN\!\left(
    \frac{\Gamma}{\Gamma+2\sigma_u^2}D_Y,\,
    \frac{2\sigma_u^2\Gamma}{\Gamma+2\sigma_u^2}
  \right)}.
  \label{eq:main-gaussian-refresh}
\end{equation}
The shrinkage factor \(\Gamma/(\Gamma+2\sigma_u^2)\) interpolates the source
strength between observation-driven (\(\sigma_u^2\ll\Gamma\)) and
noise-dominated (\(\sigma_u^2\gg\Gamma\)) regimes, so the \(q_j\) of
Thm.~\ref{thm:stochastic-skip-main} reduce to explicit Gaussian tails,
empirically validated in Fig.~\ref{fig:main-theory-validation}(d).
App.~\ref{app:source-strengths} extends the analysis to bounded-curvature
one-parameter exponential families, including Bernoulli and binomial nodes.

% ======== Experiment Section ======
\section{Compositional Uncertainty and Explaining Away in Latent-colliders}

\begin{wraptable}{r}{0.48\textwidth}
\vspace{-12pt}
\centering
\scriptsize
\caption{Test performance on both the real-world Sachs and the synthetic collider dataset.}
\label{table:mf_svi_results}

\begin{tabular}{llcc}
\toprule
Dataset & Metric & DAG-VI & DAG-SVI \\
\midrule
\multirow{2}{*}{SACHS Interpolation}
  & RMSE $\downarrow$ & $0.646$ & $0.642$ \\
  & CRPS $\downarrow$ & $0.355$ & $0.359$ \\
\midrule
\multirow{2}{*}{SACHS Extrapolation}
  & RMSE $\downarrow$ & $0.737$ & $\mathbf{0.642}$ \\
  & CRPS $\downarrow$ & $0.325$ & $\mathbf{0.299}$ \\
\midrule
\multirow{2}{*}{COLLIDER}
  & RMSE $\downarrow$ & $0.114$ & $\mathbf{0.055}$ \\
  & CRPS $\downarrow$ & $0.075$  &  $\mathbf{0.060}$ \\
\bottomrule
\end{tabular}

\vspace{-9pt}
\end{wraptable}

We consider a synthetic latent-collider experiment (COLLIDER) in which independent GP
parents, \(w_1\) and \(w_2\), feed a partially observed child \(w_3\) through an
additive RBF fusion kernel. 
\begin{figure}[t]
\centering
\newlength{\panelheight}
\setlength{\panelheight}{0.25\textwidth}
\begin{minipage}[t]{0.30\textwidth}
    \vspace{0pt}
    \centering
    \includegraphics[
        height=\panelheight,
        trim=9pt 8pt 8pt 7pt,
        clip
    ]{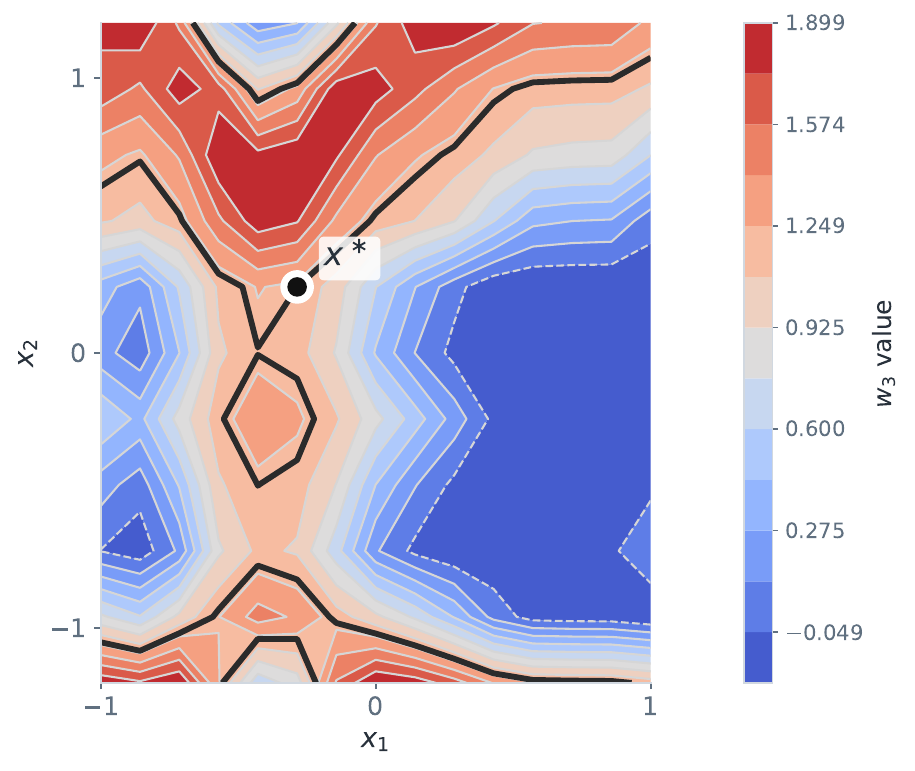}
\end{minipage}
\hfill
\begin{minipage}[t]{0.27\textwidth}
    \vspace{0pt}
    \centering
    \includegraphics[
        height=\panelheight,
        trim=7pt 8pt 7pt 7pt,
        clip
    ]{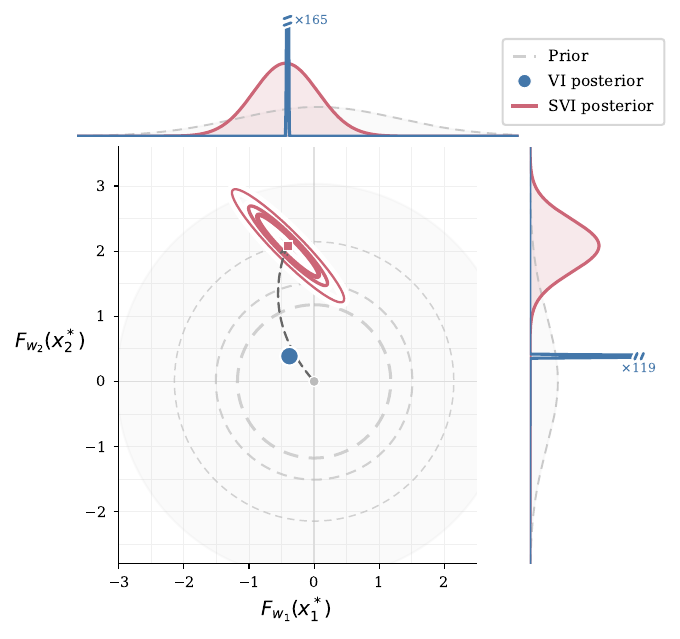}
\end{minipage}
\hfill
\begin{minipage}[t][\panelheight][c]{0.36\textwidth}
    \centering
    \includegraphics[
        width=\linewidth,
        trim=7pt 8pt 7pt 7pt,
        clip
    ]{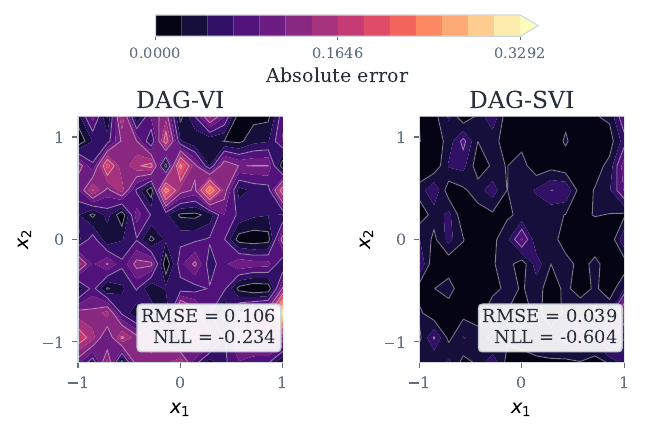}
\end{minipage}
\caption{\textbf{Latent-collider experiment.} Left: ground-truth $w_3$, input marked. Centre: parent posterior under DAG-VI and DAG-SVI. Right: $w_3$ and per-method errors, with root mean square error (RMSE) and negative log likelihood (NLL) vs.\ observations.}
\label{fig:explaining-away-collider}
\end{figure}
Fig.~\ref{fig:explaining-away-collider} (centre) visualises the posterior over the two parent latent nodes at a fixed input, illustrating two posterior phenomena that DAG-SVI captures and that DAG-VI cannot represent. In terms of variability, the DAG-SVI samples cover a broad range of latent parent representations, preserving compositional uncertainty; DAG-VI, restricted by construction to factorised marginals latent evaluations, instead concentrates onto a near-degenerate point. In terms of dependencies, the DAG-SVI is negatively correlated, encoding the reciprocal compensation through which the two parents explain the child, signature of explaining away; mean-field DAG-VI produces uncorrelated samples by design and cannot represent this coupling. App.~\ref{cu_ea} provides a geometric visualization of this behaviour, and the right panel of Fig.~\ref{fig:explaining-away-collider}(right) shows that coupling translates into improved reconstruction of the partially observed child.

\Needspace{0.34\textheight}
\section{Protein Signalling Network from the Sachs Flow Cytometry Dataset}

\begin{wrapfigure}[24]{r}{0.42\textwidth}
    \vspace{-0.80em}
    \centering
    \includegraphics[width=0.96\linewidth]{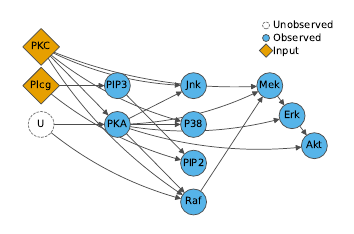}
    \vspace{-0.60em}
    \caption{DAG for the Sachs protein signalling network. Diamond nodes denote exogenous/root inputs. Blue nodes correspond to observed variables on which deep GP priors are placed. The dotted node denotes a latent confounder, modelled using a 2D latent variable layer \citep{salimbeni2019}. The first latent dimension acts as input to PKA, and the second to Raf.}
    \label{fig:sachs_dag}
    \vspace{-0.80em}
\end{wrapfigure}

We evaluate the proposed DAG-DGP framework on the real-world protein signalling network dataset of \cite{sachs2005}, a widely used benchmark in the causal discovery literature \citep{moiji2020}. Since our focus is observational modelling, we consider the \textit{cd3cd28 icam2} subset, which corresponds to a specific intervention regime. The dataset contains $902$ observations over $11$ variables, which we log-transform prior to modelling. We use the DAG structure provided by \textit{bnlearn} \citep{bnlearn:2010} (Fig.~\ref{fig:sachs_dag}) and construct a random 80/20 train--test split.

It is well known that the Sachs data were not generated under ideal interventions and may therefore contain latent confounding effects. To account for this, we explicitly model a confounder through a 2D latent variable layer \citep{salimbeni2019}. At test time, inference requires the posterior distribution over these latent variables; consequently, we assume that both PKA and Raf are fully observed in both the training and test sets. Following \cite{lindinger2020}, we consider an \emph{interpolation} and \emph{extrapolation} task detailed, along with the training procedure, in App. \ref{app:sachs_exp}. Results, averaged over all nodes, are reported in Tab.~\ref{table:mf_svi_results} and are evaluated in log space. Both DAG-VI and DAG-SVI successfully capture the joint distribution induced by the DAG structure in the interpolation task, and DAG-SVI improves prediction, uncertainty quantification, and coverage (see Tab.~\ref{tab:sachs_extrapolation_full}) in extrapolation.
\Needspace{1\baselineskip}

\Needspace{0.30\textheight}
\section{Deploying DAG-DGPs for Multi-Fidelity Heavy-ion Emulation}
\noindent
\begin{minipage}[t]{0.24\textwidth}
\vspace{0pt}
\centering
\resizebox{0.86\linewidth}{!}{%
\begin{tikzpicture}[
    font=\scriptsize,
    latent/.style={
        draw=black!80,
        circle,
        line width=0.65pt,
        minimum size=7.5mm,
        inner sep=0pt,
        fill=black!2
    },
    obs/.style={
        draw=black!80,
        rectangle,
        line width=0.65pt,
        minimum width=8.5mm,
        minimum height=6mm,
        inner sep=1.5pt,
        fill=black!6
    },
    edge/.style={
        -{Latex[length=2mm,width=1.3mm]},
        draw=black!75,
        line width=0.7pt
    }
]

\path[use as bounding box] (-0.45,-2.20) rectangle (3.25,2.20);

\node[obs]    (X)  at (0.00,  0.00) {$X$};

\node[latent] (L1) at (1.05,  0.72) {$L_1$};
\node[latent] (L2) at (1.05, -0.72) {$L_2$};
\node[latent] (H)  at (2.05,  0.00) {$H$};

\node[obs]    (Y1) at (1.05,  1.78) {$Y_{L_1}$};
\node[obs]    (Y2) at (1.05, -1.78) {$Y_{L_2}$};
\node[obs]    (YH) at (2.05,  1.05) {$Y_H$};

\draw[edge] (X) -- (L1);
\draw[edge] (X) -- (L2);
\draw[edge] (X) -- (H);

\draw[edge] (L1) -- (H);
\draw[edge] (L2) -- (H);

\draw[edge] (L1) -- (Y1);
\draw[edge] (L2) -- (Y2);
\draw[edge] (H)  -- (YH);

\end{tikzpicture}%
}
\vspace{-0.35em}
\captionof{figure}{DAG-DGP for the heavy-ion emulation task.}
\label{fig:heavy_ion_dag}
\end{minipage}\hfill
\begin{minipage}[t]{0.73\textwidth}
\vspace{0pt}
We evaluate DAG-DGPs on the heavy-ion collision real dataset of 
\cite{ji2024}, a graphical multi-fidelity emulation problem with a shared
nine-dimensional input and a scalar pion-yield-ratio output. The elicited
simulator graph has two lower-fidelity nodes, \(L_1\) and \(L_2\), feeding
the high-fidelity node \(H\). Instead of being sequential, these lower
fidelities are complementary: \(L_1\) uses simplified linearized conformal
hydrodynamics followed by Cooper--Frye conversion, whereas \(L_2\) uses
\(1+1\)D ideal QCD (quantum chromodynamics) hydrodynamics but omits this conversion stage. This makes
it meaningful to study how each approximation contributes to explaining the
high-fidelity response, and whether one provides more informative support
for \(H\) than the other. Accordingly, we fit a DAG-DGP over the elicited
simulator graph in Figure~\ref{fig:heavy_ion_dag}. 
\end{minipage}
\par\medskip

We report three evaluations. On the \emph{published split} of
\cite{ji2024}, with 200 observations at each lower fidelity, 25 observations
at \(H\), and the original 75 high-fidelity test points, DAG-VI improves on the graphical multi-fidelity Gaussian process (GMGP) family and their benchmark comparison; DAG-SVI gives the best performance 
on root-mean-square error (RMSE), normalised RMSE (N-RMSE; see \cite{ji2024}), and continuous 
ranked probability score (CRPS); see Tab.~\ref{tab:heavy_ion_results}.

On the same test task, for our best model (DAG-SVI) we further evaluate 
how the two parents of \(H\) contribute to its posterior via Shapley values 
(Fig~\ref{fig:heavy_ion}): both lower-fidelity branches contribute 
substantially for a large fraction of high-fidelity test points, while 
\(L_2\) provides the dominant contribution overall. This is consistent
with the simulator construction, where \(L_2\) preserves a more realistic QCD
hydrodynamic evolution, whereas \(L_1\) retains the conversion stage but uses
a simpler hydrodynamic approximation.

The remaining protocols isolate two different effects. The
\emph{high-fidelity-scarce} protocol performs repeated 5-fold
cross-validation over the 25 observations at \(H\), testing cross-fidelity transfer when high-fidelity supervision is most limited; DAG-SVI again gives the strongest \(H\)-level predictions. The \emph{full-hierarchy} protocol uses 200, 200, and 100 observations at \(L_1,L_2,H\), respectively, and
performs 10-fold cross-validation with held-out data at all fidelities to evaluate joint prediction. In this data-richer setting, DAG-SVI improves both joint and marginal high-fidelity metrics over DAG-VI, while both models remain competitive. Results and training procedures are in App.~\ref{app:heavyion}.
\begin{figure*}[ht]
  \centering
  \begin{minipage}[t]{0.48\textwidth}
    \vspace*{0pt}
    \centering
    \small
    \setlength{\tabcolsep}{4pt}
    \renewcommand{\arraystretch}{1.04}
    \resizebox{\linewidth}{!}{%
    \begin{tabular}[t]{@{}lccc@{}}
      \toprule
      Model & \shortstack[c]{RMSE\\$/10^{-2}$ $\downarrow$}
            & \shortstack[c]{N-RMSE\\$\uparrow$}
            & \shortstack[c]{CRPS\\$/10^{-2}$ $\downarrow$} \\
      \midrule
      High-fidelity GP                    & 5.49 & 0.46 & 3.54 \\
      KO-path \citep{kennedy2000}         & 3.48 & 0.66 & 1.99 \\
      KO-misspecified \citep{kennedy2000} & 3.95 & 0.71 & 2.30 \\
      NARGP \citep{perdikaris2017}        & 3.66 & 0.73 & 2.13 \\
      r-GMGP \citep{ji2024}               & 2.92 & 0.72 & 1.64 \\
      d-GMGP \citep{ji2024}               & 2.17 & 0.79 & 1.34 \\
      \addlinespace[3pt]
      DAG-VI$^\dagger$  & $2.12 \pm 0.02$ & $0.796 \pm 0.002$ & $1.18 \pm 0.03$ \\
      DAG-SVI$^\dagger$ & $\mathbf{2.03} \pm 0.01$ & $\mathbf{0.804} \pm 0.001$ & $\mathbf{1.16} \pm 0.02$ \\
      \bottomrule
    \end{tabular}}
    \captionof{table}{Predictive performance on the published heavy-ion split,
      evaluated on the 75-point high-fidelity test set. The top block reproduces
      results from \citep{ji2024}. $^\dagger$Our methods are reported over five
      runs as mean\,$\pm$\,std.}
    \label{tab:heavy_ion_results}
  \end{minipage}\hfill%
  \begin{minipage}[t]{0.48\textwidth}
    \vspace*{0pt}
    \centering
    \includegraphics[
      width=0.90\linewidth
    ]{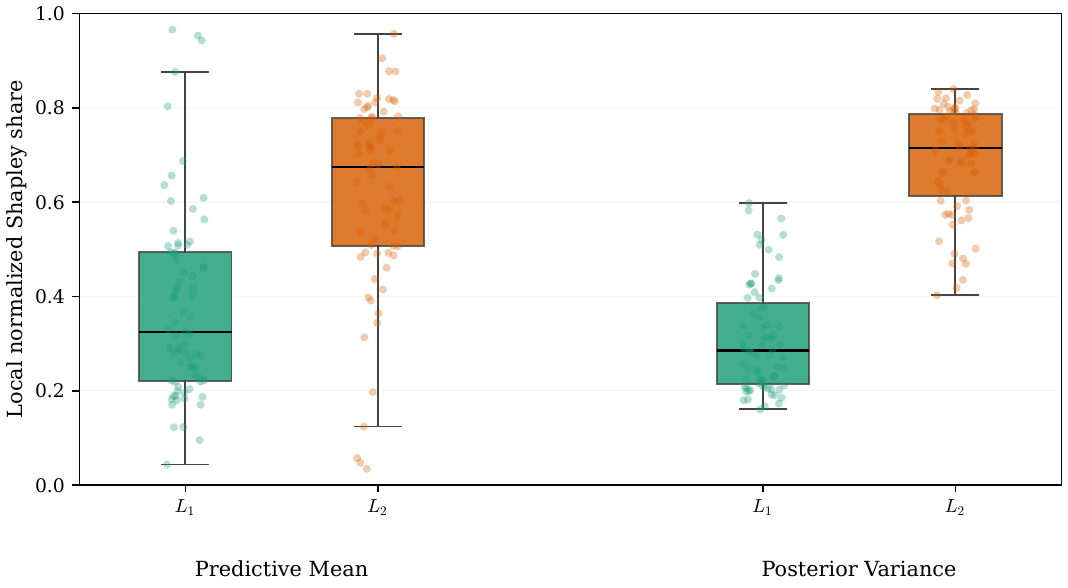}
    \captionof{figure}{Normalized Shapley shares for \(L_1\) and \(L_2\) on the published heavy-ion test set under DAG-SVI. Left: predictive mean of \(H\). Right: posterior variance in \(H\).}
    \label{fig:heavy_ion}
  \end{minipage}

\end{figure*}
\FloatBarrier
\section{Discussion}
\subsection{DAG-DGPs as a General Framework}
DAG-DGPs recover and extend several Gaussian-process architectures by
restricting three components: the DAG topology, the nodewise observation
pattern, and the fusion rule. This yields two concrete benefits. First, it lets
us flexibly model noisy, heterogeneously observed data organised by general DAG
topologies, including graphical multi-fidelity problems and
Bayesian network datasets. Second, whenever these restrictions recover an
existing architecture, the resulting special case inherits our theoretical
analysis and structured variational inference scheme.

\paragraph{DGP models.}
Standard DGPs \citep{lawrence2007,damianou2013} are recovered by choosing a
chain DAG, using the trivial single-parent fusion rule, and observing only the
terminal layer. Input-connected DGPs \citep{duvenaud2014} add the deterministic
input as a parent of every latent layer. Multi-fidelity DGPs
\citep{cutajar2019} arise by interpreting the chain as an ordered fidelity
hierarchy, observing the corresponding fidelity nodes, and using a
multi-fidelity fusion rule. d-GMGPs \citep{ji2024} further restrict this
multi-fidelity topology to a directed in-tree with nested designs, where each
simulator node is connected to the shared deterministic input and its
lower-fidelity parents. Stochastic deep Gaussian processes over graphs
\citep{li2020} target input--output transformations between signals on a fixed
graph. Although their modelling aim differs from ours, they can be recovered as
layered DAG-DGPs by unrolling the fixed graph over depth, as we show in Prop. \ref{prop:dgpg-dagdgp} of App.~\ref{app:DGPG_app}.
\paragraph{GPN-based models.} GPN-based models \citep{friedman2000,giudice2023,kiroriwal2025bayesian} are obtained by choosing the DAG to be a process network, e.g. a multi-stage system where subprocess outputs feed downstream stages. Classical GPNs correspond to the fully observed case, where all measurements are available. When full observability is relaxed, as recently proposed in Bayesian optimisation \citep{kiroriwal2025bayesian}, subprocess GPs are conditioned on deterministic stage inputs, yielding an input-connected DAG; the RBF kernel specified in that model over the concatenated parent/input space then corresponds to a product fusion rule in our framework.
\subsection{Open Challenges and Future Directions}
We introduced a unifying modelling framework for composition of functions over Directed Acyclic Graphs, motivated by the need to represent such inductive biases of domain-knowledge systems in science and engineering within probabilistic machine learning. Modelling such systems naturally calls for a probabilistic treatment, in which uncertainty over latent functions and their graph-induced dependencies is retained. When the DAG is given a causal interpretation, the framework opens the door for structural causal modelling and causal representation learning \citep{pearl2009,spirtes2001causation,peters2017elements,scholkopf2021toward}. Several directions remain open. Beyond the collapse phenomenon addressed in our theoretical analysis, an intriguing avenue is to explore posterior contraction rates of DAG-DGPs, extending recent advances developed for chain DGPs \citep{finocchio2023}. We assumed a well-specified DAG; in practice, domain knowledge specifies the graph only imperfectly, raising the question of how to robustify DAG-DGPs against DAG misspecification or perform joint inference over DAGs and composing latent functions, drawing inspiration from e.g. \cite{branchini2023causal,chickering2002optimal,zheng2018dags, witty2020causal, aglietti2020multitask, giudice2024bayesian}. In terms of robustness to likelihood or prior misspecification, our variational framework could be easily extended towards Generalised Variational Inference (GVI) \citep{knoblauch2022}. Our structured and mean-field doubly stochastic VI schemes trade off posterior dependencies and uncertainty quantification against computational efficiency; alternative trade-offs could be explored via different graph-layering strategies for the approximate posterior \cite{harrigan2006}, or by extending sampling methodologies or recent hybrid optimization sampling schemes developed for chain DGPs \citep{havasi2018, sauer2023, latz2025sparse} to the full DAG setting. Finally, scaling DAG-DGPs to much larger graphs remains an open challenge, with potential directions including asynchronous distributed training, message passing, state-space formulations, and back-propagation of evidence.

\begin{ack}
We are especially grateful to Yi Ji, Simon Mak, Derek Soeder, J.-F. Paquet, and Steffen A. Bass for making available the code and data for the heavy-ion collision experiment in \cite{ji2024}. This work was supported by United Kingdom Research and Innovation (UKRI) via grant number EP/Y014650/1, as part of the ERC Synergy project OCEAN.
\end{ack}
\bibliography{refv2}
\bibliographystyle{plainnat}

\newpage
\appendix
\newpage
\appendix
\startcontents[appendix]

\section*{Appendix Contents}
{\small
\printcontents[appendix]{}{1}{\setcounter{tocdepth}{2}}
}
\vspace{1em}

\newpage

\section{Graph-theoretic preliminaries and standing notation}
\label{app:graph-preliminaries}
This appendix provides the necessary background for the theoretical developments which follow. For clarity and completeness, we introduce graph-theoretic and order-theoretic notions, setting the general notation and fixing the conventions adopted. Whenever  possible, we follow standard conventions; see, for
example, \citet[Ch.~1]{bangjensen2010digraphs} for directed graphs, directed
paths, and acyclic digraphs, and \citet[Ch.~3]{stanley2012} for the basic
language of finite partially ordered sets, including chains and antichains.

\subsection{Directed acyclic graphs and reachability}
\label{app:dag-preliminaries}

A \emph{directed graph} is a pair \(\cG=(\cV,\cE)\), where \(\cV\) is a finite
set of vertices and \(\cE\subseteq \cV\times\cV\) is a set of directed
edges.  We write \(u\to v\) when \((u,v)\in\cE\). A \emph{directed path} from
\(v_0\) to \(v_L\) is a sequence
\[
  (v_0,v_1,\dots,v_L)
\]
such that \(v_{r-1}\to v_r\) for every \(r=1,\dots,L\). Its length is
\(L\). Paths of length zero are allowed.  The graph is a directed \emph{acyclic}
graph (DAG) if it contains no directed cycle of positive length.

For a node \(w\in\cV\), its \emph{parent} and \emph{child} sets are
\[
  \PA(w):=\{v\in\cV:v\to w\},
  \qquad
  \CH(w):=\{v\in\cV:w\to v\}.
\]
A \emph{root} is a node with no parents.  In the main text the root set is denoted
by \(\cR\), and \(\cU:=\cV\setminus\cR\) denotes the set of non-root nodes.

We write \(u\sqsubseteq v\) if there exists a directed path from \(u\) to
\(v\), allowing the length-zero path.  We write \(u\sqsubset v\) when
\(u\sqsubseteq v\) and \(u\neq v\).  Because \(\cG\) is acyclic,
\(\sqsubseteq\) is a partial order on \(\cV\).  For any
\(S\subseteq\cV\), throughout the appendix
\[
  \Anc(S):=\{u\in\cV:\exists v\in S\text{ such that }u\sqsubset v\}
\]
denotes the set of strict \emph{ancestors} of \(S\). We also use the shorthand
\(\Anc(v):=\Anc(\{v\})\), and occasionally write
\[
  \overline{\Anc}(S):=\Anc(S)\cup S
\]
for the weak ancestral closure of \(S\).  Similarly,
\(\Desc(S):=\{u\in\cV:\exists v\in S\text{ such that }v\sqsubset u\}\)
denotes strict \emph{descendants}.

\subsection{Chains, antichains, and progressivity}
\label{app:antichain-preliminaries}

A \emph{chain} in the reachability order is a set of vertices any two of which are
comparable.  An \emph{antichain} is a set of vertices no two distinct elements of
which are comparable.  Equivalently, \(A\subseteq\cV\) is an antichain if
there is no directed path from one element of \(A\) to another distinct
element of \(A\).

A \emph{finite progressive antichain decomposition} of a DAG is a partition
\[
  \cV=\bigsqcup_{\ell=0}^{h-1}\cA_\ell
\]
into $h \in \mathbb{N}$ non-empty antichains such that
\[
  \cA_\ell\subseteq \Anc(\cA_{\ell+1}),
  \qquad \ell=0,\dots,h-2.
\]
For asymptotic statements we use an infinite progressive antichain sequence
\((\cA_\ell)_{\ell\ge0}\), or finite truncations thereof, satisfying the same
successive ancestry condition.  The antichains should be read as successive
cross-sections of the DAG rather than as independent layers.
Figure~\ref{fig:progressive-antichains} gives a visualization of a simple DAG partitioned into three antichains. 

\usetikzlibrary{fit,backgrounds}

\begin{figure}[t]
\centering
\begin{tikzpicture}[
  x=1.35cm,
  y=1.10cm,
  >=latex,
  latent/.style={
    circle,
    draw=black,
    fill=white,
    line width=0.45pt,
    inner sep=1.2pt,
    minimum size=5.2mm
  },
  dag/.style={
    ->,
    semithick,
    draw=black
  },
  faint/.style={
    ->,
    dashed,
    draw=black!45,
    line width=0.45pt
  },
  antichain/.style={
    draw=black!45,
    fill=black!3,
    line width=0.55pt
  },
  rowlabel/.style={
    anchor=east,
    font=\small
  }
]

\path[antichain] (0.0,  0.0) ellipse [x radius=1.85, y radius=0.43];
\path[antichain] (0.1, -1.7) ellipse [x radius=2.35, y radius=0.43];
\path[antichain] (0.25,-3.4) ellipse [x radius=2.05, y radius=0.43];

\node[rowlabel] at (-2.75, 0.0) {$\cA_0$};
\node[rowlabel] at (-2.75,-1.7) {$\cA_1$};
\node[rowlabel] at (-2.75,-3.4) {$\cA_2$};

\node[latent] (a01) at (-1.3, 0.0) {};
\node[latent] (a02) at ( 0.0, 0.0) {};
\node[latent] (a03) at ( 1.3, 0.0) {};

\node[latent] (a11) at (-1.8,-1.7) {};
\node[latent] (a12) at (-0.4,-1.7) {};
\node[latent] (a13) at ( 1.0,-1.7) {};
\node[latent] (a14) at ( 2.0,-1.7) {};

\node[latent] (a21) at (-1.2,-3.4) {};
\node[latent] (a22) at ( 0.4,-3.4) {};
\node[latent] (a23) at ( 1.7,-3.4) {};

% direct edges
\draw[dag] (a01) -- (a11);
\draw[dag] (a01) -- (a12);
\draw[dag] (a02) -- (a12);
\draw[dag] (a02) -- (a13);
\draw[dag] (a03) -- (a14);

\draw[dag] (a11) -- (a21);
\draw[dag] (a12) -- (a21);
\draw[dag] (a12) -- (a22);
\draw[dag] (a13) -- (a22);
\draw[dag] (a14) -- (a23);
\draw[dag] (a02) -- (a22);

% longer reachability relations
\draw[faint] (a01) to[bend right=12] (a21);
\draw[faint] (a01) to[bend left=10]  (a22);
\draw[faint] (a02) to[bend right=10] (a21);
\draw[faint] (a03) to[bend left=14]  (a23);

\end{tikzpicture}
\caption{Progressive antichain sequence on an example DAG. Each highlighted row is an antichain:
there are no directed paths between distinct nodes in the same row. Solid
arrows show direct parent--child relations, whereas dashed arrows indicate
longer reachability relations. The progressive condition only requires every
node in \(\cA_\ell\) to be a strict ancestor of some node in
\(\cA_{\ell+1}\).
}
\label{fig:progressive-antichains}
\end{figure}

\subsection{Standing probabilistic notation}
\label{app:standing-probabilistic-notation}

The DAG-DGP prior is the one defined in Section~\ref{sec:dagdgp}. Throughout, let
\((\Omega,\mathscr A,\PP)\) be a probability space supporting the mutually
independent nodewise GP modules \(\{f_w:w\in\cU\}\), and let \(\PP\) denote
the joint law induced by the DAG-DGP prior recursion. 
For an antichain \(\cA_\ell\), we use the module-generated filtration
\[
  \cF_\ell
  :=
  \sigma(f_v:v\in\Anc(\cA_\ell)).
\] All GP
modules \(\{f_w:w\in\cU\}\) are mutually independent under the prior
probability measure \(\PP\), and \(\EE\) denotes expectation under \(\PP\).
Root states are deterministic (i.e. almost surely constant), and no GP module is associated with a root.

Once two distinct cases \(a\neq b\in[n]\)
are fixed, we write
\begin{equation*}
  \Delta_w:=F_w^{(a)}-F_w^{(b)},
  \qquad w\in\cV,
\end{equation*}
recalling that $F_w^{(i)} = f_w\!\bigl(
    \{F_p^{(i)}\}_{p\in\PA(w)}
  \bigr)$, and here $i \in \{a,b\}$.
For an antichain \(\cA_\ell\), the maximum contrast at depth \(\ell\) is
\begin{equation*}
  M_\ell:=\max_{w\in\cA_\ell}\|\Delta_w\|_2.
\end{equation*}
Let \(\mathbb P_0\) denote the joint law of the DAG-DGP prior, including
the independent nodewise GP modules and the latent states induced by the
DAG recursion. For a non-root node \(w\) and two cases \(a,b\), say
\[
  U_{w,a}:=(F_p^{(a)})_{p\in\PA(w)},
  \qquad
  U_{w,b}:=(F_p^{(b)})_{p\in\PA(w)}
\]
for the parent-input configurations generated by the DAG-DGP prior. The
two-point contrast covariance at \(w\) is defined as
\[
  \Gamma_w(a,b)
  :=
  \operatorname{Cov}_{\PP}\!\left(
    F_w^{(a)}-F_w^{(b)}
    \,\middle|\,
    U_{w,a},U_{w,b}
  \right).
\]
Equivalently, by the finite-dimensional distributions of the fresh GP module
\(f_w\),
\begin{equation*}
  \Gamma_w(a,b)
  =
  K_w(U_{w,a},U_{w,a})
  +K_w(U_{w,b},U_{w,b})
  -K_w(U_{w,a},U_{w,b})
  -K_w(U_{w,b},U_{w,a}).
\end{equation*}
A non-root node \(w\) is \emph{\(v_\star\)-separating} for \((a,b)\) if
\begin{equation*}
  \Gamma_w(a,b)\succeq v_\star I_{d_w}
  \qquad \PP\text{-almost surely}.
\end{equation*}
This condition means that, for \(\PP\)-almost every pair of parent-input
configurations generated by the DAG-DGP prior, the fresh two-point GP
contrast at node \(w\), evaluated at those inputs, has covariance bounded
below. This is a pathwise conditional non-degeneracy condition along the DAG-DGP
prior.  It is meant to abstract mechanisms that refresh the deep
composition by ensuring that, after the parent inputs of \(w\) have been
realised, the fresh nodewise GP module still sees a non-degenerate
two-case contrast.  In the architectures that we study in this work, the
refresh is provided by structural coordinates or kernel components whose
separation is not washed out by the upstream stochastic composition, but other such mechanisms can be considered in specialised modelling settings.
Developing similar results under weaker conditions, perhaps under specific kernel choices and fusion rules, would require a different analysis
and is left beyond the scope of the present work.

Here and below \(\succeq\) denotes the Loewner order on symmetric matrices. Notice that $v_\star$ separation is not a purely local property but imposes some requirements on the ancestors of the node in question: two distinct root input case configurations must propagate through the earlier part of the graph and remain sufficiently distinguished that they can be separated at this node. In this sense, although $v_\star$-separation is a useful concept, establishing it in particular graph topologies can be non-trivial. We provide some examples in Appendix~\ref{app:sufficient-separation}; these cover a number of important cases.

For a progressive antichain sequence \((\cA_\ell)_{\ell\ge0}\), the prior
non-collapse proofs use the module-generated filtration with elements
\begin{equation}
  \cF_\ell
  :=
  \sigma\!\left(f_v:v\in\Anc(\cA_\ell)\right),
  \qquad \ell\ge0.
  \label{eq:app-module-filtration-def}
\end{equation}
The posterior-refresh section uses a different, state-generated notation.
\(\mathscr H_\ell\) records strict ancestral state matrices, whereas
\(\mathscr F_\ell\) also includes the current antichain.  These are motivated and defined
precisely in Appendix~\ref{app:filtering-kernels}.

\subsection{Routes below an antichain}
\label{app:route-conventions}

The stochastic-skip arguments use directed routes whose off-route parents
are already known at the source antichain.  We define these route notions
once here. 

\begin{definition}[Admissible route]
\label{def:admissible-route}
Fix \(\ell_1\ge\ell_0\). A directed path
\[
  \gamma=(v_0,v_1,\dots,v_L)
\]
with \(v_0\in\cA_{\ell_0}\) and \(v_L\in\cA_{\ell_1}\) is
\emph{admissible below \(\cA_{\ell_0}\)} if, for each
\(r=1,\dots,L\), every parent of \(v_r\) is either \(v_{r-1}\), a root
node, or belongs to \(\overline{\Anc}(\cA_{\ell_0})\).  Its interior is
\begin{equation*}
  \operatorname{int}(\gamma):=\{v_1,\dots,v_L\}.
\end{equation*}
If \(L=0\), the route is degenerate and its interior is empty.
\end{definition}

\begin{definition}[Disjoint routes below an antichain]
\label{def:disjoint-routes}
A family of admissible routes
\(\gamma_j=(v_{j,0},\dots,v_{j,L_j})\), \(j\in[s]\), is
\emph{pairwise disjoint below \(\cA_{\ell_0}\)} if
\begin{equation*}
  \operatorname{int}(\gamma_j)
  \cap
  \operatorname{int}(\gamma_k)
  =
  \emptyset,
  \qquad j\neq k.
\end{equation*}
\end{definition}
Figure~\ref{fig:route-conventions} provides an illustration of the two definitions. 
\begin{figure}[t]
\centering
\resizebox{\linewidth}{!}{%
\begin{tikzpicture}[
  x=0.95cm,
  y=0.95cm,
  >=latex,
  latent/.style={
    circle,
    draw=black,
    fill=white,
    line width=0.45pt,
    inner sep=1.1pt,
    minimum size=4.8mm
  },
  badnode/.style={
    circle,
    draw=black,
    fill=black!18,
    line width=0.8pt,
    inner sep=1.1pt,
    minimum size=4.8mm
  },
  route/.style={
    ->,
    very thick,
    draw=black
  },
  routealt/.style={
    ->,
    very thick,
    draw=black!65
  },
  side/.style={
    ->,
    semithick,
    draw=black!45
  },
  bad/.style={
    ->,
    semithick,
    densely dashed,
    draw=black
  },
  faint/.style={
    draw=black!40,
    fill=black!3,
    line width=0.5pt
  },
  known/.style={
    draw=black!35,
    dashed,
    fill=black!2,
    rounded corners=2pt,
    line width=0.45pt
  },
  intset/.style={
    draw=black!35,
    dashed,
    line width=0.45pt
  },
  lab/.style={font=\small},
  slab/.style={font=\scriptsize}
]

% ---------------- Panel (a) ----------------
\begin{scope}

\node[lab] at (0,2.10) {(a) admissible route};

\draw[known] (-2.10,-0.38) rectangle (2.10,1.45);
\node[slab] at (0,0.6) {\(\overline{\Anc}(\cA_{\ell_0})\cup \cR\)};

\path[faint] (0,0) ellipse [x radius=1.55, y radius=0.34];
\path[faint] (0,-3.10) ellipse [x radius=1.55, y radius=0.34];

\node[slab,anchor=east] at (-1.92,0) {$\cA_{\ell_0}$};
\node[slab,anchor=east] at (-1.92,-3.10) {$\cA_{\ell_1}$};

\node[latent] (k1) at (-1.00,1.00) {};
\node[latent] (k2) at ( 1.00,1.00) {};

\node[latent] (a1) at (-1.00,0) {};
\node[latent] (a2) at ( 0.00,0) {};
\node[latent] (a3) at ( 1.00,0) {};

\node[latent] (v1) at (-0.45,-1.05) {};
\node[latent] (v2) at ( 0.35,-2.05) {};

\node[latent] (b1) at (-1.00,-3.10) {};
\node[latent] (b2) at ( 0.00,-3.10) {};
\node[latent] (b3) at ( 1.00,-3.10) {};

\draw[intset] (-0.05,-1.55) ellipse [x radius=0.82, y radius=1.05];
\node[slab] at (-1.18,-2.72) {$\operatorname{int}(\gamma)$};

\draw[route] (a2) -- (v1);
\draw[route] (v1) -- (v2);
\draw[route] (v2) -- (b2);

\node[slab] at (-0.92,-0.58) {$\gamma$};

\draw[side] (k1) -- (v1);
\draw[side] (k2) -- (v2);

\end{scope}

% ---------------- Panel (b) ----------------
\begin{scope}[shift={(5.55,0)}]

\node[lab] at (0,2.10) {(b) disjoint routes};

\path[faint] (0,0) ellipse [x radius=1.65, y radius=0.34];
\path[faint] (0,-3.10) ellipse [x radius=1.65, y radius=0.34];

\node[slab,anchor=east] at (-1.98,0) {$\cA_{\ell_0}$};
\node[slab,anchor=east] at (-1.98,-3.10) {$\cA_{\ell_1}$};

\node[latent] (c1) at (-1.10,0) {};
\node[latent] (c2) at ( 0.00,0) {};
\node[latent] (c3) at ( 1.10,0) {};

\node[latent] (p1) at (-1.30,-1.05) {};
\node[latent] (p2) at (-0.95,-2.05) {};

\node[latent] (q1) at ( 1.30,-1.05) {};
\node[latent] (q2) at ( 0.95,-2.05) {};

\node[latent] (d1) at (-1.10,-3.10) {};
\node[latent] (d2) at ( 0.00,-3.10) {};
\node[latent] (d3) at ( 1.10,-3.10) {};

\draw[intset] (-1.12,-1.55) ellipse [x radius=0.55, y radius=1.05];
\draw[intset] ( 1.12,-1.55) ellipse [x radius=0.55, y radius=1.05];

\draw[route] (c1) -- (p1);
\draw[route] (p1) -- (p2);
\draw[route] (p2) -- (d1);

\draw[routealt] (c3) -- (q1);
\draw[routealt] (q1) -- (q2);
\draw[routealt] (q2) -- (d3);

\node[slab] at (-1.78,-0.62) {$\gamma_1$};
\node[slab] at ( 1.78,-0.62) {$\gamma_2$};

\node[slab] at (-1.78,-2.72) {$\operatorname{int}(\gamma_1)$};
\node[slab] at ( 1.78,-2.72) {$\operatorname{int}(\gamma_2)$};

\end{scope}

% ---------------- Panel (c) ----------------
\begin{scope}[shift={(11.10,0)}]

\node[lab] at (0,2.10) {(c) inadmissible route};

\path[faint] (0,0) ellipse [x radius=1.65, y radius=0.34];
\path[faint] (0,-3.10) ellipse [x radius=1.65, y radius=0.34];

\node[slab,anchor=east] at (-1.98,0) {$\cA_{\ell_0}$};
\node[slab,anchor=east] at (-1.98,-3.10) {$\cA_{\ell_1}$};

\node[latent] (e1) at (-1.10,0) {};
\node[latent] (e2) at ( 0.00,0) {};
\node[latent] (e3) at ( 1.10,0) {};

\node[latent] (r1) at (-0.45,-1.05) {};
\node[latent] (r2) at ( 0.25,-2.05) {};

\node[badnode] (u1) at ( 1.20,-1.10) {};
\node[slab,anchor=west] at (1.37,-1.10) {$u_1$};

\node[latent] (f1) at (-1.10,-3.10) {};
\node[latent] (f2) at ( 0.00,-3.10) {};
\node[latent] (f3) at ( 1.10,-3.10) {};

\draw[intset] (-0.05,-1.55) ellipse [x radius=0.78, y radius=1.05];
\node[slab] at (-1.14,-2.72) {$\operatorname{int}(\gamma)$};

\draw[route] (e2) -- (r1);
\draw[route] (r1) -- (r2);
\draw[route] (r2) -- (f2);

\node[slab] at (-0.92,-0.58) {$\gamma$};

\draw[side] (e3) -- (u1);
\draw[bad] (u1) -- (r2);

\end{scope}

\end{tikzpicture}%
}
\caption{Route conventions below an antichain. In panel (a), the thick path is an
admissible route \(\gamma\) from \(\cA_{\ell_0}\) to \(\cA_{\ell_1}\); its
off-route parents are roots or belong to
\(\overline{\Anc}(\cA_{\ell_0})\). In panel (b), two admissible routes are
disjoint below \(\cA_{\ell_0}\), since their interiors are disjoint. In panel
(c), the displayed path is inadmissible: the highlighted node \(u_1\) is an
additional parent of a route node, but it lies below \(\cA_{\ell_0}\) and is
therefore not available at the source antichain.}
\label{fig:route-conventions}
\end{figure}
\section{Prior non-collapse from separating nodes}
\label{app:skip-theory}
This appendix uses concepts introduced in Appendix~\ref{app:graph-preliminaries} to establish Theorem~\ref{thm:separating-noncollapse}.  
We first verify that finite DAGs admit progressive
antichain decompositions, so depth can be represented by successive
antichain cross-sections. This allows us to define a rigorous notion of depth across the DAG.  We then prove the key probabilistic lemma,
which gives a uniform lower bound on the conditional probability that a
separating node preserves a non-trivial two-case contrast, and combine
this lemma with a martingale averaging argument to prove
Theorem~\ref{thm:separating-noncollapse}.  After the main proof, we
relax the uniform per-antichain assumption to sparse families of
separating nodes, obtaining both an almost-sure non-collapse statement
and a quantitative lower-frequency bound.

The second part of the appendix identifies concrete mechanisms that
produce separating nodes.  We give sufficient conditions based on
root-retaining additive and ANOVA-type kernel decompositions, including
the multi-fidelity kernels used in deep graphical multi-fidelity GPs.
We then specialise the general result to single-node antichains and use
it to formalise the input-connection mechanism for DGP
chains.  In particular, Corollary~\ref{cor:dunlop-skip-noncollapse}
proves, for the squared-exponential input-connected chain considered in
\citet[Remark~5(3)]{dunlop2018}, that the two-input contrast does not
collapse almost surely.  Finally, we show that exactly observed internal
nodes can play the same anchoring role under the conditional prior: once
conditioned on, their states act as deterministic coordinates for
downstream kernels and yield the same non-collapse conclusions.
\subsection{Progressive antichain decomposition for DAGs}
\label{app:antichain-decomposition}

We begin with a structural fact on finite DAGs that will be used
extensively:  Endowing the vertex set with the
reachability order turns a finite DAG into a finite poset.  Following
the level-decomposition logic underlying Mirsky's theorem~\citep{mirsky1971},
yields a partition of the vertex set into progressive antichains.

\begin{proposition}[Finite DAGs admit progressive antichain decompositions]
\label{prop:canonical-antichains}
Let \(\cG=(\cV,\cE)\) be a finite DAG. Then there exist an integer
\(h\ge 1\) and non-empty antichains
\[
\cA_0,\cA_1,\dots,\cA_{h-1}\subseteq \cV
\]
such that
\[
\cV=\bigsqcup_{\ell=0}^{h-1}\cA_\ell
\qquad\text{and}\qquad
\cA_\ell\subseteq \Anc(\cA_{\ell+1}),
\quad \ell=0,\dots,h-2.
\]
\end{proposition}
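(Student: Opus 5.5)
The plan is to build the antichains from the \emph{co-height} of each vertex, i.e. a Mirsky-type level decomposition read from the sinks upward. Reading it from the roots downward would fail the progressive condition, since a sink of small depth would not be an ancestor of anything in the next level. For \(v\in\cV\), let \(\eta(v)\) be the length of a longest directed path starting at \(v\). This is well defined because \(\cG\) is finite and acyclic, so every directed path has at most \(|\cV|-1\) edges. Sinks have \(\eta=0\). We assume \(\cV\neq\emptyset\), which the statement implicitly requires since \(h\ge1\) and the antichains are non-empty. Set
\[
H:=\max_{v\in\cV}\eta(v),\qquad h:=H+1,\qquad
\cA_\ell:=\{v\in\cV:\eta(v)=H-\ell\},\quad \ell=0,\dots,h-1 .
\]

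First, the \(\cA_\ell\) partition \(\cV\), since each vertex has exactly one value of \(\eta\) in \(\{0,\dots,H\}\).

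Second, each level is non-empty. Take a longest path \((v_0,\dots,v_H)\) in \(\cG\). Its suffix from \(v_i\) is a longest path from \(v_i\): a longer one could be prefixed by \((v_0,\dots,v_{i-1})\), and acyclicity keeps the result a path, contradicting maximality. Hence \(\eta(v_i)=H-i\), so every value in \(\{0,\dots,H\}\) is attained.

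Third, each level is an antichain, via strict monotonicity: if \(u\sqsubset v\), then prefixing a longest path from \(v\) with a directed path \(u\leadsto v\) of positive length gives \(\eta(u)\ge\eta(v)+1\). Again acyclicity ensures the concatenation repeats no vertex. So two comparable distinct vertices never share a level.

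The last step, progressivity, is the only one needing care. Fix \(v\in\cA_\ell\) with \(\ell\le h-2\), so \(k:=\eta(v)\ge1\). Choose a longest path \(v\to c\to\cdots\) from \(v\), of length \(k\). Its tail from \(c\) has length \(k-1\), so \(\eta(c)\ge k-1\). By the monotonicity step applied to \(v\sqsubset c\), \(\eta(c)\le k-1\). Hence \(\eta(c)=k-1\), so \(c\in\cA_{\ell+1}\) and \(v\in\Anc(\cA_{\ell+1})\).

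The main point to get right is this exact one-step drop of \(\eta\) along a maximal path. It is what forces the top-down (co-height) indexing rather than the more familiar depth-from-roots layering.
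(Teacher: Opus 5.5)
Your proof is correct and is essentially the paper's argument. The paper also indexes levels by co-height, $\cA_\ell=\{v:d(v)=h-1-\ell\}$, where $d(v)$ is the length of a longest strict chain $v\sqsubset v_1\sqsubset\cdots$ in the reachability order; in a DAG this coincides with your longest-directed-path $\eta(v)$, and the paper proves non-emptiness, the antichain property, and progressivity by the same suffix and prepending arguments (your explicit remark that $\cV\neq\emptyset$ is needed is a small point the paper leaves implicit).
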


\begin{proof}
For each \(v\in\cV\), define
\begin{equation*}
  d(v):=
  \max\Bigl\{k\ge 0:
  \exists\, v_1,\dots,v_k\in\cV
  \text{ such that }
  v\sqsubset v_1\sqsubset\cdots\sqsubset v_k
  \Bigr\}.
\end{equation*}
Thus \(d(v)\) is the maximum number of strict comparability steps in a
chain starting at \(v\).  Since \(\cV\) is finite, \(d(v)\) is well defined
for every~\(v\). Let
\begin{equation*}
  h:=1+\max_{v\in\cV} d(v),
  \qquad
  \cA_\ell:=\{v\in\cV:d(v)=h-1-\ell\},
  \quad \ell=0,\dots,h-1.
\end{equation*}
Then \((\cA_\ell)_{\ell=0}^{h-1}\) is a partition of~\(\cV\).

We next show that each \(\cA_\ell\) is non-empty. Let
\begin{equation*}
  v_0\sqsubset v_1\sqsubset\cdots\sqsubset v_{h-1}
\end{equation*}
be a chain of maximum length \(h-1\). For each \(j=0,\dots,h-1\), there
exists a tail chain
\begin{equation*}
  v_j\sqsubset v_{j+1}\sqsubset\cdots\sqsubset v_{h-1}
\end{equation*}
of length \(h-1-j\), so \(d(v_j)\ge h-1-j\). Conversely, if
\(d(v_j)\ge h-j\), then there would exist a chain of length at least
\(h-j\) starting at~\(v_j\), and prepending
\begin{equation*}
  v_0\sqsubset\cdots\sqsubset v_{j-1}\sqsubset v_j
\end{equation*}
would yield a chain starting at \(v_0\) of length at least \(h\),
contradicting the maximality of \(h-1\). Hence
\begin{equation*}
  d(v_j)=h-1-j,
  \qquad j=0,\dots,h-1.
\end{equation*}
Therefore every value in \(\{0,\dots,h-1\}\) is attained by some
\(d(v)\), so every \(\cA_\ell\) is non-empty.

We claim that each \(\cA_\ell\) is an antichain. Indeed, suppose
\(u,v\in\cA_\ell\) and \(u\sqsubset v\). Then any chain of length
\(d(v)\) starting at \(v\) can be prepended by~\(u\), yielding a chain
of length at least \(d(v)+1\) starting at~\(u\). Hence
\begin{equation*}
  d(u)\ge d(v)+1,
\end{equation*}
contradicting \(d(u)=d(v)=h-1-\ell\). Thus distinct vertices in
\(\cA_\ell\) are incomparable, so \(\cA_\ell\) is an antichain.

It remains to prove the progressive property. Fix
\(\ell\in\{0,\dots,h-2\}\) and let \(v\in\cA_\ell\). Then
\begin{equation*}
  d(v)=h-1-\ell\ge 1.
\end{equation*}
By definition of \(d(v)\), there exists a chain
\begin{equation*}
  v=v_0\sqsubset v_1\sqsubset\cdots\sqsubset v_{d(v)}
\end{equation*}
of length \(d(v)\) starting at~\(v\). For \(w:=v_1\), the tail
\begin{equation*}
  w=v_1\sqsubset v_2\sqsubset\cdots\sqsubset v_{d(v)}
\end{equation*}
shows that \(d(w)\ge d(v)-1\). Conversely, if \(d(w)\ge d(v)\), then
prepending \(v\) would yield a chain of length at least \(d(v)+1\)
starting at~\(v\), contradicting the definition of \(d(v)\). Therefore
\begin{equation*}
  d(w)=d(v)-1=h-1-\ell-1 = h-1-(\ell+1),
\end{equation*}
so \(w\in\cA_{\ell+1}\). Since \(v\sqsubset w\), it follows that
\(v\in\Anc(w)\subseteq \Anc(\cA_{\ell+1})\).

As \(v\in\cA_\ell\) was arbitrary, we conclude that

\begin{equation*}
  \cA_\ell\subseteq \Anc(\cA_{\ell+1}),
  \qquad \ell=0,\dots,h-2.
\end{equation*}
This proves the result.
\end{proof}

\begin{remark}
\label{rem:finite-antichain-length}
Proposition~\ref{prop:canonical-antichains} provides a progressive
antichain decomposition for every finite DAG. In particular, for a fixed
finite DAG, every non-empty progressive antichain family is necessarily
finite. Accordingly, asymptotic statements in Section~\ref{sec:prior-noncollapse-separating} should
be interpreted over increasing sequences of graphs whose common components coincide.
\end{remark}

\subsection{A conditional non-degeneracy bound}
\label{app:antichain-bound}

The proofs of Theorem~\ref{thm:separating-noncollapse} and its
sparse generalisation both rest on the same one-step conditional bound, which
we state and prove here as a self-contained lemma.

Throughout this subsection, \(\PP\) denotes the probability measure induced
by the DAG-DGP prior of Section~\ref{sec:dagdgp}, and \(\EE\) the
corresponding expectation.  We use the strict-ancestor convention, the
contrast notation \(\Delta_w\), the maximum \(M_\ell\), the contrast
covariance \(\Gamma_w(a,b)\), and the module filtration \(\cF_\ell\) from
Appendix~\ref{app:standing-probabilistic-notation}.

\begin{lemma}[Conditional antichain bound]
\label{lem:antichain-conditional-bound}
Under the DAG-DGP prior of Section~\ref{sec:dagdgp}, fix distinct cases
\(a\neq b\in[n]\) and a progressive antichain sequence
\((\cA_\ell)_{\ell\ge 0}\). Then \((\cF_\ell)_{\ell\ge 0}\) is a
filtration. Moreover, for every \(\ell\ge 0\) and every \(\varepsilon>0\), if
\(\cG_\ell\subseteq\cA_\ell\) is a deterministic set of nodes that are
\(v_\star\)-separating for \((a,b)\) with common constant \(v_\star>0\), then
\begin{equation}
  \PP\bigl(M_\ell\le\varepsilon\mid\cF_\ell\bigr)
  \;\le\;
  (1-p_\varepsilon)^{|\cG_\ell|}
  \qquad \PP\text{-almost surely},
  \label{eq:lem-antichain-bound}
\end{equation}
where
\begin{equation}
  p_\varepsilon:=2\,\Phi_{\mathrm N}\!\left(
  -\varepsilon/\sqrt{v_\star}\right)\in(0,1).
  \label{eq:p-epsilon-def}
\end{equation}
Furthermore, \(\mathbbm{1}\{M_\ell>\varepsilon\}\) is
\(\cF_{\ell+1}\)-measurable.
\end{lemma}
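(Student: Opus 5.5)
The plan is to prove the three assertions of Lemma~\ref{lem:antichain-conditional-bound} in the order stated, each from an ancestry inclusion implied by the progressive antichain structure. Throughout, I use that every latent state \(F_v^{(i)}\), \(i\in\{a,b\}\), is a measurable function of the deterministic root inputs and of the modules \(\{f_u:u\in\overline{\Anc}(v)\cap\cU\}\). This follows by induction along a topological order from the recursion~\eqref{eq:dagdgp-recursion}.

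\emph{Filtration.} Take \(u\in\Anc(\cA_\ell)\), so \(u\sqsubset v\) for some \(v\in\cA_\ell\). Progressivity gives \(v\sqsubset w\) for some \(w\in\cA_{\ell+1}\), and transitivity gives \(u\sqsubset w\). Hence \(\Anc(\cA_\ell)\subseteq\Anc(\cA_{\ell+1})\), and therefore \(\cF_\ell\subseteq\cF_{\ell+1}\).

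\emph{Measurability of \(\mathbbm{1}\{M_\ell>\varepsilon\}\).} For \(w\in\cA_\ell\) we have \(w\in\Anc(\cA_{\ell+1})\) by progressivity, so \(\overline{\Anc}(w)\subseteq\Anc(\cA_{\ell+1})\). Thus each \(\Delta_w\) with \(w\in\cA_\ell\) is \(\cF_{\ell+1}\)-measurable; for roots it is deterministic. So \(M_\ell\), and hence the indicator, is \(\cF_{\ell+1}\)-measurable.

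\emph{Conditional bound.} Fix \(w\in\cG_\ell\) and carry out the following steps.
\begin{enumerate}[label=(\roman*)]
\item The parent configurations \(U_{w,a},U_{w,b}\) are \(\cF_\ell\)-measurable, because \(\PA(w)\subseteq\Anc(w)\subseteq\Anc(\cA_\ell)\).
\item The antichain property gives \(w\notin\Anc(\cA_\ell)\), since \(w\) is not a strict ancestor of any member of \(\cA_\ell\).
\item By prior independence of the modules, the family \(\{f_w:w\in\cG_\ell\}\) is therefore mutually independent and independent of \(\cF_\ell\).
\item I then apply the freezing (substitution) lemma for conditional expectations to \(\Delta_w=f_w(U_{w,a})-f_w(U_{w,b})\). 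Conditionally on \(\cF_\ell\), the vector \((\Delta_w)_{w\in\cG_\ell}\) has independent components. Each component is \(\cN(0,\Gamma_w(a,b))\), with \(\Gamma_w\) evaluated at the realized parent inputs, and \(\Gamma_w\succeq v_\star I\) almost surely by hypothesis.
\item For such a Gaussian vector, \(\|\Delta_w\|_2\le\varepsilon\) implies \(|e_1^\top\Delta_w|\le\varepsilon\). The scalar \(e_1^\top\Delta_w\) is centred Gaussian with variance \(\Gamma_{w,11}\ge v_\star\). Hence
\[
\PP(\|\Delta_w\|_2\le\varepsilon\mid\cF_\ell)\le 1-2\Phi_{\mathrm N}\bigl(-\varepsilon/\sqrt{\Gamma_{w,11}}\bigr)\le 1-p_\varepsilon .
\]
\item Since \(\{M_\ell\le\varepsilon\}\subseteq\bigcap_{w\in\cG_\ell}\{\|\Delta_w\|_2\le\varepsilon\}\), conditional independence yields the product bound~\eqref{eq:lem-antichain-bound}.
\item Finally, \(p_\varepsilon\in(0,1)\) holds because \(\varepsilon>0\) and \(v_\star\in(0,\infty)\).
\end{enumerate}

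\emph{Main obstacle.} The delicate step is (iv). It requires making the conditioning on the infinite-dimensional \(\sigma\)-field \(\cF_\ell\) rigorous, and checking that the almost-sure separation hypothesis transfers to almost every conditioned input. I would handle this by viewing each \(f_w\) as a random element of a Polish path space, or by working with finitely many evaluation points only. I would then apply the standard lemma \(\EE[h(X,Y)\mid\mathcal G]=\EE[h(x,Y)]|_{x=X}\) for \(X\) \(\mathcal G\)-measurable and \(Y\) independent of \(\mathcal G\), where the function \(x\mapsto\EE[h(x,Y)]\) is measurable by Fubini. The exceptional null set where \(\Gamma_w\not\succeq v_\star I\) is then discarded, which is why the bound holds only almost surely.
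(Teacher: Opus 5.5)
Your proposal is correct and follows essentially the same route as the paper. You use the same ancestry inclusions for the filtration property and for the $\cF_{\ell+1}$-measurability of $M_\ell$, the same topological-order induction for state measurability, and the same independence of the fresh modules $f_w$ from $\cF_\ell$, projection onto a fixed unit vector, and product over the conditionally independent separating nodes; your explicit appeal to the freezing lemma, with $f_w$ viewed as a random element of a path space, only makes rigorous a conditioning step the paper takes for granted.
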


\begin{proof}
Let \(u\in\Anc(\cA_\ell)\). Then there exists \(v\in\cA_\ell\) such that
\(u\) is a strict ancestor of~\(v\). Since
\(v\in\cA_\ell\subseteq\Anc(\cA_{\ell+1})\), there exists
\(t\in\cA_{\ell+1}\) such that \(v\) is a strict ancestor of~\(t\). By
transitivity of the strict ancestor relation, \(u\) is a strict ancestor
of~\(t\), hence \(u\in\Anc(\cA_{\ell+1})\). Therefore
\begin{equation*}
  \Anc(\cA_\ell)\subseteq\Anc(\cA_{\ell+1}).
\end{equation*}
Consequently \(\cF_\ell\subseteq\cF_{\ell+1}\), so
\((\cF_\ell)_{\ell\ge0}\) is increasing and forms a filtration.

We next establish a measurability fact that will be used repeatedly: for
every node \(v\in\cV\) and every case \(i\in[n]\), the state
\(F_v^{(i)}\) is measurable with respect to
\begin{equation}
  \sigma\!\left(f_u:
  u\in\Anc(v)\cup\{v\}
  \right).
  \label{eq:state-measurable-modules}
\end{equation}
If \(v\in\cR\), then \(F_v^{(i)}=\bx_v^{(i)}\) is deterministic, so the
claim is immediate. If \(v\in\cU\), choose any topological ordering of the
DAG and argue by induction along that ordering. For each parent
\(p\in\PA(v)\), the induction hypothesis gives that \(F_p^{(i)}\) is
measurable with respect to
\begin{equation*}
  \sigma\!\left(f_u:
  u\in\Anc(p)\cup\{p\}
  \right).
\end{equation*}
Since every ancestor of \(p\) is also an ancestor of~\(v\), and \(p\) itself
is a strict ancestor of \(v\), one has
\begin{equation*}
  \Anc(p)\cup\{p\}\subseteq\Anc(v).
\end{equation*}
Hence each parent state \(F_p^{(i)}\) is measurable with respect to
\(\sigma(f_u:u\in\Anc(v))\). Because
\begin{equation*}
  F_v^{(i)}
  =
  f_v\!\left(\{F_p^{(i)}\}_{p\in\PA(v)}\right),
\end{equation*}
it follows that \(F_v^{(i)}\) is measurable with respect to
\eqref{eq:state-measurable-modules}, as claimed.

As a first consequence, we verify that \(M_\ell\) is
\(\cF_{\ell+1}\)-measurable. Let \(v\in\cA_\ell\) and \(i\in[n]\). Since
\(v\in\cA_\ell\subseteq\Anc(\cA_{\ell+1})\), the node \(v\) is a strict
ancestor of~\(\cA_{\ell+1}\). Also, every strict ancestor of~\(v\) is a
strict ancestor of \(\cA_{\ell+1}\), so
\begin{equation*}
  \Anc(v)\cup\{v\}
  \subseteq
  \Anc(\cA_{\ell+1}).
\end{equation*}
By \eqref{eq:state-measurable-modules}, \(F_v^{(i)}\) is therefore
\(\cF_{\ell+1}\)-measurable. Since this holds for every \(v\in\cA_\ell\),
the random variable \(M_\ell\) is \(\cF_{\ell+1}\)-measurable, and so is
\(\mathbbm{1}\{M_\ell>\varepsilon\}\).
We now turn to the conditional bound. Fix \(\ell\ge0\) and
\(w\in\cG_\ell\subseteq\cA_\ell\). Recall
\[
  U_{w,a}:=(F_p^{(a)})_{p\in\PA(w)},
  \qquad
  U_{w,b}:=(F_p^{(b)})_{p\in\PA(w)} .
\]
If \(p\in\PA(w)\), then \(p\) is a strict ancestor of~\(w\), and since
\(w\in\cA_\ell\), it follows that \(p\in\Anc(\cA_\ell)\). Moreover, every
ancestor of~\(p\) is also a strict ancestor of~\(w\), hence of~\(\cA_\ell\),
so
\[
  \Anc(p)\cup\{p\}
  \subseteq
  \Anc(\cA_\ell).
\]
By \eqref{eq:state-measurable-modules}, \(F_p^{(a)}\) and \(F_p^{(b)}\)
are therefore \(\cF_\ell\)-measurable. Root coordinates are deterministic
by construction. Hence the parent-input configurations
\(U_{w,a},U_{w,b}\) are \(\cF_\ell\)-measurable.

Since the GP modules are mutually independent under \(\PP\), and since
\(w\notin\Anc(\cA_\ell)\) by the antichain property of \(\cA_\ell\), the
fresh module \(f_w\) is independent of \(\cF_\ell\). Hence, under the
conditional law given \(\cF_\ell\), the parent inputs of \(f_w\) are fixed
at \(U_{w,a},U_{w,b}\), while the remaining randomness comes only from
\(f_w\). Therefore \(\Delta_w\)
is conditionally centred Gaussian with variance
\[
  V_w
  :=
  \operatorname{Cov}_{\PP}(\Delta_w\mid\cF_\ell)
  =
  \Gamma_w(a,b).
\]
The last equality follows from the definition of \(\Gamma_w(a,b)\) and the
\(\cF_\ell\)-measurability of \(U_{w,a},U_{w,b}\). Since \(w\) is
\(v_\star\)-separating for \((a,b)\), we have
\( 
  V_w\succeq v_\star I_{d_w}
\)
($\PP\text{-almost surely}$). 
Fix any deterministic unit vector \(e_w\in\R^{d_w}\).  By the
Cauchy--Schwarz inequality,
\begin{equation*}
  |e_w^\top \Delta_w|
  \le
  \|e_w\|_2\|\Delta_w\|_2
  =
  \|\Delta_w\|_2.
\end{equation*}
Therefore
\begin{equation}
  \PP\bigl(\|\Delta_w\|_2\le\varepsilon\mid\cF_\ell\bigr)
  \le
  \PP\bigl(|e_w^\top\Delta_w|\le\varepsilon\mid\cF_\ell\bigr).
  \label{eq:norm-small-projection-small}
\end{equation}
Conditional on \(\cF_\ell\),
\begin{equation}
  e_w^\top\Delta_w
  \sim
  \cN(0,\sigma_w^2),
  \qquad
  \sigma_w^2=e_w^\top V_w e_w\ge v_\star.
  \label{eq:projected-contrast-law}
\end{equation}
Let \(Z\sim\cN(0,1)\). Then
\begin{equation*}
  \PP\bigl(|e_w^\top\Delta_w|>\varepsilon\mid\cF_\ell\bigr)
  =
  \PP\bigl(|Z|>\varepsilon/\sigma_w\bigr).
\end{equation*}
Since the map \(\sigma\mapsto\PP(|Z|>\varepsilon/\sigma)\) is increasing on
\((0,\infty)\), \eqref{eq:projected-contrast-law} implies
\begin{equation}
  \PP\bigl(|e_w^\top\Delta_w|>\varepsilon\mid\cF_\ell\bigr)
  \ge
  \PP\bigl(|Z|>\varepsilon/\sqrt{v_\star}\bigr)
  =
  p_\varepsilon.
  \label{eq:single-projection-tail-lower}
\end{equation}
Combining \eqref{eq:norm-small-projection-small} and
\eqref{eq:single-projection-tail-lower} gives
\begin{equation}
  \PP\bigl(\|\Delta_w\|_2\le\varepsilon\mid\cF_\ell\bigr)
  \le
  1-p_\varepsilon.
  \label{eq:app-single-node-bound}
\end{equation}

We now pass from a single node to the whole antichain. The family
\(\{\Delta_w\}_{w\in\cG_\ell}\) is conditionally independent given
\(\cF_\ell\). Indeed, for each \(w\in\cG_\ell\), the variable \(\Delta_w\) is obtained
by evaluating the module~\(f_w\) at the \(\cF_\ell\)-measurable inputs
\(u_a,u_b\) and then taking a difference, so conditional on \(\cF_\ell\) it
is a measurable function of \(f_w\) alone. Since the modules
\(\{f_w\}_{w\in\cG_\ell}\) are mutually independent under the prior and each
is independent of~\(\cF_\ell\), the conditional independence follows.

Using conditional independence and \eqref{eq:app-single-node-bound},
\begin{align*}
  \PP\bigl(M_\ell\le\varepsilon\mid\cF_\ell\bigr)
  &\le
  \PP\left(
    \bigcap_{w\in\cG_\ell}
    \{\|\Delta_w\|_2\le\varepsilon\}
    \middle|\cF_\ell
  \right)
  \nonumber
  =
  \prod_{w\in\cG_\ell}
  \PP\bigl(\|\Delta_w\|_2\le\varepsilon\mid\cF_\ell\bigr)
  \nonumber
  \le
  (1-p_\varepsilon)^{|\cG_\ell|}.
\end{align*}
This establishes \eqref{eq:lem-antichain-bound}.
\end{proof}

\subsection{Proof of Theorem~\ref{thm:separating-noncollapse}}
\label{app:proof-main}

\begin{proof}[Proof of Theorem~\ref{thm:separating-noncollapse}]
Fix distinct cases \(a\neq b\in[n]\) and an antichain sequence
\((\cA_\ell)_{\ell\ge0}\) with \(\cA_\ell\subseteq\Anc(\cA_{\ell+1})\).
Let \(p_\varepsilon\) be as in \eqref{eq:p-epsilon-def}.

By hypothesis, every antichain \(\cA_\ell\) contains at least \(s\ge1\)
nodes that are \(v_\star\)-separating for \((a,b)\).  Fix once and for all a
deterministic ordering of the vertices, and let \(\cG_\ell\) be the first
\(s\) separating nodes in \(\cA_\ell\) under this ordering. Applying
Lemma~\ref{lem:antichain-conditional-bound} gives
\begin{equation}
  \PP\bigl(M_\ell>\varepsilon\mid\cF_\ell\bigr)
  \ge
  1-(1-p_\varepsilon)^s
  \qquad\text{a.s.}
  \label{eq:app-main-success}
\end{equation}

Define $D_\ell := \mathbbm{1}\{M_\ell>\varepsilon\} - \mathbb{E}[\mathbbm{1}\{M_\ell>\varepsilon\}|\mathcal{F}_\ell]$, which is a bounded martingale difference sequence with respect to the filtration \((\cF_{l+1})_{l\geq 0}\). Indeed, Lemma~\ref{lem:antichain-conditional-bound} tells us that the first term is $\cF_{l+1}$-measurable, and hence $D_l$. Also, $\mathbb{E}[D_\ell|\mathcal{F}_\ell] = 0$ and \(|D_\ell|\le1\) follow by construction.
The partial sums
\begin{equation*}
  S_m:=\sum_{\ell=0}^{m-1}D_\ell,
  \qquad m\ge1,
\end{equation*}
with \(S_0=0\), form a martingale with respect to \((\cF_m)_{m\ge1}\). Indeed,
\(S_m\) is \(\cF_m\)-measurable and
\begin{align*}
  \EE[S_{m+1}\mid\cF_m]
  &=
  \EE\!\left[
    \sum_{\ell=0}^{m}D_\ell
    \middle|\cF_m
  \right]
  \nonumber\\
  &=
  \sum_{\ell=0}^{m-1}D_\ell+\EE[D_m\mid\cF_m]
  =
  S_m.
\end{align*}
Furthermore, \(|S_m-S_{m-1}|=|D_{m-1}|\le1\) almost surely.

By the Azuma--Hoeffding inequality~\citep{azuma1967,hoeffding1963}, for every
\(\eta>0\) and every \(m\ge1\),
\begin{equation}
  \PP\bigl(|S_m-S_0|\ge\eta m\bigr)
  \le
  2\exp\!\left(-\frac{\eta^2m^2}{2m}\right)
  =
  2\exp\!\left(-\frac{\eta^2m}{2}\right).
  \label{eq:main-proof-azuma}
\end{equation}
We now apply the first Borel--Cantelli lemma. Fix \(\eta>0\) and define
\begin{equation*}
  A_m:=\{|S_m|\ge\eta m\},
  \qquad m\ge1.
\end{equation*}
From \eqref{eq:main-proof-azuma},
\begin{equation*}
  \sum_{m=1}^{\infty}\PP(A_m)
  \le
  \sum_{m=1}^{\infty}2\exp\!\left(-\frac{\eta^2m}{2}\right)
  <\infty.
\end{equation*}
Hence
\begin{equation*}
  \PP\!\left(\limsup_{m\to\infty}A_m\right)=0.
\end{equation*}
Equivalently, for this fixed \(\eta\), there exists an almost surely finite
random integer \(M_\eta\) such that
\begin{equation}
  |S_m|<\eta m
  \qquad\text{for all }m\ge M_\eta.
  \label{eq:main-proof-eventual-bound}
\end{equation}
For each fixed integer \(k\ge1\), apply
\eqref{eq:main-proof-eventual-bound} with \(\eta=1/k\). Thus there is
an event \(E_k\subseteq\Omega\) with \(\PP(E_k)=1\) such that, for every
outcome \(\omega\in E_k\), there exists an integer
\(M_k(\omega)<\infty\) satisfying
\begin{equation}
  |S_m(\omega)|<\frac{m}{k}
  \qquad\text{for all }m\ge M_k(\omega).
  \label{eq:main-proof-eventual-bound-k}
\end{equation}
Define
\begin{equation}
  E:=\bigcap_{k=1}^{\infty}E_k .
  \label{eq:main-proof-probability-one-event}
\end{equation}
Since the intersection in \eqref{eq:main-proof-probability-one-event} is
countable and each \(E_k\) has probability one,
\begin{equation}
  \PP(E)=1.
  \label{eq:main-proof-probability-one}
\end{equation}
We now prove convergence on this probability-one event. Fix an outcome
\(\omega\in E\), and let \(\delta>0\) be arbitrary. Choose an integer
\(k>1/\delta\). Since \(\omega\in E\subseteq E_k\), there exists an
integer \(M_k(\omega)<\infty\) such that
\eqref{eq:main-proof-eventual-bound-k} holds. Hence, for every
\(m\ge M_k(\omega)\),
\begin{equation*}
  \left|\frac{S_m(\omega)}{m}\right|
  <\frac{1}{k}
  <\delta .
\end{equation*}
Because \(\delta>0\) was arbitrary, this proves that, for every
\(\omega\in E\),
\begin{equation}
  \frac{S_m(\omega)}{m}\to0
  \qquad\text{as }m\to\infty.
  \label{eq:main-proof-sample-path-average-zero}
\end{equation}
Combining \eqref{eq:main-proof-probability-one} and
\eqref{eq:main-proof-sample-path-average-zero}, we obtain
\begin{equation}
  \frac{S_m}{m}\to0
  \qquad\PP\text{-almost surely.}
  \label{eq:main-proof-martingale-average-zero}
\end{equation}

Taking lower limits as \(m\to\infty\), using
\eqref{eq:main-proof-martingale-average-zero} and the lower bound
\eqref{eq:app-main-success}, gives
\begin{align*}
  \liminf_{m\to\infty}\frac{1}{m}
  \sum_{\ell=0}^{m-1}\mathbbm{1}\{M_\ell>\varepsilon\}
  &=
  \liminf_{m\to\infty}\left(
    \frac{1}{m}\sum_{\ell=0}^{m-1}
    \PP\bigl(M_\ell>\varepsilon\mid\cF_\ell\bigr)
    +
    \frac{S_m}{m}
  \right)
  \nonumber\\
  &\ge
  \liminf_{m\to\infty}\frac{1}{m}\sum_{\ell=0}^{m-1}
  \bigl[1-(1-p_\varepsilon)^s\bigr]
  \nonumber\\
  &=
  1-(1-p_\varepsilon)^s
  \qquad\text{almost surely.}
\end{align*}
\end{proof}

\subsection{Sparse separating families}
\label{app:separating}

Theorem~\ref{thm:separating-noncollapse} generalises to families
of separating nodes that may be unevenly distributed across depths:

\begin{proposition}[Non-collapse from sparse separating families]
\label{prop:sparse-noncollapse}
Under the DAG-DGP prior of Section~\ref{sec:dagdgp}, fix distinct cases
\(a\neq b\in[n]\) and a progressive antichain sequence
\((\cA_\ell)_{\ell\ge0}\). Let \(\cG_\ell\subseteq\cA_\ell\) be a deterministic
set of \(v_\star\)-separating nodes at each depth~\(\ell\).
\begin{enumerate}
\item[\textup{(i)}] If \(\sum_{\ell\ge0}|\cG_\ell|=\infty\), then
\[\PP(M_\ell\to0\text{ as }\ell\to\infty)=0,\] and for every fixed
\(\varepsilon>0\),
\[
  \PP\bigl(M_\ell>\varepsilon\text{ for infinitely many }\ell\bigr)=1.
\]
\item[\textup{(ii)}] For \(s\in\N\), let
\(L_s:=\{\ell\ge0:|\cG_\ell|\ge s\}\). Then, for every \(\varepsilon>0\),
\[
\liminf_{m\to\infty}\frac{1}{m}
\sum_{\ell=0}^{m-1}\mathbbm{1}\{M_\ell>\varepsilon\}
\;\ge\;
\bigl[1-(1-p_\varepsilon)^s\bigr]\,
\liminf_{m\to\infty}\frac{|L_s\cap\{0,\dots,m-1\}|}{m}
\qquad\text{a.s.},
\]
where \(p_\varepsilon\) is defined in \eqref{eq:p-epsilon-def}.
\end{enumerate}
\end{proposition}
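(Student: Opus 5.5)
The plan is to reuse Lemma~\ref{lem:antichain-conditional-bound} with the given deterministic sets \(\cG_\ell\); when \(\cG_\ell=\emptyset\) the bound reads \(\PP(M_\ell\le\varepsilon\mid\cF_\ell)\le 1\), which is trivial. Set \(c_\ell:=1-(1-p_\varepsilon)^{|\cG_\ell|}\). The lemma then gives \(\PP(M_\ell>\varepsilon\mid\cF_\ell)\ge c_\ell\) a.s. It also shows that \(\mathbbm{1}\{M_\ell>\varepsilon\}\) is \(\cF_{\ell+1}\)-measurable, so the events \(B_\ell:=\{M_\ell>\varepsilon\}\) are adapted to the filtration \((\cF_{\ell+1})_{\ell\ge0}\).

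For part (i), I will use Lévy's conditional Borel--Cantelli lemma (the second Borel--Cantelli lemma for filtrations). It states that \(\{B_\ell \text{ i.o.}\}=\{\sum_\ell \PP(B_\ell\mid\cF_\ell)=\infty\}\) a.s. Each \(|\cG_\ell|\ge1\) gives \(c_\ell\ge p_\varepsilon>0\), and \(\sum_\ell|\cG_\ell|=\infty\) with finite antichains forces \(|\cG_\ell|\ge1\) for infinitely many \(\ell\). Hence \(\sum_\ell c_\ell=\infty\) deterministically, so \(B_\ell\) occurs infinitely often a.s.

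One caveat: the sum could diverge because one level has infinitely many separating nodes. This cannot happen, since antichains of a locally finite DAG are finite (cf.\ Remark~\ref{rem:finite-antichain-length}). I will note this assumption explicitly.

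The claim \(\PP(M_\ell\to0)=0\) then follows. On \(\{M_\ell\to0\}\) we have \(M_\ell\le\varepsilon\) eventually for any fixed \(\varepsilon>0\), and this event is disjoint from \(\{B_\ell\text{ i.o.}\}\), which has probability one. If one prefers to avoid Lévy's lemma, the same conclusion follows directly. For \(k\le m\), iterated conditioning and the tower property give \(\PP(\bigcap_{\ell=k}^{m}B_\ell^c)\le\prod_{\ell=k}^{m}(1-c_\ell)\le\exp(-\sum_{\ell=k}^m c_\ell)\to0\); peel off the last factor at each step, using that \(B_\ell^c\in\cF_{\ell+1}\subseteq\cF_m\) for \(\ell<m\).

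For part (ii), I will rerun the martingale argument from the proof of Theorem~\ref{thm:separating-noncollapse} verbatim. Define \(D_\ell:=\mathbbm{1}_{B_\ell}-\PP(B_\ell\mid\cF_\ell)\). Azuma--Hoeffding plus Borel--Cantelli give \(S_m/m\to0\) a.s., and that step never used the separating hypothesis. Then
\[
\frac1m\sum_{\ell<m}\mathbbm{1}_{B_\ell}\ge \frac1m\sum_{\ell<m}c_\ell+\frac{S_m}{m}\ge \bigl[1-(1-p_\varepsilon)^s\bigr]\frac{|L_s\cap\{0,\dots,m-1\}|}{m}+\frac{S_m}{m},
\]
since \(c_\ell\ge 1-(1-p_\varepsilon)^s\) for \(\ell\in L_s\) and \(c_\ell\ge0\) otherwise. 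Taking \(\liminf\) and using \(\liminf(x_m+y_m)\ge\liminf x_m+\lim y_m\) when \(y_m\) converges gives the bound.

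The main obstacle is the measurability bookkeeping in (i): checking that the conditional Borel--Cantelli argument is legitimate with the shifted filtration, that is, \(B_\ell\in\cF_{\ell+1}\) while the conditioning is on \(\cF_\ell\). I would handle this with the explicit product bound rather than cite Lévy's lemma. Everything else is inherited from the lemma and the main proof.
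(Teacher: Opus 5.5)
Your proposal is correct and follows the paper's proof. In (i), the explicit tower-property bound you fall back on, $\PP\bigl(\bigcap_{\ell=n}^{N}\{M_\ell\le\varepsilon\}\bigr)\le(1-p_\varepsilon)^{\sum_{\ell=n}^{N}|\cG_\ell|}$, is exactly the paper's argument, and L\'evy's conditional Borel--Cantelli with the filtration $(\cF_{\ell+1})_{\ell\ge0}$ is also legitimate. In (ii), you use the same Azuma--Hoeffding martingale-average argument, except that the paper works with $Y_\ell=\mathbbm{1}_{\{\ell\in L_s\}}\mathbbm{1}\{M_\ell>\varepsilon\}$ rather than the full indicator. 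Your requirement that each $|\cG_\ell|$ be finite, so that the tail sums diverge, is used only implicitly by the paper. Justify it by the finiteness of the graphs in Remark~\ref{rem:finite-antichain-length}, not by local finiteness, which alone does not bound antichain sizes.
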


\begin{proof}
By Lemma~\ref{lem:antichain-conditional-bound},
\begin{equation}
  \PP(M_\ell\le\varepsilon\mid\cF_\ell)
  \le
  (1-p_\varepsilon)^{|\cG_\ell|}
  \qquad\text{a.s.}
  \label{eq:app-sparse-step}
\end{equation}

For part~\textup{(i)}, fix \(\varepsilon>0\), \(n\ge0\), and \(N\ge n\), and
define
\begin{equation*}
  E_{n,N}:=\bigcap_{\ell=n}^{N}\{M_\ell\le\varepsilon\}.
\end{equation*}
Since \(M_\ell\) is \(\cF_{\ell+1}\)-measurable by
Lemma~\ref{lem:antichain-conditional-bound}, the event \(E_{n,N-1}\) is
\(\cF_N\)-measurable. Therefore, using the tower property and
\eqref{eq:app-sparse-step},
\begin{align}
  \PP(E_{n,N})
  &=
  \EE\!\left[
    \mathbbm{1}_{E_{n,N-1}}
    \mathbbm{1}_{\{M_N\le\varepsilon\}}
  \right]
  \nonumber\\
  &=
  \EE\!\left[
    \mathbbm{1}_{E_{n,N-1}}
    \PP(M_N\le\varepsilon\mid\cF_N)
  \right]
%  \nonumber\\
  \le
  (1-p_\varepsilon)^{|\cG_N|}\PP(E_{n,N-1}).
  \label{eq:sparse-recursive-bound}
\end{align}
Iterating \eqref{eq:sparse-recursive-bound} from \(N\) down to \(n\) yields
\begin{equation*}
  \PP(E_{n,N})
  \le
  \prod_{\ell=n}^{N}(1-p_\varepsilon)^{|\cG_\ell|}
  =
  (1-p_\varepsilon)^{\sum_{\ell=n}^{N}|\cG_\ell|}.
\end{equation*}
Letting \(N\to\infty\) and using continuity from above gives
\begin{equation}
  \PP\bigl(M_\ell\le\varepsilon\text{ for all }\ell\ge n\bigr)
  \le
  \lim_{N\to\infty}
  (1-p_\varepsilon)^{\sum_{\ell=n}^{N}|\cG_\ell|}.
  \label{eq:sparse-tail-event-bound}
\end{equation}
If \(\sum_{\ell\ge0}|\cG_\ell|=\infty\), then for every \(n\ge0\) the tail
sum \(\sum_{\ell=n}^{N}|\cG_\ell|\to\infty\) as \(N\to\infty\), so the
right-hand side of \eqref{eq:sparse-tail-event-bound} is zero. Hence
\begin{equation*}
  \PP\bigl(M_\ell\le\varepsilon\text{ for all }\ell\ge n\bigr)=0
  \qquad\text{for every }n\ge0.
\end{equation*}
Equivalently,
\begin{equation}
  \PP\bigl(M_\ell>\varepsilon\text{ for infinitely many }\ell\bigr)=1.
  \label{eq:sparse-io}
\end{equation}
Since \eqref{eq:sparse-io} holds for any fixed \(\varepsilon>0\), it follows
in particular that \(\PP(M_\ell\to0)=0\).

For part~\textup{(ii)}, fix \(s\in\N\) and define
\begin{equation*}
  Y_\ell
  :=
  \mathbbm{1}_{\{\ell\in L_s\}}\,
  \mathbbm{1}\{M_\ell>\varepsilon\},
  \qquad
  D_\ell
  :=
  Y_\ell-\EE[Y_\ell\mid\cF_\ell].
\end{equation*}
Because \(L_s\) is deterministic and \(\mathbbm{1}\{M_\ell>\varepsilon\}\) is
\(\cF_{\ell+1}\)-measurable, \(Y_\ell\) is \(\cF_{\ell+1}\)-measurable. Also,
\(|D_\ell|\le1\) almost surely and \(\EE[D_\ell\mid\cF_\ell]=0\). Thus the
partial sums
\begin{equation}
  S_m:=\sum_{\ell=0}^{m-1}D_\ell
  \label{eq:sparse-martingale-sum}
\end{equation}
form a martingale with respect to \((\cF_m)_{m\ge1}\) with bounded
increments.
Applying the pathwise conclusion proved in
\eqref{eq:main-proof-probability-one}--\eqref{eq:main-proof-martingale-average-zero}
to the martingale in \eqref{eq:sparse-martingale-sum}, there exists an event
\(E_{\mathrm{sparse}}\subseteq\Omega\) with
\(\PP(E_{\mathrm{sparse}})=1\) such that, for every
\(\omega\in E_{\mathrm{sparse}}\),
\begin{equation*}
  \frac{1}{m}\sum_{\ell=0}^{m-1}D_\ell(\omega)
  =
  \frac{S_m(\omega)}{m}
  \longrightarrow0
  \qquad\text{as }m\to\infty.
\end{equation*}
Equivalently,
\begin{equation}
  \frac{1}{m}\sum_{\ell=0}^{m-1}D_\ell\to0
  \qquad\PP\text{-almost surely as }m\to\infty.
  \label{eq:sparse-mds-average-zero}
\end{equation}

For every \(\ell\in L_s\), one has \(|\cG_\ell|\ge s\), so by
\eqref{eq:app-sparse-step},
\begin{equation*}
  \PP(M_\ell>\varepsilon\mid\cF_\ell)
  \ge
  1-(1-p_\varepsilon)^s
  \qquad\text{a.s.}
\end{equation*}
Therefore,
\begin{equation*}
  \EE[Y_\ell\mid\cF_\ell]
  =
  \mathbbm{1}_{\{\ell\in L_s\}}
  \PP(M_\ell>\varepsilon\mid\cF_\ell)
  \ge
  \mathbbm{1}_{\{\ell\in L_s\}}
  \bigl[1-(1-p_\varepsilon)^s\bigr].
\end{equation*}
Using \(\mathbbm{1}\{M_\ell>\varepsilon\}\ge Y_\ell\) and the definition of
\(D_\ell\), we obtain
\begin{align*}
  \frac{1}{m}\sum_{\ell=0}^{m-1}\mathbbm{1}\{M_\ell>\varepsilon\}
  &\ge
  \frac{1}{m}\sum_{\ell=0}^{m-1}Y_\ell
  \nonumber\\
  &=
  \frac{1}{m}\sum_{\ell=0}^{m-1}\EE[Y_\ell\mid\cF_\ell]
  +
  \frac{1}{m}\sum_{\ell=0}^{m-1}D_\ell
  \nonumber\\
  &\ge
  \bigl[1-(1-p_\varepsilon)^s\bigr]
  \frac{|L_s\cap\{0,\dots,m-1\}|}{m}
  +
  \frac{1}{m}\sum_{\ell=0}^{m-1}D_\ell.
\end{align*}
Taking lower limits as \(m\to\infty\) and using
\eqref{eq:sparse-mds-average-zero} proves the claim.
\end{proof}

Theorem~\ref{thm:separating-noncollapse} assumes that every antichain
\(\cA_\ell\) contains at least \(s\) separating nodes, leading to a uniform
positive lower bound on the asymptotic frequency of depths where the maximum
contrast exceeds \(\varepsilon\).  In many DAGs, separating nodes may individually be \emph{less} \(v_\star\)- separating.

Proposition~\ref{prop:sparse-noncollapse} relaxes this
uniformity.  Part~\textup{(i)} shows that it suffices that the total number
of separating nodes across all depths is infinite: then the prior does not
collapse, in the sense that \(M_\ell\) does not converge to zero, and for
every fixed \(\varepsilon>0\), the event \(M_\ell>\varepsilon\) occurs
infinitely often almost surely. Part~\textup{(ii)} provides a quantitative
refinement: the empirical frequency of large contrasts is bounded below by
the product of the per-depth guarantee \(1-(1-p_\varepsilon)^s\) and the
lower asymptotic density of depths that contain at least \(s\) separating
nodes. Hence the non-collapse phenomenon persists even when separating nodes
appear only sporadically, as long as the set of such depths has positive
lower density. When every antichain contains at least \(s\) separating nodes,
the lower density equals \(1\) and the bound reduces to that of the main
proposition, so the latter is a special, more intuitive, case of the more
general result.

\subsection{Sufficient conditions for separation}
\label{app:sufficient-separation}

A convenient class of fusion kernels that automatically produce separating
nodes is given by those retaining a uniformly non-trivial root-only main
effect. This provides the natural link between the separating condition required for prior non-collapse and the connection of intermediate latent nodes to the roots of the DAG: the natural generalization of input connection from DGPs to DAG-DGPs.

\begin{proposition}[ANOVA root retention implies separation]
\label{prop:anova-separating}
Fix a non-root node \(w\) and split its input as \((x,z)\), where \(x\)
collects one or more root coordinates. For the pair \((a,b)\), write
\(x_a,x_b\) for the corresponding values of these root-coordinate components.
Assume that the kernel at \(w\) has the form
\[
  K_w\bigl((x,z),(x',z')\bigr)
  =
  B_w\!\left[
    \sum_{j=1}^{J_w}\beta_{w,j}\,\kappa_{w,j}^{(0)}(x,x')
    +
    \sum_{m=1}^{M_w}\alpha_{w,m}\,
    \rho_{w,m}\bigl((x,z),(x',z')\bigr)
  \right],
\]
with \(B_w\succeq\underline\sigma^2 I_{d_w}\), nonnegative weights satisfying
\[
  \sum_{j=1}^{J_w}\beta_{w,j}
  +
  \sum_{m=1}^{M_w}\alpha_{w,m}
  =1,
\]
and scalar correlation kernels \(\kappa_{w,j}^{(0)}\) and
\(\rho_{w,m}\). If
\[
  \sum_{j=1}^{J_w}\beta_{w,j}\ge\gamma_0>0
  \qquad\text{and}\qquad
  \sup_j\bigl|\kappa_{w,j}^{(0)}(x_a,x_b)\bigr|\le r<1,
\]
then \(w\) is \(v_\star\)-separating for \((a,b)\) with
\[
  v_\star
  :=
  2\,\underline\sigma^2\bigl(1-\bar r\bigr),
  \qquad
  \bar r:=(1-\gamma_0)+\gamma_0 r<1.
\]
\end{proposition}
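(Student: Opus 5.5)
The plan is to compute the contrast covariance \(\Gamma_w(a,b)\) directly from the closed-form expression in Appendix~\ref{app:standing-probabilistic-notation},
\[
  \Gamma_w(a,b)=K_w(U_{w,a},U_{w,a})+K_w(U_{w,b},U_{w,b})-K_w(U_{w,a},U_{w,b})-K_w(U_{w,b},U_{w,a}),
\]
and to show that, for \emph{every} realisation of the parent inputs, it is a nonnegative scalar multiple of \(B_w\) with a scalar bounded away from zero. Write \(U_{w,i}=(x_i,z_i)\) for \(i\in\{a,b\}\). The root-coordinate parts \(x_a,x_b\) are deterministic, while \(z_a,z_b\) are random but arbitrary. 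Hence a pointwise bound holding for all \(z_a,z_b\) gives the required almost-sure statement.

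\textbf{Step 1 (diagonal terms).} I read ``scalar correlation kernel'' as a positive-semidefinite kernel with unit diagonal, so \(\kappa^{(0)}_{w,j}(x,x)=1\) and \(\rho_{w,m}(u,u)=1\). With the weights summing to one, this gives \(K_w(U_{w,i},U_{w,i})=B_w\) for \(i=a,b\).

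\textbf{Step 2 (cross terms).} The cross terms are
\[
  K_w(U_{w,a},U_{w,b})=K_w(U_{w,b},U_{w,a})=c\,B_w,
\]
where
\[
  c:=\sum_j\beta_{w,j}\kappa^{(0)}_{w,j}(x_a,x_b)+\sum_m\alpha_{w,m}\rho_{w,m}(U_{w,a},U_{w,b}).
\]
Symmetry of the scalar kernels and of \(B_w\) justifies the equality of the two cross terms. Combining this with Step 1 yields \(\Gamma_w(a,b)=2(1-c)B_w\).

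\textbf{Step 3 (bounding \(c\)).} For a correlation kernel, the \(2\times2\) Gram matrix is positive semidefinite. By Cauchy--Schwarz, this gives \(|\rho_{w,m}(u,u')|\le1\) for all inputs, in particular for the random \(z\)-parts. Using the hypothesis \(|\kappa^{(0)}_{w,j}(x_a,x_b)|\le r\),
\[
  c\le r\sum_j\beta_{w,j}+\sum_m\alpha_{w,m}=1-(1-r)\sum_j\beta_{w,j}\le 1-(1-r)\gamma_0=\bar r.
\]
The last inequality uses \(1-r>0\) and \(\sum_j\beta_{w,j}\ge\gamma_0\). Hence
\[
  1-c\ge 1-\bar r=\gamma_0(1-r)>0 .
\]

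\textbf{Step 4 (conclusion).} Since \(1-c>0\) and \(B_w\succeq\underline\sigma^2 I_{d_w}\), the Loewner order is preserved under multiplication by a positive scalar, so
\[
  \Gamma_w(a,b)\succeq 2(1-\bar r)\underline\sigma^2 I_{d_w}=v_\star I_{d_w}.
\]
This holds for every parent configuration, hence \(\PP\)-almost surely, which is the definition of \(v_\star\)-separation.

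There is no real analytic obstacle. The only delicate point is the interpretation of the hypotheses: the argument relies on the unit-diagonal normalisation of \(\kappa^{(0)}_{w,j}\) and \(\rho_{w,m}\), and on \(B_w\) being symmetric. I would state these conventions explicitly at the start of the proof. The crucial structural observation is that the bound on \(c\) uses only \(|\rho|\le1\), so the random non-root inputs \(z\) can never cancel the root-only contribution.
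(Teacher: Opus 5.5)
Your proposal is correct and follows the paper's proof essentially step for step. The diagonal blocks equal $B_w$ and the cross blocks equal $\vartheta_w B_w$, so $\Gamma_w(a,b)=2(1-\vartheta_w)B_w$, and $\vartheta_w\le\bar r$ follows from $|\rho_{w,m}|\le 1$ together with the root-kernel bound. The only cosmetic difference is that the paper bounds $|\vartheta_w|$, whereas you bound $c$ from above directly, which is all the conclusion requires.
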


\begin{proof}
For the pair \((a,b)\), write
\[
  u_a=(x_a,z_a),
  \qquad
  u_b=(x_b,z_b),
\]
and define
\[
  \vartheta_w
  :=
  \sum_{j=1}^{J_w}\beta_{w,j}\,\kappa_{w,j}^{(0)}(x_a,x_b)
  +
  \sum_{m=1}^{M_w}\alpha_{w,m}\,\rho_{w,m}(u_a,u_b).
\]
Since each \(\rho_{w,m}\) is a scalar correlation kernel,
\(|\rho_{w,m}(u_a,u_b)|\le1\). Using also
\(\sup_j|\kappa_{w,j}^{(0)}(x_a,x_b)|\le r\), we obtain
\begin{align*}
  |\vartheta_w|
  &\le
  \sum_{j=1}^{J_w}\beta_{w,j}\,
  |\kappa_{w,j}^{(0)}(x_a,x_b)|
  +
  \sum_{m=1}^{M_w}\alpha_{w,m}\,
  |\rho_{w,m}(u_a,u_b)|\\
  &\le
  r\sum_{j=1}^{J_w}\beta_{w,j}
  +
  \sum_{m=1}^{M_w}\alpha_{w,m}\\
  &=
  r\sum_{j=1}^{J_w}\beta_{w,j}
  +
  1-
  \sum_{j=1}^{J_w}\beta_{w,j}\\
  &=
  1-(1-r)\sum_{j=1}^{J_w}\beta_{w,j}\\
  &\le
  1-\gamma_0(1-r)
  =
  (1-\gamma_0)+\gamma_0r
  =
  \bar r.
\end{align*}
Moreover, since \(\kappa_{w,j}^{(0)}\) and \(\rho_{w,m}\) are scalar
correlation kernels, evaluating on the diagonal gives
\[
  K_w(u_a,u_a)
  =
  K_w(u_b,u_b)
  =
  B_w.
\]
The cross-covariances are
\[
  K_w(u_a,u_b)=B_w\vartheta_w,
  \qquad
  K_w(u_b,u_a)=B_w\vartheta_w.
\]
Hence
\[
  \Gamma_w(a,b)
  =
  2(1-\vartheta_w)B_w.
\]
Since \(\vartheta_w\le|\vartheta_w|\le\bar r<1\), it follows that
\[
  \Gamma_w(a,b)
  \succeq
  2(1-\bar r)B_w
  \succeq
  2\,\underline\sigma^2(1-\bar r)I_{d_w}
  =
  v_\star I_{d_w}.
\]
Thus \(w\) is \(v_\star\)-separating.
\end{proof}

The simple root-retention decomposition used in the motivating discussion of
Section~\ref{sec:prior-noncollapse-separating} is recovered as the special case \(J_w=1\).

The following remark shows that the same mechanism applies more broadly,
including to multi-fidelity kernels that do not need to be written as a
normalised convex combination.

\begin{remark}[Separation from an additive root-only kernel component]
\label{rem:additive-root-separation}
Suppose the kernel at a non-root node \(w\) decomposes as
\[
  K_w\bigl((x,z),(x',z')\bigr)
  =
  K_w^{(\mathrm{int})}\!\bigl((x,z),(x',z')\bigr)
  +
  K_w^{(0)}(x,x'),
\]
where \(K_w^{(\mathrm{int})}\) is a valid matrix-valued positive-semidefinite
kernel on the full input space and \(K_w^{(0)}\) is a valid matrix-valued
positive-semidefinite kernel depending only on root coordinates. For the pair
\((a,b)\), write \(x_a,x_b\) for the corresponding values of these
root-coordinate components. Since both summands are valid kernels, the
contrast covariance decomposes as
\[
  \Gamma_w(a,b)
  =
  \Gamma_w^{(\mathrm{int})}(a,b)
  +
  \Gamma_w^{(0)}(a,b),
\]
with each summand positive semidefinite. In particular,
\begin{equation*}
  \Gamma_w(a,b)
  \succeq
  \Gamma_w^{(0)}(a,b),
\end{equation*}
where
\begin{equation}
  \Gamma_w^{(0)}(a,b)
  =
  K_w^{(0)}(x_a,x_a)+K_w^{(0)}(x_b,x_b)
  -K_w^{(0)}(x_a,x_b)-K_w^{(0)}(x_b,x_a).
  \label{eq:additive-root-gamma0-general}
\end{equation}
If, in particular,
\[
  K_w^{(0)}(x,x')=\sigma_0^2\,\bar k(x,x')I_{d_w},
\]
where \(\bar k\) is a scalar correlation kernel, then
\eqref{eq:additive-root-gamma0-general} becomes
\[
  \Gamma_w^{(0)}(a,b)
  =
  2\sigma_0^2\bigl(1-\bar k(x_a,x_b)\bigr)I_{d_w}.
\]
Therefore, whenever \(\bar k(x_a,x_b)<1\),
\[
  \Gamma_w(a,b)
  \succeq
  2\sigma_0^2\bigl(1-\bar k(x_a,x_b)\bigr)I_{d_w}
  =:
  v_\star I_{d_w},
\]
with \(v_\star>0\), and \(w\) is separating.  This condition holds, for
example, for a squared-exponential root kernel whenever \(x_a\neq x_b\).

This criterion applies to the multi-fidelity deep GP kernel of
\cite{cutajar2019} and its graphical generalisation in \cite{ji2024}. In
their formulation, the kernel at each non-source node takes the form
\[
  K_w\bigl([x,z],[x',z']\bigr)
  =
  K_{\mathrm{SE},\rho}(x,x')
  \bigl[K_{\mathrm{LIN}}(z,z')+K_{\mathrm{SE}}(z,z')\bigr]
  +
  K_{\mathrm{SE},\delta}(x,x'),
\]
where \(z\) collects the parent-node outputs and \(x\) is the exogenous
input, and the subscripts SE and LIN denote the squared-exponential and
linear kernel, respectively.  The discrepancy kernel
\(K_{\mathrm{SE},\delta}(x,x')\) depends only on root coordinates and is a
squared-exponential kernel with variance \(\sigma_\delta^2\). Since
\(K_{\mathrm{SE},\delta}(x_a,x_b)<\sigma_\delta^2\) whenever \(x_a\neq x_b\),
the above argument gives
\[
  v_\star
  =
  2\sigma_\delta^2
  \left(
  1-\exp\!\left[-\sum_l(x_{a,l}-x_{b,l})^2/(2\lambda_l^2)\right]
  \right)
  >0,
\]
where the \(\lambda_l\) are the length-scale parameters of
\(K_{\mathrm{SE},\delta}\). Thus every non-source node in a graphical
multi-fidelity DGP with this kernel is separating for any pair of distinct
input cases, regardless of the parent latent states.
\end{remark}

\subsection{Recovery of the chain input-connection mechanism}
\label{app:chain-recovery}

The results of Section~\ref{app:sufficient-separation} refer directly to
the empirical evidence in the literature that connecting all latent nodes to
the input space prevents the prior-collapse pathology. This observation goes
back to \cite{duvenaud2014}, who provided the first theoretical result on prior collapse and investigated in detail the proposal, originally elaborated by \citet[Chapter~2]{neal1995} as a general recommendation for arbitrarily deep Bayesian neural network, of connecting every latent layer of a DGP to the corresponding input. In their theoretical paper on this pathological
prior behaviour, after formalising and proving that the pathology occurs for
non-input-connected DGPs under RBF kernels, \citet[Remark~5(3)]{dunlop2018}
conjecture that the same collapse result does not hold when every layer is
input-connected, and give an intuition for this. With our general theory, we
can formalise and prove that conjectured non-collapse mechanism as a special
case.

The following corollary shows that uniform $v_\star$-separation is sufficient to avoid prior collapse in the case of chain DAGs (an unsurprising fact), whereas the subsequent corollary demonstrates that skip-connection is enough to ensure this property and hence non-collapse.

\begin{corollary}[Separating input-connected chains]
\label{cor:skip-chain}
Assume that \(\cA_\ell=\{v_\ell\}\) for every \(\ell\ge0\), and that each
\(v_\ell\) is \(v_\star\)-separating for the pair \((a,b)\), with common
constant \(v_\star>0\). Then, for every \(\varepsilon>0\),
\[
  \liminf_{m\to\infty}\frac{1}{m}
  \sum_{\ell=0}^{m-1}
  \mathbbm{1}\!\left\{
  \|F_{v_\ell}^{(a)}-F_{v_\ell}^{(b)}\|_2>\varepsilon
  \right\}
  \ge
  p_\varepsilon
  \qquad\text{a.s.},
\]
where \(p_\varepsilon=2\Phi_{\mathrm N}(-\varepsilon/\sqrt{v_\star})\in(0,1)\).
Moreover,
\[
  \PP\!\left(
  \|F_{v_\ell}^{(a)}-F_{v_\ell}^{(b)}\|_2>\varepsilon
  \text{ for infinitely many }\ell
  \right)=1
  \qquad\text{for every }\varepsilon>0,
\]
and in particular
\[
  \PP\!\left(
  \|F_{v_\ell}^{(a)}-F_{v_\ell}^{(b)}\|_2\to0
  \text{ as }\ell\to\infty
  \right)=0.
\]
\end{corollary}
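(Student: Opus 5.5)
The plan is to obtain the corollary as the special case of Theorem~\ref{thm:separating-noncollapse} and Proposition~\ref{prop:sparse-noncollapse} in which every antichain is a single node. Since \(\cA_\ell=\{v_\ell\}\), we have
\[
M_\ell=\max_{w\in\cA_\ell}\|\Delta_w\|_2=\|F_{v_\ell}^{(a)}-F_{v_\ell}^{(b)}\|_2 .
\]
Each event in the statement is therefore exactly the corresponding event \(\{M_\ell>\varepsilon\}\). Before invoking these results, I will check that \((\{v_\ell\})_{\ell\ge0}\) is a progressive antichain sequence. A singleton is trivially an antichain, and the progressive condition \(v_\ell\sqsubset v_{\ell+1}\) is implicit in calling \((\cA_\ell)\) an antichain sequence. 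If needed, I will state it as the standing convention of Section~\ref{sec:prior-noncollapse-separating}.

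For the frequency bound, I will take \(s=1\) in Theorem~\ref{thm:separating-noncollapse}. By hypothesis, each \(\cA_\ell\) contains the single \(v_\star\)-separating node \(v_\ell\). The theorem then gives the almost-sure lower bound \(1-(1-p_\varepsilon)^1=p_\varepsilon\) on the \(\liminf\) of the empirical frequencies. It remains to note that \(p_\varepsilon\in(0,1)\), which holds because \(0<\Phi_{\mathrm N}(-\varepsilon/\sqrt{v_\star})<1/2\) for \(\varepsilon>0\). Alternatively, one can cite Proposition~\ref{prop:sparse-noncollapse}(ii) with \(s=1\) and \(L_1=\N\), whose lower density is \(1\).

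For the infinitely-often statement, I will take \(\cG_\ell=\{v_\ell\}\) in Proposition~\ref{prop:sparse-noncollapse}(i). The sets \(\cG_\ell\) are deterministic, and \(\sum_\ell|\cG_\ell|=\infty\). Part (i) then gives \(\PP(M_\ell>\varepsilon\text{ i.o.})=1\) for each fixed \(\varepsilon>0\). The final claim follows by inclusion: the event \(\{M_\ell\to0\}\) is contained in \(\{M_\ell>\varepsilon\text{ i.o.}\}^c\), which is a null set. Alternatively, a positive \(\liminf\) frequency from the first part already forces \(M_\ell>\varepsilon\) infinitely often, almost surely.

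There is no genuine obstacle here. The only points needing care are bookkeeping: identifying \(M_\ell\) with the single-node contrast, and confirming that the progressive (ancestry) condition holds for the chain, so that the filtration \(\cF_\ell\) of Lemma~\ref{lem:antichain-conditional-bound} is well defined and increasing.
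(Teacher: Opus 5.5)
Your proposal is correct and matches the paper's proof. The paper sets \(\cG_\ell=\{v_\ell\}\), uses Proposition~\ref{prop:sparse-noncollapse}(i) for the infinitely-often claim and part (ii) with \(s=1\) and lower density one for the frequency bound, and then obtains the convergence-to-zero statement by the same inclusion argument; citing Theorem~\ref{thm:separating-noncollapse} with \(s=1\) instead of part (ii) makes no difference, since the theorem is exactly that special case.
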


\begin{proof}
Set \(\cG_\ell:=\{v_\ell\}\), so that \(|\cG_\ell|=1\) for every \(\ell\).
Then \(\sum_{\ell\ge0}|\cG_\ell|=\infty\), and
Proposition~\ref{prop:sparse-noncollapse}\,\textup{(i)} gives
\[
  \PP\!\left(
  \|F_{v_\ell}^{(a)}-F_{v_\ell}^{(b)}\|_2>\varepsilon
  \text{ for infinitely many }\ell
  \right)=1
  \qquad\text{for every }\varepsilon>0.
\]
Moreover, part~\textup{(ii)} with \(s=1\) and lower density equal to~\(1\)
yields
\[
  \liminf_{m\to\infty}\frac{1}{m}
  \sum_{\ell=0}^{m-1}
  \mathbbm{1}\!\left\{
  \|F_{v_\ell}^{(a)}-F_{v_\ell}^{(b)}\|_2>\varepsilon
  \right\}
  \ge
  p_\varepsilon
  \qquad\text{a.s.}
\]
Finally, if \(\|F_{v_\ell}^{(a)}-F_{v_\ell}^{(b)}\|_2\to0\) as
\(\ell\to\infty\), then for every \(\varepsilon>0\) one must have
\(\|F_{v_\ell}^{(a)}-F_{v_\ell}^{(b)}\|_2\le\varepsilon\) for all sufficiently
large~\(\ell\), which excludes the event that the norm exceeds
\(\varepsilon\) for infinitely many~\(\ell\). Since that event has probability
one, the convergence-to-zero event has probability zero.
\end{proof}

\begin{corollary}[Input-connected chain of \cite{dunlop2018}]
\label{cor:dunlop-skip-noncollapse}
Consider the input-connected chain in \citet[Remark~5(3)]{dunlop2018},
\[
  u_{n+1}(x)=\xi_{n+1}(u_n(x),x),
\]
where \(\xi_{n+1}=(\xi_{n+1}^1,\dots,\xi_{n+1}^m)\) and the scalar fields
\(\xi_{n+1}^1,\dots,\xi_{n+1}^m\) are independent copies of a centred Gaussian
process on \(\R^m\times\R^d\) with squared-exponential kernel
\[
  h\bigl((z,x),(z',x')\bigr)
  =
  \sigma^2
  \exp\!\left(
  -\frac{\|z-z'\|_2^2+\|x-x'\|_2^2}{2w^2}
  \right).
\]
Let \(x,x_0\in D\) with \(x\neq x_0\). Then, for every \(\varepsilon>0\),
\[
  \liminf_{N\to\infty}\frac{1}{N}
  \sum_{n=0}^{N-1}
  \mathbbm{1}\!\left\{
  \|u_n(x)-u_n(x_0)\|_2>\varepsilon
  \right\}
  \ge
  p_\varepsilon
  \qquad\text{a.s.},
\]
where
\[
  p_\varepsilon
  =
  2\,\Phi_{\mathrm N}\!\left(
  -\frac{\varepsilon}
  {\sqrt{2\sigma^2\!\left(
    1-\exp\!\bigl(-\|x-x_0\|_2^2/(2w^2)\bigr)
  \right)}}
  \right)\in(0,1).
\]
Moreover,
\(\PP(\|u_n(x)-u_n(x_0)\|_2\to0\text{ as }n\to\infty)=0\).
\end{corollary}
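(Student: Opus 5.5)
The plan is to embed the recursion of \citet[Remark~5(3)]{dunlop2018} as an input-connected chain DAG-DGP, show that every latent node is \(v_\star\)-separating with an explicit constant, and then apply Corollary~\ref{cor:skip-chain}.

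\textbf{Embedding.} Take one root \(r\) carrying the input, \(F_r^{(a)}=x\) and \(F_r^{(b)}=x_0\). Add a chain of latent nodes \(v_1,v_2,\dots\), where \(v_{n+1}\) has parents \(\{v_n,r\}\) and \(d_{v_n}=m\). For the base node, either take \(u_0(x)=x\), read as a root coordinate, or let the first latent node have the root as its only parent; the choice affects only finitely many depths and so does not change the liminf. Set \(f_{v_{n+1}}=\xi_{n+1}\) with matrix-valued kernel \(K_{v_{n+1}}=h\,I_m\). This is valid because the coordinates are independent copies. Then \(F_{v_n}^{(\cdot)}=u_n(\cdot)\), and the modules are mutually independent as the DAG-DGP prior requires.

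The antichains \(\cA_\ell=\{v_\ell\}\) form a progressive sequence. The root can be placed in \(\cA_0\) or treated as deterministic; Lemma~\ref{lem:antichain-conditional-bound} only uses that root states are deterministic, so the root does not need its own layer. If a formal partition is wanted, put the root together with the first latent layer's parent. This bookkeeping is the only mildly delicate point. In particular, the antichain condition must hold: \(r\) is an ancestor of every \(v_\ell\), so \(r\) cannot share an antichain with any \(v_\ell\). I will therefore index depths from the first latent layer, which matches the hypothesis \(\cA_\ell=\{v_\ell\}\) of Corollary~\ref{cor:skip-chain}. Shifting the index by a finite amount leaves the liminf unchanged.

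\textbf{Separation.} Conditional on the parent inputs \(U_a=(u_n(x),x)\) and \(U_b=(u_n(x_0),x_0)\), we have
\[
\Gamma_{v_{n+1}}(a,b)=2\bigl(\sigma^2-h(U_a,U_b)\bigr)I_m .
\]
The key computation is the factorisation
\[
h(U_a,U_b)=\sigma^2\exp\!\bigl(-\|u_n(x)-u_n(x_0)\|_2^2/(2w^2)\bigr)\exp\!\bigl(-\|x-x_0\|_2^2/(2w^2)\bigr).
\]
The first factor is at most \(1\) whatever the latent states are, so pathwise
\[
h(U_a,U_b)\le \sigma^2\exp\!\bigl(-\|x-x_0\|^2/(2w^2)\bigr).
\]
This gives \(\Gamma\succeq v_\star I_m\) almost surely with
\[
v_\star=2\sigma^2\bigl(1-e^{-\|x-x_0\|^2/(2w^2)}\bigr),
\]
which is positive because \(x\neq x_0\). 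The argument is the same mechanism as Remark~\ref{rem:additive-root-separation}, but it works multiplicatively here, since the SE kernel on a concatenated input is a product kernel.

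\textbf{Conclusion.} Corollary~\ref{cor:skip-chain} with this \(v_\star\) yields the liminf bound with the stated \(p_\varepsilon\), where \(p_\varepsilon\in(0,1)\) because \(v_\star\in(0,\infty)\). The same corollary also gives \(\PP(\|u_n(x)-u_n(x_0)\|\to0)=0\). I expect no real obstacle: the substance is the pathwise bound \(e^{-\|\cdot\|^2}\le1\) on the latent factor, and the remaining work is checking the index and root conventions.
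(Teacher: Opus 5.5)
Your proposal is correct and follows the paper's proof essentially verbatim. The paper likewise computes \(\Gamma_n=2\sigma^2[1-\exp(-(\|z-z_0\|_2^2+\|x-x_0\|_2^2)/(2w^2))]I_m\), drops the latent term using \(\|z-z_0\|_2^2\ge 0\) to obtain the same \(v_\star\), and invokes Corollary~\ref{cor:skip-chain}; your extra bookkeeping on the root, the base layer \(u_0\), and finite index shifts is sound and only spells out what the paper leaves implicit when it writes \(\cA_n=\{v_n\}\).
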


\begin{proof}
This is an application of Corollary~\ref{cor:skip-chain}. The model is a
chain, so the antichains are the singletons \(\cA_n=\{v_n\}\).

Fix \(n\ge0\) and condition on \(u_n\). Set \(z:=u_n(x)\) and
\(z_0:=u_n(x_0)\). The matrix-valued kernel is
\[
  K\bigl((z,x),(z',x')\bigr)
  =
  h\bigl((z,x),(z',x')\bigr)I_m.
\]
Hence
\[
  K\bigl((z,x),(z,x)\bigr)
  =
  K\bigl((z_0,x_0),(z_0,x_0)\bigr)
  =
  \sigma^2 I_m,
\]
and
\[
  K\bigl((z,x),(z_0,x_0)\bigr)
  =
  \sigma^2
  \exp\!\left(
  -\frac{\|z-z_0\|_2^2+\|x-x_0\|_2^2}{2w^2}
  \right)I_m.
\]
Therefore the conditional contrast covariance is
\begin{align*}
  \Gamma_n(x,x_0)
  &=
  K\bigl((z,x),(z,x)\bigr)
  +K\bigl((z_0,x_0),(z_0,x_0)\bigr)\\
  &\qquad
  -K\bigl((z,x),(z_0,x_0)\bigr)
  -K\bigl((z_0,x_0),(z,x)\bigr)\\
  &=
  2\sigma^2\left[
  1-
  \exp\!\left(
  -\frac{\|z-z_0\|_2^2+\|x-x_0\|_2^2}{2w^2}
  \right)
  \right]I_m.
\end{align*}
Since \(\|z-z_0\|_2^2\ge0\), the exponential is bounded above by
\(\exp(-\|x-x_0\|_2^2/(2w^2))\), so
\[
  \Gamma_n(x,x_0)
  \succeq
  2\sigma^2\left[
  1-
  \exp\!\left(
  -\frac{\|x-x_0\|_2^2}{2w^2}
  \right)
  \right]I_m
  =:
  v_\star I_m.
\]
Since \(x\neq x_0\), one has \(v_\star>0\), so every layer is
\(v_\star\)-separating for \((x,x_0)\). Applying
Corollary~\ref{cor:skip-chain} gives both the positive lower-frequency bound
and the almost-sure non-collapse.
\end{proof}

\subsection{Conditional refresh from intermediate supervision}
\label{app:conditional-refresh}

The following corollary formalises the claim from Section~\ref{sec:prior-noncollapse-separating}
that fully observed and noiseless internal nodes stabilise the conditional
prior without any architectural modification. An observed internal
node can be used by downstream kernels in the same way that input-connected
kernels use root inputs.

Accordingly, throughout this subsection let \(\cE\subseteq\cU\) be a set of
non-root nodes that are observed without error, in the sense that
\(\cO_e=[n]\times[d_e]\) and the realised observed value is
\(\bY_e=\bF_e\) for every \(e\in\cE\). Since exact observations of continuous
latent variables should be interpreted through regular conditional laws, we
write
\begin{equation*}
  \PP_{\cE}(\,\cdot\,)
  :=
  \PP\!\left(
    \,\cdot\,\middle|\sigma(\bF_e:e\in\cE)
  \right)
  \text{ evaluated at }\{\bF_e=\bY_e:e\in\cE\}
\end{equation*}
for a chosen regular conditional version of the prior given the exactly
observed internal states. Under \(\PP_{\cE}\), each conditioned state
\(F_e^{(i)}\) is almost surely equal to the observed value \(Y_e^{(i)}\).
Hence any downstream kernel that depends on coordinates coming from nodes in
\(\cE\) sees them as fixed anchor coordinates.

\begin{corollary}[Perfect intermediate supervision as conditional refresh]
\label{cor:conditional-refresh}
Fix distinct cases \(a\neq b\in[n]\), and let
\((\cA_\ell)_{\ell\ge0}\) be antichains in \(\cU\setminus\cE\) with
\(\cA_\ell\subseteq\Anc(\cA_{\ell+1})\). Assume that no node in
\(\bigcup_{\ell\ge0}\cA_\ell\) is an ancestor of any node in~\(\cE\).

Let \(\cG_\ell\subseteq\cA_\ell\) be a deterministic family of nodes such
that, under the conditional prior \(\PP_{\cE}\), each \(w\in\cG_\ell\)
satisfies the assumptions of Proposition~\ref{prop:anova-separating}, with
anchor coordinates taken from~\(\cE\), and with a common resulting lower
bound \(v_\star>0\). Then each \(w\in\cG_\ell\) is \(v_\star\)-separating
for \((a,b)\) under \(\PP_{\cE}\), and the conclusions of
Proposition~\ref{prop:sparse-noncollapse} apply under the conditional prior
\(\PP_{\cE}\). In particular, if every antichain contains at least \(s\ge1\)
such nodes, the conclusion of Theorem~\ref{thm:separating-noncollapse}
applies under \(\PP_{\cE}\).
\end{corollary}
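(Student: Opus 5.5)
The plan is to re-run the prior argument under $\PP_{\cE}$, after checking that conditioning on the exactly observed states leaves intact the structural facts used in Lemma~\ref{lem:antichain-conditional-bound}. These facts are:
\begin{itemize}
\item the modules $\{f_w\}_{w\in\cA_\ell}$ are fresh and mutually independent given $\cF_\ell$;
\item each such $f_w$ is a centred GP with kernel $K_w$;
\item the parent inputs are $\cF_\ell$-measurable.
\end{itemize}
The key structural hypothesis is that no node of $\bigcup_\ell\cA_\ell$ is an ancestor of any node of $\cE$. Since $F_e$ is measurable with respect to $\sigma(f_u:u\in\Anc(e)\cup\{e\})$ by \eqref{eq:state-measurable-modules}, the conditioning $\sigma$-field $\sigma(\bF_e:e\in\cE)$ is generated by modules of $\overline{\Anc}(\cE)$. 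This set is disjoint from $\bigcup_\ell\cA_\ell$ (and $\cA_\ell\subseteq\cU\setminus\cE$).

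The steps, in order, are as follows. First, working with the enlarged filtration $\cF^{\cE}_\ell:=\cF_\ell\vee\sigma(f_u:u\in\overline{\Anc}(\cE))$, I show it is increasing (immediate) and that, for $w\in\cA_\ell$, $f_w$ is independent of $\cF^{\cE}_\ell$. This holds because $w\notin\Anc(\cA_\ell)$ by the antichain property, and $w\notin\overline{\Anc}(\cE)$ by hypothesis. Consequently, under the regular conditional law $\PP_{\cE}$ (obtained by disintegrating over $\sigma(\bF_e)$, a sub-$\sigma$-field of $\cF^{\cE}_0$), the modules $\{f_w\}_{w\in\cA_\ell}$ remain independent centred GPs with kernels $K_w$, independent of the $\PP_{\cE}$-version of $\cF_\ell$. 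Second, I verify separation. Under $\PP_{\cE}$, the anchor coordinates of $w\in\cG_\ell$ coming from $\cE$ equal $Y_e^{(a)},Y_e^{(b)}$ almost surely. They therefore play the role of the fixed root coordinates $x_a,x_b$ in Proposition~\ref{prop:anova-separating}, and its computation gives $\Gamma_w(a,b)\succeq v_\star I_{d_w}$ $\PP_{\cE}$-a.s. Third, with freshness and separation established, the proof of Lemma~\ref{lem:antichain-conditional-bound} transfers verbatim: the conditional Gaussianity of $\Delta_w$, the projection bound, and the product over conditionally independent $\Delta_w$ all yield $\PP_{\cE}(M_\ell\le\varepsilon\mid\cF_\ell)\le(1-p_\varepsilon)^{|\cG_\ell|}$, along with $\cF_{\ell+1}$-measurability of $M_\ell$. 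Proposition~\ref{prop:sparse-noncollapse} and Theorem~\ref{thm:separating-noncollapse} then follow by their proofs, since these use only this one-step bound, the tower property, and Azuma--Hoeffding with Borel--Cantelli.

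The main obstacle I expect is the measure-theoretic first step: making rigorous that conditioning on a null-probability event $\{\bF_e=\bY_e\}$ preserves the law and independence of the downstream modules. My approach is to use the independence of $\sigma(f_w:w\notin\overline{\Anc}(\cE))$ from $\sigma(f_u:u\in\overline{\Anc}(\cE))$ under the product structure of $\PP$. This lets me choose the regular conditional version as a product: the conditional law of ancestral modules given $\bF_{\cE}=\bY_{\cE}$, times the unchanged prior law of all other modules. The lemma's argument is then run on this product measure, with the parent inputs of $w$ (which may depend on conditioned ancestral modules) still measurable with respect to the appropriate $\cF_\ell$.
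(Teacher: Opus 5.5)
Your proposal is correct and takes essentially the same route as the paper. Separation comes from treating the conditioned $\cE$-coordinates as fixed anchors in Proposition~\ref{prop:anova-separating}. Freshness comes from $f_w$ being independent of the modules indexed by $\Anc(\cA_\ell)\cup\overline{\Anc}(\cE)$: your $\cF^{\cE}_\ell$ is exactly the paper's $\sigma$-field generated by $\{f_v:v\in\Anc(\cA_\ell)\}\cup\mathcal M_{\cE}$. After this, Lemma~\ref{lem:antichain-conditional-bound} and the downstream results transfer unchanged. Your explicit product-form choice of the regular conditional version tightens the paper's more informal step, which only asserts that $f_w$ ``remains independent of $\cF_\ell$ and retains its prior GP law'' after conditioning on the realised values.
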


\begin{proof}
For every node \(w\in\bigcup_{\ell\ge0}\cG_\ell\), any input split
\((c,z)\) in which \(c\) collects coordinates from nodes in \(\cE\) becomes,
under \(\PP_{\cE}\), an input split with fixed anchor coordinates. These
coordinates therefore play the same role that root inputs play in
Proposition~\ref{prop:anova-separating}.

By assumption, each \(w\in\cG_\ell\) satisfies the hypotheses of
Proposition~\ref{prop:anova-separating} with these conditioned anchor
coordinates in place of the root coordinates. Proposition~\ref{prop:anova-separating}
therefore applies under \(\PP_{\cE}\), and the common lower-bound assumption
yields that every such node is \(v_\star\)-separating for \((a,b)\) under the
conditional prior.

It remains to verify that the conditional independence structure required by
Lemma~\ref{lem:antichain-conditional-bound} is preserved under
\(\PP_{\cE}\). Under the prior~\(\PP\), the exactly observed variables are
measurable with respect to the modules
\begin{equation}
  \mathcal M_{\cE}
  :=
  \left\{f_v:
  v\in
  \bigcup_{e\in\cE}
  \Anc(e)\cup\{e\}
  \right\}.
  \label{eq:modules-determining-exact-observations}
\end{equation}
By hypothesis, no node in \(\bigcup_{\ell\ge0}\cA_\ell\) is an ancestor of
any node in~\(\cE\). Since every \(w\in\cA_\ell\) is a non-root node and
\(w\notin\cE\), it follows that
\begin{equation}
  w\notin
  \bigcup_{e\in\cE}(\Anc(e)\cup\{e\}).
  \label{eq:w-not-in-observed-modules}
\end{equation}
Thus \(f_w\) is one of the prior-independent GP modules outside
\(\mathcal M_{\cE}\).

Now consider the filtration \((\cF_\ell)_{\ell\ge0}\) from
\eqref{eq:app-module-filtration-def} and a separating node
\(w\in\cG_\ell\subseteq\cA_\ell\). Under \(\PP\), the module \(f_w\) is
jointly independent of the sigma-field generated by
\begin{equation*}
  \{f_v:v\in\Anc(\cA_\ell)\}\cup\mathcal M_{\cE}.
\end{equation*}
Indeed, \(w\notin\Anc(\cA_\ell)\) by the antichain argument in the proof of
Lemma~\ref{lem:antichain-conditional-bound}, and \(w\) is not contained in
the module index set \eqref{eq:modules-determining-exact-observations} by
\eqref{eq:w-not-in-observed-modules}. Mutual independence of the GP modules
therefore gives the joint independence.

Consequently, after conditioning on the realised values of
\(\{\bF_e:e\in\cE\}\), the module \(f_w\) remains independent of
\(\cF_\ell\) and retains its prior GP law. Similarly, for distinct
\(w,w'\in\cG_\ell\), the modules \(f_w\) and \(f_{w'}\) remain conditionally
independent given \(\cF_\ell\) under \(\PP_{\cE}\). Therefore every argument
in the proof of Lemma~\ref{lem:antichain-conditional-bound} applies under
\(\PP_{\cE}\) without modification. Hence
Proposition~\ref{prop:sparse-noncollapse} holds under the conditional prior,
and Theorem~\ref{thm:separating-noncollapse} holds under \(\PP_{\cE}\) when
its uniform per-antichain assumption is satisfied.
\end{proof}

\section{Indegree and outdegree effects}
\label{app:degree-effects}
This section isolates how local graph degrees affect two-case contrasts when
the fusion kernel is radial in the concatenated parent state. The indegree
analysis shows that, in product-type radial blocks, aggregating many parents
can attenuate expected squared contrasts and yields a local contraction
criterion. The outdegree analysis gives the complementary mechanism: if a node
has sufficiently many disjoint designated children, branching can sustain
threshold-size contrasts with positive probability. These results are local to
the analysed block and do not alter the separating-node non-collapse criterion
above; additive root-retaining components remain separating regardless of the
number of additional parents.

We continue with the notation of Appendix~\ref{app:graph-preliminaries}; in
particular, \(\Delta_w=F_w^{(a)}-F_w^{(b)}\) for a fixed pair of distinct
cases \(a\neq b\in[n]\).

\subsection{Layered radial blocks}
\label{app:degree-assumptions}

To isolate the effect of indegree, we work on a layered block of the DAG in
which all parents of a node in the current antichain lie in the immediately
preceding antichain, and the nodewise fusion kernel is radial in the
concatenated parent state.  This is a local structural assumption on the
block being analysed, not a global restriction on the whole DAG.

\begin{assumption}[Layered Laplace--radial block]
\label{ass:laplace-radial-degree}
Let \((\cA_\ell)_{\ell\ge0}\) be non-root antichains such that
\(\PA(w)\subseteq\cA_{\ell-1}\) for every \(w\in\cA_\ell\) and every
\(\ell\ge1\). Assume that all nodes in \(\bigcup_{\ell\ge0}\cA_\ell\) have the
same output dimension \(d\), and that there exist \(\tau>0\) and a probability
measure \(\nu\) on \([0,\infty)\) such that
\[
  K_w(u,u')
  =
  \tau^2 \kappa(\|u-u'\|_2^2)I_d,
  \qquad
  \kappa(r)
  =
  \int_0^\infty e^{-sr}\,\nu(ds),
\]
for every node \(w\in\bigcup_{\ell\ge0}\cA_\ell\).
\end{assumption}

A visualisation of such a layered block is given in
Figure~\ref{fig:layered-radial-block}.

\begin{figure}[H]
\centering
\begin{tikzpicture}[
  x=1.35cm,
  y=1.10cm,
  >=latex,
  latent/.style={
    circle,
    draw=black,
    fill=white,
    line width=0.45pt,
    inner sep=1.2pt,
    minimum size=4.9mm
  },
  focus/.style={
    circle,
    draw=black,
    fill=white,
    line width=0.9pt,
    inner sep=1.2pt,
    minimum size=5.3mm
  },
  dag/.style={
    ->,
    semithick,
    draw=black
  },
  focusdag/.style={
    ->,
    thick,
    draw=black
  },
  omit/.style={
    ->,
    dashed,
    draw=black!45,
    line width=0.45pt
  },
  faint/.style={
    draw=black!40,
    fill=black!3,
    line width=0.5pt
  },
  lab/.style={font=\small},
  slab/.style={font=\scriptsize}
]

\path[faint] (0.20, 0.00) ellipse [x radius=3.00, y radius=0.36];
\path[faint] (0.00,-1.85) ellipse [x radius=2.30, y radius=0.36];
\path[faint] (0.00,-3.70) ellipse [x radius=2.55, y radius=0.36];

\node[lab,anchor=east] at (-3.35, 0.00) {$\cA_{\ell-1}$};
\node[lab,anchor=east] at (-3.35,-1.85) {$\cA_{\ell}$};
\node[lab,anchor=east] at (-3.35,-3.70) {$\cA_{\ell+1}$};

\node[latent] (u1) at (-2.20, 0.00) {};
\node[latent] (u2) at (-1.10, 0.00) {};
\node[latent] (u3) at ( 0.00, 0.00) {};
\node[latent] (u4) at ( 1.10, 0.00) {};
\node[latent] (u5) at ( 2.20, 0.00) {};

\node[latent] (m1) at (-1.55,-1.85) {};
\node[focus]  (w)  at ( 0.00,-1.85) {$w$};
\node[latent] (m3) at ( 1.55,-1.85) {};

\node[latent] (v1) at (-2.10,-3.70) {};
\node[latent] (v2) at (-0.70,-3.70) {};
\node[latent] (v3) at ( 0.70,-3.70) {};
\node[latent] (v4) at ( 2.10,-3.70) {};

\draw[dag]      (u1) -- (m1);
\draw[dag]      (u2) -- (m1);

\draw[focusdag] (u2) -- (w);
\draw[focusdag] (u3) -- (w);
\draw[focusdag] (u4) -- (w);

\draw[dag]      (u4) -- (m3);
\draw[dag]      (u5) -- (m3);

\draw[dag]      (m1) -- (v1);
\draw[dag]      (m1) -- (v2);

\draw[focusdag] (w)  -- (v2);
\draw[focusdag] (w)  -- (v3);

\draw[dag]      (m3) -- (v3);
\draw[dag]      (m3) -- (v4);

\draw[omit] (-2.20, 0.70) -- (u1);
\draw[omit] ( 0.00, 0.70) -- (u3);
\draw[omit] ( 2.20, 0.70) -- (u5);

\draw[omit] (v1) -- (-2.10,-4.40);
\draw[omit] (v2) -- (-0.70,-4.40);
\draw[omit] (v3) -- ( 0.70,-4.40);
\draw[omit] (v4) -- ( 2.10,-4.40);

\end{tikzpicture}
\caption{Layered block as in Assumption~\ref{ass:laplace-radial-degree}. Each row is an antichain, and all displayed edges in the analysed block run from one layer to the next. The highlighted node \(w\in\cA_\ell\) illustrates the local indegree mechanism: all its parents lie in \(\cA_{\ell-1}\), and its descendants lie in \(\cA_{\ell+1}\). Dashed arrows indicate omitted portions of the surrounding DAG above and below the displayed block.}
\label{fig:layered-radial-block}
\end{figure}

By Schoenberg's theorem~\citep{schoenberg1938}, a continuous function
\(\kappa\colon[0,\infty)\to\R\) with \(\kappa(0)=1\) yields a radial kernel
\((x,x')\mapsto\kappa(\|x-x'\|_2^2)\) that is positive definite on \(\R^m\) for
every \(m\) if and only if \(\kappa\) is completely monotone. By the
Bernstein--Widder theorem~\citep{widder1940laplace}, this is equivalent to the
Laplace-transform representation
\[
  \kappa(r)=\int_0^\infty e^{-sr}\,\nu(ds)
\]
for a probability measure \(\nu\) on \([0,\infty)\). Therefore
Assumption~\ref{ass:laplace-radial-degree} covers isotropic radial kernels
that are valid in every ambient dimension, including the squared-exponential
and rational-quadratic families. In particular, \(\kappa\) is nonincreasing,
takes values in \([0,1]\), with \(\kappa(0)=1\). The homogeneity
across nodes is imposed only for notational simplicity; a layer-dependent
version is stated in Remark~\ref{rem:layer-dependent-dimensions} below.

A relevant special case is product fusion of squared-exponential parent
kernels. If
\[
  K_w^{(p)}(u_p,u_p')
  =
  \exp\!\left(
    -\frac{\|u_p-u_p'\|_2^2}{2\lambda^2}
  \right)
\]
for every \(p\in\PA(w)\), and the fused kernel is normalised so that
\(K_w(u,u)=\tau^2I_d\), then
\[
  K_w(u,u')
  =
  \tau^2
  \prod_{p\in\PA(w)}
  \exp\!\left(
    -\frac{\|u_p-u_p'\|_2^2}{2\lambda^2}
  \right) I_d
  =
  \tau^2
  \exp\!\left(
    -\frac{\|u-u'\|_2^2}{2\lambda^2}
  \right) I_d,
\]
where \(u=(u_p)_{p\in\PA(w)}\) denotes the concatenated parent state. Hence
Assumption~\ref{ass:laplace-radial-degree} holds with
\(\nu=\delta_{1/(2\lambda^2)}\).

For each non-root node \(w\) we define the squared parent contrast
\begin{equation}
  \varrho_w^2
  =
  \sum_{p\in\PA(w)}\|\Delta_p\|_2^2,
  \label{eq:parent-contrast-def}
\end{equation}
which is the squared Euclidean distance between the concatenated parent
states evaluated at cases \(a\) and~\(b\).

\begin{lemma}[Two-point law for a single radial module]
\label{lem:single-radial-two-point}
Let \(w\in\cU\) be a non-root node whose kernel has the form
\(K_w(u,u')=\tau^2\kappa(\|u-u'\|_2^2)I_d\), and let \(\varrho_w^2\) be the
squared parent contrast defined in \eqref{eq:parent-contrast-def}. Conditional
on the parent states of~\(w\),
\[
  \Delta_w
  \sim
  \mathrm N\!\left(
    0,
    2\tau^2\bigl(1-\kappa(\varrho_w^2)\bigr)I_d
  \right).
\]
In particular,
\begin{equation}
  \EE\!\left[
  \|\Delta_w\|_2^2
  \mid
  \{F_p^{(a)},F_p^{(b)}\}_{p\in\PA(w)}
  \right]
  =
  2d\tau^2\bigl(1-\kappa(\varrho_w^2)\bigr).
  \label{eq:radial-conditional-second-moment}
\end{equation}
\end{lemma}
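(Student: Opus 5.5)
The argument conditions on the parent states and uses only the fact that the module \(f_w\) is a fresh GP independent of them. As in the proof of Lemma~\ref{lem:antichain-conditional-bound}, each parent state \(F_p^{(i)}\) is measurable with respect to \(\sigma(f_u:u\in\Anc(w))\). By mutual independence of the modules, \(f_w\) is independent of the concatenated parent inputs \(U_{w,a}=(F_p^{(a)})_{p\in\PA(w)}\) and \(U_{w,b}=(F_p^{(b)})_{p\in\PA(w)}\). Given these inputs, the pair \((F_w^{(a)},F_w^{(b)})=(f_w(U_{w,a}),f_w(U_{w,b}))\) is a two-point finite-dimensional marginal of a centred GP evaluated at fixed points. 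It is therefore jointly centred Gaussian in \(\R^{2d}\), and so the difference \(\Delta_w\) is conditionally centred Gaussian. Its covariance is the two-point contrast covariance \(\Gamma_w(a,b)\) from Appendix~\ref{app:standing-probabilistic-notation}.

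Next, evaluate \(\Gamma_w(a,b)\) for the radial kernel. On the diagonal, \(\kappa(0)=1\), so \(K_w(U_{w,a},U_{w,a})=K_w(U_{w,b},U_{w,b})=\tau^2 I_d\). The squared distance between the concatenated parent states is \(\|U_{w,a}-U_{w,b}\|_2^2=\sum_{p}\|\Delta_p\|_2^2=\varrho_w^2\) by \eqref{eq:parent-contrast-def}. Both cross terms therefore equal \(\tau^2\kappa(\varrho_w^2)I_d\), and symmetry of the scalar radial kernel gives \(K_w(U_{w,a},U_{w,b})=K_w(U_{w,b},U_{w,a})\). Substituting into the formula for \(\Gamma_w(a,b)\) yields \(2\tau^2(1-\kappa(\varrho_w^2))I_d\), which is the claimed conditional law. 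This matrix is positive semidefinite because \(\kappa\le 1\).

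The second-moment identity \eqref{eq:radial-conditional-second-moment} then follows by taking the trace of this conditional covariance, since the conditional mean is zero. This gives \(d\cdot 2\tau^2(1-\kappa(\varrho_w^2))\).

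The only delicate point is justifying the conditioning. Plugging random inputs into an independent random field needs a measurable version of \(f_w\), or equivalently a statement that the regular conditional law of \((f_w(U_{w,a}),f_w(U_{w,b}))\) given the parents is the finite-dimensional GP law at the realised inputs. I would handle this exactly as in Lemma~\ref{lem:antichain-conditional-bound}, via the independence (freezing) lemma for independent random elements, with no further argument needed.
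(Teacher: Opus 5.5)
Your proposal is correct and follows the paper's proof essentially step for step. Both arguments take the conditional Gaussianity of \(\Delta_w\) with covariance \(\Gamma_w(a,b)\) from Lemma~\ref{lem:antichain-conditional-bound}, evaluate \(\Gamma_w(a,b)\) using \(\kappa(0)=1\) and \(\|U_{w,a}-U_{w,b}\|_2^2=\varrho_w^2\), and obtain the second moment as a trace. Your remark about needing a measurable version of \(f_w\) and the freezing lemma is a point of rigour the paper leaves implicit, but it does not change the argument.
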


\begin{proof}
By the conditional Gaussianity of \(\Delta_w\) established in the proof of
Lemma~\ref{lem:antichain-conditional-bound}, the contrast \(\Delta_w\) is
conditionally centred Gaussian with covariance \(\Gamma_w(a,b)\) given the
parent states. Under the radial kernel assumption, the diagonal evaluations
give \(K_w(u_a,u_a)=K_w(u_b,u_b)=\tau^2I_d\), and the cross-evaluations give
\(K_w(u_a,u_b)=K_w(u_b,u_a)=\tau^2\kappa(\varrho_w^2)I_d\). Hence
\[
  \Gamma_w(a,b)
  =
  2\tau^2\bigl(1-\kappa(\varrho_w^2)\bigr)I_d.
\]
For the second-moment identity, write
\(\Sigma_w=2\tau^2(1-\kappa(\varrho_w^2))I_d\) for the conditional
covariance. Since \(\Delta_w\) is conditionally centred,
\begin{align*}
\EE\!\left[
  \|\Delta_w\|_2^2
  \mid
  \{F_p^{(a)},F_p^{(b)}\}_{p\in\PA(w)}
\right]
&=
\EE\!\left[
  \Delta_w^\top\Delta_w
  \mid
  \{F_p^{(a)},F_p^{(b)}\}_{p\in\PA(w)}
\right]\\
&=
\tr(\Sigma_w)\\
&=
2d\tau^2\bigl(1-\kappa(\varrho_w^2)\bigr),
\end{align*}
via the standard identity
\(\EE[Z^\top Z]=\tr(\Var(Z))\) for any centred random vector~\(Z\).
\end{proof}

\subsection{Indegree effects}
\label{app:degree-proofs}

Throughout this subsection, recall from Section~\ref{sec:degree-effects} that
\(k_\ell=\max_{w\in\cA_\ell}|\PA(w)|\) denotes the maximal indegree on the
\(\ell\)-th antichain and
\(C_\ell=\max_{w\in\cA_\ell}\EE[\|\Delta_w\|_2^2]\) the maximal expected
squared contrast.

The following proposition is the main result of this section and it shows how indegree relates to the passage of contrasts through the graph.

\begin{proposition}[Indegree controls the contrast recursion]
\label{prop:indegree}
Under Assumption~\ref{ass:laplace-radial-degree}, for every $\ell\ge1$, we have with $\tau^2$ the marginal variance and $\nu$ is the Schoenberg mixing measure:
\[
  C_\ell \;\le\; \Psi_{k_\ell}(C_{\ell-1}),
  \qquad
  \Psi_k(u)=
  2d\tau^2\!\left[
    1-\int_0^\infty
    \left(1+\frac{2s}{d}\,u\right)^{\!-dk/2}\!\nu(ds)
  \right],
\]
Moreover, for every fixed $u\ge0$, the map
$k\mapsto\Psi_k(u)$ is nondecreasing.
\end{proposition}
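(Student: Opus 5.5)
The plan is to reduce everything to the conditional two-point law of Lemma~\ref{lem:single-radial-two-point} and then apply Jensen's inequality to a convex function of the parents' conditional variances. Fix $\ell\ge1$ and $w\in\cA_\ell$ with $k:=|\PA(w)|\le k_\ell$. By Lemma~\ref{lem:single-radial-two-point} and the tower property,
\[
\EE\|\Delta_w\|_2^2=2d\tau^2\bigl(1-\EE[\kappa(\varrho_w^2)]\bigr),
\qquad
\EE[\kappa(\varrho_w^2)]=\int_0^\infty \EE\bigl[e^{-s\varrho_w^2}\bigr]\,\nu(ds),
\]
where the exchange of expectation and the $\nu$-integral is justified by Tonelli, since the integrand lies in $[0,1]$. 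It therefore suffices to show, for each $s\ge0$,
\[
\EE\bigl[e^{-s\varrho_w^2}\bigr]\ge\Bigl(1+\tfrac{2s}{d}C_{\ell-1}\Bigr)^{-dk_\ell/2}.
\]

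\textbf{Step 1: conditional structure of the parents.} Every parent $p\in\PA(w)\subseteq\cA_{\ell-1}$ is itself a node of the block, so it has a radial kernel; this also holds for $\ell-1=0$, because Assumption~\ref{ass:laplace-radial-degree} imposes the kernel form on all of $\bigcup_\ell\cA_\ell$. As in the proof of Lemma~\ref{lem:antichain-conditional-bound}, conditionally on $\cF_{\ell-1}$ the contrasts $\{\Delta_p\}_{p\in\PA(w)}$ are independent, because the modules $f_p$ are independent of each other and of $\cF_{\ell-1}$. By Lemma~\ref{lem:single-radial-two-point}, each has law $\mathrm N(0,V_pI_d)$, where $V_p:=2\tau^2(1-\kappa(\varrho_p^2))$ is $\cF_{\ell-1}$-measurable. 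Moreover $dV_p=\EE[\|\Delta_p\|_2^2\mid\cF_{\ell-1}]$, so $\EE V_p\le C_{\ell-1}/d$.

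\textbf{Step 2: Gaussian Laplace transform.} The chi-square moment generating function gives
\[
\EE\bigl[e^{-s\varrho_w^2}\mid\cF_{\ell-1}\bigr]=\prod_{p}(1+2sV_p)^{-d/2}=\exp\Bigl(-\tfrac d2\textstyle\sum_p\log(1+2sV_p)\Bigr).
\]

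\textbf{Step 3: Jensen.} This is the step I expect to require the most care. The map $v\mapsto -\tfrac d2\sum_p\log(1+2sv_p)$ is convex on $[0,\infty)^k$, and composing it with the convex nondecreasing function $\exp$ keeps it convex. Multivariate Jensen therefore gives
\[
\EE\bigl[e^{-s\varrho_w^2}\bigr]\ge\prod_p(1+2s\,\EE V_p)^{-d/2}\ge\Bigl(1+\tfrac{2s}{d}C_{\ell-1}\Bigr)^{-dk/2}.
\]
The second inequality uses monotonicity in $\EE V_p$. Since the base is at least $1$ and $k\le k_\ell$, we may replace the exponent $dk/2$ by $dk_\ell/2$. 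Integrating against $\nu$ and taking the maximum over $w\in\cA_\ell$ yields $C_\ell\le\Psi_{k_\ell}(C_{\ell-1})$.

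\textbf{Step 4: monotonicity in $k$.} For fixed $u\ge0$ and each $s$, we have $(1+2su/d)^{-dk/2}\in(0,1]$, and this quantity is nonincreasing in $k$. Integrating against $\nu$ preserves this, so $k\mapsto\Psi_k(u)$ is nondecreasing.

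The delicate points are the ones in Steps 1 and 3. First, the conditional independence and isotropic Gaussianity of the parent contrasts must hold jointly, not merely marginally; this is what makes the product formula in Step 2 valid. Second, Jensen must be applied to the joint convex function of $(V_p)_p$ rather than parent by parent, since the $V_p$ may be dependent through shared ancestors.
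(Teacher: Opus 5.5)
Your proof is correct and follows the paper's structure: the two-point law, the Laplace mixture with Tonelli, conditional independence of the parent contrasts, the $\chi^2_d$ Laplace transform, the bound $\EE V_p\le C_{\ell-1}/d$, and the final monotonicity in $k$. The one genuine difference is how you handle the dependence between the conditional variances at the Jensen step.

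The paper first applies AM--GM, $\prod_p(1+2sV_p)\le\bigl(1+\tfrac{2s}{k}\sum_pV_p\bigr)^k$. This reduces the conditional Laplace transform to a function of the single scalar $\sum_pV_p$, namely $x\mapsto(1+\tfrac{2s}{k}x)^{-dk/2}$. The paper checks by differentiation that this function is convex and nonincreasing, and then uses one-dimensional Jensen. You skip AM--GM. You observe that $v\mapsto\prod_p(1+2sv_p)^{-d/2}$ is jointly convex, being the exponential of a sum of convex functions, and apply multivariate Jensen directly.

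What each route buys:
\begin{itemize}
\item Your route gives the intermediate bound $\prod_p(1+2s\,\EE V_p)^{-d/2}$. By AM--GM this is at least the paper's $\bigl(1+\tfrac{2s}{k}\sum_p\EE V_p\bigr)^{-dk/2}$, so it is slightly sharper and keeps per-parent heterogeneity.
\item The paper's route needs only univariate convexity, and its bound depends on the parents only through their average expected squared contrast.
\end{itemize}
Once each $\EE V_p$ is replaced by $C_{\ell-1}/d$, both give exactly $\Psi_{k_\ell}(C_{\ell-1})$.

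Conditioning on the module $\sigma$-field $\cF_{\ell-1}$, rather than on the paper's state $\sigma$-field $\mathscr G_w$ of strict ancestors of the parents, makes no difference. The paper's argument also does not need progressivity of the block's antichains.
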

\begin{proof}
Fix \(\ell\ge1\) and \(w\in\cA_\ell\). Enumerate the parents of \(w\) as
\(\PA(w)=\{p_1,\dots,p_m\}\), where \(m=|\PA(w)|\le k_\ell\). By
Lemma~\ref{lem:single-radial-two-point},
\begin{equation*}
  \EE[\|\Delta_w\|_2^2]
  =
  2d\tau^2\Bigl(1-\EE[\kappa(\varrho_w^2)]\Bigr).
\end{equation*}
Using the Laplace-transform representation of \(\kappa\) and Tonelli's
theorem, since the integrand is nonnegative, gives
\begin{equation}
  \EE[\|\Delta_w\|_2^2]
  =
  2d\tau^2\left[
    1-
    \int_0^\infty
    \EE[e^{-s\varrho_w^2}]\,
    \nu(ds)
  \right].
  \label{eq:indegree-proof-laplace-start}
\end{equation}
We therefore seek a lower bound on \(\EE[e^{-s\varrho_w^2}]\).

Let \(\mathscr G_w\) denote the sigma-field generated by the latent states at
all strict ancestors of the parents of~\(w\),
\begin{equation*}
  \mathscr G_w
  =
  \sigma\!\left(
    F_u^{(i)}:
    u\in\bigcup_{p\in\PA(w)}\Anc(p),
    i\in\{a,b\}
  \right).
\end{equation*}
Conditional on \(\mathscr G_w\), the inputs of each parent module
\(f_{p_j}\) are fixed. By the mutual independence of the GP modules, the
random vectors \(\Delta_{p_1},\dots,\Delta_{p_m}\) are therefore conditionally
independent given \(\mathscr G_w\). Applying
Lemma~\ref{lem:single-radial-two-point} to each parent \(p_j\) gives
\begin{equation*}
  \Delta_{p_j}\mid\mathscr G_w
  \sim
  \mathrm N(0,\xi_j I_d),
  \qquad
  \xi_j
  =
  2\tau^2\bigl(1-\kappa(\varrho_{p_j}^2)\bigr).
\end{equation*}
Since \(\|\Delta_{p_j}\|_2^2\) is conditionally distributed as a scaled
\(\chi_d^2\) variable, for every \(s\ge0\),
\begin{equation}
  \EE\!\left[
    e^{-s\|\Delta_{p_j}\|_2^2}
    \middle|\mathscr G_w
  \right]
  =
  (1+2s\xi_j)^{-d/2}.
  \label{eq:parent-laplace-chi-square}
\end{equation}
Combining \eqref{eq:parent-laplace-chi-square} with conditional independence,
\begin{align}
  \EE[e^{-s\varrho_w^2}\mid\mathscr G_w]
  &=
  \prod_{j=1}^{m}
  \EE\!\left[
    e^{-s\|\Delta_{p_j}\|_2^2}
    \middle|\mathscr G_w
  \right]
  \nonumber\\
  &=
  \prod_{j=1}^{m}(1+2s\xi_j)^{-d/2}.
  \label{eq:indegree-conditional-laplace-product}
\end{align}
By the arithmetic--geometric mean inequality,
\begin{equation}
  \prod_{j=1}^{m}(1+2s\xi_j)
  \le
  \left(
    \frac{1}{m}\sum_{j=1}^{m}(1+2s\xi_j)
  \right)^{m}
  =
  \left(
    1+\frac{2s}{m}\sum_{j=1}^{m}\xi_j
  \right)^{m}.
  \label{eq:indegree-amgm}
\end{equation}
Since \(x\mapsto x^{-d/2}\) is decreasing on \((0,\infty)\),
\eqref{eq:indegree-conditional-laplace-product} and \eqref{eq:indegree-amgm}
give
\begin{equation}
  \EE[e^{-s\varrho_w^2}\mid\mathscr G_w]
  \ge
  \left(
    1+\frac{2s}{m}\sum_{j=1}^{m}\xi_j
  \right)^{-dm/2}.
  \label{eq:indegree-after-amgm}
\end{equation}
Define \(g_{s,m}:\mathbb{R}_+\to\mathbb{R}_+\) by:
\begin{equation*}
  g_{s,m}(x)
  :=
  \left(1+\frac{2s}{m}x\right)^{-dm/2}.
\end{equation*}
Then it's first two derivatives are:
\begin{align}
  g_{s,m}'(x)
  =&
  -ds\left(1+\frac{2s}{m}x\right)^{-dm/2-1}
  &&\le0, \hfill
  \label{eq:g-nonincreasing}\\
  g_{s,m}''(x)
  =&
  \frac{dm}{2}\left(\frac{dm}{2}+1\right)
  \left(\frac{2s}{m}\right)^2
  \left(1+\frac{2s}{m}x\right)^{-dm/2-2} \hspace*{-1in}
  &&\ge0. \hfill \nonumber
\end{align}
Thus \(g_{s,m}\) is nonincreasing and convex. Equation
\eqref{eq:indegree-after-amgm} and Jensen's inequality give
\begin{align*}
  \EE[e^{-s\varrho_w^2}]
  &=
  \EE\!\left[\EE[e^{-s\varrho_w^2}\mid\mathscr G_w]\right]  
  \ge
  \EE\!\left[g_{s,m}\!\left(\sum_{j=1}^{m}\xi_j\right)\right]
  \ge
  g_{s,m}\!\left(\sum_{j=1}^{m}\EE[\xi_j]\right).
\end{align*}
We now bound \(\EE[\xi_j]\). By \eqref{eq:radial-conditional-second-moment},
\[
  \EE[\|\Delta_{p_j}\|_2^2\mid
  \{F_p^{(a)},F_p^{(b)}\}_{p\in\PA(p_j)}]
  =
  d\xi_j.
\]
Taking expectations of both sides gives
\begin{equation*}
  \EE[\xi_j]
  =
  \frac{1}{d}\EE[\|\Delta_{p_j}\|_2^2]
  \le
  \frac{C_{\ell-1}}{d},
\end{equation*}
because \(p_j\in\cA_{\ell-1}\). Thus
\begin{equation*}
  \sum_{j=1}^{m}\EE[\xi_j]
  \le
  \frac{mC_{\ell-1}}{d}.
\end{equation*}
By the monotonicity of \(g_{s,m}\) in \eqref{eq:g-nonincreasing},
\begin{equation}
  \EE[e^{-s\varrho_w^2}]
  \ge
  g_{s,m}\!\left(\frac{mC_{\ell-1}}{d}\right)
  =
  \left(1+\frac{2s}{d}C_{\ell-1}\right)^{-dm/2}.
  \label{eq:indegree-laplace-lower-final-m}
\end{equation}
Substituting \eqref{eq:indegree-laplace-lower-final-m} into
\eqref{eq:indegree-proof-laplace-start} gives

\begin{equation*}
  \EE[\|\Delta_w\|_2^2]
  \le
  2d\tau^2\left[
    1-
    \int_0^\infty
    \left(1+\frac{2s}{d}C_{\ell-1}\right)^{-dm/2}
    \nu(ds)
  \right].
\end{equation*}
Since \(m\le k_\ell\) and
\(m\mapsto(1+2sC_{\ell-1}/d)^{-dm/2}\) is nonincreasing, we further obtain
\begin{equation*}
  \EE[\|\Delta_w\|_2^2]
  \le
  2d\tau^2\left[
    1-
    \int_0^\infty
    \left(1+\frac{2s}{d}C_{\ell-1}\right)^{-dk_\ell/2}
    \nu(ds)
  \right]
  =
  \Psi_{k_\ell}(C_{\ell-1}).
\end{equation*}
Taking the maximum over \(w\in\cA_\ell\) proves
\(C_\ell\le\Psi_{k_\ell}(C_{\ell-1})\).

Finally, for every fixed \(u,s\ge0\), the map
\[
  k\longmapsto
  \left(1+\frac{2s}{d}u\right)^{-dk/2}
\]
is nonincreasing, so \(k\mapsto\Psi_k(u)\) is nondecreasing.
\end{proof}

The next corollary gives a local contraction criterion from the recursion of
Proposition~\ref{prop:indegree}, showing that the derivative of \(\Psi_k\) at
the origin determines whether small contrasts decay geometrically.

\begin{corollary}[Local contraction criterion]
\label{cor:indegree-local}
Suppose Assumption~\ref{ass:laplace-radial-degree} holds, and let
\(\bar k=\sup_{\ell\ge1}k_\ell\) and
\(\mu_1=\int_0^\infty s\,\nu(ds)<\infty\). If
\[
  2d\tau^2\bar k\,\mu_1<1,
\]
then there exist \(\rho\in(0,1)\) and \(u_\star>0\) such that, whenever
\(C_{\ell_0}\le u_\star\) for some \(\ell_0\ge0\),
\[
  C_{\ell_0+m}\le\rho^m C_{\ell_0},
  \qquad m\ge0.
\]
In particular, for the squared-exponential kernel with
\(\nu=\delta_{1/(2\lambda^2)}\), the condition becomes
\(d\tau^2\bar k/\lambda^2<1\).
\end{corollary}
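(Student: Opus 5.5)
The plan is to study the scalar recursion \(C_\ell\le\Psi_{k_\ell}(C_{\ell-1})\) from Proposition~\ref{prop:indegree} near the origin. First, since \(k\mapsto\Psi_k(u)\) is nondecreasing and \(k_\ell\le\bar k\), we can replace every step by the single map \(\Psi_{\bar k}\), so that \(C_\ell\le\Psi_{\bar k}(C_{\ell-1})\) for all \(\ell\ge1\). The goal is then to find \(\rho\in(0,1)\) and \(u_\star>0\) with \(\Psi_{\bar k}(u)\le\rho u\) for all \(u\in[0,u_\star]\). Once this holds, a straightforward induction on \(m\) gives the claim: if \(C_{\ell_0+m-1}\le\rho^{m-1}C_{\ell_0}\le u_\star\), then \(C_{\ell_0+m}\le\Psi_{\bar k}(C_{\ell_0+m-1})\le\rho\,C_{\ell_0+m-1}\). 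This step also needs \(\Psi_{\bar k}\) to be nondecreasing in \(u\), which is clear because the integrand \((1+2su/d)^{-d\bar k/2}\) is decreasing in \(u\).

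The key estimate is a linear upper bound on \(\Psi_{\bar k}\). By convexity of \(x\mapsto x^{-q}\) with \(q=d\bar k/2\), Bernoulli's inequality gives \((1+x)^{-q}\ge1-qx\) for \(x\ge0\). Applying this with \(x=2su/d\) yields \(1-(1+2su/d)^{-d\bar k/2}\le \bar k s u\). Integrating against \(\nu\) and multiplying by \(2d\tau^2\) then gives
\[
  \Psi_{\bar k}(u)\le 2d\tau^2\bar k\,\mu_1\,u\qquad\text{for all }u\ge0,
\]
which is finite because \(\mu_1<\infty\). Setting \(\rho:=2d\tau^2\bar k\mu_1<1\), the bound holds globally, so any \(u_\star>0\) works (for instance \(u_\star=\infty\), or any positive value if a finite threshold is preferred). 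If \(\rho=0\), meaning \(\nu=\delta_0\) and \(\Psi\equiv0\), we can take any \(\rho\in(0,1)\) instead.

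I do not expect a real obstacle. The only point to check is that the tangent-line bound is valid uniformly in \(s\), so that Tonelli lets us integrate it. This holds because the integrand is nonnegative and dominated by \(\bar k s u\), which is integrable. The alternative route, a first-order Taylor expansion with \(\Psi_{\bar k}'(0)=2d\tau^2\bar k\mu_1\) followed by picking \(\rho\) strictly between this slope and \(1\), would need differentiation under the integral. Bernoulli's inequality avoids that.

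Finally, for the squared-exponential case \(\nu=\delta_{1/(2\lambda^2)}\), we have \(\mu_1=1/(2\lambda^2)\), so the condition becomes \(2d\tau^2\bar k/(2\lambda^2)=d\tau^2\bar k/\lambda^2<1\), as stated.
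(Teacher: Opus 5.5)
Your argument is correct, but it gets the linear bound by a different route from the paper, and the route yields a stronger conclusion.

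\textbf{The paper's route.} The paper differentiates \(\Psi_k\) under the integral sign, justified by dominated convergence using \(\mu_1<\infty\), to obtain \(\Psi_{\bar k}'(0)=2d\tau^2\bar k\mu_1\). It then picks \(\rho\) strictly between this slope and \(1\). Using only differentiability at the origin, it gets \(\Psi_{\bar k}(u)\le\rho u\) on some small interval \([0,u_\star]\). The iteration, via monotonicity in \(k\) and propagation of \(C_\ell\le u_\star\), is the same as yours.

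\textbf{Your route.} Your tangent-line inequality \((1+x)^{-q}\ge 1-qx\) uses the fact that each integrand \(u\mapsto 1-(1+2su/d)^{-d\bar k/2}\) is concave with slope \(\bar k s\) at zero. Hence \(\Psi_{\bar k}\) lies below its tangent line at the origin everywhere.

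\textbf{What each buys.} Your route gives three things:
\begin{itemize}
\item \(\rho\) can be taken equal to the slope \(2d\tau^2\bar k\mu_1\) itself when it is positive.
\item The contraction is global. No smallness assumption on \(C_{\ell_0}\) is needed, and \(C_\ell\le\rho^\ell C_0\) holds from the outset.
\item It avoids differentiating under the integral.
\end{itemize}
The paper's route is the generic ``a fixed point with derivative below one is locally attracting'' argument. It would survive modifications of the recursion that destroy concavity, but here it delivers only the weaker local statement.

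\textbf{Two small remarks.} Your induction never uses that \(\Psi_{\bar k}\) is nondecreasing in \(u\), since you apply the linear bound directly at the point \(C_{\ell_0+m-1}\). Integrating the pointwise inequality against \(\nu\) needs only monotonicity of the integral, not Tonelli or domination.
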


\begin{proof}
For every \(k\ge1\), \(\Psi_k(0)=0\). Since \(\mu_1<\infty\), dominated
convergence justifies differentiating under the integral sign and gives
\begin{equation*}
  \Psi_k'(u)
  =
  2d\tau^2
  \int_0^\infty
  k s
  \left(1+\frac{2s}{d}u\right)^{-dk/2-1}
  \nu(ds).
\end{equation*}
In particular,
\(
  \Psi_k'(0)=2d\tau^2 k\mu_1
\), and
by assumption,
\(
  \Psi_{\bar k}'(0)=2d\tau^2\bar k\mu_1<1.
\)
Choose any \(\rho\in(\Psi_{\bar k}'(0),1)\). Since \(\Psi_{\bar k}\) is
differentiable at \(0\) with \(\Psi_{\bar k}(0)=0\), there exists
\(u_\star>0\) such that
\begin{equation*}
  \Psi_{\bar k}(u)\le\rho u
  \qquad\text{for all }u\in[0,u_\star].
\end{equation*}
Whenever \(C_{\ell-1}\le u_\star\), Proposition~\ref{prop:indegree} and the
monotonicity of \(\Psi_k\) in \(k\) give
\begin{equation*}
  C_\ell
  \le
  \Psi_{k_\ell}(C_{\ell-1})
  \le
  \Psi_{\bar k}(C_{\ell-1})
  \le
  \rho C_{\ell-1}.
\end{equation*}
In particular, \(C_\ell\le\rho C_{\ell-1}\le u_\star\), so the bound
propagates. Iterating from \(\ell_0\) onward gives
\[
  C_{\ell_0+m}\le\rho^m C_{\ell_0},
  \qquad m\ge0.
\]
For the squared-exponential case, \(\mu_1=1/(2\lambda^2)\), so the condition
\(2d\tau^2\bar k\mu_1<1\) reduces to \(d\tau^2\bar k/\lambda^2<1\).
\end{proof}

The corollary shows that, once the expected squared contrast enters a
sufficiently small neighbourhood of zero, it decays geometrically fast, with
the rate governed by \(\Psi_{\bar k}'(0)\). The criterion
\(2d\tau^2\bar k\mu_1<1\) makes the role of indegree clear: higher indegree
\(\bar k\) tightens the condition, because the concatenated parent input lives
in a space of dimension \(\bar k\cdot d\) and the kernel sees a larger total
squared distance. For a concrete illustration, consider a layered block of
squared-exponential modules with output dimension \(d=2\), marginal variance
\(\tau^2=1\), and length-scale \(\lambda=2\). The local contraction criterion
then reads \(\bar k<\lambda^2/d=2\), so a chain \((\bar k=1)\) contracts
locally, while a block with maximal indegree \(\bar k=2\) or higher may fail
to satisfy this sufficient contraction condition.

\begin{remark}[Layer-dependent dimensions and kernel parameters]
\label{rem:layer-dependent-dimensions}
The homogeneity assumptions in Proposition~\ref{prop:indegree} are only used
to keep the notation compact. Suppose instead that every node in
\(\cA_{\ell-1}\) has common output dimension \(d_{\ell-1}\), every node in
\(\cA_\ell\) has common output dimension \(d_\ell\), and every node in
\(\cA_\ell\) uses a common radial kernel
\[
  K_w(u,u')
  =
  \tau_\ell^2\kappa_\ell(\|u-u'\|_2^2)I_{d_\ell},
  \qquad
  \kappa_\ell(r)=\int_0^\infty e^{-sr}\,\nu_\ell(ds).
\]
The proof of Propostion~\ref{prop:indegree}, \emph{mutatis mutandis}, gives
\[
  C_\ell
  \le
  2d_\ell\tau_\ell^2
  \left[
    1-
    \int_0^\infty
    \left(
      1+\frac{2s}{d_{\ell-1}}C_{\ell-1}
    \right)^{-d_{\ell-1}k_\ell/2}
    \nu_\ell(ds)
  \right].
\]
Thus the argument extends verbatim to layer-dependent dimensions and kernel
hyperparameters.
\end{remark}

The contraction result of Proposition~\ref{prop:indegree} is specific to
product-type fusion. Under fusion mechanisms that have an additive
root-only summand, the separating property is robust to indegree:

\begin{proposition}[Additive kernel decompositions preserve separating contributions]
\label{prop:additive-monotone}
Let \(w\in\cU\) be a non-root node with kernel
\(K_w=\sum_{S\in\cS_w}K_w^{(S)}\), where
\(\cS_w\subseteq2^{\PA(w)}\setminus\{\emptyset\}\) and each
\(K_w^{(S)}\) is a valid positive-semidefinite kernel on the coordinates
indexed by~\(S\). Then, for any two distinct cases \(a\neq b\),
\[
  \Gamma_w(a,b)
  =  \sum_{S\in\cS_w}\Gamma_w^{(S)}(a,b),
  \qquad
  \Gamma_w^{(S)}(a,b)\succeq0
  \quad\text{for every }S\in\cS_w.
\]
In particular, if some subfamily \(\cS_w^\star\subseteq\cS_w\) satisfies
\[
  \sum_{S\in\cS_w^\star}\Gamma_w^{(S)}(a,b)
  \succeq v_\star I_{d_w}
\]
almost surely, then \(w\) is \(v_\star\)-separating for \((a,b)\),
irrespective of the remaining summands.
\end{proposition}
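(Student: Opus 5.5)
The plan is to work directly from the closed-form expression for the two-point contrast covariance recorded in Appendix~\ref{app:standing-probabilistic-notation},
\[
  \Gamma_w(a,b)=K_w(U_{w,a},U_{w,a})+K_w(U_{w,b},U_{w,b})-K_w(U_{w,a},U_{w,b})-K_w(U_{w,b},U_{w,a}),
\]
and to observe that this expression is linear in the kernel. Substituting \(K_w=\sum_{S\in\cS_w}K_w^{(S)}\) splits it term by term into \(\sum_{S}\Gamma_w^{(S)}(a,b)\). Here \(\Gamma_w^{(S)}(a,b)\) is the same four-term combination with \(K_w^{(S)}\) in place of \(K_w\), evaluated at the restricted inputs \(U^{S}_{w,a}=(F_p^{(a)})_{p\in S}\) and \(U^{S}_{w,b}=(F_p^{(b)})_{p\in S}\). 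This decomposition holds pathwise, for every realisation of the parent configurations, so no probabilistic argument is needed at this step.

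Next comes the positive semidefiniteness of each summand. Fix \(S\) and parent configurations, and take any \(c\in\R^{d_w}\). The quadratic form \(c^\top\Gamma_w^{(S)}(a,b)c\) equals \(\sum_{i,j\in\{a,b\}}\alpha_i\alpha_j\,c^\top K_w^{(S)}(U^S_{w,i},U^S_{w,j})c\) with weights \(\alpha_a=1\) and \(\alpha_b=-1\). By the definition of a matrix-valued positive-semidefinite kernel, with coefficient vectors \(c_a=c\) and \(c_b=-c\), this sum is nonnegative. An equivalent route is to note that it is the variance of \(c^\top(g(U^S_{w,a})-g(U^S_{w,b}))\) for \(g\sim\GP(0,K_w^{(S)})\). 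Either way, \(\Gamma_w^{(S)}(a,b)\succeq0\). There is one minor point to state explicitly: if the restricted inputs happen to coincide, the summand is simply zero, which is still positive semidefinite.

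For the final claim, Loewner monotonicity does the work. Dropping the nonnegative summands indexed by \(\cS_w\setminus\cS_w^\star\) gives \(\Gamma_w(a,b)\succeq\sum_{S\in\cS_w^\star}\Gamma_w^{(S)}(a,b)\succeq v_\star I_{d_w}\) almost surely. This is exactly the definition of \(v_\star\)-separation. The argument parallels Remark~\ref{rem:additive-root-separation}.

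I do not expect a genuine obstacle here. The only care needed is bookkeeping: checking that each \(K_w^{(S)}\), defined on the coordinates in \(S\), lifts to a valid kernel on the full parent product space by composing it with the coordinate projection (lifting preserves positive semidefiniteness). I would also check that the almost-sure qualifier carries through, since the decomposition and the positivity statements hold for every outcome.
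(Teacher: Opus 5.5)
Your proposal is correct and follows essentially the same route as the paper's proof: you expand the four-term expression for $\Gamma_w(a,b)$ linearly over the additive summands at the restricted parent inputs, invoke positive semidefiniteness of each $K_w^{(S)}$ (the paper simply cites kernel validity, whereas you verify it explicitly with coefficients $c$ and $-c$), and then drop the summands outside $\cS_w^\star$ by Loewner monotonicity. The extra bookkeeping you mention, namely lifting each $K_w^{(S)}$ via the coordinate projection and carrying the almost-sure qualifier through pathwise, is harmless and slightly more careful than the paper's version.
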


\begin{proof}
Write \(u_a=(F_p^{(a)})_{p\in\PA(w)}\) and
\(u_b=(F_p^{(b)})_{p\in\PA(w)}\) for the concatenated parent states, and let
\(u_{a,S}\), \(u_{b,S}\) denote their restrictions to the coordinates indexed
by \(S\). For each \(S\in\cS_w\), define
\[
  \Gamma_w^{(S)}(a,b)
  :=
  K_w^{(S)}(u_{a,S},u_{a,S})
  +K_w^{(S)}(u_{b,S},u_{b,S})
  -K_w^{(S)}(u_{a,S},u_{b,S})
  -K_w^{(S)}(u_{b,S},u_{a,S}).
\]
Then, by the additive structure of \(K_w\),
\[
  \Gamma_w(a,b)=\sum_{S\in\cS_w}\Gamma_w^{(S)}(a,b).
\]
For each \(S\in\cS_w\), validity of the kernel \(K_w^{(S)}\) implies
\(\Gamma_w^{(S)}(a,b)\succeq0\). Therefore
\[
  \Gamma_w(a,b)
  =
  \sum_{S\in\cS_w}\Gamma_w^{(S)}(a,b)
  \succeq
  \sum_{S\in\cS_w^\star}\Gamma_w^{(S)}(a,b)
  \succeq
  v_\star I_{d_w},
\]
where the first inequality drops the positive-semidefinite summands outside
\(\cS_w^\star\). Thus \(w\) is \(v_\star\)-separating.
\end{proof}

\subsection{Outdegree effects through branching}
\label{app:outdegree-branching}

We now isolate a branching mechanism inside the layered block of
Assumption~\ref{ass:laplace-radial-degree}. Let
\(\mathcal B_\ell\subseteq\cA_\ell\) be non-empty subsets and, for each
\(w\in\mathcal B_\ell\), choose a set
\(\mathrm{Ch}_\star(w)\subseteq\mathcal B_{\ell+1}\) of designated children
such that \(w\in\PA(v)\) for every \(v\in\mathrm{Ch}_\star(w)\). Assume that
\(|\mathrm{Ch}_\star(w)|=b\) for all \(w\) and that the families
\(\{\mathrm{Ch}_\star(w):w\in\mathcal B_\ell\}\) are pairwise disjoint for
each \(\ell\). The designated edges then form a rooted \(b\)-ary branching
subgraph.

Nodes in this branching subgraph are allowed to have additional parents
outside the designated edges. This can only strengthen the lower bound used
below. To see this, consider a designated edge \(w\to v\). Since
\(w\in\PA(v)\), the term corresponding to \(w\) appears in the sum defining
\(\varrho_v^2\), so
\[
  \varrho_v^2
  =
  \sum_{p\in\PA(v)}\|\Delta_p\|_2^2
  \ge
  \|\Delta_w\|_2^2.
\]
Therefore, on the event \(\{\|\Delta_w\|_2\ge t\}\),
\begin{equation}
  \varrho_v^2\ge t^2.
  \label{eq:var-rhosq-grt-tsq}
\end{equation}
Because \(\kappa\) is nonincreasing, \eqref{eq:var-rhosq-grt-tsq} implies
\[
  2\tau^2\bigl(1-\kappa(\varrho_v^2)\bigr)I_d
  \succeq
  2\tau^2\bigl(1-\kappa(t^2)\bigr)I_d
  =:
  \underline\sigma_t^2 I_d,
\]
where \(\underline\sigma_t^2\) is the smallest conditional contrast variance
parameter compatible with the event that one designated parent already has
contrast at least \(t\). For each \(t>0\), define
\[
  p_t
  :=
  \begin{cases}
    \PP\!\left(
      \chi_d^2\ge t^2/\underline\sigma_t^2
    \right),
    & \underline\sigma_t^2>0,\\[1.5ex]
    0, & \underline\sigma_t^2=0.
  \end{cases}
\]

Figure~\ref{fig:outdegree-branching} shows a binary instance \((b=2)\) of
this designated branching pattern inside the layered block.

\begin{figure}[H]
\centering
\begin{tikzpicture}[
  x=1.35cm,
  y=1.10cm,
  >=latex,
  latent/.style={
    circle,
    draw,
    line width=0.45pt,
    inner sep=1.2pt,
    minimum size=4.9mm
  },
  desig/.style={
    ->,
    thick
  },
  omit/.style={
    ->,
    dashed,
    thin
  },
  faint/.style={
    draw,
    line width=0.45pt
  },
  subset/.style={
    draw,
    dashed,
    line width=0.45pt
  },
  lab/.style={font=\small},
  slab/.style={font=\scriptsize}
]

\path[faint] (0.00, 0.00) ellipse [x radius=1.15, y radius=0.36];
\path[faint] (0.00,-1.85) ellipse [x radius=2.05, y radius=0.36];
\path[faint] (0.00,-3.70) ellipse [x radius=2.90, y radius=0.36];

\node[lab,anchor=east] at (-3.05, 0.00) {$\cA_{\ell}$};
\node[lab,anchor=east] at (-3.05,-1.85) {$\cA_{\ell+1}$};
\node[lab,anchor=east] at (-3.05,-3.70) {$\cA_{\ell+2}$};

\node[latent] (w0) at ( 0.00, 0.00) {$w_0$};

\node[latent] (v1) at (-0.85,-1.85) {};
\node[latent] (v2) at ( 0.85,-1.85) {};

\node[latent] (x1) at (-1.80,-3.70) {};
\node[latent] (x2) at (-0.60,-3.70) {};
\node[latent] (x3) at ( 0.60,-3.70) {};
\node[latent] (x4) at ( 1.80,-3.70) {};

\draw[subset] (0.00,0.00) ellipse [x radius=0.34, y radius=0.28];
\draw[subset] (0.00,-1.85) ellipse [x radius=1.35, y radius=0.28];
\draw[subset] (0.00,-3.70) ellipse [x radius=2.30, y radius=0.28];

\node[slab,anchor=west] at (1.45, 0.00) {$\mathcal B_{\ell}$};
\node[slab,anchor=west] at (2.10,-1.85) {$\mathcal B_{\ell+1}$};
\node[slab,anchor=west] at (3.05,-3.70) {$\mathcal B_{\ell+2}$};

\draw[desig] (w0) -- (v1);
\draw[desig] (w0) -- (v2);

\draw[desig] (v1) -- (x1);
\draw[desig] (v1) -- (x2);
\draw[desig] (v2) -- (x3);
\draw[desig] (v2) -- (x4);

\draw[omit] (0.00, 0.70) -- (w0);

\draw[omit] (-1.60,-1.70) -- (v1);
\draw[omit] ( 1.60,-1.70) -- (v2);

\draw[omit] (-2.35,-3.55) -- (x1);
\draw[omit] ( 2.35,-3.55) -- (x4);

\draw[omit] (x1) -- (-1.80,-4.40);
\draw[omit] (x2) -- (-0.60,-4.40);
\draw[omit] (x3) -- ( 0.60,-4.40);
\draw[omit] (x4) -- ( 1.80,-4.40);

\end{tikzpicture}
\caption{Binary designated branching pattern \((b=2)\) inside the layered block. The horizontal contours indicate the antichains \(\cA_\ell\), \(\cA_{\ell+1}\), and \(\cA_{\ell+2}\). The dashed inner contours indicate the selected node subsets \(\mathcal B_\ell\subseteq\cA_\ell\), \(\mathcal B_{\ell+1}\subseteq\cA_{\ell+1}\), and \(\mathcal B_{\ell+2}\subseteq\cA_{\ell+2}\). Solid arrows denote the designated branching edges between these subsets; dashed arrows indicate omitted incoming or outgoing edges in the surrounding DAG.}
\label{fig:outdegree-branching}
\end{figure}

\begin{proposition}[Outdegree can sustain contrasts through branching]
\label{prop:outdegree-branching}
Assume Assumption~\ref{ass:laplace-radial-degree}. Let
\((\mathcal B_\ell)_{\ell\ge0}\) and \(\{\mathrm{Ch}_\star(w)\}\) be as above,
with common branching factor \(b\ge1\), and fix a seed
\(w_0\in\mathcal B_0\). If \(\PP(\|\Delta_{w_0}\|_2\ge t)>0\) and
\(b p_t>1\), then
\[
  \PP\!\left(
    M_\ell\ge t
    \text{ for all }\ell\ge0
  \right)>0.
\]
\end{proposition}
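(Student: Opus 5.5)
The plan is to compare the process of ``successful'' designated nodes with a supercritical Galton--Watson branching process and then invoke the classical survival criterion. Call a node \(w\in\mathcal B_\ell\) \emph{active} if \(\|\Delta_w\|_2\ge t\), and let \(Z_\ell\) be the number of active nodes in \(\mathcal B_\ell\) that descend from \(w_0\) along designated edges. Since \(\mathcal B_\ell\subseteq\cA_\ell\), we have \(\{Z_\ell\ge1\}\subseteq\{M_\ell\ge t\}\). It therefore suffices to show \(\PP(Z_\ell\ge1\ \forall\ell)>0\).

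\textbf{One-step conditional bound.} Let \(\mathscr F_\ell\) be the sigma-field generated by all modules at nodes in \(\overline{\Anc}(\cA_\ell)\), so that all states on \(\cA_\ell\) and above are known. Consider an active node \(w\in\mathcal B_\ell\) and a designated child \(v\in\mathrm{Ch}_\star(w)\). By Assumption~\ref{ass:laplace-radial-degree}, \(\PA(v)\subseteq\cA_\ell\), so the parent states of \(v\) are \(\mathscr F_\ell\)-measurable. The module \(f_v\) is fresh and independent of \(\mathscr F_\ell\), by the same antichain argument as in Lemma~\ref{lem:antichain-conditional-bound}. By Lemma~\ref{lem:single-radial-two-point}, \(\Delta_v\mid\mathscr F_\ell\sim\mathrm N(0,2\tau^2(1-\kappa(\varrho_v^2))I_d)\). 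On \(\{\|\Delta_w\|_2\ge t\}\), \eqref{eq:var-rhosq-grt-tsq} gives a variance parameter of at least \(\underline\sigma_t^2\). Stochastic monotonicity of \(\sigma^2\chi^2_d\) in \(\sigma^2\) then yields \(\PP(\|\Delta_v\|_2\ge t\mid\mathscr F_\ell)\ge p_t\). Moreover, the children \(v\in\bigcup_{w\text{ active}}\mathrm{Ch}_\star(w)\) are distinct, because the designated families are pairwise disjoint, and their modules are mutually independent given \(\mathscr F_\ell\). Hence the indicators \(\mathbbm 1\{\|\Delta_v\|_2\ge t\}\) are conditionally independent, each with success probability at least \(p_t\). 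It follows that, conditionally on \(\mathscr F_\ell\), \(Z_{\ell+1}\) stochastically dominates \(\mathrm{Bin}(bZ_\ell,p_t)\).

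\textbf{Coupling and the main obstacle.} The main obstacle is turning this conditional domination, whose success probabilities are not identical, into an honest comparison with a Galton--Watson process. I will use a standard thinning coupling. Introduce independent uniforms \(U_v\), and declare \(v\) ``GW-alive'' if its parent is GW-alive and \(U_v\le p_t/\PP(\|\Delta_v\|_2\ge t\mid\mathscr F_\ell)\) on the event that \(v\) is active. This construction is arranged so that GW-alive implies active, and each GW-alive parent produces GW-alive children independently with probability exactly \(p_t\). The resulting process \(\tilde Z_\ell\le Z_\ell\) is a Galton--Watson process with offspring law \(\mathrm{Bin}(b,p_t)\) and mean \(bp_t>1\), started at time 0 on the event \(\{\|\Delta_{w_0}\|_2\ge t\}\).

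An alternative that avoids the coupling is to work directly with generating functions. One shows inductively that \(\EE[s^{Z_{\ell+1}}\mid\mathscr F_\ell]\le g(s)^{Z_\ell}\) with \(g(s)=(1-p_t+p_ts)^b\). Iterating this bound controls \(\PP(Z_\ell=0)\) by \(\EE[g^{\circ\ell}(0)^{Z_0}]\).

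\textbf{Conclusion.} By the classical extinction criterion, a supercritical Galton--Watson process with \(bp_t>1\) survives with probability \(1-q>0\), where \(q<1\) is the smallest fixed point of \(g\). Independence from the seed event follows because the seed depends only on modules above \(\cA_1\). Therefore
\[
\PP(M_\ell\ge t\ \forall\ell)\ge\PP(\|\Delta_{w_0}\|_2\ge t)(1-q)>0 .
\]
Note that \(bp_t>1\) forces \(p_t>0\), and hence \(\underline\sigma_t^2>0\), so the degenerate case never arises.
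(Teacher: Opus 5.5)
Your main argument is correct and is essentially the paper's proof. Both use the same one-step Gaussian bound from Lemma~\ref{lem:single-radial-two-point} and \eqref{eq:var-rhosq-grt-tsq}, the same thinning by independent uniforms accepted with probability $p_t/r_v$, which embeds a $\mathrm{Bin}(b,p_t)$ Galton--Watson process seeded by $Z_0=\mathbbm{1}\{\|\Delta_{w_0}\|_2\ge t\}$, and the same final bound: survival probability times $\PP(\|\Delta_{w_0}\|_2\ge t)$. The paper conditions on $\sigma(f_u:u\in\Anc(\mathcal B_{\ell+1}))$ rather than your $\overline{\Anc}(\cA_\ell)$, which changes nothing.

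Your generating-function alternative needs one fix, because a bound on $\PP(Z_\ell=0)$ for each fixed $\ell$ yields the ``for all $\ell$'' event only if $\{Z_\ell\ge1\}$ decreases in $\ell$. That holds only when $Z_\ell$ counts nodes reached from $w_0$ through chains of \emph{active} designated nodes, so that $\{Z_\ell=0\}$ is absorbing. Under your stated definition (active nodes merely descending from $w_0$), an inactive node can still have an active child, so this monotonicity need not hold.
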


\begin{proof}
Fix \(t>0\). By the definition above, the assumption \(b p_t>1\) implies
\(p_t>0\), hence \(\underline\sigma_t^2>0\). Therefore
\[
  p_t
  =
  \PP\!\left(
    \chi_d^2\ge t^2/\underline\sigma_t^2
  \right).
\]

We prove the stronger statement that the threshold persists already on the
designated branching subgraph, namely
\begin{equation}
  \PP\!\left(
    \max_{w\in\mathcal B_\ell}\|\Delta_w\|_2\ge t
    \text{ for all }\ell\ge0
  \right)>0.
  \label{eq:branching-strong-event}
\end{equation}
Since \(\mathcal B_\ell\subseteq\cA_\ell\) for every \(\ell\),
\eqref{eq:branching-strong-event} immediately implies the proposition.

For each \(\ell\ge0\), define the sigma-field
\begin{equation*}
  \mathcal H_\ell
  :=
  \sigma\!\left(
    f_u:u\in\Anc(\mathcal B_{\ell+1})
  \right).
\end{equation*}
Because \(|\mathrm{Ch}_\star(w)|=b\ge1\) for every \(w\in\mathcal B_\ell\),
each \(w\in\mathcal B_\ell\) has at least one designated child in
\(\mathcal B_{\ell+1}\), and therefore
\(\mathcal B_\ell\subseteq\Anc(\mathcal B_{\ell+1})\). By the same
measurability argument used in Lemma~\ref{lem:antichain-conditional-bound},
every parent state of a node in \(\mathcal B_{\ell+1}\) is
\(\mathcal H_\ell\)-measurable, and \(\Delta_w\) is
\(\mathcal H_\ell\)-measurable for every \(w\in\mathcal B_\ell\).

We first derive a one-step lower bound. Fix \(\ell\ge0\),
\(w\in\mathcal B_\ell\), and \(v\in\mathrm{Ch}_\star(w)\). The squared parent
contrast at \(v\) is
\[
  \varrho_v^2
  =
  \sum_{p\in\PA(v)}\|\Delta_p\|_2^2.
\]
The parent states of \(v\) are \(\mathcal H_\ell\)-measurable. Therefore, by
Lemma~\ref{lem:single-radial-two-point}, under the conditional law given
\(\mathcal H_\ell\),
\[
  \Delta_v\mid\mathcal H_\ell
  \sim
  \mathrm N\!\left(
    0,
    2\tau^2\bigl(1-\kappa(\varrho_v^2)\bigr)I_d
  \right).
\]
On the event \(A_w:=\{\|\Delta_w\|_2\ge t\}\), equation
\eqref{eq:var-rhosq-grt-tsq} gives \(\varrho_v^2\ge t^2\). Therefore, on
\(A_w\),
\[
  2\tau^2\bigl(1-\kappa(\varrho_v^2)\bigr)
  \ge
  \underline\sigma_t^2.
\]
Since
\[
  \frac{\|\Delta_v\|_2^2}
  {2\tau^2(1-\kappa(\varrho_v^2))}
  \Bigm|\mathcal H_\ell
  \sim
  \chi_d^2,
\]
we obtain
\begin{equation}
  \PP\!\left(
    \|\Delta_v\|_2\ge t\mid\mathcal H_\ell
  \right)
  =
  \PP\!\left(
    \chi_d^2\ge
    \frac{t^2}{2\tau^2(1-\kappa(\varrho_v^2))}
  \right)
  \ge
  \PP\!\left(
    \chi_d^2\ge\frac{t^2}{\underline\sigma_t^2}
  \right)
  =
  p_t
  \qquad\text{on }A_w.
  \label{eq:outdegree-step-lower-bound}
\end{equation}

We next record the relevant conditional independence. Let \(v\neq v'\) be
distinct nodes in
\(\bigcup_{u\in\mathcal B_\ell}\mathrm{Ch}_\star(u)\subseteq\mathcal B_{\ell+1}\).
Conditional on \(\mathcal H_\ell\), the contrasts \(\Delta_v\) and
\(\Delta_{v'}\) are measurable functions of the independent GP modules
\(f_v\) and \(f_{v'}\), respectively. Hence \(\Delta_v\) and \(\Delta_{v'}\)
are conditionally independent given \(\mathcal H_\ell\). In particular, the
indicators \(\mathbbm{1}\{\|\Delta_v\|_2\ge t\}\) are conditionally
independent across distinct designated children.

We now build an embedded Galton--Watson process by thinning these threshold
exceedances.  On a product extension of the original probability space, let
\(\{U_v\}_{v\in\bigcup_{\ell\ge1}\mathcal B_\ell}\) be an i.i.d. family of
uniform random variables in \((0,1)\), independent of the DAG-DGP prior. This
auxiliary extension does not change the marginal DAG-DGP probabilities.
For each \(\ell\ge0\), define
\[
  \widehat{\mathcal H}_\ell
  :=
  \sigma\!\left(
    \mathcal H_\ell,
    \{U_v:v\in\bigcup_{m=1}^{\ell}\mathcal B_m\}
  \right).
\]
Thus \(\widehat{\mathcal H}_\ell\) contains the past uniforms up to level
\(\ell\), but not the current uniforms on level \(\ell+1\).

We recursively construct random sets \(D_\ell\subseteq\mathcal B_\ell\), to be
interpreted as a distinguished surviving population. Set
\[
  D_0
  :=
  \begin{cases}
    \{w_0\}, & \|\Delta_{w_0}\|_2\ge t,\\
    \emptyset, & \|\Delta_{w_0}\|_2<t.
  \end{cases}
\]
Then \(D_0\) is \(\widehat{\mathcal H}_0\)-measurable and
\[
  |D_0|=\mathbbm{1}\{\|\Delta_{w_0}\|_2\ge t\}.
\]

Suppose \(D_\ell\) has been constructed and is
\(\widehat{\mathcal H}_\ell\)-measurable. Let
\[
  I_\ell:=\bigcup_{w\in D_\ell}\mathrm{Ch}_\star(w).
\]
Because the designated child families are pairwise disjoint, each
\(v\in I_\ell\) belongs to the designated child set of a unique
\(w\in D_\ell\). For \(v\in I_\ell\), define
\[
  r_v
  :=
  \PP\!\left(
    \|\Delta_v\|_2\ge t\mid\mathcal H_\ell
  \right).
\]
The variable \(r_v\) is \(\mathcal H_\ell\)-measurable, hence
\(\widehat{\mathcal H}_\ell\)-measurable. If \(v\in I_\ell\), then
\(v\in\mathrm{Ch}_\star(w)\) for some \(w\in D_\ell\), and by construction
\(\|\Delta_w\|_2\ge t\). Therefore \eqref{eq:outdegree-step-lower-bound}
gives
\[
  r_v\ge p_t>0
  \qquad\text{for every }v\in I_\ell.
\]
Hence \(p_t/r_v\in[0,1]\). Define
\[
  Y_v
  :=
  \mathbbm{1}\{\|\Delta_v\|_2\ge t\}
  \mathbbm{1}\!\left\{U_v\le\frac{p_t}{r_v}\right\},
  \qquad v\in I_\ell,
\]
and set
\[
  D_{\ell+1}:=\{v\in I_\ell:Y_v=1\}.
\]
By construction, \(D_{\ell+1}\) is \(\widehat{\mathcal H}_{\ell+1}\)-measurable
and every retained child is above the threshold.

We claim that, conditional on \(\widehat{\mathcal H}_\ell\), the family
\(\{Y_v:v\in I_\ell\}\) is independent with each \(Y_v\sim\mathrm{Bernoulli}(p_t)\).
Let \(v_1,\dots,v_m\in I_\ell\) be distinct. For each \(j\), the variable
\(\Delta_{v_j}\) is measurable with respect to
\(\sigma(\mathcal H_\ell,f_{v_j})\), because the parent inputs of \(v_j\) are
\(\mathcal H_\ell\)-measurable. The GP modules
\(f_{v_1},\dots,f_{v_m}\) are mutually independent and jointly independent of
the past uniforms entering \(\widehat{\mathcal H}_\ell\). The current uniforms
\(U_{v_1},\dots,U_{v_m}\) are i.i.d. and independent of both the DAG-DGP prior
and the past uniforms. Hence the pairs
\[
  (\Delta_{v_1},U_{v_1}),\dots,(\Delta_{v_m},U_{v_m})
\]
are conditionally independent given \(\widehat{\mathcal H}_\ell\), and so are
the variables \(Y_{v_1},\dots,Y_{v_m}\).

For the conditional success probability, using that \(U_v\) is independent of
\(\sigma(\Delta_v,\widehat{\mathcal H}_\ell)\), we obtain
\begin{align}
  \PP(Y_v=1\mid\widehat{\mathcal H}_\ell)
  &=
  \EE\!\left[
    \mathbbm{1}\{\|\Delta_v\|_2\ge t\}
    \mathbbm{1}\!\left\{U_v\le\frac{p_t}{r_v}\right\}
    \middle|\widehat{\mathcal H}_\ell
  \right]
  \nonumber\\
  &=
  \EE\!\left[
    \mathbbm{1}\{\|\Delta_v\|_2\ge t\}
    \frac{p_t}{r_v}
    \middle|\widehat{\mathcal H}_\ell
  \right]
  \nonumber\\
  &=
  \frac{p_t}{r_v}
  \PP\!\left(
    \|\Delta_v\|_2\ge t\mid\widehat{\mathcal H}_\ell
  \right).
  \label{eq:branching-Yv-probability-start}
\end{align}
The additional information in \(\widehat{\mathcal H}_\ell\) beyond
\(\mathcal H_\ell\) consists only of past uniforms, which are independent of
the current GP variables. Therefore
\[
  \PP\!\left(
    \|\Delta_v\|_2\ge t\mid\widehat{\mathcal H}_\ell
  \right)
  =
  \PP\!\left(
    \|\Delta_v\|_2\ge t\mid\mathcal H_\ell
  \right)
  =
  r_v.
\]
Substituting into \eqref{eq:branching-Yv-probability-start} gives
\[
  \PP(Y_v=1\mid\widehat{\mathcal H}_\ell)=p_t.
\]

For each \(w\in D_\ell\), define
\[
  N_w:=\sum_{v\in\mathrm{Ch}_\star(w)}Y_v.
\]
Because \(|\mathrm{Ch}_\star(w)|=b\) and the designated child sets are
pairwise disjoint, conditional on \(\widehat{\mathcal H}_\ell\), each \(N_w\)
has law \(\mathrm{Bin}(b,p_t)\), and the family \(\{N_w:w\in D_\ell\}\) is
conditionally independent.

Define \(Z_\ell:=|D_\ell|\). Then
\[
  Z_{\ell+1}=\sum_{w\in D_\ell}N_w.
\]
Thus, on the event \(\{Z_\ell=n\}\), the variable \(Z_{\ell+1}\) is the sum
of \(n\) independent \(\mathrm{Bin}(b,p_t)\) variables. Its conditional law
depends only on \(n\), and not on the earlier history. Hence
\((Z_\ell)_{\ell\ge0}\) is a Galton--Watson process with offspring
distribution \(\mathrm{Bin}(b,p_t)\) and random initial state
\[
  Z_0=\mathbbm{1}\{\|\Delta_{w_0}\|_2\ge t\}.
\]
Its generating function is
\[
  g(s)=(1-p_t+p_ts)^b,
  \qquad s\in[0,1],
\]
and its mean offspring number is \(g'(1)=bp_t>1\). Hence the process is
supercritical. By the classical extinction-probability theorem for
Galton--Watson processes, the extinction probability \(\xi_t\) is the
smallest nonnegative solution of \(g(s)=s\). Since \(g'(1)=bp_t>1\), one has
\(\xi_t<1\), and the survival probability \(\pi_t:=1-\xi_t\) is strictly
positive~\citep[Ch.~I, Sec.~5, Thm.~1]{athreya1972}. Conditional on
\(Z_0=1\),
\[
  \PP\!\left(
    Z_\ell>0\text{ for all }\ell\ge0
    \middle| Z_0=1
  \right)
  =
  \pi_t>0.
\]
Since \(Z_0=\mathbbm{1}\{\|\Delta_{w_0}\|_2\ge t\}\),
\begin{align*}
  \PP\!\left(Z_\ell>0\text{ for all }\ell\ge0\right)
  &=
  \PP\!\left(
    Z_\ell>0\text{ for all }\ell\ge0
    \middle|Z_0=1
  \right)
  \PP(Z_0=1)\\
  &=
  \pi_t\,\PP(\|\Delta_{w_0}\|_2\ge t)>0.
\end{align*}
On the event \(\{Z_\ell>0\text{ for all }\ell\ge0\}\), every set \(D_\ell\) is
non-empty, and every \(w\in D_\ell\) satisfies \(\|\Delta_w\|_2\ge t\). Since
\(D_\ell\subseteq\mathcal B_\ell\),
\[
  \max_{w\in\mathcal B_\ell}\|\Delta_w\|_2\ge t
  \qquad\text{for all }\ell\ge0.
\]
This proves \eqref{eq:branching-strong-event}. Since
\(\mathcal B_\ell\subseteq\cA_\ell\), on the same event
\[
  M_\ell
  \ge
  \max_{w\in\mathcal B_\ell}\|\Delta_w\|_2
  \ge
  t
  \qquad\text{for all }\ell\ge0.
\]
Therefore
\[
  \PP\!\left(
    M_\ell\ge t
    \text{ for all }\ell\ge0
  \right)>0,
\]
which proves the claim.
\end{proof}

\begin{corollary}[Scalar threshold probability]
\label{cor:outdegree-scalar-threshold}
Under the additional assumption \(d=1\), the threshold probability \(p_t\) in
Proposition~\ref{prop:outdegree-branching} takes the form
\[
  p_t
  :=
  \begin{cases}
    2\,\Phi_{\mathrm N}\!\left(-\dfrac{t}{\underline\sigma_t}\right),
    & \underline\sigma_t^2>0,\\[1.5ex]
    0, & \underline\sigma_t^2=0.
  \end{cases}
\]
\end{corollary}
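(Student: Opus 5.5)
The plan is to specialise the definition of \(p_t\) from Proposition~\ref{prop:outdegree-branching} to \(d=1\) and rewrite the chi-square tail as a two-sided standard Gaussian tail. The statement is purely a computation: for \(d=1\), a \(\chi_1^2\) variable has the law of \(Z^2\) with \(Z\sim\mathcal N(0,1)\). The structure of the branching argument is not touched.

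First I separate the degenerate case \(\underline\sigma_t^2=0\). There both the general definition and the claimed formula give \(p_t=0\), so nothing is to be proved.

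In the case \(\underline\sigma_t^2>0\), I write \(\underline\sigma_t:=\sqrt{\underline\sigma_t^2}>0\) and use \(\chi_1^2\overset{d}{=}Z^2\). This gives
\[
  p_t=\PP\bigl(Z^2\ge t^2/\underline\sigma_t^2\bigr)=\PP\bigl(|Z|\ge t/\underline\sigma_t\bigr),
\]
since \(t>0\) and the square root is monotone on \([0,\infty)\). By symmetry of \(Z\), and because the Gaussian law has no atoms (so \(\ge\) and \(>\) agree), this equals \(2\PP(Z\le -t/\underline\sigma_t)=2\Phi_{\mathrm N}(-t/\underline\sigma_t)\). This is the claimed formula.

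As a consistency check, the same expression can be read off directly in the scalar case. By Lemma~\ref{lem:single-radial-two-point}, \(\Delta_v\mid\mathcal H_\ell\sim\mathcal N\bigl(0,2\tau^2(1-\kappa(\varrho_v^2))\bigr)\). This matches the two-sided tail computation used for \(p_\varepsilon\) in Lemma~\ref{lem:antichain-conditional-bound}, with \(\underline\sigma_t\) playing the role of \(\sqrt{v_\star}\). There is no real obstacle here. The only care points are handling the degenerate case separately and justifying the passage from \(Z^2\ge c^2\) to \(|Z|\ge c\), which requires \(c=t/\underline\sigma_t\ge0\).
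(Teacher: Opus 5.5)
Your proposal is correct and follows the paper's proof: handle \(\underline\sigma_t^2=0\) directly from the definition, and otherwise use \(\chi_1^2\overset{d}{=}Z^2\) to rewrite \(p_t=\PP(|Z|\ge t/\underline\sigma_t)=2\Phi_{\mathrm N}(-t/\underline\sigma_t)\). Your added remarks about symmetry and the absence of atoms (so \(\ge\) and \(>\) agree) make explicit what the paper leaves implicit.
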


\begin{proof}
If \(d=1\) and \(\underline\sigma_t^2>0\), then
\[
  p_t
  =
  \PP\!\left(
    \chi_1^2\ge\frac{t^2}{\underline\sigma_t^2}
  \right)
  =
  \PP\!\left(
    |Z|\ge\frac{t}{\underline\sigma_t}
  \right),
  \qquad Z\sim\mathrm N(0,1).
\]
Hence
\[
  p_t=2\Phi_{\mathrm N}\!\left(-\frac{t}{\underline\sigma_t}\right).
\]
If \(\underline\sigma_t^2=0\), the conclusion is immediate from the definition
of \(p_t\).
\end{proof}
\newpage
\section{Intermediate observations as stochastic skip connections}
\label{app:posterior-refresh}

This appendix proves Theorem~\ref{thm:stochastic-skip-main} and the following corollary thereto, which characterises a simple setting in a readily interpretable way.

\begin{corollary}[Gaussian refresh at an observed source]
\label{cor:gaussian-refresh-main}
Let \(u\in\cA_{\ell_0}\) be scalar, assume
\(\cO_u=\{a,b\}\times\{1\}\), and suppose that
\(K_u(x,x)=\tau_u^2\) for all parent inputs \(x\). If
\[
  Y_u^{(i)} = F_u^{(i)}+\xi_u^{(i)},
  \qquad
  \xi_u^{(i)}\stackrel{\mathrm{i.i.d.}}{\sim}\cN(0,\sigma_u^2),
  \qquad i\in\{a,b\},
\]
then, conditional on the parent states and on
\((Y_u^{(a)},Y_u^{(b)})\), on the event
\(\Gamma_u(a,b)>0\),
\[
  F_u^{(a)}-F_u^{(b)}
  \sim
  \cN\!\left(
    \frac{\Gamma_u(a,b)}{\Gamma_u(a,b)+2\sigma_u^2}
    \bigl(Y_u^{(a)}-Y_u^{(b)}\bigr),\,
    \frac{2\sigma_u^2\,\Gamma_u(a,b)}
         {\Gamma_u(a,b)+2\sigma_u^2}
  \right).
\]
Consequently, if \(m_u\) and \(v_u\) denote the mean and variance in
the display above, then the local filtered law assigns probability at
least
\(
  1-\Phi_{\mathrm N}\!\left(
    (\varepsilon-|m_u|)/\sqrt{v_u}
  \right)
\)
to the event \(|F_u^{(a)}-F_u^{(b)}|>\varepsilon\). If
\(\Gamma_u(a,b)=0\), the conditional contrast is degenerate and this
source strength is zero.
\end{corollary}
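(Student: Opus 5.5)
Conditionally on the parent states of \(u\) (equivalently on \(\mathscr H_{\ell_0}\), which fixes the parent inputs \(U_{u,a},U_{u,b}\)), the fresh module \(f_u\) gives a jointly Gaussian vector \((F_u^{(a)},F_u^{(b)})\) with mean zero and covariance built from \(K_u\). The first step is to change variables to the sum \(S:=F_u^{(a)}+F_u^{(b)}\) and the contrast \(\Delta:=F_u^{(a)}-F_u^{(b)}\), and likewise \(Y_S:=Y_u^{(a)}+Y_u^{(b)}\), \(D_Y:=Y_u^{(a)}-Y_u^{(b)}\). The hypothesis \(K_u(x,x)=\tau_u^2\) is exactly what makes this change of variables diagonalise the problem. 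Indeed,
\[
\Cov(S,\Delta)=\Var(F_u^{(a)})-\Var(F_u^{(b)})=\tau_u^2-\tau_u^2=0,
\]
so \(S\) and \(\Delta\) are independent Gaussians, with \(\Var(\Delta)=\Gamma_u(a,b)\) by the formula for \(\Gamma_w\) in App.~\ref{app:standing-probabilistic-notation}.

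For the noise, \(\xi^{(a)}\pm\xi^{(b)}\) are independent \(\cN(0,2\sigma_u^2)\) variables because the \(\xi\)'s are i.i.d. with equal variance, and they are independent of \(f_u\). Hence \(D_Y=\Delta+\eta\) and \(Y_S=S+\eta'\), where \((\Delta,\eta)\) is independent of \((S,\eta')\). It follows that conditioning \(\Delta\) on \((Y_S,D_Y)\) is the same as conditioning on \(D_Y\) alone.

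The second step is standard scalar Gaussian conjugacy with prior \(\cN(0,\Gamma)\) and noise variance \(2\sigma_u^2\). On \(\Gamma>0\) this gives posterior mean \(\Gamma D_Y/(\Gamma+2\sigma_u^2)\) and posterior variance \(\Gamma-\Gamma^2/(\Gamma+2\sigma_u^2)=2\sigma_u^2\Gamma/(\Gamma+2\sigma_u^2)\), which is the displayed law. On \(\Gamma=0\) we have \(\Delta=0\) almost surely under the conditional law, so \(\PP(|\Delta|>\varepsilon)=0\) and the source strength is zero.

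The third step is the tail bound. With \(Z\sim\cN(0,1)\) and \(\Delta=m_u+\sqrt{v_u}Z\), the event \(\{|\Delta|>\varepsilon\}\) contains the one-sided event \(\{\operatorname{sign}(m_u)\,\Delta>\varepsilon\}\), taking either sign when \(m_u=0\). That event has probability
\[
\PP\bigl(Z>(\varepsilon-|m_u|)/\sqrt{v_u}\bigr)=1-\Phi_{\mathrm N}\bigl((\varepsilon-|m_u|)/\sqrt{v_u}\bigr).
\]
This requires \(v_u>0\), which holds because \(\sigma_u^2>0\) and \(\Gamma>0\).

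The main obstacle is justifying that \(S\) carries no information about \(\Delta\); this rests entirely on the equal-diagonal assumption on \(K_u\) together with the homoscedastic i.i.d. noise. Without either, one would need the full bivariate Gaussian conditioning instead. A secondary point is measurability: the statement should be read under a regular conditional version given the parent states, which I would handle as in the proof of Lemma~\ref{lem:antichain-conditional-bound}, noting that \(f_u\) is independent of the ancestral modules.
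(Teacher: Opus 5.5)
Your proposal is correct and matches the paper's direct argument: reduce to the scalar observation $D_Y=\Delta_u+\eta_u$ with $\eta_u\sim\cN(0,2\sigma_u^2)$, apply one-dimensional Gaussian conditioning, and lower-bound the two-sided tail by the one-sided event in the direction of $\operatorname{sign}(m_u)$. Your explicit $S/\Delta$ decoupling (using $K_u(x,x)=\tau_u^2$ and i.i.d.\ noise) justifies why conditioning on $(Y_u^{(a)},Y_u^{(b)})$ reduces to conditioning on $D_Y$. The paper's direct route leaves this step implicit; it is the same $\Delta\perp S$ independence used in Proposition~\ref{prop:ef-curvature-refresh}, of which this corollary is the equality case $m_A=L_A=\sigma_u^{-2}$.
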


Thus the Gaussian source strength is governed by the observed
discrepancy \(Y_u^{(a)}-Y_u^{(b)}\), the noise variance
\(\sigma_u^2\), and the local two-point prior variance
\(\Gamma_u(a,b)\). 

The argument is organised so that
the main theorem follows from three ingredients. First, filtering on an
observed antichain yields a conditional product structure across the nodes of
that antichain. Second, a refreshed contrast at one such node can be
transported downstream by additive kernel components. Third, disjoint
retaining routes remain conditionally independent, so their effects combine
multiplicatively.

Throughout this section the dataset is fixed and only the latent states remain
random. We work under the standing well-posedness condition that every
posterior or local conditional distribution displayed below is well defined,
i.e. that the corresponding normalising constant is finite and strictly
positive. As in the previous sections, root states are deterministic and are
omitted from latent sigma-fields. 
\subsection{Filtering on an observed antichain}
\label{app:filtering-kernels}

Consider a progressive antichain sequence \((\cA_\ell)_{\ell\ge0}\) of non-root
nodes. For each \(\ell\ge0\), let
\begin{equation*}
  \cU_{\le\ell}:=\Anc(\cA_\ell)\cup\cA_\ell.
\end{equation*}
The corresponding filtering posterior is
\begin{equation*}
  \Pi_\ell\!\left(
    d\{\bF_w\}_{w\in\cU_{\le\ell}}
  \right)
  \propto
  \prod_{w\in\cU_{\le\ell}}
  \left[
  p_0\!\left(
    d\bF_w\mid\{\bF_p\}_{p\in\PA(w)}
  \right)
  p_w(\bY_w\mid\bF_w,\cO_w)
  \right],
\end{equation*}
where \(p_w(\bY_w\mid\bF_w,\cO_w)\equiv1\) when \(\cO_w=\emptyset\). We also
write
\begin{equation*}
  \mathscr H_\ell
  :=
  \sigma\!\left(
    \bF_v:v\in\Anc(\cA_\ell)
  \right),
  \qquad
  \mathscr F_\ell
  :=
  \sigma\!\left(
    \bF_v:v\in\cU_{\le\ell}
  \right).
\end{equation*}
Thus \(\mathscr H_\ell\) records the strict latent ancestors of the current
antichain, whereas \(\mathscr F_\ell\) also contains the current antichain
variables.

If \(w\in\cA_\ell\), every latent parent of \(w\) is
\(\mathscr H_\ell\)-measurable. We define the local prior kernel
\begin{equation*}
  P_w^\ell(d\bF_w\mid\mathscr H_\ell)
  :=
  p_0\!\left(
    d\bF_w\mid\{\bF_p\}_{p\in\PA(w)}
  \right).
\end{equation*}
The one-node filtered kernel is the probability kernel
\begin{equation*}
  Q_w^\ell(d\bF_w\mid\mathscr H_\ell)
  \propto
  p_w(\bY_w\mid\bF_w,\cO_w)
  P_w^\ell(d\bF_w\mid\mathscr H_\ell),
\end{equation*}
For \(\ell_1\ge\ell_0\), we write \(\Pi_{\ell_0\to\ell_1}\) for the
predictive law obtained by first drawing from \(\Pi_{\ell_0}\) and then
propagating the DAG-DGP prior forward from \(\cA_{\ell_0}\) to
\(\cA_{\ell_1}\), without assimilating observations beyond level \(\ell_0\).

\begin{lemma}[Conditional factorisation of the filtered antichain]
\label{lem:filtered-factorization}
For every \(\ell\ge0\), the conditional law of the current antichain under
\(\Pi_\ell\) factorises as
\[
  \Pi_\ell\!\left(
    d\{\bF_w\}_{w\in\cA_\ell}
    \middle|
    \mathscr H_\ell
  \right)
  =
  \bigotimes_{w\in\cA_\ell}
  Q_w^\ell(d\bF_w\mid\mathscr H_\ell).
\]
\end{lemma}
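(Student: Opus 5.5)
The plan is to read the factorisation directly off the unnormalised density defining \(\Pi_\ell\). Split \(\cU_{\le\ell}=\Anc(\cA_\ell)\sqcup\cA_\ell\); both pieces consist of latent (non-root) nodes, since roots are deterministic and omitted. The product defining \(\Pi_\ell\) then separates into two parts. The first is a factor
\[
  G\bigl(\{\bF_v\}_{v\in\Anc(\cA_\ell)}\bigr)
  :=\prod_{v\in\Anc(\cA_\ell)}
  p_0\bigl(d\bF_v\mid\{\bF_p\}_{p\in\PA(v)}\bigr)\,p_v(\bY_v\mid\bF_v,\cO_v).
\]
The second is \(\prod_{w\in\cA_\ell}p_w(\bY_w\mid\bF_w,\cO_w)\,P_w^\ell(d\bF_w\mid\mathscr H_\ell)\).

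\textbf{Measurability of \(G\).} For \(v\in\Anc(\cA_\ell)\), every parent of \(v\) is itself a strict ancestor of \(\cA_\ell\) or a root. This uses transitivity exactly as in the proof of Lemma~\ref{lem:antichain-conditional-bound}. Hence \(G\) depends only on the \(\mathscr H_\ell\)-variables.

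\textbf{Structure of the \(\cA_\ell\) factors.} For \(w\in\cA_\ell\), the parents of \(w\) lie in \(\Anc(\cA_\ell)\cup\cR\). This needs the antichain property: a parent of \(w\) inside \(\cA_\ell\) would be comparable to \(w\), which is excluded. So each factor \(p_w\,P_w^\ell\) is a kernel in \(\bF_w\) whose parameters are \(\mathscr H_\ell\)-measurable. Crucially, no \(\bF_{w'}\) with \(w'\in\cA_\ell\), \(w'\neq w\), enters the factor for \(w\). The joint law is therefore of the form
\[
  G(dh)\prod_{w\in\cA_\ell}p_w(\bY_w\mid\bF_w)\,P_w^\ell(d\bF_w\mid h).
\]

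\textbf{Disintegration.} Write \(Z_w(h):=\int p_w(\bY_w\mid\bF_w,\cO_w)\,P_w^\ell(d\bF_w\mid h)\). By the standing well-posedness condition, \(Z_w(h)\in(0,\infty)\), so \(Q_w^\ell(\cdot\mid h)=p_wP_w^\ell/Z_w(h)\) is a probability kernel. The joint law equals
\[
  \Bigl[G(dh)\prod_{w}Z_w(h)\Bigr]\otimes\bigotimes_{w\in\cA_\ell}Q_w^\ell(d\bF_w\mid h),
\]
up to the global normaliser. By Tonelli, integrating out the \(\cA_\ell\)-variables shows that the first bracket, after normalisation, is the \(\mathscr H_\ell\)-marginal of \(\Pi_\ell\). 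Uniqueness of regular conditional distributions then identifies the conditional law given \(\mathscr H_\ell\) as the product kernel \(\bigotimes_w Q_w^\ell\), almost surely. To verify this, check the defining identity on product sets \(A\times\prod_w B_w\) and extend by a \(\pi\)–\(\lambda\) argument.

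\textbf{Main obstacle.} The only delicate point is measure-theoretic. One must ensure that the kernel \(h\mapsto Q_w^\ell(\cdot\mid h)\) is jointly measurable, which follows from measurability of \(Z_w\) and of the prior GP conditional kernels. One must also ensure the disintegration is valid when the \(p_0\) factors are kernels rather than densities. Both are handled by treating everything as iterated kernels in topological order, where Fubini–Tonelli applies because all integrands are nonnegative.
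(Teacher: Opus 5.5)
Your proposal is correct and follows essentially the same route as the paper's proof. Both treat the strict-ancestor factors as $\mathscr H_\ell$-measurable, write the conditional law of the antichain via Bayes' rule as a normalised product of $p_w P_w^\ell$, factorise numerator and denominator over $w\in\cA_\ell$ by Tonelli, and conclude on product rectangles. The paper additionally runs a preliminary Dawid-style conditional-independence argument, which your explicit disintegration makes redundant. Your disintegration, with $\mathscr H_\ell$-marginal proportional to $G\prod_w Z_w$, also makes rigorous the paper's informal remark that the ancestor factors ``do not affect'' the conditional law.
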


\begin{proof}
The proof makes a repetitive use of important conditional independent properties. Along the proof, we mainly refer to Section 3 and 4 of \cite{dawid1979}. 

Fix \(\ell\ge0\). By definition,
\[
  \mathscr H_\ell
  =
  \sigma\!\left(\bF_v:v\in\Anc(\cA_\ell)\right),
\]
so each state matrix \(\bF_v\), \(v\in\Anc(\cA_\ell)\), is
\(\mathscr H_\ell\)-measurable. Equivalently, under any regular conditional
law given \(\mathscr H_\ell\), these strict-ancestor states are degenerate at
their realised values. Since \(\cA_\ell\) is an antichain, no node in
\(\cA_\ell\) is a strict ancestor of any other node in \(\cA_\ell\).
Therefore, for each \(w\in\cA_\ell\), the parent states of \(w\) are either
deterministic roots or are \(\mathscr H_\ell\)-measurable.

We first justify the conditional-independence step. Temporarily regard the
local observations as random variables
\(\widetilde{\bY}_w\), \(w\in\cA_\ell\), generated from the nodewise
conditional distributions \(p_w(\cdot\mid\bF_w,\cO_w)\), and define
\[
  B_w:=(\bF_w,\widetilde{\bY}_w),
  \qquad w\in\cA_\ell.
\]
Given \(\mathscr H_\ell\), the blocks \(\{B_w:w\in\cA_\ell\}\) are jointly
conditionally independent, as the latent states are generated by distinct
independent GP modules evaluated at \(\mathscr H_\ell\)-measurable inputs,
and the observation variables are then generated nodewise from their
corresponding latent states.

Consider first two disjoint subcollections \(I,J\subset\cA_\ell\), and write
\(B_I=(B_w)_{w\in I}\), \(B_J=(B_w)_{w\in J}\), with analogous notation for
\(\widetilde{\bY}_I\) and \(\widetilde{\bY}_J\). We have
\[
  B_I\perp\!\!\!\perp B_J\mid\mathscr H_\ell.
\]
Since \(\widetilde{\bY}_I\) is a measurable function of \(B_I\), the
conditioning-stability property in
\cite[Sec.~4, Lemma~4.2(ii)]{dawid1979} gives
\[
  B_I\perp\!\!\!\perp B_J
  \mid
  \mathscr H_\ell,\widetilde{\bY}_I .
\]
Using the symmetry of conditional independence
\citep[Sec.~3.1, Theorem~3.1]{dawid1979}, applying again
\citet[Sec.~4, Lemma~4.2(ii)]{dawid1979} to the measurable function
\(\widetilde{\bY}_J\) of \(B_J\), and then using symmetry once more, gives
\[
  B_I\perp\!\!\!\perp B_J
  \mid
  \mathscr H_\ell,\widetilde{\bY}_I,\widetilde{\bY}_J .
\]
By the closure of conditional independence under measurable transformations
\citep[Sec.~4, Lemma~4.2(i)]{dawid1979}, applied to the maps
\(B_I\mapsto\bF_I\) and \(B_J\mapsto\bF_J\), this implies
\[
  \bF_I\perp\!\!\!\perp\bF_J
  \mid
  \mathscr H_\ell,\widetilde{\bY}_I,\widetilde{\bY}_J .
\]
The same argument applied inductively, using the joint-independence convention
following \cite[Sec.~4, Lemma~4.3]{dawid1979}, gives joint conditional
independence of \(\{\bF_w:w\in\cA_\ell\}\) after conditioning on
\(\mathscr H_\ell\) and on the realised local observations at the current
antichain.

The preceding conditional-independence argument shows that the conditional law
of \(\{\bF_w:w\in\cA_\ell\}\), given \(\mathscr H_\ell\) and the realised
current-antichain observations, factorises over nodes. It remains to identify
the corresponding nodewise conditional factor, and to verify that it is exactly
\(Q_w^\ell(\cdot\mid\mathscr H_\ell)\). The conditional prior law of
\(\{\bF_w:w\in\cA_\ell\}\) given \(\mathscr H_\ell\) is
\begin{equation}
  \bigotimes_{w\in\cA_\ell}
  P_w^\ell(d\bF_w\mid\mathscr H_\ell),
  \label{eq:antichain-prior-product-given-H}
\end{equation}
because distinct current-antichain states are obtained by evaluating distinct
independent GP modules at \(\mathscr H_\ell\)-measurable inputs. The
conditional distribution factors for the current-antichain observations also
factorise nodewise:
\begin{equation}
  \prod_{w\in\cA_\ell}p_w(\bY_w\mid\bF_w,\cO_w),
  \label{eq:antichain-observation-factor-product}
\end{equation}
with factors equal to one at unobserved nodes. All conditional distribution
factors and prior factors associated with strict ancestors are
\(\mathscr H_\ell\)-measurable and hence do not affect the conditional
distribution of \(\{\bF_w:w\in\cA_\ell\}\) beyond the realised value of
\(\mathscr H_\ell\).

Let \(B_w^\star\) be a measurable set in the state space of \(\bF_w\), one for
each \(w\in\cA_\ell\). By Bayes' rule,
\eqref{eq:antichain-prior-product-given-H}, and
\eqref{eq:antichain-observation-factor-product},
\begin{align}
  &\Pi_\ell\!\left(
    \bF_w\in B_w^\star\text{ for all }w\in\cA_\ell
    \middle|\mathscr H_\ell
  \right)
  \nonumber\\
  &\quad=
  \frac{
    \int_{\prod_{w\in\cA_\ell}B_w^\star}
    \prod_{w\in\cA_\ell}
    p_w(\bY_w\mid\bF_w,\cO_w)
    P_w^\ell(d\bF_w\mid\mathscr H_\ell)
  }{
    \int
    \prod_{w\in\cA_\ell}
    p_w(\bY_w\mid\bF_w,\cO_w)
    P_w^\ell(d\bF_w\mid\mathscr H_\ell)
  }.
  \label{eq:filtered-factorization-integral-start}
\end{align}
Since the integrands are non-negative products of nodewise terms, Tonelli's
theorem gives
\begin{align}
  &\int_{\prod_{w\in\cA_\ell}B_w^\star}
    \prod_{w\in\cA_\ell}
    p_w(\bY_w\mid\bF_w,\cO_w)
    P_w^\ell(d\bF_w\mid\mathscr H_\ell)
  \nonumber\\
  &\quad=
  \prod_{w\in\cA_\ell}
  \int_{B_w^\star}
  p_w(\bY_w\mid\bF_w,\cO_w)
  P_w^\ell(d\bF_w\mid\mathscr H_\ell),
  \label{eq:filtered-factorization-numerator-product}
\end{align}
and
\begin{align}
  &\int
    \prod_{w\in\cA_\ell}
    p_w(\bY_w\mid\bF_w,\cO_w)
    P_w^\ell(d\bF_w\mid\mathscr H_\ell)
  \nonumber\\
  &\quad=
  \prod_{w\in\cA_\ell}
  \int
  p_w(\bY_w\mid\bF_w,\cO_w)
  P_w^\ell(d\bF_w\mid\mathscr H_\ell).
  \label{eq:filtered-factorization-denominator-product}
\end{align}
Substituting \eqref{eq:filtered-factorization-numerator-product} and
\eqref{eq:filtered-factorization-denominator-product} into
\eqref{eq:filtered-factorization-integral-start} yields
\begin{align}
  &\Pi_\ell\!\left(
    \bF_w\in B_w^\star\text{ for all }w\in\cA_\ell
    \middle|\mathscr H_\ell
  \right)
  \nonumber\\
  &\quad=
  \prod_{w\in\cA_\ell}
  \frac{
    \int_{B_w^\star}
    p_w(\bY_w\mid\bF_w,\cO_w)
    P_w^\ell(d\bF_w\mid\mathscr H_\ell)
  }{
    \int
    p_w(\bY_w\mid\bF_w,\cO_w)
    P_w^\ell(d\bF_w\mid\mathscr H_\ell)
  }
  \nonumber\\
  &\quad=
  \prod_{w\in\cA_\ell}Q_w^\ell(B_w^\star\mid\mathscr H_\ell).
  \label{eq:filtered-factorization-final-sets}
\end{align}
Since the rectangles \(\prod_{w\in\cA_\ell}B_w^\star\) generate the product
sigma-field, \eqref{eq:filtered-factorization-final-sets} proves the stated
factorisation.
\end{proof}

\subsection{Retaining routes below the observed antichain}
\label{app:retaining-routes}

We recall that disjointness conventions are given in
Appendix~\ref{app:route-conventions} and now formalise a downstream
transport mechanism.

\begin{definition}[\(\varepsilon\)-retaining route]
\label{def:retaining-route}
An admissible route
\[
  \gamma=(v_0,v_1,\dots,v_L)
\]
below \(\cA_{\ell_0}\) is \(\varepsilon\)-retaining with factor
\(\rho\in[0,1]\) if, under the predictive law propagated from
\(\Pi_{\ell_0}\),
\[
  \PP\!\left(
    \|F_{v_L}^{(a)}-F_{v_L}^{(b)}\|_2>\varepsilon
    \middle|
    \mathscr F_{\ell_0}
  \right)
  \ge
  \rho\,
  \mathbbm{1}\!\left\{
    \|F_{v_0}^{(a)}-F_{v_0}^{(b)}\|_2>\varepsilon
  \right\}
  \qquad
  \Pi_{\ell_0}\text{-a.s.}
\]
\end{definition}

\begin{definition}[Additive route-retention coefficients]
\label{def:additive-route-retention-coefficients}
Let \(\gamma=(v_0,\dots,v_L)\) be an admissible route below
\(\cA_{\ell_0}\), and fix \(\varepsilon>0\). We say that \(\gamma\) satisfies
the additive \(\varepsilon\)-retention condition if, for each
\(r=1,\dots,L\), the kernel at \(v_r\), as a function of the distinguished
parent coordinate \(v_{r-1}\), admits the decomposition
\[
  K_{v_r}\bigl((u,z),(u',z')\bigr)
  =
  K_{v_r}^{(\mathrm{rest})}\bigl((u,z),(u',z')\bigr)
  +
  \varsigma_r^2\,\bar k_r(u,u')I_{d_{v_r}},
\]
where \(K_{v_r}^{(\mathrm{rest})}\) is positive semidefinite,
\(\varsigma_r>0\), and \(\bar k_r\) is a scalar correlation kernel satisfying
\[
  \sup_{\|u-u'\|_2>\varepsilon}\bar k_r(u,u')
  \le \bar r_r(\varepsilon)<1.
\]
For such a route, define
\[
  \beta_r(\varepsilon)
  :=
  2\,\Phi_{\mathrm N}\!\left(
    -\frac{\varepsilon}{
      \sqrt{2\varsigma_r^2(1-\bar r_r(\varepsilon))}
    }
  \right),
  \qquad r=1,\dots,L,
\]
and
\[
  \rho_\gamma(\varepsilon)
  :=
  \prod_{r=1}^{L}\beta_r(\varepsilon),
\]
with the empty product interpreted as one.
\end{definition}

\begin{proposition}[Additive routes are retaining]
\label{prop:additive-route-retention}
If an admissible route \(\gamma\) below \(\cA_{\ell_0}\) satisfies the
additive \(\varepsilon\)-retention condition, then \(\gamma\) is
\(\varepsilon\)-retaining with factor \(\rho_\gamma(\varepsilon)\).
\end{proposition}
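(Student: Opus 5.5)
We would prove Proposition~\ref{prop:additive-route-retention} by induction along the route, conditioning successively on the predictive states up to each route node. Write $\gamma=(v_0,\dots,v_L)$ and, under the predictive law propagated from $\Pi_{\ell_0}$, let $\mathscr G_r$ denote the sigma-field generated by $\mathscr F_{\ell_0}$ together with the states of $v_1,\dots,v_r$ (for both cases). If $L=0$, the claim is trivial with $\rho=1$, since $v_L=v_0$ and the indicator event is $\mathscr F_{\ell_0}$-measurable. For $L\ge1$, the target is the sequential inequality
\[
  \PP\bigl(\|\Delta_{v_r}\|_2>\varepsilon\mid\mathscr G_{r-1}\bigr)
  \ge
  \beta_r(\varepsilon)\,\mathbbm 1\{\|\Delta_{v_{r-1}}\|_2>\varepsilon\},
\]
which we then chain with the tower property.

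\textbf{One-step bound.} Fix $r\ge1$. By admissibility, every parent of $v_r$ is either $v_{r-1}$, a root, or lies in $\overline{\Anc}(\cA_{\ell_0})$. Hence all parent states of $v_r$ are $\mathscr G_{r-1}$-measurable: roots are deterministic, nodes in $\overline{\Anc}(\cA_{\ell_0})$ belong to $\cU_{\le\ell_0}$ and are therefore $\mathscr F_{\ell_0}$-measurable, and $v_{r-1}$ is in $\mathscr G_{r-1}$ by construction. Under the forward predictive law, $f_{v_r}$ is a fresh GP module not yet used. It is not in $\cU_{\le\ell_0}$, because $v_r$ is a strict descendant of $v_0\in\cA_{\ell_0}$, and it is not among $v_1,\dots,v_{r-1}$. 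It is therefore independent of $\mathscr G_{r-1}$. Arguing as in Lemma~\ref{lem:antichain-conditional-bound}, $\Delta_{v_r}\mid\mathscr G_{r-1}$ is centred Gaussian with covariance $\Gamma_{v_r}(a,b)$. Following Remark~\ref{rem:additive-root-separation} and Proposition~\ref{prop:additive-monotone}, the additive decomposition gives
\[
  \Gamma_{v_r}(a,b)\succeq 2\varsigma_r^2\bigl(1-\bar k_r(u_a,u_b)\bigr)I_{d_{v_r}},
\]
where $u_a,u_b$ are the states of $v_{r-1}$. On $\{\|\Delta_{v_{r-1}}\|_2>\varepsilon\}$ we have $\bar k_r(u_a,u_b)\le\bar r_r(\varepsilon)$, so the covariance is at least $2\varsigma_r^2(1-\bar r_r(\varepsilon))I$. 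The projection argument of \eqref{eq:norm-small-projection-small}--\eqref{eq:single-projection-tail-lower} then gives the conditional exceedance probability $\ge\beta_r(\varepsilon)$.

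\textbf{Chaining.} Set $E_r:=\{\|\Delta_{v_r}\|_2>\varepsilon\}$, which is $\mathscr G_r$-measurable. Then
\[
  \PP(E_L\mid\mathscr F_{\ell_0})
  \ge
  \PP\Bigl(\bigcap_{r=0}^{L}E_r\,\Big|\,\mathscr F_{\ell_0}\Bigr).
\]
We peel off the last event: conditioning on $\mathscr G_{L-1}$ and applying the one-step bound yields
\[
  \EE\bigl[\mathbbm 1_{E_0\cap\dots\cap E_{L-1}}\,\PP(E_L\mid\mathscr G_{L-1})\mid\mathscr F_{\ell_0}\bigr]
  \ge
  \beta_L\,\PP(E_0\cap\dots\cap E_{L-1}\mid\mathscr F_{\ell_0}).
\]
Iterating down to $r=1$ leaves $\prod_r\beta_r\,\mathbbm 1_{E_0}$, because $E_0$ is $\mathscr F_{\ell_0}$-measurable. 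This is exactly the retaining inequality with factor $\rho_\gamma(\varepsilon)$.

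\textbf{Main obstacle.} The delicate point is the measure-theoretic justification that, under $\Pi_{\ell_0\to\ell_1}$, the module $f_{v_r}$ is independent of $\mathscr G_{r-1}$ and keeps its prior GP law. The predictive law is the filtering posterior on $\cU_{\le\ell_0}$ followed by prior propagation, and observations beyond $\ell_0$ are not assimilated. So I would make this explicit by constructing the predictive law as $\Pi_{\ell_0}\otimes$ (independent prior modules $f_v$, $v\notin\cU_{\le\ell_0}$). With that construction, independence holds by design. Admissibility is essential here: it ensures no parent of $v_r$ is a downstream node outside the route whose state would depend on other, unconditioned modules.
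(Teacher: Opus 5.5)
Your proposal is correct and follows the paper's proof essentially step for step. It uses the same filtration $\mathscr G_r=\mathscr F_{\ell_0}\vee\sigma(\bF_{v_1},\dots,\bF_{v_r})$ and the same one-step bound $\PP(E_r\mid\mathscr G_{r-1})\ge\beta_r(\varepsilon)\mathbbm{1}_{E_{r-1}}$, derived from admissibility, the additive covariance split and a unit-vector projection; your peeling of $\bigcap_r E_r$ is only a cosmetic variant of the paper's backward tower-property iteration on $E_L$ alone. Your explicit construction of the predictive law as $\Pi_{\ell_0}$ times independent prior modules below $\cA_{\ell_0}$ is a useful addition, since it justifies the independence of $f_{v_r}$ from $\mathscr G_{r-1}$, which the paper asserts without comment.
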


\begin{proof}
Start considering \(\gamma=(v_0,\dots,v_L)\). If \(L=0\), then \(v_L=v_0\), so
\[
  \PP\!\left(
    \|F_{v_L}^{(a)}-F_{v_L}^{(b)}\|_2>\varepsilon
    \middle|\mathscr F_{\ell_0}
  \right)
  =
  \mathbbm{1}\!\left\{
    \|F_{v_0}^{(a)}-F_{v_0}^{(b)}\|_2>\varepsilon
  \right\},
\]
and the claim holds with the empty product equal to one. Assume now that
\(L\ge1\). For \(r=0,\dots,L\), define
\[
  E_r
  :=
  \{\|F_{v_r}^{(a)}-F_{v_r}^{(b)}\|_2>\varepsilon\},
  \qquad
  \mathscr G_r
  :=
  \mathscr F_{\ell_0}\vee
  \sigma(\bF_{v_1},\dots,\bF_{v_r}),
\]
where \(\mathscr G_0=\mathscr F_{\ell_0}\). We first prove the one-step
inequality
\begin{equation}
  \PP(E_r\mid\mathscr G_{r-1})
  \ge
  \beta_r(\varepsilon)\mathbbm{1}_{E_{r-1}},
  \qquad r=1,\dots,L.
  \label{eq:route-one-step-retention}
\end{equation}

Consider \(r\in\{1,\dots,L\}\). Write
\[
  u_a:=F_{v_{r-1}}^{(a)},
  \qquad
  u_b:=F_{v_{r-1}}^{(b)}
\]
for the two values of the distinguished parent coordinate, and write
\(z_a,z_b\) for the remaining parent coordinates of \(v_r\) at cases
\(a,b\). By admissibility, every parent of \(v_r\) other than \(v_{r-1}\) is
either a root node or belongs to \(\cU_{\le\ell_0}\). Hence
\(z_a,z_b,u_a,u_b\) are \(\mathscr G_{r-1}\)-measurable.

The regular conditional law of
\[
  \Delta_{v_r}:=F_{v_r}^{(a)}-F_{v_r}^{(b)}
\]
given \(\mathscr G_{r-1}\) is centred Gaussian with covariance
\begin{align*}
  \Gamma_{v_r}(a,b)
  &=
  K_{v_r}((u_a,z_a),(u_a,z_a))
  +
  K_{v_r}((u_b,z_b),(u_b,z_b))
  \\
  &\quad
  -K_{v_r}((u_a,z_a),(u_b,z_b))
  -K_{v_r}((u_b,z_b),(u_a,z_a)).
\end{align*}
The additive decomposition of the kernel gives
\begin{equation*}
  \Gamma_{v_r}(a,b)
  =
  \Gamma_{v_r}^{(\mathrm{rest})}(a,b)
  +
  2\varsigma_r^2\bigl(1-\bar k_r(u_a,u_b)\bigr)I_{d_{v_r}},
\end{equation*}
where \(\Gamma_{v_r}^{(\mathrm{rest})}(a,b)\succeq0\). On the event
\(E_{r-1}\), one has \(\|u_a-u_b\|_2>\varepsilon\), and therefore
\[
  \bar k_r(u_a,u_b)\le\bar r_r(\varepsilon).
\]
Consequently, on \(E_{r-1}\),
\begin{equation*}
  \Gamma_{v_r}(a,b)
  \succeq
  2\varsigma_r^2(1-\bar r_r(\varepsilon))I_{d_{v_r}}.
\end{equation*}
Let \(h\in\R^{d_{v_r}}\) be any deterministic unit vector. On \(E_{r-1}\),
the scalar conditional distribution of \(h^\top\Delta_{v_r}\) given
\(\mathscr G_{r-1}\) is Gaussian with mean zero and variance at least
\(2\varsigma_r^2(1-\bar r_r(\varepsilon))\). Since
\[
  E_r
  =
  \{\|\Delta_{v_r}\|_2>\varepsilon\}
  \supseteq
  \{|h^\top\Delta_{v_r}|>\varepsilon\},
\]
we obtain \eqref{eq:route-one-step-retention}.

It remains to iterate the one-step inequalities. Starting from the final step
and using the tower property,
\begin{align}
  \PP(E_L\mid\mathscr G_{L-2})
  &=
  \EE\!\left[
    \PP(E_L\mid\mathscr G_{L-1})
    \middle|\mathscr G_{L-2}
  \right]
  \nonumber\\
  &\ge
  \beta_L(\varepsilon)
  \EE[\mathbbm{1}_{E_{L-1}}\mid\mathscr G_{L-2}]
  \nonumber\\
  &=
  \beta_L(\varepsilon)
  \PP(E_{L-1}\mid\mathscr G_{L-2}).
  \label{eq:route-backward-first}
\end{align}
Applying \eqref{eq:route-one-step-retention} to \(E_{L-1}\) gives
\begin{equation}
  \PP(E_{L-1}\mid\mathscr G_{L-2})
  \ge
  \beta_{L-1}(\varepsilon)\mathbbm{1}_{E_{L-2}}.
  \label{eq:route-backward-second}
\end{equation}
Combining \eqref{eq:route-backward-first} and
\eqref{eq:route-backward-second},
\[
  \PP(E_L\mid\mathscr G_{L-2})
  \ge
  \beta_L(\varepsilon)\beta_{L-1}(\varepsilon)
  \mathbbm{1}_{E_{L-2}}.
\]
Repeating this backward induction along the route yields
\begin{equation*}
  \PP(E_L\mid\mathscr F_{\ell_0})
  =
  \PP(E_L\mid\mathscr G_0)
  \ge
  \left(\prod_{r=1}^{L}\beta_r(\varepsilon)\right)
  \mathbbm{1}_{E_0},
\end{equation*}
which is the retaining-route condition with factor
\(\rho_\gamma(\varepsilon)\).
\end{proof}

\subsection{Proof of Theorem~\ref{thm:stochastic-skip-main}}
\label{app:proof-stochastic-skip-main}

\begin{proof}[Proof of Theorem~\ref{thm:stochastic-skip-main}]
For each target node \(v_j\), define
\begin{equation*}
  G_j
  :=
  \{\|F_{v_j}^{(a)}-F_{v_j}^{(b)}\|_2>\varepsilon\}.
\end{equation*}
Since \(M_{\ell_1}\) is the maximum contrast over the whole antichain
\(\cA_{\ell_1}\),
\begin{equation}
  \{M_{\ell_1}>\varepsilon\}
  \supseteq
  \bigcup_{j=1}^{s}G_j.
  \label{eq:M-contains-target-union}
\end{equation}
It is therefore enough to lower-bound the predictive probability of
\(\bigcup_{j=1}^{s}G_j\).

We first work conditionally on \(\mathscr F_{\ell_0}\). By the
\(\varepsilon\)-retaining property of \(\gamma_j\),
\begin{equation}
  \PP(G_j\mid\mathscr F_{\ell_0})
  \ge
  \rho_j\mathbbm{1}_{E_j},
  \qquad j=1,\dots,s.
  \label{eq:retaining-lower-Gj}
\end{equation}

We next verify conditional independence of the target events under the same
conditioning. For a route \(\gamma_j\), all non-route parents of interior
nodes are either deterministic roots or belong to \(\cU_{\le\ell_0}\), hence
are \(\mathscr F_{\ell_0}\)-measurable. Thus, under the predictive law, the
variables generated along the interior of \(\gamma_j\) are measurable with
respect to \(\mathscr F_{\ell_0}\) together with the GP modules attached to
\(\operatorname{int}(\gamma_j)\). The route interiors are pairwise disjoint,
so these collections of GP modules are disjoint. Since distinct GP modules are
mutually independent under the DAG-DGP prior, the route-generated random
elements are conditionally independent given \(\mathscr F_{\ell_0}\). The
events \(G_1,\dots,G_s\), being measurable functions of these route-generated
random elements and of \(\mathscr F_{\ell_0}\), are therefore conditionally
independent given \(\mathscr F_{\ell_0}\). Hence
\begin{align}
  \PP\!\left(
    \bigcap_{j=1}^{s}G_j^c
    \middle|\mathscr F_{\ell_0}
  \right)
  &=
  \prod_{j=1}^{s}
  \bigl(1-\PP(G_j\mid\mathscr F_{\ell_0})\bigr)
  \nonumber\\
  &\le
  \prod_{j=1}^{s}
  \bigl(1-\rho_j\mathbbm{1}_{E_j}\bigr),
  \label{eq:target-complement-bound-given-F}
\end{align}
where the inequality uses \eqref{eq:retaining-lower-Gj} and
\(\rho_j\mathbbm{1}_{E_j}\in[0,1]\).

We now pass from conditioning on \(\mathscr F_{\ell_0}\) to conditioning on
\(\mathscr H_{\ell_0}\). By the tower property, 
\begin{align}
  \PP\!\left(
    \bigcap_{j=1}^{s}G_j^c
    \middle|\mathscr H_{\ell_0}
  \right)
  &\quad=
  \EE\!\left[
    \PP\!\left(
      \bigcap_{j=1}^{s}G_j^c
      \middle|\mathscr F_{\ell_0}
    \right)
    \middle|\mathscr H_{\ell_0}
  \right]
  \nonumber\\
  &\quad\le
  \EE_{\Pi_{\ell_0}}\!\left[
    \prod_{j=1}^{s}(1-\rho_j\mathbbm{1}_{E_j})
    \middle|\mathscr H_{\ell_0}
  \right].
  \label{eq:target-complement-tower-to-H}
\end{align}
By Lemma~\ref{lem:filtered-factorization}, the source states
\(\bF_{u_1},\dots,\bF_{u_s}\) are conditionally independent given
\(\mathscr H_{\ell_0}\) under \(\Pi_{\ell_0}\). Since each \(E_j\) depends only
on \(\bF_{u_j}\), the source events \(E_1,\dots,E_s\) are conditionally
independent given \(\mathscr H_{\ell_0}\). Thus
\begin{align}
  \EE_{\Pi_{\ell_0}}\!\left[
    \prod_{j=1}^{s}(1-\rho_j\mathbbm{1}_{E_j})
    \middle|\mathscr H_{\ell_0}
  \right]
  &\quad=
  \prod_{j=1}^{s}
  \EE_{\Pi_{\ell_0}}\!\left[
    1-\rho_j\mathbbm{1}_{E_j}
    \middle|\mathscr H_{\ell_0}
  \right]
  \nonumber\\
  &\quad=
  \prod_{j=1}^{s}
  \left(
    1-\rho_j
    \Pi_{\ell_0}(E_j\mid\mathscr H_{\ell_0})
  \right)
  \nonumber\\
  &\quad=
  \prod_{j=1}^{s}(1-\rho_j q_j),
  \label{eq:source-product-qj}
\end{align}
because the conditional law of \(\bF_{u_j}\) given \(\mathscr H_{\ell_0}\) is
\(Q_{u_j}^{\ell_0}(\cdot\mid\mathscr H_{\ell_0})\), and hence
\(\Pi_{\ell_0}(E_j\mid\mathscr H_{\ell_0})=q_j\). Combining
\eqref{eq:target-complement-tower-to-H} and \eqref{eq:source-product-qj}
gives
\[
  \PP\!\left(
    \bigcap_{j=1}^{s}G_j^c
    \middle|\mathscr H_{\ell_0}
  \right)
  \le
  \prod_{j=1}^{s}(1-\rho_j q_j).
\]
Taking complements gives
\[
  \PP\!\left(
    \bigcup_{j=1}^{s}G_j
    \middle|\mathscr H_{\ell_0}
  \right)
  \ge
  1-
  \prod_{j=1}^{s}(1-\rho_j q_j).
\]
Finally, taking expectation under \(\Pi_{\ell_0}\) and using
\eqref{eq:M-contains-target-union} yields
\[
  \Pi_{\ell_0\to\ell_1}(M_{\ell_1}>\varepsilon)
  \ge
  \EE_{\Pi_{\ell_0}}\!\left[
    1-
    \prod_{j=1}^{s}(1-\rho_j q_j)
  \right].
\]
\end{proof}

\subsection{Source strengths for common conditional distributions}
\label{app:source-strengths}

The theorem is agnostic about how the source probabilities \(q_j\) are
obtained. We now give source-strength calculations for common nodewise
conditional distributions. The main result is a bounded-curvature source
bound for one-parameter exponential-family conditional distributions in
canonical form (e.g., \citep[Sec.~3.2]{wainwright2008}).

For a scalar source node \(u\in\cA_{\ell_0}\) and two cases \(a\neq b\), we
use the local notation for simplicity
\[
  F_a:=F_u^{(a)},
  \qquad
  F_b:=F_u^{(b)},
  \qquad
  Y_a:=Y_u^{(a)},
  \qquad
  Y_b:=Y_u^{(b)}.
\]
When the parent inputs of \(u\) at cases \(a\) and \(b\) are denoted by
\(x_a\) and \(x_b\), and \(K_u(x,x)=\tau_u^2\) for all parent inputs \(x\), we
write
\[
  \Gamma_u(a,b):=2\bigl(\tau_u^2-K_u(x_a,x_b)\bigr),
  \qquad
  \Omega_u(a,b):=2\bigl(\tau_u^2+K_u(x_a,x_b)\bigr).
\]
For a one-parameter exponential-family conditional distribution in canonical
form,
\[
  p(y\mid\theta)=h(y)\exp\{T(y)\theta-A(\theta)\},
\]
define
\(
  d_y:=T(Y_a)-T(Y_b).
\)
For \(\Gamma>0\) and curvature constants \(0\le m_{\!A}\le L_{\!A}<\infty\),
set
\[
  \alpha_L(\Gamma):=\Gamma^{-1}+L_{\!A}/2,
  \qquad
  \alpha_m(\Gamma):=\Gamma^{-1}+m_{\!A}/2.
\]

\begin{proposition}[Bounded-curvature source bound]
\label{prop:ef-curvature-refresh}
Consider a scalar observed node \(u\in\cA_{\ell_0}\) with
\(\cO_u=\{a,b\}\times\{1\}\). Assume the scalar-source notation above, and
suppose that \(\Gamma_u(a,b)>0\) and \(\Omega_u(a,b)>0\). If the nodewise
conditional distribution belongs to a one-parameter exponential family in
canonical form and satisfies
\[
  0\le m_{\!A}\le A''(\theta)\le L_{\!A}<\infty,
  \qquad \theta\in\R,
\]
then, for every \(\varepsilon>0\),
\[
  Q_u^{\ell_0}
  \left(
    |F_u^{(a)}-F_u^{(b)}|>\varepsilon
    \middle|
    \mathscr H_{\ell_0}
  \right)
  \ge
  q_{\mathrm{EF}}(\varepsilon,d_y,\Gamma_u(a,b),m_{\!A},L_{\!A}),
\]
where, for any \(\Gamma>0\),
\begin{align}
  q_{\mathrm{EF}}(\varepsilon,d_y,\Gamma,m_{\!A},L_{\!A})
  &:=
  \sqrt{\frac{\alpha_m(\Gamma)}{\alpha_L(\Gamma)}}
  \exp\!\left\{
    \frac{d_y^2}{8\alpha_L(\Gamma)}
    -
    \frac{d_y^2}{8\alpha_m(\Gamma)}
  \right\}
  \nonumber\\
  &\quad\times
  \Bigg[
    1-
    \Phi_{\mathrm N}\!\left(
      \sqrt{\alpha_L(\Gamma)}
      \left(
        \varepsilon-\frac{|d_y|}{2\alpha_L(\Gamma)}
      \right)
    \right)
    \nonumber\\
  &\qquad\qquad
    +
    \Phi_{\mathrm N}\!\left(
      -\sqrt{\alpha_L(\Gamma)}
      \left(
        \varepsilon+\frac{|d_y|}{2\alpha_L(\Gamma)}
      \right)
    \right)
  \Bigg].
  \label{eq:q-EF-def}
\end{align}
\end{proposition}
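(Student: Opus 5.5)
The plan is to work conditionally on \(\mathscr H_{\ell_0}\), where the parent inputs \(x_a,x_b\) of \(u\) are fixed. Under the local prior kernel \(P_u^{\ell_0}\), the pair \((F_a,F_b)\) is centred Gaussian with covariance \(\begin{pmatrix}\tau_u^2 & k\\ k&\tau_u^2\end{pmatrix}\), where \(k=K_u(x_a,x_b)\). I would rotate to the difference \(D:=F_a-F_b\) and the sum \(S:=F_a+F_b\). These are independent, with \(D\sim\cN(0,\Gamma)\) and \(S\sim\cN(0,\Omega)\), where \(\Gamma=\Gamma_u(a,b)\) and \(\Omega=\Omega_u(a,b)\). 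The assumptions \(\Gamma>0\) and \(\Omega>0\) guarantee that this prior has a Lebesgue density on \(\R^2\), so \(Q_u^{\ell_0}\) has a density proportional to the exponential-family likelihood times this Gaussian density.

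Next I would rewrite the likelihood in the \((D,S)\) coordinates using \(F_a=(S+D)/2\) and \(F_b=(S-D)/2\). The linear part becomes
\[
T(Y_a)F_a+T(Y_b)F_b=\tfrac12\bigl(T(Y_a)+T(Y_b)\bigr)S+\tfrac12 d_yD.
\]
The log-partition part can be written as
\[
A\bigl(\tfrac{S+D}{2}\bigr)+A\bigl(\tfrac{S-D}{2}\bigr)=2A\bigl(\tfrac S2\bigr)+\phi(D,S).
\]
By a second-order Taylor expansion with integral remainder, together with \(m_{\!A}\le A''\le L_{\!A}\), the remainder satisfies
\[
m_{\!A}D^2/4\le\phi(D,S)\le L_{\!A}D^2/4 .
\]
Hence the unnormalised posterior density factorises as \(c(S)\,w(D,S)\). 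Here \(c(S)\) collects the Gaussian density of \(S\), the term \(\exp\{\tfrac12(T(Y_a)+T(Y_b))S-2A(S/2)\}\), and the factor \(h(Y_a)h(Y_b)\); it does not depend on \(D\). The \(D\)-dependent factor is sandwiched as
\[
e^{d_yD/2-\alpha_L D^2/2}\le w(D,S)\le e^{d_yD/2-\alpha_m D^2/2},
\]
since \(D^2/(2\Gamma)+L_{\!A}D^2/4=\alpha_LD^2/2\), and likewise for \(m_{\!A}\).

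Then I would bound the posterior probability of \(\{|D|>\varepsilon\}\). I lower-bound the numerator \(\iint_{|D|>\varepsilon}c\,w\) using the left envelope and upper-bound the denominator \(\iint c\,w\) using the right envelope. Both envelopes are free of \(S\), so by Tonelli the common factor \(\int c(S)\,dS\) cancels; it is finite and positive by the standing well-posedness condition. What remains are one-dimensional Gaussian integrals. The denominator is \(\sqrt{2\pi/\alpha_m}\,e^{d_y^2/(8\alpha_m)}\). The numerator equals \(\sqrt{2\pi/\alpha_L}\,e^{d_y^2/(8\alpha_L)}\) times the probability that an \(\cN(d_y/(2\alpha_L),1/\alpha_L)\) variable lies outside \([-\varepsilon,\varepsilon]\). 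By the symmetry \(D\mapsto -D\), I may replace \(d_y\) by \(|d_y|\). This probability is exactly the bracket in \eqref{eq:q-EF-def}, and taking the ratio of the two integrals yields the prefactor \(\sqrt{\alpha_m/\alpha_L}\,\exp\{d_y^2/(8\alpha_L)-d_y^2/(8\alpha_m)\}\).

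I expect the main obstacle to be the decoupling step: showing that the \(S\)-dependence factors out identically in the numerator and the denominator, and that the Taylor remainder bound is uniform in \(S\). This is exactly where the global curvature bounds on \(A''\) over all of \(\R\) are needed, rather than local ones. The remaining measure-theoretic care concerns finiteness of \(\int c(S)\,dS\), which I would take from the standing well-posedness assumption.
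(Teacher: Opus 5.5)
Your proposal is correct and follows essentially the same route as the paper. Both rotate to sum and difference coordinates, use the symmetric second-difference bound \(m_{\!A}c^2\le A(x+c)+A(x-c)-2A(x)\le L_{\!A}c^2\) with \(c=D/2\), and take the ratio of the two Gaussian integrals after completing the square. The only cosmetic difference is where the cancellation happens: you sandwich the integrand pointwise in \(S\) and cancel \(\int c(S)\,dS\), whereas the paper first integrates out \(s\) into \(R_y(\delta)\) and shows \(e^{-L_{\!A}\delta^2/4}R_y(0)\le R_y(\delta)\le e^{-m_{\!A}\delta^2/4}R_y(0)\). Your \(\int c(S)\,dS\) is exactly the paper's \(R_y(0)\).
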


\begin{proof}
Set
\[
  \Delta:=F_a-F_b,
  \qquad
  S:=F_a+F_b,
  \qquad
  r_y:=T(Y_a)+T(Y_b).
\]
For readability, also write
\[
  \Gamma:=\Gamma_u(a,b),
  \qquad
  \Omega:=\Omega_u(a,b),
  \qquad
  \bar\alpha_L:=\alpha_L(\Gamma),
  \qquad
  \bar\alpha_m:=\alpha_m(\Gamma).
\]
Under the two-point GP law conditional on \(\mathscr H_{\ell_0}\), \(\Delta\)
and \(S\) are centred Gaussian random variables with variances \(\Gamma\) and
\(\Omega\), respectively. The constant-diagonal assumption gives
\[
  \Cov(\Delta,S\mid\mathscr H_{\ell_0})
  =
  \Var(F_a\mid\mathscr H_{\ell_0})
  -
  \Var(F_b\mid\mathscr H_{\ell_0})
  =0.
\]
Since the pair is jointly Gaussian, \(\Delta\) and \(S\) are conditionally
independent given \(\mathscr H_{\ell_0}\).

The change of variables
\[
  F_a=\frac{S+\Delta}{2},
  \qquad
  F_b=\frac{S-\Delta}{2}
\]
has constant Jacobian. Ignoring the factor \(h(Y_a)h(Y_b)\), which is
constant in \((S,\Delta)\), the joint posterior density of \((S,\Delta)\) is
proportional to
\begin{align*}
  \exp\!\left\{
    -\frac{\delta^2}{2\Gamma}
    -\frac{s^2}{2\Omega}
    +\frac{d_y}{2}\delta
    +\frac{r_y}{2}s
    -A\!\left(\frac{s+\delta}{2}\right)
    -A\!\left(\frac{s-\delta}{2}\right)
  \right\}.
\end{align*}
Thus, after integrating out \(s\), the marginal posterior density of \(\Delta\)
is proportional to
\begin{equation*}
  \exp\!\left\{
    -\frac{\delta^2}{2\Gamma}
    +
    \frac{d_y}{2}\delta
  \right\}
  R_y(\delta),
\end{equation*}
where
\begin{equation*}
  R_y(\delta)
  :=
  \int_{\R}
  \exp\!\left\{
    -\frac{s^2}{2\Omega}
    +\frac{r_y}{2}s
    -A\!\left(\frac{s+\delta}{2}\right)
    -A\!\left(\frac{s-\delta}{2}\right)
  \right\}\,ds.
\end{equation*}

The bounded-curvature assumption controls the symmetric second difference of
\(A\). By Taylor's theorem with Lagrange remainder
\citep[Theorem~5.15]{rudin1976principles}, for all \(x,c\in\R\),
\begin{equation*}
  m_{\!A}c^2
  \le
  A(x+c)+A(x-c)-2A(x)
  \le
  L_{\!A}c^2.
\end{equation*}
Taking \(x=s/2\) and \(c=\delta/2\) gives
\begin{equation}
  \exp\!\left(-\frac{L_{\!A}\delta^2}{4}\right)R_y(0)
  \le
  R_y(\delta)
  \le
  \exp\!\left(-\frac{m_{\!A}\delta^2}{4}\right)R_y(0).
  \label{eq:Ry-upper-lower}
\end{equation}
Let \(B_\varepsilon:=\{|\Delta|>\varepsilon\}\). The lower bound in
\eqref{eq:Ry-upper-lower} gives the lower bound
\begin{equation}
  R_y(0)
  \int_{|\delta|>\varepsilon}
  \exp\!\left\{
    -\frac{\bar\alpha_L}{2}\delta^2
    +
    \frac{d_y}{2}\delta
  \right\}
  d\delta
  \label{eq:ef-numerator-lower}
\end{equation}
for the unnormalised posterior mass of \(B_\varepsilon\). The upper bound in
\eqref{eq:Ry-upper-lower} gives the upper bound
\begin{equation}
  R_y(0)
  \int_{\R}
  \exp\!\left\{
    -\frac{\bar\alpha_m}{2}\delta^2
    +
    \frac{d_y}{2}\delta
  \right\}
  d\delta
  \label{eq:ef-denominator-upper}
\end{equation}
for the full normalising constant. Taking the ratio of
\eqref{eq:ef-numerator-lower} and \eqref{eq:ef-denominator-upper}, and
cancelling \(R_y(0)\), yields
\begin{equation}
  Q_u^{\ell_0}(B_\varepsilon\mid\mathscr H_{\ell_0})
  \ge
  \frac{
  \int_{|\delta|>\varepsilon}
  \exp\!\left\{-\frac{\bar\alpha_L}{2}\delta^2+\frac{d_y}{2}\delta\right\}
  d\delta
  }{
  \int_{\R}
  \exp\!\left\{-\frac{\bar\alpha_m}{2}\delta^2+\frac{d_y}{2}\delta\right\}
  d\delta
  }.
  \label{eq:ef-ratio-before-square}
\end{equation}
Completing the square,
\begin{equation}
  -\frac{\alpha}{2}\delta^2+\frac{d_y}{2}\delta
  =
  -\frac{\alpha}{2}
  \left(\delta-\frac{d_y}{2\alpha}\right)^2
  +
  \frac{d_y^2}{8\alpha}.
  \label{eq:ef-complete-square}
\end{equation}
Using \eqref{eq:ef-complete-square} in the denominator of
\eqref{eq:ef-ratio-before-square} gives
\begin{equation}
  \int_{\R}
  \exp\!\left\{-\frac{\bar\alpha_m}{2}\delta^2+\frac{d_y}{2}\delta\right\}
  d\delta
  =
  \exp\!\left\{\frac{d_y^2}{8\bar\alpha_m}\right\}
  \sqrt{\frac{2\pi}{\bar\alpha_m}}.
  \label{eq:ef-denominator-evaluated}
\end{equation}
Using \eqref{eq:ef-complete-square} in the numerator gives
\begin{align}
  &\int_{|\delta|>\varepsilon}
  \exp\!\left\{-\frac{\bar\alpha_L}{2}\delta^2+\frac{d_y}{2}\delta\right\}
  d\delta
  \nonumber\\
  &\quad=
  \exp\!\left\{\frac{d_y^2}{8\bar\alpha_L}\right\}
  \sqrt{\frac{2\pi}{\bar\alpha_L}}
  \PP(|Z_{\mathrm{EF}}|>\varepsilon),
  \label{eq:ef-numerator-evaluated}
\end{align}
where
\[
  Z_{\mathrm{EF}}
  \sim
  \cN\!\left(
    \frac{d_y}{2\bar\alpha_L},
    \frac{1}{\bar\alpha_L}
  \right).
\]
Substituting \eqref{eq:ef-denominator-evaluated} and
\eqref{eq:ef-numerator-evaluated} into \eqref{eq:ef-ratio-before-square}
gives
\begin{equation}
  Q_u^{\ell_0}(B_\varepsilon\mid\mathscr H_{\ell_0})
  \ge
  \sqrt{\frac{\bar\alpha_m}{\bar\alpha_L}}
  \exp\!\left\{
    \frac{d_y^2}{8\bar\alpha_L}
    -
    \frac{d_y^2}{8\bar\alpha_m}
  \right\}
  \PP(|Z_{\mathrm{EF}}|>\varepsilon).
  \label{eq:ef-before-tail-explicit}
\end{equation}
Writing the two-sided Gaussian tail in \eqref{eq:ef-before-tail-explicit}
explicitly, and using \(|d_y|\) because the event is symmetric, yields
\eqref{eq:q-EF-def}.
\end{proof}

\begin{proof}[Proof of Corollary~\ref{cor:gaussian-refresh-main}]
For Gaussian observations,
\[
  Y=F+\xi,
  \qquad
  \xi\sim\cN(0,\sigma_u^2),
\]
the conditional density has canonical form with
\[
  T(y)=y/\sigma_u^2,
  \qquad
  A(\theta)=\theta^2/(2\sigma_u^2).
\]
Hence \(m_{\!A}=L_{\!A}=\sigma_u^{-2}\), so the upper and lower
bounded-curvature inequalities in Proposition~\ref{prop:ef-curvature-refresh}
are equalities.

Equivalently, and more directly, the prior contrast
\[
  \Delta_u:=F_u^{(a)}-F_u^{(b)}
\]
has conditional distribution
\[
  \Delta_u\mid\mathscr H_{\ell_0}
  \sim
  \cN(0,\Gamma_u(a,b)).
\]
The observation difference satisfies
\[
  Y_u^{(a)}-Y_u^{(b)}
  =
  \Delta_u+\eta_u,
  \qquad
  \eta_u\sim\cN(0,2\sigma_u^2),
\]
with \(\eta_u\) independent of \(\Delta_u\). On \(\{\Gamma_u(a,b)>0\}\), the
one-dimensional Gaussian conditioning formula gives
\[
  \Delta_u
  \mid Y_u^{(a)},Y_u^{(b)},\mathscr H_{\ell_0}
  \sim
  \cN(m_u,v_u),
\]
where
\[
  m_u
  =
  \frac{\Gamma_u(a,b)}{\Gamma_u(a,b)+2\sigma_u^2}
  \bigl(Y_u^{(a)}-Y_u^{(b)}\bigr),
  \qquad
  v_u
  =
  \frac{2\sigma_u^2\Gamma_u(a,b)}
       {\Gamma_u(a,b)+2\sigma_u^2}.
\]
If \(\Gamma_u(a,b)=0\), the conditional prior of the contrast is degenerate
at zero, and so is the filtered contrast.

Finally, if \(Z\sim\cN(m_u,v_u)\), then
\[
  \{|Z|>\varepsilon\}
  \supseteq
  \{\operatorname{sign}(m_u)Z>\varepsilon\},
\]
with either sign used when \(m_u=0\). This gives
\[
  \PP(|Z|>\varepsilon)
  \ge
  1-
  \Phi_{\mathrm N}\!\left(
    \frac{\varepsilon-|m_u|}{\sqrt{v_u}}
  \right),
\]
which is the claimed lower bound.
\end{proof}

\paragraph{Binomial refresh.}
We say that a scalar observed source node has a binomial conditional
distribution with \(N\) trials and canonical parameter \(\theta\) if
\[
  p(y\mid\theta)
  =
  \binom{N}{y}\exp\{y\theta-N\log(1+e^\theta)\},
  \qquad y\in\{0,\dots,N\}.
\]
The Bernoulli conditional distribution corresponds to \(N=1\). This is the
canonical form of the binomial one-parameter exponential family; the Bernoulli
case is one of the standard one-parameter exponential-family examples(
e.g., \citet[Table~3.1]{wainwright2008}).

\begin{corollary}[Bernoulli and binomial refresh]
\label{cor:binomial-logistic-refresh}
In the setting of Proposition~\ref{prop:ef-curvature-refresh}, suppose that
the scalar observed source node has a binomial conditional distribution with
\(N\) trials and canonical parameter \(\theta\). Then
\[
  Q_u^{\ell_0}
  \left(
    |F_u^{(a)}-F_u^{(b)}|>\varepsilon
    \middle|
    \mathscr H_{\ell_0}
  \right)
  \ge
  q_{\mathrm{EF}}
  \left(
    \varepsilon,
    Y_a-Y_b,
    \Gamma_u(a,b),
    0,
    \frac{N}{4}
  \right).
\]
\end{corollary}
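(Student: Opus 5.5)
The plan is to apply Proposition~\ref{prop:ef-curvature-refresh} directly. All that remains is to put the binomial conditional distribution in canonical form, read off the sufficient statistic, and bound the curvature of the log-partition function. The binomial density
\[
  p(y\mid\theta)=\binom{N}{y}\exp\{y\theta-N\log(1+e^\theta)\}
\]
is already canonical, with
\[
  h(y)=\binom{N}{y},\qquad T(y)=y,\qquad A(\theta)=N\log(1+e^\theta).
\]
Hence \(d_y=T(Y_a)-T(Y_b)=Y_a-Y_b\). This gives the second argument of \(q_{\mathrm{EF}}\) in the statement.

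The second step is the curvature bound. Differentiating gives
\[
  A'(\theta)=N\sigma(\theta),\qquad A''(\theta)=N\sigma(\theta)\bigl(1-\sigma(\theta)\bigr),
\]
where \(\sigma(\theta)=e^\theta/(1+e^\theta)\in(0,1)\). Since \(p(1-p)\le 1/4\) for \(p\in[0,1]\), with equality at \(p=1/2\), and \(p(1-p)\) is strictly positive but tends to zero as \(\theta\to\pm\infty\), we get \(0\le A''(\theta)\le N/4\) for all \(\theta\in\R\). Because the infimum over \(\R\) is \(0\), the best uniform lower constant is \(m_{\!A}=0\); the upper constant is \(L_{\!A}=N/4\).

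The third step is to check the remaining hypotheses of Proposition~\ref{prop:ef-curvature-refresh}: a scalar source, \(\cO_u=\{a,b\}\times\{1\}\), constant diagonal \(K_u(x,x)=\tau_u^2\), and \(\Gamma_u(a,b)>0\), \(\Omega_u(a,b)>0\). These are inherited from the phrase ``in the setting of'' the proposition. The standing well-posedness condition also holds, because the binomial likelihood is bounded by one, so the normalising constant is finite and strictly positive.

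Substituting \(m_{\!A}=0\), \(L_{\!A}=N/4\) and \(d_y=Y_a-Y_b\) into the proposition then yields the displayed bound. The Bernoulli case is \(N=1\). The only mildly delicate point is that \(m_{\!A}=0\) is permitted: it requires \(\alpha_m(\Gamma)=\Gamma^{-1}>0\), which holds because \(\Gamma>0\). With that, the Gaussian integral in the denominator of the proposition's proof remains finite. I expect no real obstacle beyond this check.
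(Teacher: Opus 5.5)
Your proposal is correct and is essentially the paper's proof: you read off $T(y)=y$ and $A(\theta)=N\log(1+e^\theta)$, bound $A''(\theta)=N\sigma(\theta)(1-\sigma(\theta))\in[0,N/4]$, and invoke Proposition~\ref{prop:ef-curvature-refresh} with $m_{\!A}=0$, $L_{\!A}=N/4$ and $d_y=Y_a-Y_b$. Your extra checks, that $\alpha_m(\Gamma)=\Gamma^{-1}>0$ keeps the Gaussian integral finite and that the binomial likelihood makes the problem well posed, are harmless additions that the paper leaves implicit.
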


\begin{proof}
Here \(T(y)=y\) and \(A(\theta)=N\log(1+e^\theta)\). Therefore
\[
  A''(\theta)
  =
  N\frac{e^\theta}{(1+e^\theta)^2}.
\]
The function \(e^\theta/(1+e^\theta)^2\) is nonnegative and bounded above by
\(1/4\), with maximum at \(\theta=0\). Hence
\[
  0\le A''(\theta)\le\frac{N}{4},
  \qquad \theta\in\R.
\]
The result follows from Proposition~\ref{prop:ef-curvature-refresh}.
\end{proof}

\newpage
\usetikzlibrary{shapes.geometric}
\section{Variational inference}

\subsection{ELBO derivation}
\label{app:elbo-derivation}
The ELBO derivation follows the Doubly Stochastic VI line of research \citep{salimbeni2017, ustyuzhaninov2020, lindinger2020}. Here we recall that, since the GP
modules are independent a priori, the augmented joint distribution factorises
as
\begin{equation}
p(\cD,\bF,\bU)
=
\prod_{w\in\cU}
p_0(\bU_w)\,
p_0\!\left(
\bF_w
\mid
\{\bF_p\}_{p\in\PA(w)},\bU_w
\right)
p_w(\bY_w\mid\bF_w,\cO_w),
\label{eq:app-augmented-joint}
\end{equation}
where, as in the main text, \(p_w(\bY_w\mid\bF_w,\cO_w)=1\) whenever
\(\cO_w=\emptyset\). The variational family is
\begin{equation}
q(\bF,\bU)
=
q(\bU)
\prod_{w\in\cU}
p_0\!\left(
\bF_w
\mid
\{\bF_p\}_{p\in\PA(w)},\bU_w
\right),
\label{eq:app-variational-family}
\end{equation}
where we have not yet imposed any resitrctions upon $q(\bU)$.
Starting from the marginal likelihood, Jensen's inequality gives
\begin{align}
\log p(\cD)
&=
\log
\int
p(\cD,\bF,\bU)
\,d\bF\,d\bU
\nonumber\\
&=
\log
\int
q(\bF,\bU)
\frac{
p(\cD,\bF,\bU)
}{
q(\bF,\bU)
}
\,d\bF\,d\bU
\nonumber\\
&\ge
\mathbb E_{q(\bF,\bU)}
\left[
\log
\frac{
p(\cD,\bF,\bU)
}{
q(\bF,\bU)
}
\right]
=: \mathcal L(q).
\label{eq:app-general-elbo}
\end{align}
Substituting Eqs.~\eqref{eq:app-augmented-joint} and
\eqref{eq:app-variational-family} into Eq.~\eqref{eq:app-general-elbo}, we
obtain
\begin{align}
\mathcal L(q)
&=
\mathbb E_{q(\bF,\bU)}
\left[
\log
\frac{
\prod_{w\in\cU}
p_0(\bU_w)\,
p_0\!\left(
\bF_w
\mid
\{\bF_p\}_{p\in\PA(w)},\bU_w
\right)
p_w(\bY_w\mid\bF_w,\cO_w)
}{
q(\bU)
\prod_{w\in\cU}
p_0\!\left(
\bF_w
\mid
\{\bF_p\}_{p\in\PA(w)},\bU_w
\right)
}
\right]
\nonumber\\
&=
\mathbb E_{q(\bF,\bU)}
\left[
\sum_{w\in\cU}
\log p_w(\bY_w\mid\bF_w,\cO_w)
+
\sum_{w\in\cU}
\log p_0(\bU_w)
-
\log q(\bU)
\right]
\nonumber\\
&=
\sum_{w\in\cU}
\mathbb E_{q(\bF_w)}
\left[
\log p_w(\bY_w\mid\bF_w,\cO_w)
\right]
-
\operatorname{KL}
\left(
q(\bU)
\,\middle\|\,
\prod_{w\in\cU}p_0(\bU_w)
\right)
\nonumber\\
&=
\sum_{w:\,\cO_w\neq\emptyset}
\mathbb E_{q(\bF_w)}
\left[
\log p_w(\bY_w\mid\bF_w,\cO_w)
\right]
-
\operatorname{KL}
\left(
q(\bU)
\,\middle\|\,
\prod_{w\in\cU}p_0(\bU_w)
\right).
\label{eq:app-elbo-general-q}
\end{align}
The last equality removes the nodes without observations, since their
observational distribution factors are identically one.

Recall that for DAG-SVI we can write our distribution in Eq. \eqref{eq:clique-factorisation} equivalently as:
\begin{equation}
q(\bU)=q_\cH(\bU)
=
\mathcal N(\bm m,\bm\Lambda^{-1}),
\qquad
\bm\Lambda_{vw}=0
\quad\text{if}\quad
\{v,w\}\notin \cH.
\label{eq:app-structured-q}
\end{equation}
We denote by \(q_\cH(\bF_w)\) the marginal distribution induced by propagating
\(q_\cH(\bU)\) through the GP conditionals in the DAG:
\begin{equation}
q_\cH(\bF)
=
\int
q_\cH(\bU)
\prod_{w\in\cU}
p_0\!\left(
\bF_w
\mid
\{\bF_p\}_{p\in\PA(w)},\bU_w
\right)
d\bU .
\label{eq:app-induced-latent-law}
\end{equation}
This induced distribution is not available in closed form, so the expectation terms in
Eq.~\eqref{eq:app-elbo-general-q} are estimated by Monte Carlo, drawing
samples of \(\bF_w\) by following the topological order of the DAG: starting
from the roots and propagating samples through each non-root node's GP
conditional given the (already sampled) parent values.

The mean-field DAG-VI objective is obtained by restricting the inducing posterior to factorise across nodes,
\begin{equation}
q(\bU)
=
\prod_{w\in\cU}q_w(\bU_w),
\label{eq:app-mean-field-q}
\end{equation}
which yields
\begin{equation}
\mathcal L_{\mathrm{VI}}
=
\sum_{w:\,\cO_w\neq\emptyset}
\mathbb E_{q_{\mathrm{VI}}(\bF_w)}
\left[
\log p_w(\bY_w\mid\bF_w,\cO_w)
\right]
-
\sum_{w\in\cU}
\operatorname{KL}
\left(
q_w(\bU_w)
\,\middle\|\,
p_0(\bU_w)
\right),
\label{eq:app-mf-elbo}
\end{equation}
where \(q_{\mathrm{VI}}(\bF_w)\) is induced by
\(\prod_{v\in\cU}q_v(\bU_v)\) through the same GP conditionals.

Finally, since each observational distribution \(p_w(\bY_w\mid\bF_w,\cO_w)\)
factorises over cases, the term in
Eq.~\eqref{eq:app-elbo-general-q} decomposes over observations for any choice of
\(q(\bU)\) (whether the structured \(q_\cH\), the mean-field
\(q_{\mathrm{VI}}\), or any other family of the form
Eq.~\eqref{eq:app-variational-family}) as
\begin{equation}
\sum_{w:\,\cO_w\neq\emptyset}
\sum_{i:\,\cO_w^{(i)}\neq\emptyset}
\mathbb E_{q(F_w^{(i)})}
\left[
\log
p_w\!\left(
\bY_w^{(i)}
\mid
F_w^{(i)},\cO_w^{(i)}
\right)
\right],
\qquad
\cO_w^{(i)}:=\{j:(i,j)\in\cO_w\}.
\label{eq:app-casewise-likelihood}
\end{equation}
Two practical consequences follow. First, evaluating the ELBO requires only
the per-case marginals \(q(F_w^{(i)})\); correlations between latent values
at different cases \(i\neq i'\) never enter, so they need not be tracked
during ancestral sampling. Second, the outer sum over \(i\) admits unbiased
mini-batch estimation, recovering the standard doubly stochastic scheme of
\cite{salimbeni2017} at the case level while the structured \(q(\bU)\)
preserves cross-node coupling at the inducing level.

\subsection{Marginal ancestral sampling}
\label{app:marginal-ancestral-sampling}

The ELBO in Eq.~\eqref{eq:app-elbo-general-q} requires expectations under the
latent law induced by the variational family,
\[
q(\bF)
=
\int
q(\bU)
\prod_{w\in\cU}
p_0\!\left(
\bF_w
\mid
\{\bF_p\}_{p\in\PA(w)},\bU_w
\right)
d\bU .
\]
For chain DGPs, structured Gaussian posteriors over inducing outputs can be
marginalised recursively while retaining dependencies between latent
processes \citep{lindinger2020}. We use the same Gaussian-conditioning
principle along a topological ordering of the DAG.

Fix a topological ordering of the non-root nodes \(\cU\). We write \(v<w\)
whenever \(v\) appears before \(w\), and define
\[
\bF_{<w}:=\{\bF_v:v<w\},
\qquad
\bU_{<w}:=\{\bU_v:v<w\}.
\]
When these collections appear in matrix expressions, they are understood as
the corresponding vectorised concatenations in the chosen topological order.
We consider the global Gaussian inducing posterior 
\(q(\bU) = \mathcal N(\bU;\bm m, \bm \Sigma)\), and for any subset 
\(S\subseteq\cU\) we write \(\bm m_S\) and \(\bm\Sigma_{S,S}\) for the 
sub-vector and sub-block of \(\bm m\) and \(\bm\Sigma\) indexed by the 
inducing entries in \(\{\bU_v : v\in S\}\); cross-blocks 
\(\bm\Sigma_{S,T}\) are defined analogously. In particular, 
\(\bm m_w\) is the marginal mean of \(\bU_w\), \(\bm\Sigma_{ww}\) its 
marginal covariance, and \(\bm\Sigma_{w,<w}\) the cross-covariance 
between \(\bU_w\) and \(\bU_{<w}\).

Throughout, a hat (e.g.\ \(\widehat{\bF}_{<w}\)) denotes a realised value 
of the corresponding random variable.  We denote by \(A_v\) and \(R_v\) the
finite-dimensional GP conditional mean map and residual covariance at node
\(v\), evaluated at these realised parent-state inputs, so that
\[
p_0\!\left(
\bF_v
\mid
\{\widehat{\bF}_p\}_{p\in\PA(v)},\bU_v
\right)
=
\mathcal N(\bF_v;A_v\bU_v,R_v).
\]
Equivalently, in unwhitened inducing coordinates,
\[
A_v
=
K_{v,FZ}K_{v,ZZ}^{-1},
\qquad
R_v
=
K_{v,FF}
-
K_{v,FZ}K_{v,ZZ}^{-1}K_{v,ZF},
\]
where the kernel blocks are computed from the nodewise kernel \(K_v\), the
current parent-state inputs, and the inducing locations \(\bZ_v\).

For a fixed node \(w\), define the stacked prefix matrices
\[
A_{<w}
:=
\operatorname{blockdiag}(A_v:v<w),
\qquad
R_{<w}
:=
\operatorname{blockdiag}(R_v:v<w).
\]
All \(A_v\) and \(R_v\) in these blocks are evaluated along the realised
 \(\widehat{\bF}_{<w}\).

\begin{proposition}[Marginal ancestral factorisation]
\label{prop:app-marginal-ancestral-sampler}
Consider the DAG-DGP variational family in
Eq.~\eqref{eq:app-variational-family} with global Gaussian inducing
posterior \(q(\bU)=\mathcal N(\bU;\bm m,\bm\Sigma)\). Fix a topological
ordering of \(\cU\), and define
\[
\bm C_{<w}
\;:=\;
A_{<w}\bm\Sigma_{<w,<w}A_{<w}^{\top}
+
R_{<w}.
\]
Then the induced latent law factorises along the topological order as
\[
q(\bF)
\;=\;
\prod_{w\in\cU}
q(\bF_w\mid \bF_{<w}),
\]
and each ancestral conditional is Gaussian,
\[
q(\bF_w\mid \bF_{<w})
\;=\;
\mathcal N\!\left(
\bF_w;\;
A_w\,\bm m_{w\mid <w},\;
R_w
+
A_w\,\bm\Sigma_{ww\mid <w}\,A_w^{\top}
\right),
\]
with conditional inducing moments
\[
\bm m_{w\mid <w}
\;=\;
\bm m_w
+
\bm\Sigma_{w,<w}A_{<w}^{\top}
\bm C_{<w}^{-1}
\left(
\bF_{<w}
-
A_{<w}\bm m_{<w}
\right),
\]
\[
\bm\Sigma_{ww\mid <w}
\;=\;
\bm\Sigma_{ww}
-
\bm\Sigma_{w,<w}A_{<w}^{\top}
\bm C_{<w}^{-1}
A_{<w}\bm\Sigma_{<w,w}.
\]
\end{proposition}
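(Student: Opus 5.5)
The plan is to treat the variational law as one big jointly-defined hierarchical model over \((\bU,\bF)\) and to read off the ancestral conditionals by Gaussian conditioning. The first step is the factorisation itself. Because \(q(\bF)\) is a probability law on the vectorised \(\bF\), the chain rule along the fixed topological order gives \(q(\bF)=\prod_{w}q(\bF_w\mid\bF_{<w})\). This holds without any structure. The real content is therefore the Gaussian form of each factor. To obtain it, I will write
\[
q(\bF_w\mid\bF_{<w})=\int q(\bU_w\mid\bF_{<w})\,p_0\!\left(\bF_w\mid\{\bF_p\}_{p\in\PA(w)},\bU_w\right)d\bU_w .
\]
This step uses a conditional-independence fact. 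Under \(q(\bF,\bU)\), given \(\bU_w\) and the parent states (which lie in \(\bF_{<w}\)), \(\bF_w\) is independent of the remaining \(\bU\) and of \(\bF_{<w}\). This follows directly from the product form in Eq.~\eqref{eq:app-variational-family}: the only factor involving \(\bF_w\) is its own GP conditional. Integrating out all inducing blocks other than \(\bU_w\), and all later latent states, then leaves exactly this display.

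The key step is to identify \(q(\bU_w\mid\bF_{<w})\). I would fix the realised prefix \(\widehat{\bF}_{<w}\). Along it, each \(v<w\) has a conditional \(\mathcal N(\bF_v;A_v\bU_v,R_v)\) whose \(A_v,R_v\) depend only on the parents' realised values, and those values are themselves in the prefix. Hence the density of \(\bF_{<w}\) given \(\bU_{<w}\), evaluated at the realisation, coincides pointwise with that of the linear-Gaussian model \(\bF_{<w}=A_{<w}\bU_{<w}+\bm\eta\), \(\bm\eta\sim\mathcal N(0,R_{<w})\). I would verify this by induction over \(v<w\): the product of the \(p_0\) factors equals the block-diagonal Gaussian density with the stated blocks. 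This is the step I expect to be the main obstacle. The map is nonlinear in \(\bF\), so the argument must be phrased as an equality of functions of \(\bU\) at fixed \(\widehat{\bF}_{<w}\). Bayes' rule only needs the likelihood as a function of \(\bU\), and that is exactly what makes the linear-Gaussian surrogate legitimate even though \(A_v,R_v\) are random.

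With this surrogate in hand, \((\bU_w,\bU_{<w},\bF_{<w})\) is jointly Gaussian in the surrogate model, where \(q(\bU)=\mathcal N(\bm m,\bm\Sigma)\) is the prior on \(\bU\). The relevant moments are \(\Cov(\bU_w,\bF_{<w})=\bm\Sigma_{w,<w}A_{<w}^\top\) and \(\Var(\bF_{<w})=\bm C_{<w}\). The standard conditioning formula then yields \(\bm m_{w\mid<w}\) and \(\bm\Sigma_{ww\mid<w}\) exactly as stated; \(\bm C_{<w}\) is invertible whenever \(R_{<w}\succ0\), which I assume. Finally, I push \(\mathcal N(\bm m_{w\mid<w},\bm\Sigma_{ww\mid<w})\) through \(\mathcal N(\bF_w;A_w\bU_w,R_w)\), where \(A_w,R_w\) are fixed by the prefix. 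This linear-Gaussian marginalisation gives mean \(A_w\bm m_{w\mid<w}\) and covariance \(R_w+A_w\bm\Sigma_{ww\mid<w}A_w^\top\), completing the proof.
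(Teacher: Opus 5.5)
Your proposal is correct and follows essentially the same route as the paper: the chain rule along the topological order, then treating the realised prefix $\widehat{\bF}_{<w}$ as a linear-Gaussian observation of $\bU_{<w}$ with $A_{<w},R_{<w}$ fixed at the realisation, Gaussian conditioning to obtain $\bm m_{w\mid<w}$ and $\bm\Sigma_{ww\mid<w}$, and a final linear-Gaussian marginalisation through node $w$'s conditional. Your two explicit points make precise what the paper leaves implicit: the linear-Gaussian surrogate is only an identity of likelihoods as functions of $\bU$ at fixed $\widehat{\bF}_{<w}$, and you assume $R_{<w}\succ 0$ so that $\bm C_{<w}$ is invertible.
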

\noindent Eq.~\eqref{prop:app-marginal-ancestral-sampler} immediately
yields an exact sampler from \(q(\bF)\): traverse \(\cU\) in topological
order and, at each node \(w\), draw \(\bF_w\) from
\(q(\bF_w\mid\widehat{\bF}_{<w})\) using the realised
\(\widehat{\bF}_{<w}\) of previously sampled latents.

\begin{proof}
Start by considering a fixed node \(w\).
By the variational family,
\[
q(\bF,\bU)
=
q(\bU)
\prod_{v\in\cU}
p_0\!\left(
\bF_v
\mid
\{\bF_p\}_{p\in\PA(v)},\bU_v
\right).
\]
If we integrate out all future nodes \(v>w\) in reverse topological order,
their conditional densities integrate to one. Hence
\[
q(\bF_w\mid \widehat{\bF}_{<w})
=
\int
p_0\!\left(
\bF_w
\mid
\{\widehat{\bF}_p\}_{p\in\PA(w)},\bU_w
\right)
q(\bU\mid \widehat{\bF}_{<w})
\,d\bU .
\]
It remains to characterise the marginal conditional law of \(\bU_w\) under
\(q(\bU\mid\widehat{\bF}_{<w})\).

Again by the variational family,
\[
q(\bU\mid\widehat{\bF}_{<w})
\propto
q(\bU)
\prod_{v<w}
p_0\!\left(
\widehat{\bF}_v
\mid
\{\widehat{\bF}_p\}_{p\in\PA(v)},\bU_v
\right).
\]
Using the definition of \(A_v\) and \(R_v\), the product of the already-visited
local conditionals can be written as
\[
\prod_{v<w}
p_0\!\left(
\widehat{\bF}_v
\mid
\{\widehat{\bF}_p\}_{p\in\PA(v)},\bU_v
\right)
=
\mathcal N\!\left(
\widehat{\bF}_{<w};
A_{<w}\bU_{<w},
R_{<w}
\right).
\]
Thus, conditional on the realised prefix, the previously sampled states act as
a linear-Gaussian observation of \(\bU_{<w}\).

Under \(q(\bU)=\mathcal N(\bU;\bm m,\bm\Sigma)\), the pair
\((\bU_w,\widehat{\bF}_{<w})\) induced by this linear-Gaussian observation is
jointly Gaussian with moments
\[
\mathbb E_q[\bU_w]=\bm m_w,
\qquad
\mathbb E_q[\widehat{\bF}_{<w}]
=
A_{<w}\bm m_{<w},
\]
\[
\operatorname{Cov}_q(\bU_w,\widehat{\bF}_{<w})
=
\bm\Sigma_{w,<w}A_{<w}^{\top},
\]
and
\[
\operatorname{Cov}_q(\widehat{\bF}_{<w})
=
A_{<w}\bm\Sigma_{<w,<w}A_{<w}^{\top}
+
R_{<w}
=
\bm C_{<w}.
\]
Gaussian conditioning \citep[Sec.~2.3, Eqs.~(2.81)--(2.82)]{bishop2006} therefore gives
\[
\mathbb E_q[\bU_w\mid \widehat{\bF}_{<w}]
=
\bm m_w
+
\bm\Sigma_{w,<w}A_{<w}^{\top}
\bm C_{<w}^{-1}
\left(
\widehat{\bF}_{<w}
-
A_{<w}\bm m_{<w}
\right),
\]
and
\[
\operatorname{Cov}_q(\bU_w\mid \widehat{\bF}_{<w})
=
\bm\Sigma_{ww}
-
\bm\Sigma_{w,<w}A_{<w}^{\top}
\bm C_{<w}^{-1}
A_{<w}\bm\Sigma_{<w,w}.
\]
Finally, at node \(w\), the local GP conditional is
\[
p_0\!\left(
\bF_w
\mid
\{\widehat{\bF}_p\}_{p\in\PA(w)},\bU_w
\right)
=
\mathcal N(\bF_w;A_w\bU_w,R_w).
\]
The remaining integration is the linear-Gaussian marginalisation
\citep[Sec.~2.3.3, Eqs.~(2.113)--(2.115)]{bishop2006}
\[
\int
\mathcal N(\bF_w;A_w\bU_w,R_w)
\mathcal N(\bU_w;\bm m_{w\mid <w},\bm\Sigma_{ww\mid <w})
\,d\bU_w
=
\mathcal N\!\left(
\bF_w;
A_w\bm m_{w\mid <w},
R_w+A_w\bm\Sigma_{ww\mid <w}A_w^\top
\right).
\]
Therefore,
\[
q(\bF_w\mid \widehat{\bF}_{<w})
=
\mathcal N\!\left(
\bF_w;
A_w\bm m_{w\mid <w},
R_w+A_w\bm\Sigma_{ww\mid <w}A_w^\top
\right).
\]

Applying this identity at every node in the chosen topological order gives the
chain-rule factorisation
\[
q(\bF)
=
\prod_{w\in\cU}
q(\bF_w\mid\bF_{<w}).
\]
Therefore, sampling each node from the displayed conditional distribution in
topological order yields a sample from the induced marginal law \(q(\bF)\).
\end{proof}
\subsection{Practical ELBO evaluation}
\label{app:practical-elbo-evaluation}

The marginal ancestral factorisation in
Prop.~\ref{prop:app-marginal-ancestral-sampler} gives the distribution that
must be sampled in order to estimate the part of the ELBO that involves the observational distributions in logarithmic form. In
practice, this observational distribution factorises over data indexes, so the estimator only
requires the marginal law of the latent variables across DAG nodes for each
individual data index and Monte Carlo sample. Cross-index latent
correlations do not enter the likelihood estimator and are therefore not
materialised.

We describe the scalar-output case. Vector-valued nodes are obtained by replacing the interpolation
row vectors below by block interpolation matrices and the residual variances
by residual covariance blocks. We use lowercase letters for pointwise
quantities: \(a_{w,b}\) is a single interpolation vector, \(r_{w,b}\) a single
residual variance, \(\mu_{w,b}\) a scalar mean, and \(v^0_{uv,b}\) a scalar
base covariance. Their uppercase counterparts in
Prop.~\ref{prop:app-marginal-ancestral-sampler} denote stacked or matrix
quantities.

\paragraph{Whitened pointwise GP conditionals.}
As often in sparse DGP inference, we work in whitened inducing
coordinates. We keep the notation \(\bU_w\) for the whitened inducing vector
at node \(w\), so that the prior is \(\bU_w\sim\mathcal N(0,I)\). For a Monte
Carlo sample \(s\in\{1,\ldots,S\}\) and a minibatch data index
\(i\in\mathcal B\), write \(b=(s,i)\). Given the already-sampled parent
values for the same sample-index pair \(b\), the sparse GP conditional at
node \(w\) has the pointwise linear-Gaussian form
\begin{equation}
p_0\!\left(
F_{w,b}
\mid
\{F_{p,b}\}_{p\in\PA(w)},\bU_w
\right)
=
\mathcal N\!\left(
F_{w,b};
a_{w,b}^{\top}\bU_w,\,
r_{w,b}
\right),
\label{eq:app-pointwise-whitened-conditional}
\end{equation}
where \(a_{w,b}\in\mathbb R^{M_w}\) is the whitened interpolation vector and
\(r_{w,b}>0\) is the corresponding diagonal residual variance. Both
\(a_{w,b}\) and \(r_{w,b}\) depend on the current parent-state input to node
\(w\), and hence on the ancestral samples already drawn for the same
sample-index pair.

\subsubsection{Dense implementation}
\label{app:dense-implementation}

The dense implementation first materialises the full covariance
\[
\bm\Sigma=\bm\Lambda^{-1}
\]
of the structured inducing posterior
\(q_{\cH}(\bU)=\mathcal N(\bm m,\bm\Lambda^{-1})\). During the ancestral pass,
for each sample-index pair \(b\), it maintains the already-sampled latent
values \(\widehat F_{<w,b}\), their base means, and their base covariance
matrix under the Gaussian model induced by \(\bm\Sigma\).

For any two nodes \(u\) and \(v\) whose pointwise GP conditionals have already
been constructed for sample-index pair \(b\), define the base pointwise
covariance
\begin{equation}
v^0_{uv,b}
=
a_{u,b}^{\top}\bm\Sigma_{uv}a_{v,b}
+
\mathbf 1\{u=v\}\,r_{u,b},
\label{eq:app-dense-pointwise-covariance}
\end{equation}
and the base mean
\begin{equation}
\mu^0_{u,b}
=
a_{u,b}^{\top}\bm m_u .
\label{eq:app-dense-pointwise-mean}
\end{equation}
Here \(\bm\Sigma_{uv}\) is the inducing covariance block between nodes \(u\)
and \(v\). By \(v^0_{w,<w,b}\), \(v^0_{<w,<w,b}\), and
\(\mu^0_{<w,b}\), we denote the row vector, covariance matrix, and mean vector
obtained by stacking these scalar quantities over the nodes preceding \(w\)
in the chosen topological order.

At node \(w\), Gaussian conditioning gives
\begin{align}
\mu_{w,b}
&=
\mu^0_{w,b}
+
v^0_{w,<w,b}
\left(v^0_{<w,<w,b}\right)^{-1}
\left(
\widehat F_{<w,b}-\mu^0_{<w,b}
\right),
\label{eq:app-dense-pointwise-cond-mean}
\\
\sigma^2_{w,b}
&=
v^0_{ww,b}
-
v^0_{w,<w,b}
\left(v^0_{<w,<w,b}\right)^{-1}
v^0_{<w,w,b}.
\label{eq:app-dense-pointwise-cond-var}
\end{align}
The latent value is then sampled using the pathwise reparametrisation
\[
\widehat F_{w,b}
=
\mu_{w,b}
+
\sqrt{\sigma^2_{w,b}}\,
\varepsilon_{w,b},
\qquad
\varepsilon_{w,b}\sim\mathcal N(0,1).
\]
This is the pointwise implementation of
Prop.~\ref{prop:app-marginal-ancestral-sampler}: for each data index, it
samples the latent variables across DAG nodes in topological order, while
avoiding cross-index covariance terms that do not enter the likelihood
estimator.

\begin{algorithm}[t]
\caption{Dense ancestral sampling for DAG-SVI}
\label{alg:app-dense-sampling}
\begin{algorithmic}[1]
  \REQUIRE Minibatch of data indexes \(\mathcal B\), number of Monte Carlo samples \(S\), structured posterior parameters \((\bm m,\bm\Lambda)\), local GP modules
  \STATE Restrict all root design matrices, observations, and masks to indexes \(i\in\mathcal B\)
  \STATE Form the dense covariance \(\bm\Sigma=\bm\Lambda^{-1}\)
  \STATE Initialise sampled latent values \(\widehat F\gets\emptyset\)
  \STATE Initialise the running pointwise Gaussian state for each sample-index pair \(b=(s,i)\)
  \FOR{each non-root node \(w\in\cU\) in topological order}
      \STATE Build the current node inputs from root values and sampled parent values
      \STATE Compute interpolation vectors and residual variances \((a_{w,b},r_{w,b})\) from Eq.~\eqref{eq:app-pointwise-whitened-conditional}
      \STATE Compute base means and covariances using Eqs.~\eqref{eq:app-dense-pointwise-covariance}--\eqref{eq:app-dense-pointwise-mean}
      \STATE Compute \(\mu_{w,b}\) and \(\sigma^2_{w,b}\) by Gaussian conditioning, Eqs.~\eqref{eq:app-dense-pointwise-cond-mean}--\eqref{eq:app-dense-pointwise-cond-var}
      \STATE Draw \(\widehat F_{w,b}=\mu_{w,b}+\sqrt{\sigma^2_{w,b}}\varepsilon_{w,b}\), with \(\varepsilon_{w,b}\sim\mathcal N(0,1)\)
      \STATE Store \(\widehat F_{w,b}\), \(a_{w,b}\), \(r_{w,b}\), and update the running Gaussian state
  \ENDFOR
  \RETURN sampled latent values \(\widehat F\)
\end{algorithmic}
\end{algorithm}

\subsubsection{Sparse implementation}
\label{app:sparse-implementation}

The sparse implementation represents the same Gaussian posterior in canonical
form,
\begin{equation}
q_{\cH}(\bU)
\propto
\exp\!\left\{
-\frac12\bU^\top\bm\Lambda\bU+\bh^\top\bU
\right\},
\qquad
\bh=\bm\Lambda\bm m,
\label{eq:app-sparse-canonical-practical}
\end{equation}
with \(\bm\Lambda\) sparse on the chordal graph \(\cH\). The chordal
completion ensures that sparse Cholesky elimination can be carried out without
introducing fill-in outside \(\cH\) under a perfect elimination order
\citep{lauritzen1996,rue2005}.

The key observation is that conditioning on an already-sampled latent value
adds a Gaussian site involving only the corresponding inducing block. If
\(\widehat F_{v,b}\) has been sampled and
\[
\widehat F_{v,b}\mid\bU_v
\sim
\mathcal N(a_{v,b}^{\top}\bU_v,r_{v,b}),
\]
then the canonical parameters are updated by
\begin{align}
\Delta\bm\Lambda_{vv,b}
&\leftarrow
\Delta\bm\Lambda_{vv,b}
+
\frac{a_{v,b}a_{v,b}^{\top}}{r_{v,b}},
\label{eq:app-sparse-site-precision}
\\
\Delta\bh_{v,b}
&\leftarrow
\Delta\bh_{v,b}
+
\frac{a_{v,b}\widehat F_{v,b}}{r_{v,b}} .
\label{eq:app-sparse-site-info}
\end{align}
Thus the off-diagonal sparsity pattern is unchanged. For sample-index pair
\(b\), let
\[
\widetilde{\bm\Lambda}_{b}
=
\bm\Lambda+\Delta\bm\Lambda_b .
\]
Rather than solving with the full information vector
\(\bm\Lambda\bm m+\Delta\bh_b\), the implementation uses the centred identity
\begin{equation}
\widetilde{\bm\Lambda}_{b}^{-1}
(\bm\Lambda\bm m+\Delta\bh_b)
=
\bm m
+
\widetilde{\bm\Lambda}_{b}^{-1}
\left(
\Delta\bh_b-\Delta\bm\Lambda_b\bm m
\right).
\label{eq:app-sparse-centred-mean}
\end{equation}
The right-hand side in the second term is nonzero only at blocks that have
already contributed sites.

At node \(w\), define a block vector \(g_{w,b}\) by
\[
[g_{w,b}]_v
=
\begin{cases}
a_{w,b}, & v=w,\\
0, & v\neq w.
\end{cases}
\]
Let
\begin{equation}
y_b
=
\widetilde{\bm\Lambda}_{b}^{-1}
\left(
\Delta\bh_b-\Delta\bm\Lambda_b\bm m
\right),
\qquad
z_b
=
\widetilde{\bm\Lambda}_{b}^{-1}g_{w,b}.
\label{eq:app-sparse-solves}
\end{equation}
The conditional moments needed to sample \(F_{w,b}\) are then
\begin{align}
\mu_{w,b}
&=
a_{w,b}^{\top}
\left(
\bm m_w+[y_b]_w
\right),
\label{eq:app-sparse-cond-mean}
\\
\sigma^2_{w,b}
&=
r_{w,b}
+
a_{w,b}^{\top}[z_b]_w .
\label{eq:app-sparse-cond-var}
\end{align}
Both \(y_b\) and \(z_b\) are obtained by sparse triangular solves using the
current sparse Cholesky factor of \(\widetilde{\bm\Lambda}_b\). After sampling
\(\widehat F_{w,b}\), the site update in
Eqs.~\eqref{eq:app-sparse-site-precision}--\eqref{eq:app-sparse-site-info} is
added. The site is local in the precision matrix; numerically, the Cholesky
factor is updated over the affected part of the elimination tree.

\begin{algorithm}[t]
\caption{Sparse ancestral sampling for DAG-SVI}
\label{alg:app-sparse-sampling}
\begin{algorithmic}[1]
  \REQUIRE Minibatch of data indexes \(\mathcal B\), number of Monte Carlo samples \(S\), structured posterior parameters \((\bm m,\bm\Lambda)\), chordal graph \(\cH\), local GP modules
  \STATE Restrict all root design matrices, observations, and masks to indexes \(i\in\mathcal B\)
  \STATE Compute the sparse Cholesky factor of \(\bm\Lambda\) on \(\cH\)
  \STATE Initialise sampled latent values \(\widehat F\gets\emptyset\)
  \STATE For each sample-index pair \(b=(s,i)\), initialise site terms \(\Delta\bm\Lambda_b=0\), \(\Delta\bh_b=0\), and the corresponding sparse Cholesky state
  \FOR{each non-root node \(w\in\cU\) in topological order}
      \STATE Build the current node inputs from root values and sampled parent values
      \STATE Compute interpolation vectors and residual variances \((a_{w,b},r_{w,b})\) from Eq.~\eqref{eq:app-pointwise-whitened-conditional}
      \STATE Using sparse triangular solves, compute the block readouts in Eqs.~\eqref{eq:app-sparse-solves}--\eqref{eq:app-sparse-cond-var}
      \STATE Draw \(\widehat F_{w,b}=\mu_{w,b}+\sqrt{\sigma^2_{w,b}}\varepsilon_{w,b}\), with \(\varepsilon_{w,b}\sim\mathcal N(0,1)\)
      \STATE Add the site updates
      \[
      \Delta\bm\Lambda_{ww,b}
      \leftarrow
      \Delta\bm\Lambda_{ww,b}
      +
      a_{w,b}a_{w,b}^{\top}/r_{w,b},
      \qquad
      \Delta\bh_{w,b}
      \leftarrow
      \Delta\bh_{w,b}
      +
      a_{w,b}\widehat F_{w,b}/r_{w,b}
      \]
      \STATE Update the sparse Cholesky state over the affected elimination-tree region
  \ENDFOR
  \RETURN sampled latent values \(\widehat F\)
\end{algorithmic}
\end{algorithm}

\paragraph{KL term.}
Both implementations use the same inducing KL. In whitened coordinates, with
\(P=\sum_{w\in\cU}M_w\),
\begin{equation}
\operatorname{KL}\!\left(
q_{\cH}(\bU)
\,\middle\|\,
\prod_{w\in\cU}\mathcal N(0,I)
\right)
=
\frac12
\left\{
\operatorname{tr}(\bm\Sigma)
+
\bm m^\top\bm m
-
P
+
\log|\bm\Lambda|
\right\},
\qquad
\bm\Sigma=\bm\Lambda^{-1}.
\label{eq:app-practical-kl}
\end{equation}
The KL is computed from the variational precision \(\bm\Lambda\), not from the
temporary site-updated precisions \(\widetilde{\bm\Lambda}_b\) used inside the
ancestral sampler. The dense implementation obtains
\(\operatorname{tr}(\bm\Sigma)\) after explicitly forming
\(\bm\Sigma=\bm\Lambda^{-1}\). The sparse implementation obtains
\(\log|\bm\Lambda|\) from the sparse Cholesky factor of \(\bm\Lambda\), and
obtains the diagonal covariance blocks \(\bm\Sigma_{ww}\) needed for
\(\operatorname{tr}(\bm\Sigma)=\sum_w\operatorname{tr}(\bm\Sigma_{ww})\) by
selected-inverse, or Takahashi, recursions
\citep{takahashi1973,erisman1975}. Thus the sparse implementation computes
the KL without materialising the full covariance matrix.

\subsubsection{DAG-VI}
\label{app:dag-vi-practical}

DAG-VI uses the same pointwise GP conditionals and the same topological
ancestral pass, but restricts the inducing posterior to factorise across DAG
nodes,
\[
q_{\mathrm{VI}}(\bU)
=
\prod_{w\in\cU}
\mathcal N(\bU_w;\bm m_w,\bm S_w).
\]
Consequently, no conditioning on previously sampled node values is performed
at the inducing level. At node \(w\), for sample-index pair \(b\),
\begin{equation}
\mu_{w,b}
=
a_{w,b}^{\top}\bm m_w,
\qquad
\sigma^2_{w,b}
=
r_{w,b}
+
a_{w,b}^{\top}\bm S_w a_{w,b}.
\label{eq:app-dag-vi-pointwise}
\end{equation}
The sampled parent values still enter downstream GP inputs, so DAG-VI
propagates marginal uncertainty through the DAG, but it removes posterior
coupling between distinct node mechanisms.

\begin{algorithm}[t]
\caption{Ancestral sampling for DAG-VI}
\label{alg:app-dag-vi-sampling}
\begin{algorithmic}[1]
  \REQUIRE Minibatch of data indexes \(\mathcal B\), number of Monte Carlo samples \(S\), mean-field posterior \(\prod_{w\in\cU}\mathcal N(\bm m_w,\bm S_w)\), local GP modules
  \STATE Restrict all root design matrices, observations, and masks to indexes \(i\in\mathcal B\)
  \STATE Initialise sampled latent values \(\widehat F\gets\emptyset\)
  \FOR{each non-root node \(w\in\cU\) in topological order}
      \STATE Build the current node inputs from root values and sampled parent values
      \STATE Compute interpolation vectors and residual variances \((a_{w,b},r_{w,b})\)
      \STATE Compute \(\mu_{w,b}\) and \(\sigma^2_{w,b}\) from Eq.~\eqref{eq:app-dag-vi-pointwise}
      \STATE Draw \(\widehat F_{w,b}=\mu_{w,b}+\sqrt{\sigma^2_{w,b}}\varepsilon_{w,b}\), with \(\varepsilon_{w,b}\sim\mathcal N(0,1)\)
  \ENDFOR
  \RETURN sampled latent values \(\widehat F\)
\end{algorithmic}
\end{algorithm}

\subsubsection{Stochastic ELBO estimator}
\label{app:stochastic-elbo-estimator}

Let
\[
\ell_w^{(i)}(f)
:=
\log p_w\!\left(
Y_w^{(i)}
\mid
f,\cO_w^{(i)}
\right),
\]
with \(\ell_w^{(i)}(f)=0\) whenever \(\cO_w^{(i)}=\emptyset\). For a
minibatch of data indexes \(\mathcal B\subset[n]\) of size \(B\), sampled
uniformly, the estimator is
\begin{equation}
\widehat{\mathcal L}_{\mathrm{obs}}
=
\frac{n}{B}
\frac1S
\sum_{s=1}^{S}
\sum_{i\in\mathcal B}
\sum_{w\in\cU}
\ell_w^{(i)}
\left(
\widehat F_{w}^{(s,i)}
\right).
\label{eq:app-minibatch-observation-estimator}
\end{equation}
The stochastic ELBO estimate is
\begin{equation}
\widehat{\mathcal L}
=
\widehat{\mathcal L}_{\mathrm{obs}}
-
\mathcal K,
\label{eq:app-stochastic-elbo-final}
\end{equation}
where \(\mathcal K\) is the inducing KL in
Eq.~\eqref{eq:app-practical-kl} for DAG-SVI, or the sum of nodewise KL terms
for DAG-VI. If additional per-index latent variables are used, their KL terms
are added to \(\mathcal K\), with the corresponding minibatch scaling. For
Gaussian distributions, the expectation of \(\ell_w^{(i)}\) under the final
Gaussian conditional can be evaluated analytically; this Rao--Blackwellised
variant reduces Monte Carlo variance but leaves the objective unchanged.

\begin{algorithm}[t]
\caption{One stochastic training step}
\label{alg:app-training-step}
\begin{algorithmic}[1]
  \REQUIRE Dataset \(\cD\), minibatch size \(B\), Monte Carlo samples \(S\), variational family \(\mathsf{method}\in\{\mathrm{DAG\mbox{-}VI},\mathrm{DAG\mbox{-}SVI\ dense},\mathrm{DAG\mbox{-}SVI\ sparse}\}\)
  \STATE Draw a minibatch of data indexes \(\mathcal B\subset[n]\) uniformly, typically without replacement within an epoch
  \IF{\(\mathsf{method}=\mathrm{DAG\mbox{-}SVI\ dense}\)}
      \STATE Draw ancestral samples using Algorithm~\ref{alg:app-dense-sampling}
  \ELSIF{\(\mathsf{method}=\mathrm{DAG\mbox{-}SVI\ sparse}\)}
      \STATE Draw ancestral samples using Algorithm~\ref{alg:app-sparse-sampling}
  \ELSE
      \STATE Draw ancestral samples using Algorithm~\ref{alg:app-dag-vi-sampling}
  \ENDIF
  \STATE Estimate the observation term using Eq.~\eqref{eq:app-minibatch-observation-estimator}
  \STATE Compute the analytic KL term \(\mathcal K\)
  \STATE Form \(\widehat{\mathcal L}=\widehat{\mathcal L}_{\mathrm{obs}}-\mathcal K\)
  \STATE Update variational parameters, inducing locations, and kernel hyperparameters by a stochastic gradient step
\end{algorithmic}
\end{algorithm}

\subsection{Chordal completion and elimination order}
\label{app:chordal-completion-order}

We construct the chordal graph \(\cH\) over the inducing-variable blocks associated with the non-root ancestors of observed nodes. Specifically, we consider
\[
\operatorname{An}
\left(
\{w\in\cU:\cO_w\neq\emptyset\}
\right)
\cap\cU,
\]
where ancestors include the observed nodes themselves. We then moralise this induced DAG, adding undirected edges between each retained parent--child pair and between all retained co-parents. Root nodes are excluded because they do not carry inducing points. Non-root nodes outside this ancestral set are retained as isolated vertices.

When the resulting moral graph is not chordal, we chordally complete it using the MCS-M minimal-triangulation heuristic \citep{berry2004}. The fill edges returned by MCS-M give an inclusion-minimal triangulation, meaning that no added edge can be removed while preserving chordality. This is a local minimality guarantee and does not imply minimum treewidth,
minimum maximum-clique size, or minimum computational cost. Finding an
optimal chordal completion under such criteria is computationally intractable
in general, with minimum fill-in and bounded-treewidth formulations being
classical NP-complete problems \citep{yannakakis1981,arnborg1987}. If the moral graph is already chordal, no fill edges are added. Clique sizes are therefore induced by moralisation and chordal completion. Alternative completion or ordering heuristics designed to control clique size could be incorporated within the same general construction.

After constructing \(\cH\), we compute a deterministic perfect elimination order (PEO) \(\pi=(v_1,\ldots,v_J)\) using maximum cardinality search on \(\cH\) \citep{tarjan1985}, with ties broken by the fixed declaration order of the non-root nodes. This PEO is computed on the completed graph. For each \(v_i\), its later neighbours in \(\pi\) form a clique, and the corresponding elimination front consists of \(v_i\) together with these later neighbours. DAG-SVI-sparse eliminates blocks in ascending PEO and performs selected inversion and triangular sampling in reverse PEO.
\subsection{Computational cost}
\label{app:computational-cost}

We report the leading cost of one stochastic ELBO evaluation. Let $J = |\cU|$ be the number of non-root inducing blocks, let $M = \max_w M_w$, and let $K = SB$, where $S$ is the number of Monte Carlo samples and $B$ is the minibatch size. For DAG-SVI-sparse, let $c$ be the maximum number of inducing blocks in a clique of the chordal graph $\cH$. We assume comparable block sizes and scalar node outputs.

DAG-VI factorises across inducing blocks, giving $\mathcal{O}(JM^3 + KJM^2)$ time and $\mathcal{O}(JM^2)$ memory.

DAG-SVI-dense materialises the full covariance $\bm{\Sigma} = \bm{\Lambda}^{-1}$. Its cost is $\mathcal{O}((JM)^3 + KJ^2 M^2 + KJ^3)$ time and $\mathcal{O}((JM)^2)$ memory. The first term is the global dense Gaussian computation, while the remaining terms come from dense cross-node covariance contractions and Gaussian conditioning during the ancestral pass.

DAG-SVI-sparse instead exploits the chordal precision structure and never materialises $\bm{\Lambda}^{-1}$. Sparse Cholesky, log-determinants, and the selected-inverse/Takahashi recursions used for the KL trace term cost $\mathcal{O}(Jc^2 M^3)$ time and $\mathcal{O}(Jc M^2)$ memory. The pointwise GP propagation through the DAG contributes $\mathcal{O}(KJM^2)$. The collapsed sampler also performs local Gaussian conditioning updates along the elimination tree. If $\rho_t$ denotes the number of Cholesky block columns, or supernodes, affected at ancestral step $t$, and $R_{\cH} := \sum_{t=1}^{J} \rho_t$, then these updates add $\mathcal{O}(K c M^2 R_{\cH})$ to the sparse cost. Hence the sparse implementation costs $\mathcal{O}(Jc^2 M^3 + KJM^2 + K c M^2 R_{\cH})$, with persistent memory $\mathcal{O}(Jc M^2)$, up to minibatch-specific temporary storage.

This expression shows how DAG-SVI-sparse benefits from local graph structure. In particular, the base Gaussian computation scales with the maximal clique size $c$, while the collapsed-sampling overhead is governed by the elimination-tree update profile $R_{\cH}$. In locally sparse DAGs, $c$ is small and $R_{\cH}$ grows slowly with $J$. For example, in the balanced branching-tree setting of Fig.~\ref{fig:branching-elbo-scaling}, each node has at most one parent, so the moralised graph remains a tree and $c = 2$. With a balanced elimination profile, the affected update regions grow with the tree depth, giving $R_{\cH} = \mathcal{O}(J \log J)$, and hence $\mathcal{O}(JM^3 + KJ \log(J) M^2)$, which is substantially below the dense scaling as $J$ grows.

Finally, recall that for a chain DGP, $\cH$ reduces to a block-tridiagonal structure and our structural family specialises to \citet[Sec.~4.1]{ustyuzhaninov2020}. The corresponding marginalised conditionals can then be computed analytically, saving computation. 
\subsection{Explaining away}
\label{app:explaining-away}

Compared with a standard chain DGP, a DAG-DGP can exhibit posterior coupling
between independent mechanisms that share an observed child. This behaviour is
the classical explaining-away effect \citep{pearl1988,lauritzen1996}. The
linear Gaussian case gives a simple illustration.

\paragraph{Linear Gaussian case.}
Consider the scalar collider
\[
    A \sim \mathcal N(0,\sigma_A^2),
    \qquad
    B \sim \mathcal N(0,\sigma_B^2),
    \qquad
    C\mid A,B \sim \mathcal N(\alpha A+\beta B,\sigma_C^2),
\]
with \(A\) and \(B\) independent a priori. Once a value \(c\) of the child
variable is observed, the posterior over \((A,B)\) is Gaussian with precision
\begin{equation}
    Q_{AB\mid c}
    =
    \begin{pmatrix}
        \sigma_A^{-2}+\alpha^2\sigma_C^{-2}
        &
        \alpha\beta\sigma_C^{-2}
        \\
        \alpha\beta\sigma_C^{-2}
        &
        \sigma_B^{-2}+\beta^2\sigma_C^{-2}
    \end{pmatrix}.
    \label{eq:app-linear-gaussian-collider-precision}
\end{equation}
Inverting \eqref{eq:app-linear-gaussian-collider-precision} gives
\begin{equation}
    \operatorname{Cov}(A,B\mid c)
    =
    -
    \frac{
        \alpha\beta\,\sigma_A^2\sigma_B^2
    }{
        \sigma_C^2+\alpha^2\sigma_A^2+\beta^2\sigma_B^2
    }.
    \label{eq:app-linear-gaussian-collider-covariance}
\end{equation}
Thus, whenever \(\alpha\beta>0\), the two parents become negatively correlated
a posteriori. Intuitively, once one branch explains a substantial part of the
observed value \(c\), less support is needed from the other. The observation at
the child therefore induces posterior dependence between parents that are
independent a priori.

\paragraph{Explaining away in DAG-DGPs.}
The same mechanism appears in the DAG-DGP collider
\[
    w_1 \to w_3 \leftarrow w_2 ,
\]
where \(w_1\) and \(w_2\) are two parent mechanisms and observations are
attached only to the child node \(w_3\). For \(i\in[n]\), write the latent
recursion as
\begin{equation}
    \bF_{w_1}^{(i)}
    =
    f_{w_1}\!\left(\bF_{\PA(w_1)}^{(i)}\right),
    \qquad
    \bF_{w_2}^{(i)}
    =
    f_{w_2}\!\left(\bF_{\PA(w_2)}^{(i)}\right),
    \qquad
    \bF_{w_3}^{(i)}
    =
    f_{w_3}\!\left(
        \bF_{w_1}^{(i)},\bF_{w_2}^{(i)}
    \right).
    \label{eq:app-collider-dagdgp-recursion}
\end{equation}
For notational simplicity, suppressing the root inputs and any other possible upstream
variables outside the collider, the prior latent law factorises as
\[
    p_0(\bF_{w_1},\bF_{w_2},\bF_{w_3})
    =
    p_0(\bF_{w_1})\,
    p_0(\bF_{w_2})\,
    p_0(\bF_{w_3}\mid \bF_{w_1},\bF_{w_2}).
\]
Suppose that observations are available only at the child, through a nodewise
conditional distribution \(p_{w_3}(Y_{w_3}\mid \bF_{w_3},\mathcal O_{w_3})\).
The posterior marginal over the two parent branches is then
\begin{equation}
\begin{aligned}
    p(\bF_{w_1},\bF_{w_2}\mid Y_{w_3})
    &\propto
    p_0(\bF_{w_1})\,
    p_0(\bF_{w_2})
    \\
    &\quad \times
    \int
    p_{w_3}(Y_{w_3}\mid \bF_{w_3},\mathcal O_{w_3})\,
    p_0(\bF_{w_3}\mid \bF_{w_1},\bF_{w_2})
    \,d\bF_{w_3}.
\end{aligned}
\label{eq:app-collider-parent-posterior}
\end{equation}
The integral in \eqref{eq:app-collider-parent-posterior} depends jointly on
\(\bF_{w_1}\) and \(\bF_{w_2}\), and therefore does not factorise in general.
This is the DAG-DGP analogue of explaining away: once the observed child is
partly accounted for by one branch, the posterior mass over the other branch
shifts accordingly.

The effect is especially transparent when the child kernel contains separate
contributions from the two parents, for instance under additive fusion.
In this case the child receives two distinct nonlinear contributions whose
combined effect is constrained by the observations at \(w_3\). The posterior
therefore induces dependence between the two parent branches even though the
GP modules \(f_{w_1}\) and \(f_{w_2}\) are independent under the prior.

\paragraph{DAG-VI cannot retain explaining away.}\label{par:MFcannot_ea}
We now show that DAG-VI cannot represent the posterior coupling required by
explaining away. Consider again the collider discussed in the main paper
\[
    w_1 \to w_3 \leftarrow w_2 ,
\]
and assume that observations are available only at the child node \(w_3\).
Under DAG-VI, the inducing posterior factorises across nodes,
\[
    q_{\mathrm{DAG\text{-}VI}}(\bU)
    =
    \prod_{w\in\mathcal U} q_w(\bU_w).
\]
Together with the DAG-DGP conditionals, this gives
\begin{equation}
\begin{aligned}
    &q_{\mathrm{DAG\text{-}VI}}
    (\bF_{w_1},\bU_{w_1},
     \bF_{w_2},\bU_{w_2},
     \bF_{w_3},\bU_{w_3})
    \\
    &\quad =
    p(\bF_{w_1}\mid \bU_{w_1})q_{w_1}(\bU_{w_1})\,
    p(\bF_{w_2}\mid \bU_{w_2})q_{w_2}(\bU_{w_2})
    \\
    &\qquad \times
    p(\bF_{w_3}\mid \bU_{w_3},\bF_{w_1},\bF_{w_2})
    q_{w_3}(\bU_{w_3}).
\end{aligned}
\label{eq:app-dagvi-collider-family}
\end{equation}
Marginalising the child variables yields
\begin{equation}
\begin{aligned}
    &q_{\mathrm{DAG\text{-}VI}}
    (\bF_{w_1},\bU_{w_1},\bF_{w_2},\bU_{w_2})
    \\
    &\quad =
    p(\bF_{w_1}\mid \bU_{w_1})q_{w_1}(\bU_{w_1})\,
    p(\bF_{w_2}\mid \bU_{w_2})q_{w_2}(\bU_{w_2})
    \\
    &\qquad \times
    \int q_{w_3}(\bU_{w_3})\,d\bU_{w_3}
    \int
    p(\bF_{w_3}\mid \bU_{w_3},\bF_{w_1},\bF_{w_2})
    \,d\bF_{w_3}.
\end{aligned}
\end{equation}
Both integrals are equal to one. Hence
\begin{equation}
\begin{aligned}
    &q_{\mathrm{DAG\text{-}VI}}
    (\bF_{w_1},\bU_{w_1},\bF_{w_2},\bU_{w_2})
    \\
    &\quad =
    q_{\mathrm{DAG\text{-}VI}}(\bF_{w_1},\bU_{w_1})\,
    q_{\mathrm{DAG\text{-}VI}}(\bF_{w_2},\bU_{w_2}).
\end{aligned}
\end{equation}
In particular,
\begin{equation}
    q_{\mathrm{DAG\text{-}VI}}(\bF_{w_1},\bF_{w_2})
    =
    q_{\mathrm{DAG\text{-}VI}}(\bF_{w_1})\,
    q_{\mathrm{DAG\text{-}VI}}(\bF_{w_2}).
    \label{eq:app-dagvi-no-parent-coupling}
\end{equation}
Thus DAG-VI preserves marginal uncertainty along each branch, but assigns no
posterior dependence between the two parents. It therefore cannot represent the
explaining-away dependence induced by observations at \(w_3\).

\paragraph{A topological directed factorisation is also insufficient.}
One might instead consider a directed variational posterior that follows the
topological order of the DAG,
\begin{equation}
    q_{\mathrm{top}}(\bU)
    =
    \prod_{w\in\mathcal U}
    q_w\!\left(\bU_w\mid \{\bU_p\}_{p\in\PA(w)}\right),
    \label{eq:app-topological-variational-family}
\end{equation}
which is the direct DAG analogue of autoregressive or layer-wise posterior
factorisations used in chain settings \citep{ustyuzhaninov2020, ober2021}. This approach is also unable to capture explaining-away. For the same collider \(w_1\to w_3\leftarrow w_2\),
\eqref{eq:app-topological-variational-family} gives
\[
    q_{\mathrm{top}}(\bU_{w_1},\bU_{w_2},\bU_{w_3})
    =
    q_{w_1}(\bU_{w_1})\,
    q_{w_2}(\bU_{w_2})\,
    q_{w_3}(\bU_{w_3}\mid \bU_{w_1},\bU_{w_2}).
\]
Marginalising the child gives
\begin{equation}
\begin{aligned}
    q_{\mathrm{top}}(\bU_{w_1},\bU_{w_2})
    &=
    q_{w_1}(\bU_{w_1})q_{w_2}(\bU_{w_2})
    \int
    q_{w_3}(\bU_{w_3}\mid \bU_{w_1},\bU_{w_2})
    \,d\bU_{w_3}
    \\
    &=
    q_{w_1}(\bU_{w_1})q_{w_2}(\bU_{w_2}).
\end{aligned}
\label{eq:app-topological-no-parent-coupling}
\end{equation}
The child conditional can model how \(\bU_{w_3}\) depends on its parents, but
it does not create marginal posterior dependence between the co-parents once
the child is integrated out. Explaining away requires exactly such dependence:
conditioning on observations at \(w_3\) couples the plausible contributions of
\(w_1\) and \(w_2\).

This is why the structured approximation is based instead on the moralised
ancestral graph. For the collider \(w_1\to w_3\leftarrow w_2\), moralisation
adds the co-parent edge \(\{w_1,w_2\}\), allowing the approximate posterior over
inducing variables to retain the posterior dependence induced by the observed
child.

\newpage

\section{Stochastic Deep Gaussian Processes over Graphs as a Special Case of DAG-DGP}
\label{app:DGPG_app}

Stochastic Deep Gaussian Processes over Graphs (DGPG) \citep{li2020} were
introduced for a modelling task different from ours, namely learning maps
between input and output signals defined on the vertices of a fixed graph.
In that setting, the graph indexes the components of each signal and
specifies which neighbouring components are used by each graph-indexed GP
module.

We show that this construction is nevertheless contained in the DAG-DGP
framework. After unrolling the base graph across depth, any DGPG model can
be represented as a DAG-DGP on a layered DAG, with deterministic roots
\(F^0=X\), concatenation fusion rule at each non-root node, and observations
restricted to the terminal layer. Under this identification, the DGPG ELBO corresponds to the DAG-VI
objective. This observation positions DGPG as one particular
graph-structured architecture within the broader DAG-DGP class. Thus, the
DAG-DGP framework is strictly more general at the modelling level, since it
allows arbitrary DAG structures, heterogeneous fusion rules, and
observations at arbitrary nodes. It is also more general at the inferential
level, since the same embedded DGPG model can be equipped with our DAG-SVI
objective, which applies directly to this model.

\subsection{The DGPG model}
\label{app:dgpg-model}
\paragraph{Setup.}
Following \cite{li2020}, the dataset is
\(\mathcal D=\{G,\Psi,\Phi\}\), where \(G=\langle V,E\rangle\) is a graph
with vertices \(V=\{v_1,\dots,v_K\}\) and edges \(E\subseteq V\times V\).
For \(v_k\in V\), we write
\( 
  \PA_G(k):=\{j:(v_j,v_k)\in E\}
\)
for the parent indices of \(v_k\) in the base graph, with self-loops allowed.
Each input \(\psi\in\Psi\) is a graph signal
\(\psi:V\to\mathbb R^{d_{\mathrm{in}}}\), and each output
\(\phi\in\Phi\) is a graph signal
\(\phi:V\to\mathbb R^{d_{\mathrm{out}}}\). The learning task is to infer a
map \(h:\Psi\to\Phi\), taking an input graph signal to an output graph
signal.
With \(N\) training signals, the inputs and outputs are stacked row-wise as
\(X=(x_1,\dots,x_N)^\top\) and \(Y=(y_1,\dots,y_N)^\top\), where
\(x_i\in\mathbb R^{K d_{\mathrm{in}}}\) and
\(y_i\in\mathbb R^{K d_{\mathrm{out}}}\) concatenate the vertex-wise
features of the \(i\)-th input and output signals.
\paragraph{Generative model.}
DGPG stacks $L$ layers of graph-indexed GP mappings. For layer $l$ the latent matrix is $F^{l}\in\mathbb{R}^{N\times Kd_l}$ (with per-node dimension $d_l$, $d_0=d_{\mathrm{in}}$, $d_L=d_{\mathrm{out}}$) and $F^{0}=X$; each layer is augmented with $M$ inducing inputs $Z^{l}$ and inducing outputs $U^{l}$. For a matrix \(M\in\{X,Y,F,Z,U\}\), \(M^{l,k}\) denotes the sub-block of
layer \(l\) associated with vertex \(v_k\), \(M^{l,\PA_G(k)}\) the
concatenated sub-block at the base-graph parents of \(v_k\), and
\(M^{l,k}_i\) the \(i\)-th row of \(M^{l,k}\). Assuming Gaussian GP priors and inducing outputs $U^{l,k}$ that are independent across layers and vertices, the joint density factorises over observations, layers, and vertices as
\begin{equation}
\label{eq:dgpg-joint}
\begin{aligned}
p\big(Y,\{F^{l,k},U^{l,k}\}_{l,k}\big)
&=
\prod_{n=1}^{N}\prod_{k=1}^{K}
p\big(y_n^{k}\mid F^{L,k}_n\big)
\\
&\quad\times
\prod_{l=1}^{L}\prod_{k=1}^{K}
p\big(F^{l,k}\mid U^{l,k};\,F^{l-1,\PA(k)},Z^{l-1,\PA(k)}\big)\,
p\big(U^{l,k};\,Z^{l-1,\PA(k)}\big) .
\end{aligned}
\end{equation}
with $F^{0}=X$. Crucially, the GP module at node $k$ in layer $l$ acts on the \emph{concatenation} $F^{l-1,\PA(k)}$ of its graph-parent signals from the previous layer. The notation of \cite{li2020} follows the standing convention of
\cite{salimbeni2017}, according to which the semicolon separates fixed
inputs and kernel-design quantities from random quantities being conditioned
on. In our DAG-DGP notation, this fixed dependence is kept implicit. 
 
\paragraph{Variational family and recursive sampling.}
DGPG uses the doubly-stochastic family of \cite{salimbeni2017}, retaining the GP conditionals and a factorised Gaussian inducing posterior,
\begin{equation}
\label{eq:dgpg-q}
q\big(\{F^{l,k},U^{l,k}\}_{l,k}\big)
=\prod_{l=1}^{L}\prod_{k=1}^{K}
 p\big(F^{l,k}\mid U^{l,k};\,F^{l-1,\PA(k)},Z^{l-1,\PA(k)}\big)\,q\big(U^{l,k}\big),
\end{equation}
where \(q(U^{l,k})\!:=\!\mathcal{N}\big(U^{l,k}\mid m^{l,k},S^{l,k}\big)\).
Marginalising each $U^{l,k}$ yields per-node Gaussian marginals $q(F^{l,k})=\mathcal{N}(F^{l,k}\mid\tilde\mu^{l,k},\tilde\Sigma^{l,k})$ whose moments depend only on the parent states. Consequently the terminal-layer marginal $q(F^{L,k}_i)$ depends only on the ancestors of $\langle L,k\rangle$ and can be drawn recursively across depth by the reparameterisation,
\begin{equation}
\label{eq:dgpg-reparam}
\widehat{F}^{l,k}_i=\mu_{m^{l,k},Z^{l-1,\PA(k)}}\big(\widehat{F}^{l-1,\PA(k)}_i\big)
 +\epsilon^{l,k}_i\odot\sqrt{\Sigma_{S^{l,k},Z^{l-1,\PA(k)}}\big(\widehat{F}^{l-1,\PA(k)}_i,\widehat{F}^{l-1,\PA(k)}_i\big)},
\end{equation}
where \(\epsilon^{l,k}_i\sim\mathcal{N}(0,I_{d_l})\)
and $\mu$ and $\Sigma$ are the sparse variational predictive mean and covariance.
 
\paragraph{Evidence lower bound.}
With \eqref{eq:dgpg-joint}--\eqref{eq:dgpg-q}, the DGPG ELBO is
\begin{equation}
\label{eq:dgpg-elbo}
\mathcal{L}_{\mathrm{DGPG}}
=\sum_{n=1}^{N}\sum_{k=1}^{K}\mathbb{E}_{q(F^{L,k}_n)}\!\big[\log p(y_n^{k}\mid F^{L,k}_n)\big]
-\sum_{l=1}^{L}\sum_{k=1}^{K}\mathrm{KL}\big[\,q(U^{l,k})\,\big\|\,p(U^{l,k};Z^{l-1,\PA(k)})\,\big].
\end{equation}

\subsection{DGPG as a particular case of DAG-DGP}
\label{app:dgpg-embedding}

Proposition~\ref{prop:dgpg-dagdgp} shows that the depth-unrolled graph $\widetilde{G}$ is a layered DAG and that DGPG coincides with the DAG-DGP supported on it, under a concatenation fusion rule and observations available only at the terminal layer. We first define the depth-unrolling of the base graph and the corresponding notion of layered DAG. We then prove that the unrolling of any base graph is layered. Finally, we show that, under this specialization, the DGPG augmented joint and ELBO are recovered exactly as the corresponding DAG-DGP joint and DAG-VI objective.
Figure~\ref{fig:dgpg-three-panels} shows an example of this. Throughout this subsection, parent sets in the original DGPG base graph
\(G\) are denoted by \(\PA_G\), whereas parent sets in the depth-unrolled
DAG \(\widetilde G\) are denoted by \(\PA_{\widetilde G}\).

\begin{definition}[Depth-unrolling of the base graph]
\label{def:unrolled}
Given a DGPG model with base graph $G=\langle V,E\rangle$ and depth $L$, the \emph{depth-unrolling} of $G$ is the directed graph $\widetilde{G}=(\widetilde{V},\widetilde{E})$ with
\[
\widetilde{V}=\big\{(l,k):0\le l\le L,\ k\in V\big\},
\qquad
\widetilde{E}=\big\{\,(l-1,j)\to(l,k):\ 1\le l\le L,\ j\in\PA_G(k)\,\big\},
\]
so that
\[
\PA_{\widetilde G}\big((l,k)\big)=\{(l-1,j):j\in\PA_G(k)\},
\qquad l\ge 1.
\]
We call $\widetilde{\mathcal{R}}=\{(0,k):k\in V\}$ the roots, $\widetilde{\mathcal{U}}=\{(l,k):1\le l\le L,\ k\in V\}$ the non-roots, and $\mathcal{A}_l=\{(l,k):k\in V\}$ the $l$-th depth slice.
\end{definition}

\begin{definition}[Layered DAG]
\label{def:layered-dag}
A DAG $G=(V,E)$ is \emph{layered} if there exists a progressive antichain decomposition
\[
V=A_0\sqcup A_1\sqcup\cdots\sqcup A_L
\]
such that every edge connects consecutive antichains:
\[
E\subseteq \bigcup_{\ell=0}^{L-1} A_\ell\times A_{\ell+1}.
\]
\end{definition}

\begin{lemma}[The depth-unrolling of any DGPG base graph is a layered DAG]
\label{lem:layered}
For any DGPG base graph $G$ and depth $L$, the depth-unrolling $\widetilde{G}$ of Definition~\ref{def:unrolled} is a layered DAG.
\end{lemma}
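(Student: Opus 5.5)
The natural candidate decomposition is the depth slices \(\cA_l=\{(l,k):k\in V\}\) of Definition~\ref{def:unrolled}. I will check directly that they satisfy Definition~\ref{def:layered-dag}, in four steps: acyclicity of \(\widetilde G\), the partition property, the antichain property, and progressivity together with the consecutive-edge condition.

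\emph{Acyclicity and partition.} Define the depth map \(\delta:\widetilde V\to\{0,\dots,L\}\) by \(\delta((l,k))=l\). Every edge in \(\widetilde E\) has the form \((l-1,j)\to(l,k)\), so \(\delta\) increases by exactly one along each edge. Along a directed path of length \(r\), \(\delta\) therefore increases by \(r\). A directed cycle of positive length would return to its starting vertex with a strictly larger depth, which is impossible, so \(\widetilde G\) is a DAG. This is the case even when \(G\) itself has cycles or self-loops, since a self-loop \(j\in\PA_G(j)\) only produces edges \((l-1,j)\to(l,j)\) between distinct vertices. The slices \(\cA_0,\dots,\cA_L\) are the level sets of \(\delta\), hence they partition \(\widetilde V\). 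Each is non-empty provided \(V\neq\emptyset\).

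\emph{Antichains and consecutive edges.} If \(u\sqsubset v\) with \(u\neq v\), the path from \(u\) to \(v\) has positive length, so \(\delta(v)>\delta(u)\). Two distinct vertices with the same depth are therefore incomparable, and each \(\cA_l\) is an antichain. By the edge form above, \(\widetilde E\subseteq\bigcup_{l=0}^{L-1}\cA_l\times\cA_{l+1}\), which is the consecutive-edge condition.

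\emph{Progressivity.} This is the step where an assumption on \(G\) is needed. We must show \(\cA_{l}\subseteq\Anc(\cA_{l+1})\). A vertex \((l,j)\) is a strict ancestor of some vertex in \(\cA_{l+1}\) exactly when \(j\in\PA_G(k)\) for some \(k\), that is, when \(j\) has at least one child in \(G\). I will invoke the standing DGPG convention that every vertex feeds some vertex; in particular, graph signals in \citet{li2020} use self-loops so that each node also aggregates its own features, which gives \(j\in\PA_G(j)\). Under that convention \((l,j)\to(l+1,j)\) is an edge, and progressivity follows immediately.

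If a vertex with no out-edges is allowed, I will handle it the way Proposition~\ref{prop:canonical-antichains} handles such vertices. Such a vertex \((l,j)\) with \(l<L\) is a sink, and it can be moved to the last layer, or the decomposition can be taken as the longest-path layering of Proposition~\ref{prop:canonical-antichains}. That is the only delicate point in the argument, and I will state the self-loop or out-degree assumption explicitly rather than leave it implicit.
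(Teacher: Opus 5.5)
Your treatment of acyclicity, the antichain property and the consecutive-edge condition is the paper's argument: the depth index rises by exactly one along every edge of $\widetilde G$. You then go further on progressivity, and you are right to. The paper's proof simply declares the slices ``a progressive antichain decomposition'' and never checks $\cA_l\subseteq\Anc(\cA_{l+1})$.

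Your reduction is correct. For $l<L$, every strict descendant of $(l,j)$ in $\cA_{l+1}$ is reached by a single edge. Hence $(l,j)\in\Anc(\cA_{l+1})$ if and only if $j\in\PA_G(k)$ for some $k$. Under the hypothesis that every vertex of $G$ has at least one out-neighbour, your proof is complete. State the hypothesis in that form rather than as ``self-loops at every vertex''. The paper's own example in Figure~\ref{fig:dgpg-three-panels} has a self-loop only at $v_1$, yet every vertex there has an out-neighbour.

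Your fallback for sinks, however, is wrong and should be removed. Moving a sink $(l,j)$ with $1\le l<L$ to the last slice breaks the consecutive-edge condition for its in-edges $(l-1,i)\to(l,j)$. The longest-path layering of Proposition~\ref{prop:canonical-antichains} is progressive, but in general it does not place every edge between consecutive antichains.

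No choice of decomposition can repair this, because the lemma itself fails without the hypothesis. Take $V=\{1,2\}$ with $\PA_G(1)=\PA_G(2)=\{1\}$ and $L=2$. The edges of $\widetilde G$ are then $(0,1)\to(1,1)$, $(0,1)\to(1,2)$, $(1,1)\to(2,1)$ and $(1,1)\to(2,2)$.
\begin{itemize}
\item In any progressive decomposition $A_0,\dots,A_{h-1}$, a vertex with no strict descendants must lie in $A_{h-1}$, so $(1,2),(2,1)\in A_{h-1}$.
\item The consecutive-edge condition then forces both $(0,1)$ and $(1,1)$ into $A_{h-2}$.
\item This is impossible, because $(0,1)\to(1,1)$ is an edge.
\end{itemize}
So $\widetilde G$ is not layered. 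The out-degree hypothesis is needed for the statement itself, not merely for your proof.
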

\begin{proof}
By construction, every edge of $\widetilde{G}$ takes the form
\[
(l-1,j)\to(l,k),
\qquad 1\le l\le L,\quad j\in\PA(k),
\]
and therefore increases the depth index by one. Hence every directed path strictly increases the depth index, so $\widetilde{G}$ is acyclic and no two vertices in the same slice $\mathcal{A}_l$ are comparable. Thus each $\mathcal{A}_l$ is an antichain.

Moreover, the antichains satisfy
\[
\widetilde{V}
=
\mathcal{A}_0\sqcup\mathcal{A}_1\sqcup\cdots\sqcup\mathcal{A}_L,
\]
and, again by construction,
\[
\widetilde{E}
\subseteq
\bigcup_{l=1}^{L}\mathcal{A}_{l-1}\times\mathcal{A}_l.
\]
Therefore $\mathcal{A}_0,\ldots,\mathcal{A}_L$ form a progressive antichain decomposition satisfying the consecutive-layer condition. Hence $\widetilde{G}$ is a layered DAG.
\end{proof}

\begin{proposition}[DGPG is a DAG-DGP and DAG-VI recovers its ELBO]
\label{prop:dgpg-dagdgp}
For any DGPG base graph $G$, the DGPG model of \citep{li2020} is the DAG-DGP on the layered DAG $\widetilde{G}$ of Definition~\ref{def:unrolled}, with deterministic roots $F^{0}=X$, observations supported only on the terminal antichain-layer $\mathcal{A}_L$, and concatenation fusion at each non-root node $w=(l,k)$, meaning that the parent tuple is treated as a single concatenated input and the local kernel is obtained by applying a standard kernel to this concatenated parent state.
Under this identification, the DGPG joint \eqref{eq:dgpg-joint} is the augmented DAG-DGP joint, and its objective equals the corresponding DAG-VI bound, that is,
\( 
\mathcal{L}_{\mathrm{DGPG}}=\mathcal{L}_{\mathrm{VI}}.
\)
\end{proposition}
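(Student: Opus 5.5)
The plan is to build the identification explicitly and then check that the joint densities and ELBOs agree term by term. By Lemma~\ref{lem:layered}, $\widetilde G$ is a layered DAG with roots $\widetilde{\mathcal R}=\mathcal A_0$. We declare each root $(0,k)$ to carry the design matrix $\bX_{(0,k)}:=X^{0,k}$, so that $\bF_{(0,k)}=X^{0,k}$. For each non-root $w=(l,k)$ we set $d_w=d_l$ and $f_w:=$ the DGPG module at vertex $k$ of layer $l$. Its kernel $K_w$ is the DGPG kernel applied to the concatenation $(u_{(l-1,j)})_{j\in\PA_G(k)}$, which by Definition~\ref{def:unrolled} is exactly the tuple $\{F_p\}_{p\in\PA_{\widetilde G}(w)}$; this is the concatenation fusion rule $\Phi_w$ of Eq.~\eqref{eq:fusion-kernel}. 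The observation masks are $\cO_w=[N]\times[d_{\mathrm{out}}]$ for $w\in\mathcal A_L$ and $\cO_w=\emptyset$ otherwise, with $p_w(\bY_w\mid\bF_w,\cO_w)=\prod_n p(y_n^k\mid F^{L,k}_n)$. With these choices the DAG-DGP recursion~\eqref{eq:dagdgp-recursion} reads $F^{l,k}_i=f_{(l,k)}(F^{l-1,\PA_G(k)}_i)$, which is the DGPG generative recursion.

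Next we augment each non-root with inducing inputs $\bZ_w:=Z^{l-1,\PA_G(k)}$ and values $\bU_w:=U^{l,k}$. Because the modules are a priori independent, the augmented DAG-DGP joint~\eqref{eq:app-augmented-joint} is a product over $w\in\widetilde{\mathcal U}$ of $p_0(\bU_w)\,p_0(\bF_w\mid\{\bF_p\}_{p\in\PA(w)},\bU_w)\,p_w(\bY_w\mid\bF_w,\cO_w)$. We then reindex this product over $(l,k)$. The prior and conditional factors coincide with those in Eq.~\eqref{eq:dgpg-joint}, since the GP conditional depends only on the kernel, the inducing locations and the parent inputs, and all three have been matched. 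The likelihood factors equal one except on $\mathcal A_L$, where they reproduce $\prod_{n,k}p(y_n^k\mid F^{L,k}_n)$. So the two joints are equal.

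For the objective, we choose the mean-field family $q_{\mathrm{VI}}(\bU)=\prod_w q_w(\bU_w)$ with $q_{(l,k)}:=\mathcal N(m^{l,k},S^{l,k})$. Equation~\eqref{eq:app-variational-family} then coincides with Eq.~\eqref{eq:dgpg-q}, and substituting into~\eqref{eq:app-mf-elbo} gives two pieces. The KL sum over $\widetilde{\mathcal U}$ becomes $\sum_{l,k}\mathrm{KL}[q(U^{l,k})\,\|\,p(U^{l,k};Z^{l-1,\PA(k)})]$. The expected log-likelihood is restricted to $w$ with $\cO_w\neq\emptyset$, i.e.\ $\mathcal A_L$; by~\eqref{eq:app-casewise-likelihood} it splits over cases into $\sum_{n,k}\mathbb E_{q(F^{L,k}_n)}[\log p(y^k_n\mid F^{L,k}_n)]$. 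Hence $\mathcal L_{\mathrm{VI}}=\mathcal L_{\mathrm{DGPG}}$.

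The step I expect to be the main obstacle, though it is mostly bookkeeping, is checking that the marginals $q(F^{L,k}_n)$ used in the two ELBOs are the same law. In DAG-VI this marginal is induced by ancestral propagation over $\widetilde G$ (Algorithm~\ref{alg:app-dag-vi-sampling}, Eq.~\eqref{eq:app-dag-vi-pointwise}). In DGPG it is induced by the recursive reparameterisation~\eqref{eq:dgpg-reparam}. To show they agree, I would argue that under the factorised $q$, integrating out each $\bU_w$ yields $\mathcal N(a^\top m_w,\ r+a^\top S_w a)$ given the parent samples. This is exactly the sparse variational predictive $(\mu,\Sigma)$ appearing in~\eqref{eq:dgpg-reparam}, evaluated at $\widehat F^{l-1,\PA(k)}_i$. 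Taking topological order on $\widetilde G$ to be layer order, the two samplers then define identical Markov kernels layer by layer. A secondary point to address is the self-loops allowed in $G$: these unroll to edges $(l-1,k)\to(l,k)$, so acyclicity is unaffected, which Lemma~\ref{lem:layered} already guarantees.
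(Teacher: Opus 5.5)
Your proposal is correct and follows the same route as the paper. Like the paper, you identify the roots, modules, inducing variables, parent tuples and observation masks on $\widetilde{G}$, match the augmented joint factor by factor, and specialise the inducing posterior to the mean-field family to recover $\mathcal{L}_{\mathrm{DGPG}}=\mathcal{L}_{\mathrm{VI}}$. Your extra check that the per-case marginals $q(F^{L,k}_n)$ agree — each independent $\bU_w$ is integrated out given the parent samples, giving $\mathcal{N}(a^\top m_w,\, r+a^\top S_w a)$ — spells out what the paper asserts in a single line about the two ancestral sampling recursions being the same.
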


\begin{proof}
By Lemma~\ref{lem:layered}, $\widetilde{G}$ is a layered DAG, so the DAG-DGP construction on it is well defined. We identify each non-root node $w=(l,k)$ of $\widetilde{G}$ with the DGPG GP module at layer $l$ and graph vertex $v_k$. Under this identification,
\[
U_w=U^{l,k},
\qquad
Z_w=Z^{l-1,\PA_G(k)},
\qquad
\{F_p:p\in\PA_{\widetilde G}(w)\}=F^{l-1,\PA_G(k)}.
\]
The root nodes are deterministic and given by $F_{(0,k)}=X^{0,k}$, so that $F^0=X$. For every non-root node $(l,k)$, the concatenation fusion rule makes the DAG-DGP parent input the DGPG input $F^{l-1,\PA_G(k)}$.

The augmented DAG-DGP joint on $\widetilde{G}$ is
\[
p(\mathcal{D},F,U)
=
\prod_{w\in\widetilde{\mathcal{U}}}
p_0(U_w)\,
p_0\big(F_w\mid\{F_p\}_{p\in\PA_{\widetilde G}(w)},U_w\big)\,
p_w(Y_w\mid F_w,O_w),
\]
where $p_w\equiv 1$ whenever $O_w=\emptyset$. We set $O_{(l,k)}=\emptyset$ for $l<L$ and
\[
p_{(L,k)}(Y_{(L,k)}\mid F_{(L,k)})
=
\prod_{n=1}^{N}p(y_n^k\mid F^{L,k}_n)
\]
at the terminal layer. Hence, substituting $w=(l,k)$ in the preceding augmented DAG-DGP joint factorisation gives
\[
p_0(U_{(l,k)})
=
p\big(U^{l,k};Z^{l-1,\PA_G(k)}\big),
\]
and
\[
p_0\big(F_{(l,k)}
\mid
\{F_p\}_{p\in\PA_{\widetilde G}((l,k))},U_{(l,k)}
\big)
=
p\big(
F^{l,k}
\mid
U^{l,k};
F^{l-1,\PA_G(k)},Z^{l-1,\PA_G(k)}
\big).
\]
Therefore
\[
p(\mathcal{D},F,U)
=
\prod_{l=1}^{L}\prod_{k=1}^{K}
p\big(U^{l,k};Z^{l-1,\PA_G(k)}\big)
p\big(
F^{l,k}
\mid
U^{l,k};
F^{l-1,\PA_G(k)},Z^{l-1,\PA_G(k)}
\big)
\prod_{k=1}^{K}\prod_{n=1}^{N}
p(y_n^k\mid F^{L,k}_n),
\]
which is the DGPG joint in \eqref{eq:dgpg-joint}, up to reordering of factors.

For the variational family, the DAG-VI construction (here on $\widetilde{G}$) gives
\[
q(F,U)
=
q(U)
\prod_{w\in\widetilde{\mathcal{U}}}
p_0\big(F_w\mid\{F_p\}_{p\in\PA_{\widetilde G}(w)},U_w\big).
\]
Choosing the mean-field inducing posterior (DAG-VI)
\[
q(U)
=
\prod_{w\in\widetilde{\mathcal{U}}}q_w(U_w),
\qquad
q_w(U_w)
=
\mathcal{N}(U^{l,k}\mid m^{l,k},S^{l,k}),
\]
recovers the DGPG variational family \eqref{eq:dgpg-q}. The ancestral sampling recursion is also the same, since the DAG-DGP parent input at $(l,k)$ is precisely the concatenated DGPG state $F^{l-1,\PA_G(k)}$.

It remains to match the objectives. 
Starting from the DGPG ELBO, we have
\[
\begin{aligned}
\mathcal{L}_{\mathrm{DGPG}}
&=
\sum_{k=1}^{K}\sum_{n=1}^{N}
\mathbb{E}_{q(F^{L,k}_n)}
\big[
\log p(y_n^k\mid F^{L,k}_n)
\big]
-
\sum_{l=1}^{L}\sum_{k=1}^{K}
\mathrm{KL}
\big[
q(U^{l,k})
\,\|\,
p(U^{l,k};Z^{l-1,\PA_G(k)})
\big]
\\[2pt]
&{=}
\sum_{k=1}^{K}\sum_{n=1}^{N}
\mathbb{E}_{q(F^{L,k}_n)}
\big[
\log p(y_n^k\mid F^{L,k}_n)
\big]
-
\sum_{w\in\widetilde{\mathcal{U}}}
\mathrm{KL}
\big(
q_w(U_w)
\,\|\,
p_0(U_w)
\big)
\\[2pt]
&{=}
\sum_{k=1}^{K}
\mathbb{E}_{q_{\mathrm{VI}}(F_{(L,k)})}
\big[
\log p_{(L,k)}(Y_{(L,k)}\mid F_{(L,k)},O_{(L,k)})
\big]
-
\sum_{w\in\widetilde{\mathcal{U}}}
\mathrm{KL}
\big(
q_w(U_w)
\,\|\,
p_0(U_w)
\big)
\\[2pt]
&{=}
\sum_{w:\,O_w\neq\emptyset}
\mathbb{E}_{q_{\mathrm{VI}}(F_w)}
\big[
\log p_w(Y_w\mid F_w,O_w)
\big]
-
\sum_{w\in\widetilde{\mathcal{U}}}
\mathrm{KL}
\big(
q_w(U_w)
\,\|\,
p_0(U_w)
\big)
\\[2pt]
&=
\mathcal{L}_{\mathrm{VI}}.
\end{aligned}
\]
\end{proof}

\begin{remark}[DGPG is a strict subclass of DAG-DGP]
\label{rmk:dgpg-strict-subclass}
Let $\mathfrak{M}_{\mathrm{DGPG}}$ denote the DGPG model class. Let $\mathfrak{M}_{\mathrm{DAG\text{-}DGP}}$ denote the general DAG-DGP model class as presented in this paper. Proposition~\ref{prop:dgpg-dagdgp} shows that every DGPG specification is a DAG-DGP specification, i.e.
$\mathfrak{M}_{\mathrm{DGPG}}\subseteq\mathfrak{M}_{\mathrm{DAG\text{-}DGP}}$.
The inclusion is strict because the embedding fixes several choices that are free in the general DAG-DGP formulation. In particular, DGPG fixes the fusion rule to concatenation, whereas DAG-DGPs allow heterogeneous additive, product, or domain-specific fusion rules; and it fixes the observation pattern to the terminal antichain, whereas DAG-DGPs allow observations at internal nodes. Both degrees of freedom are used in the multi-fidelity and protein-signalling models of the main text.

The simplest separation, however, relates to the graphical structure. By Lemma~\ref{lem:layered}, every depth-unrolling produces a layered DAG. Hence any non-layered DAG is outside the DGPG domain. For example, consider the three-nodes DAG $H$ with edges $a\to b$, $b\to c$, and the skip edge $a\to c$.
This DAG is not layered. Indeed, recall that in a layered DAG every edge must connect two consecutive antichains. The path $a\to b\to c$ would require $a,b,c$ to lie in three consecutive antichains, whereas the edge $a\to c$ would require $a$ and $c$ to lie in consecutive antichains. These two requirements are incompatible.
Nevertheless, a DAG-DGP is well defined directly on $H$ for any admissible kernels and fusion rule. 
\end{remark}

\begin{figure}[t]
\centering
\begin{tikzpicture}[
  pn/.style   ={draw, circle, minimum size=5.2mm, inner sep=0pt, fill=white, line width=0.7pt},
  gnb/.style  ={draw, circle, minimum size=5.2mm, inner sep=0pt, fill=white, line width=0.7pt},
  gns/.style  ={draw, circle, minimum size=1.8mm, inner sep=0pt, fill=white, line width=0.35pt},
  root/.style ={draw, diamond, aspect=1.25, minimum size=8.5mm, inner sep=0.5pt, line width=0.7pt},
  lat/.style  ={draw, circle, minimum size=7.6mm, inner sep=0pt, line width=0.7pt},
  obs/.style  ={draw, rectangle, minimum size=7.6mm, inner sep=0pt, line width=0.7pt},
  % slightly smaller layer containers for panel (b)
  broot/.style={draw, diamond, aspect=1.25, minimum size=10.2mm, inner sep=0.5pt, line width=0.7pt},
  blat/.style ={draw, circle, minimum size=10.2mm, inner sep=0pt, line width=0.7pt},
  bobs/.style ={draw, rectangle, minimum size=10.2mm, inner sep=0pt, line width=0.7pt},
  ge/.style   ={-{Stealth[length=1.5mm]}, line width=0.7pt},
  sl/.style   ={-{Stealth[length=1.5mm]}, line width=0.9pt, color=cb_orange!88!black},
  fl/.style   ={-{Stealth[length=2.4mm]}, line width=1.2pt},
  inlab/.style={font=\scriptsize, inner sep=0pt},
  Gsmall/.pic={
    \node[gns] (a) at (0,0.16){}; 
    \node[gns] (b) at (-0.22,-0.09){}; 
    \node[gns] (c) at (0.22,-0.09){};
    \draw[-{Stealth[length=0.8mm]}, line width=0.35pt] (a)--(b);
    \draw[{Stealth[length=0.8mm]}-{Stealth[length=0.8mm]}, line width=0.35pt] (b)--(c);
    \draw[-{Stealth[length=0.8mm]}, line width=0.35pt, color=cb_orange!88!black]
      (a) to[out=120,in=60,looseness=14] (a);
  },
]

  % =========================================================
  % Panel (a): G
  % =========================================================

  \node[gnb,label={[font=\scriptsize]above:$v_1$}] (Ga) at (2.1,0.35){};
  \node[gnb,label={[font=\scriptsize]below:$v_2$}] (Gb) at (0.8,-0.95){};
  \node[gnb,label={[font=\scriptsize]below:$v_3$}] (Gc) at (3.4,-0.95){};

  \draw[ge] (Ga)--(Gb);
  \draw[{Stealth[length=1.5mm]}-{Stealth[length=1.5mm]}, line width=0.7pt] (Gb)--(Gc);
  \draw[sl] (Ga) to[out=120,in=60,looseness=8] (Ga);

  \node[font=\small\bfseries] at (2.1,-2.95){(a)\ $G$};

  % =========================================================
  % Panel (b): DGPG
  % =========================================================

  \node[broot] (F0) at (5.2,1.20){};
  \node[blat]  (F1) at (5.2,-0.50){};
  \node[bobs]  (F2) at (5.2,-2.20){};

  \pic (s0) at (5.2,1.20)  {Gsmall};
  \pic (s1) at (5.2,-0.50) {Gsmall};
  \pic (s2) at (5.2,-2.20) {Gsmall};

  \draw[fl] (F0)--(F1) node[midway,left,font=\footnotesize]{$f^1$};
  \draw[fl] (F1)--(F2) node[midway,left,font=\footnotesize]{$f^2$};

  \node[font=\small\bfseries] at (5.2,-2.95){(b)\ DGPG};

  % =========================================================
  % Panel (c): DAG-DGP on \widetilde{G}
  % =========================================================

  \node[root, inlab] (R1) at (8.3,1.0) {$v_1^0$};
  \node[root, inlab] (R2) at (9.7,1.0) {$v_2^0$};
  \node[root, inlab] (R3) at (11.1,1.0) {$v_3^0$};

  \node[lat, inlab]  (L1) at (8.3,-0.5) {$v_1^1$};
  \node[lat, inlab]  (L2) at (9.7,-0.5) {$v_2^1$};
  \node[lat, inlab]  (L3) at (11.1,-0.5) {$v_3^1$};

  \node[obs, inlab]  (O1) at (8.3,-2.0) {$v_1^2$};
  \node[obs, inlab]  (O2) at (9.7,-2.0) {$v_2^2$};
  \node[obs, inlab]  (O3) at (11.1,-2.0) {$v_3^2$};

  \draw[sl] (R1)--(L1);
  \draw[sl] (L1)--(O1);

  \draw[ge] (R1)--(L2);
  \draw[ge] (R3)--(L2);
  \draw[ge] (R2)--(L3);

  \draw[ge] (L1)--(O2);
  \draw[ge] (L3)--(O2);
  \draw[ge] (L2)--(O3);

  \node[font=\small\bfseries] at (9.7,-2.95){(c)\ DAG-DGP on $\widetilde{G}$};

\end{tikzpicture}
\caption{Three views built from one base graph. \textbf{(a)} The base graph $G$ on $K=3$ vertices, with a single self-loop on $v_1$ (highlighted) and a feedback pair $v_2\!\leftrightarrow\!v_3$, hence cyclic. \textbf{(b)} DGPG: a standard chain DGP whose state at every layer is a signal on $G$---shown as an identical small copy of $G$ inside each layer node---and whose layer map $f^\ell$ is wired by $G$. \textbf{(c)} The DAG-DGP defined on the depth-unrolled graph $\widetilde{G}$: each node $v_k^\ell$ denotes the copy of the original node $v_k$ at unrolled layer $\ell$. In both \textbf{(b)} and \textbf{(c)}, diamonds denote root/input nodes or layers, circles denote latent nodes or layers, and squares denote observed nodes or layers.}
\label{fig:dgpg-three-panels}
\end{figure}

\newpage
\section{Experiments}\label{app:experiments}

Experiments were run in \texttt{float64} on NVIDIA TITAN RTX and GeForce RTX
2060 SUPER GPUs. To indicate the computational scale of the real-data
experiments, we report representative wall-clock training times. For Sachs, on
the extrapolation task and using the current \(10000\)-step joint-training
protocol, a single split required \(29.2 \pm 0.3\) minutes for DAG-VI and
\(33.0 \pm 0.6\) minutes for DAG-SVI on an NVIDIA TITAN RTX, averaged over the
completed projection splits. For the published HeavyIon split, averaged over
the five reported seeds, end-to-end runs required \(6.3 \pm 0.7\) minutes for
DAG-VI and \(8.8 \pm 1.0\) minutes for DAG-SVI. 

We now provide more details on our experiments.

\subsection{Branching-tree ELBO scaling}\label{app:branching-tree}

\paragraph{Experimental design.}
Figure~\ref{fig:branching-elbo-scaling} benchmarks the cost of ELBO evaluation
on a synthetic binary branching-tree DAG, used to compare DAG-VI with the dense
and sparse structured backends of DAG-SVI. The DAG has one two-dimensional root
input \(X\) and depth \(d\in\{2,\dots,9\}\). Every non-root node has exactly one
parent and splits into two children, so the graph contains
\(2^{d+1}-2\) latent non-root nodes in total, with the \(2^d\) leaves observed.
For each depth, we generate \(20\) training cases. Root inputs are sampled
uniformly from \([-2,2]^2\). Each latent node is then generated from a
one-parent nonlinear GP draw using random Fourier features with variance \(1\),
lengthscale \(1.2\) at the first latent layer and \(0.9\) thereafter, followed
by a mild Gaussian perturbation and a \(\tanh\) squashing nonlinearity. The
observed leaves are obtained by adding independent Gaussian noise with standard
deviation \(0.05\).

\paragraph{Models and timing protocol.}
All methods use the same nodewise GP architecture, with an ARD RBF kernel at
each latent node and \(20\) inducing points per latent node, initialized from
training parent-state inputs. We compare three ELBO evaluators: DAG-VI,
DAG-SVI with dense marginalization, and DAG-SVI with sparse marginalization.
To isolate the marginalization cost only, the dense and sparse structured
models are created from the same initialized parameter state; they differ only
in the backend used to evaluate the ELBO. For each depth, we evaluate the
full-batch ELBO on all \(20\) training cases using a single Monte Carlo sample,
and record wall-clock time on GPU. 

\paragraph{Relation to the main implementation claims.}
This benchmark is intended to validate that the sparse backend can substantially reduce ELBO-evaluation cost relative to
dense structured inference as the inducing dimension grows, while DAG-VI is the most efficient algorithm that we proposed, being its variational structure limited. The branching-tree
topology gives a sparsity structure that the algorithm uses.

\clearpage
\subsection{Theory Validation}\label{app:theory-validation}

\paragraph{Simulation design.}
Figure~\ref{fig:main-theory-validation} reports finite-depth Monte Carlo
checks of the prior and posterior non-collapse statements in
Section~\ref{sec:theory}. Panels~(a)--(c) are generated under the DAG-DGP prior
of Eq.~\eqref{eq:dagdgp-recursion}, by ancestral sampling on the simulated
DAG in topological order. Rather than sampling whole GP paths, we sample the
finite two-case Gaussian conditionals induced at each node by its realised
parent states, which is the finite-dimensional prior construction studied in
Section~\ref{sec:theory} and formalised in
Appendix~\ref{app:standing-probabilistic-notation}. Throughout, we refer to
the quantities defined there. Panel~(d) is generated from the filtering and
forward-predictive laws used in Section~\ref{sec:posterior-refresh} and
Appendix~\ref{app:posterior-refresh}. For simplicity, all experiments use
scalar node outputs and squared-exponential kernels, with the hyperparameters
stated below.

\paragraph{Panel (a): repeated separating nodes.}
Panel~(a) validates Theorem~\ref{thm:separating-noncollapse} on a layered
DAG with depth $L=60$ and antichain width $W=16$. We induce separation through
root connections. The two cases have a single deterministic root coordinate,
fixed at $r^{(a)}=0$ and $r^{(b)}=0.3$. For each Monte Carlo realisation, every
non-root node selects one parent uniformly from the preceding antichain.
Non-separating nodes use a squared-exponential kernel on this parent
coordinate, with variance $1$ and lengthscale $1$. Separating nodes use the
same parent component, additively fused with a squared-exponential root
component with the same hyperparameters. This means that the simulated
separating nodes are precisely instances of the additive root-retention
mechanism in Remark~\ref{rem:additive-root-separation}, with the associated
separation constant obtained from
Eq.~\eqref{eq:additive-root-gamma0-general}. The one-step antichain probability
used in the theorem curve is therefore the $p_\varepsilon$ of
Eq.~\eqref{eq:p-epsilon-def}.

For each $s\in\{0,\ldots,10\}$, exactly $s$ separating nodes are chosen in each
antichain. The case $s=0$ is included only as a no-separation baseline. For
each $s$, we simulate $1500$ independent prior realisations and compute the
finite-window version of the frequency appearing in
Theorem~\ref{thm:separating-noncollapse},
\begin{equation}
\hat\rho_\varepsilon
=
\frac{1}{L-L_0}
\sum_{\ell=L_0}^{L-1}
\mathbbm{1}\{M_\ell>\varepsilon\},
\qquad
L_0=30,
\qquad
\varepsilon=0.30 .
\end{equation}
The plotted points and error bars are the empirical mean and standard deviation
of $\hat\rho_\varepsilon$ across realisations. The curve is the lower bound in
Eq.~\eqref{thm:separating-noncollapse}, evaluated with the above
root-retaining separation constant. 

\paragraph{Panels (b) and (c): topology effects.}
Panel~(b) validates the indegree recursion of
Proposition~\ref{prop:indegree} in the layered radial block of
Assumption~\ref{ass:laplace-radial-degree}. We use depth $50$, width $128$,
kernel variance $1$, parent lengthscale $2.0$, root lengthscale $0.8$, and
root gap $1.0$. The first antichain is initialised by applying the same
two-case Gaussian prior rule to the two fixed root inputs; hence the initial
contrast variance is determined by the root gap and the root lengthscale. All
later node values are propagated using the two-point Gaussian law in
Lemma~\ref{lem:single-radial-two-point}. For a fixed indegree
$k\in\{1,2,4,8\}$, every node receives $k$ parents from the previous antichain.
To keep all nodes statistically symmetric and avoid boundary effects, we assign
these parents by taking $k$ consecutive nodes and wrapping around at the edge
of the layer. This is only a convenient regular layered DAG used to isolate the
effect of indegree. Under product fusion, the product of squared-exponential
parent correlations is the radial squared-exponential kernel on the
concatenated parent state, so the simulation is the squared-exponential special
case of Assumption~\ref{ass:laplace-radial-degree}. We estimate $C_\ell$ by
averaging the simulated squared contrasts over nodes and over $2400$ Monte
Carlo draws. The curves illustrate the monotonicity in $k$ in
Proposition~\ref{prop:indegree} and the corresponding contraction discussion
in Corollary~\ref{cor:indegree-local}: larger indegree broadens the recurrence
relative to the chain case.

Panel~(c) validates the outdegree mechanism in
Proposition~\ref{prop:outdegree-branching}. We simulate the designated
$b$-ary branching subgraph of Appendix~\ref{app:outdegree-branching}, with
$b\in\{1,2,3,5\}$, maximum depth $9$, threshold $t=0.45$, variance $1$,
lengthscale $1.0$, and root gap $1.0$. Children are conditionally independent
given their parent contrast and are sampled using
Lemma~\ref{lem:single-radial-two-point}. For each branching factor and each
$L\in\{3,6,9\}$, the plotted value estimates the finite-depth survival event
corresponding to Proposition~\ref{prop:outdegree-branching}, namely that the
antichain maximum remains above threshold at every depth up to $L$. The
estimates are based on $700$ Monte Carlo draws. This isolates the claim that
larger outdegree creates more parallel opportunities for a threshold-size
contrast to survive.

\paragraph{Panel (d): intermediate observations as stochastic skip connections.}
Panel~(d) validates Theorem~\ref{thm:stochastic-skip-main} using the Gaussian
refresh setting of Eq.~\eqref{eq:main-gaussian-refresh} and
Corollary~\ref{cor:gaussian-refresh-main}. At each observed source, the
two-case source contrast is sampled from the Gaussian filtering law determined
by source contrast variance $1$, observation-noise variance $0.20$, observed
gap $1.0$, and threshold $\varepsilon=0.35$. The corresponding source strength
is the $q_j$ appearing in Theorem~\ref{thm:stochastic-skip-main}. Downstream
routes use squared-exponential route kernels with variance $0.60$ and
lengthscale $0.55$; their retention factors are computed from
Definition~\ref{def:additive-route-retention-coefficients} and
Proposition~\ref{prop:additive-route-retention}.

We compare two geometries. The first is a chain, giving the $s=1$ case of
Theorem~\ref{thm:stochastic-skip-main}. The second is a disjoint-route DAG, as
in the geometry of Fig.~\ref{fig:dag-precision-overview}, with two observed
sources and two pairwise interior-disjoint admissible routes, in the sense of
Definitions~\ref{def:admissible-route} and~\ref{def:disjoint-routes}. In this
case, the theorem bound is computed as one minus the probability that both
routes fail. Thus, each refreshed route contributes its own success
probability, and the two contributions combine through the ``at least one route
succeeds'' term in Eq.~\eqref{eq:main-stochastic-skip}. The empirical curves
estimate the forward-predictive probability
$\Pi_{\ell_0\to\ell_0+h}(M_{\ell_0+h}>\varepsilon)$ for $h=0,\ldots,6$, using
$50000$ posterior-predictive draws.

In the disjoint-route simulation, each target receives its main parent along
the designated route and a weak additive parent from the opposite refreshed
source, with variance $0.02$ and lengthscale $0.55$. This makes the simulated
target a simple multi-parent DAG node, while the theorem bound is evaluated
using only the designated disjoint routes. Since the extra parent enters
additively, it only contributes additional non-negative contrast variance and
is not needed to certify route retention. The plotted theorem curve is
therefore the lower bound for the designated routes. The panel validates both
parts of the stochastic-skip statement: noisy intermediate observations refresh
source contrasts, and multiple disjoint refreshed routes increase the
downstream probability that at least one contrast survives.
\begin{figure}[ht]
    \centering
    \includegraphics[width=\linewidth]{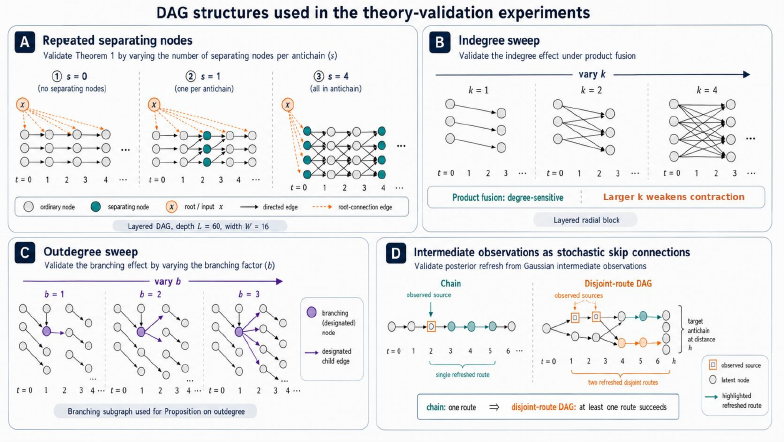}
    \caption{Illustration of the prior-theory experiments.}
    \label{fig:prior-theory-illustration}
\end{figure}
\clearpage
\subsection{Latent-Collider experiment}\label{cu_ea}

\paragraph{Data-generating process.}
We consider the collider DAG
\[
x_1 \to w_1 \to w_3 \leftarrow w_2 \leftarrow x_2,
\]
where only the child node \(C\) is observed. The root inputs \(x_1\) and
\(x_2\) are placed on fixed asymmetric one-dimensional grids with
\(n_1=15\) and \(n_2=11\) points, respectively, yielding
\(15 \times 11 = 165\) observed locations for \(C\). The latent parent
functions \(w_1(x_1)\) and \(w_2(x_2)\) are sampled independently from
zero-mean Gaussian processes with RBF kernels of lengthscale \(0.4\) and
variance \(1\). The child latent surface is then sampled from a GP defined
on the parent pair \((w_1,w_2)\), with fusion kernel
\[
K_C\big((a,b),(a',b')\big)
=
k_{\mathrm{RBF}}(a+b,a'+b'),
\]
using child lengthscale \(0.60\) and variance \(1\). Observations are
generated as
\[
y(x_1,x_2) = w_3(x_1,x_2) + \varepsilon, \qquad
\varepsilon \sim \mathcal N(0,0.015^2).
\]
We deliberately use a small observation-noise level because the experiment is
designed to test explaining-away behaviour. Indeed, as suggested by the
linear-Gaussian covariance of the child in Eq.~\eqref{eq:app-linear-gaussian-collider-covariance},
lower observation noise is expected to induce stronger posterior dependence
between the latent parents. This construction induces a genuine
explaining-away geometry because multiple parent configurations \((w_1,w_2)\)
can produce nearly the same child value through the fused direction \(w_1+w_2\).

\paragraph{Model training and evaluation.}
We compare DAG-VI and DAG-SVI on the same DAG, kernels, inducing locations,
and optimisation setup; only the variational family differs. Both methods use
\(15\) inducing points for each parent node and a \(7 \times 7\) inducing grid
for the child node in the latent \((A,B)\) plane. Kernel hyperparameters,
the observation-noise variance, and inducing locations are fixed throughout
training, such that the comparison isolates the effect of the variational family at a fixed training budget.
Training is full-batch for \(3000\) gradient steps with learning rate \(0.01\)
and \(4\) Monte Carlo samples per ELBO estimate. Summary statistics are
aggregated over \(3\) random seeds. 

\paragraph{Visualisation of explaining-away geometry.}
The left panel of Fig.~\ref{fig:explaining-away-cu} is designed to isolate
\emph{posterior geometry} from posterior scale, already represented in Fig.~\ref{fig:explaining-away-collider} of the main paper. We therefore select a
small set of representative interior observation locations on the child surface,
chosen to span the input domain and to cover a range of local \(w_3\)-level-set
orientations while avoiding boundary cases where the geometry is less
informative. For each selected case \((x_{1,i},x_{2,j})\), we extract posterior
draws of the corresponding parent states \((w_1,w_2)\). To display these local
posterior clouds in the original input space, we map latent perturbations back
to input space using the local sensitivities of the true parent
functions,
\[
\Delta x_1^{(s)}
=
\frac{w_{1,ij}^{(s)}-\bar w_{1,ij}}
{\partial w_1(x_{1,i})/\partial x_1},
\qquad
\Delta x_2^{(s)}
=
\frac{w_{2,ij}^{(s)}-\bar w_{2,ij}}
{\partial w_2(x_{2,j})/\partial x_2},
\]
where \(\bar w_{1,ij}\) and \(\bar w_{2,ij}\) denote the posterior means at the
selected case. Each cloud is then recentered at its corresponding observation
location and rescaled isotropically for display. This removes absolute scale
differences between DAG-VI and DAG-SVI, so the panel emphasises the
\emph{orientation} of posterior uncertainty relative to the true \(w_3\)-level
set passing through that observation. The goal is to visualise whether the posterior
captures the explaining-away direction: DAG-SVI aligns local uncertainty with
the child level set, whereas DAG-VI remains close to isotropic.

\paragraph{Visualisation of compositional uncertainty.}
The right panel of Fig.~\ref{fig:explaining-away-cu} complements the
geometry plot by restoring posterior scale on the natural domains of the
parent functions. For each method, we draw posterior samples of the latent
parent functions \(w_1(x_1)\) and \(w_2(x_2)\), and summarise them by pointwise
credible bands and posterior means. Unlike the explaining-away panel, no
recentering or display rescaling is applied here, so the figure reflects the
actual posterior spread learned by each variational family. This panel is
therefore intended to visualise \emph{compositional uncertainty}: not having observations, the latent functions for \(w_1\) and \(w_2\) should preserve posterior uncertainty, as explained by \cite{ustyuzhaninov2020} for the standard DGPs. DAG-SVI preserves substantially
more posterior variance over \(w_1\) and \(w_2\), whereas DAG-VI collapses toward
an almost deterministic decomposition.

\begin{figure}[t]
    \centering

    \begin{minipage}[t]{0.48\linewidth}
        \centering
        \includegraphics[width=\linewidth]{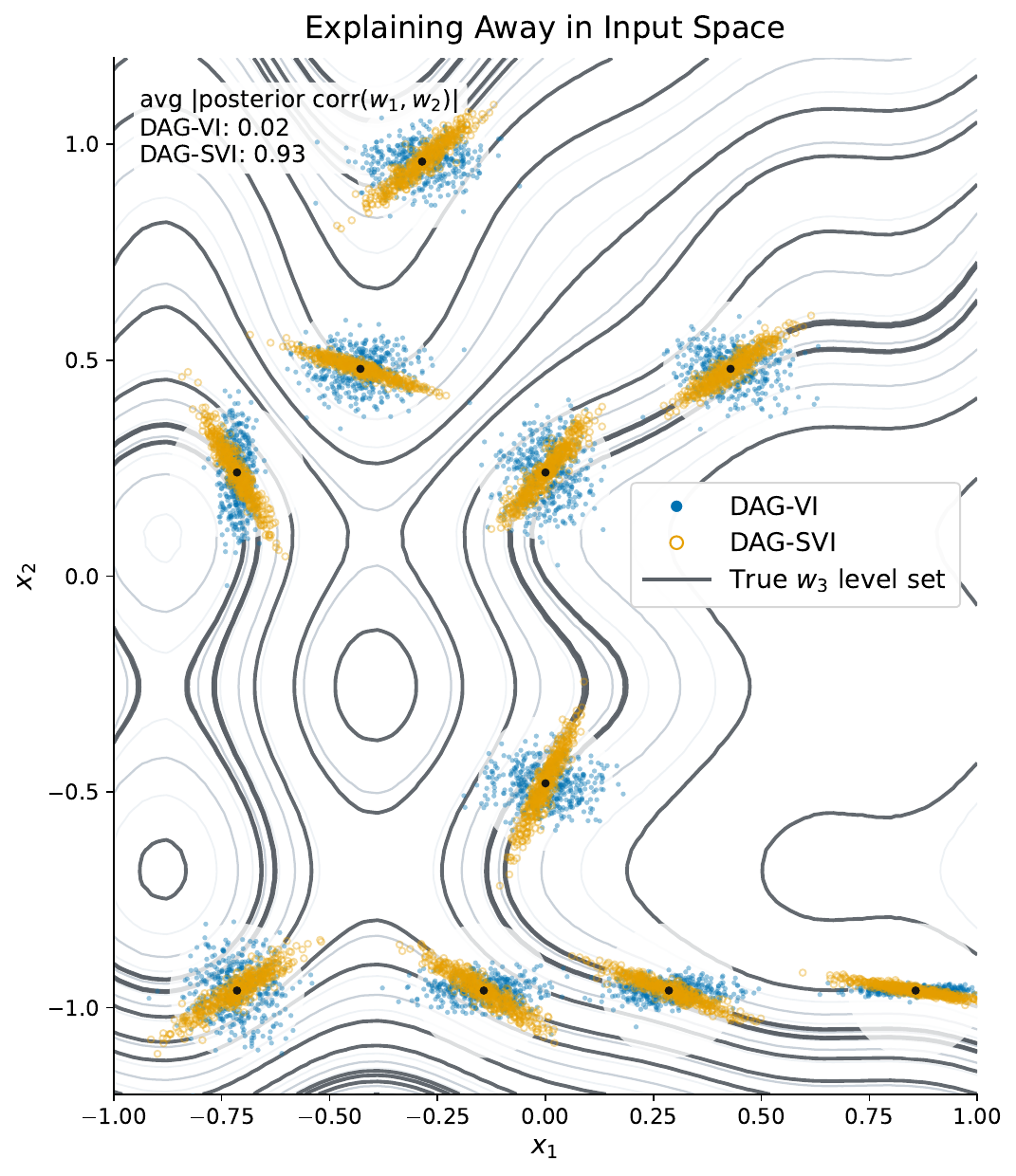}
    \end{minipage}
    \hfill
    \begin{minipage}[t]{0.48\linewidth}
        \centering
        \includegraphics[width=\linewidth]{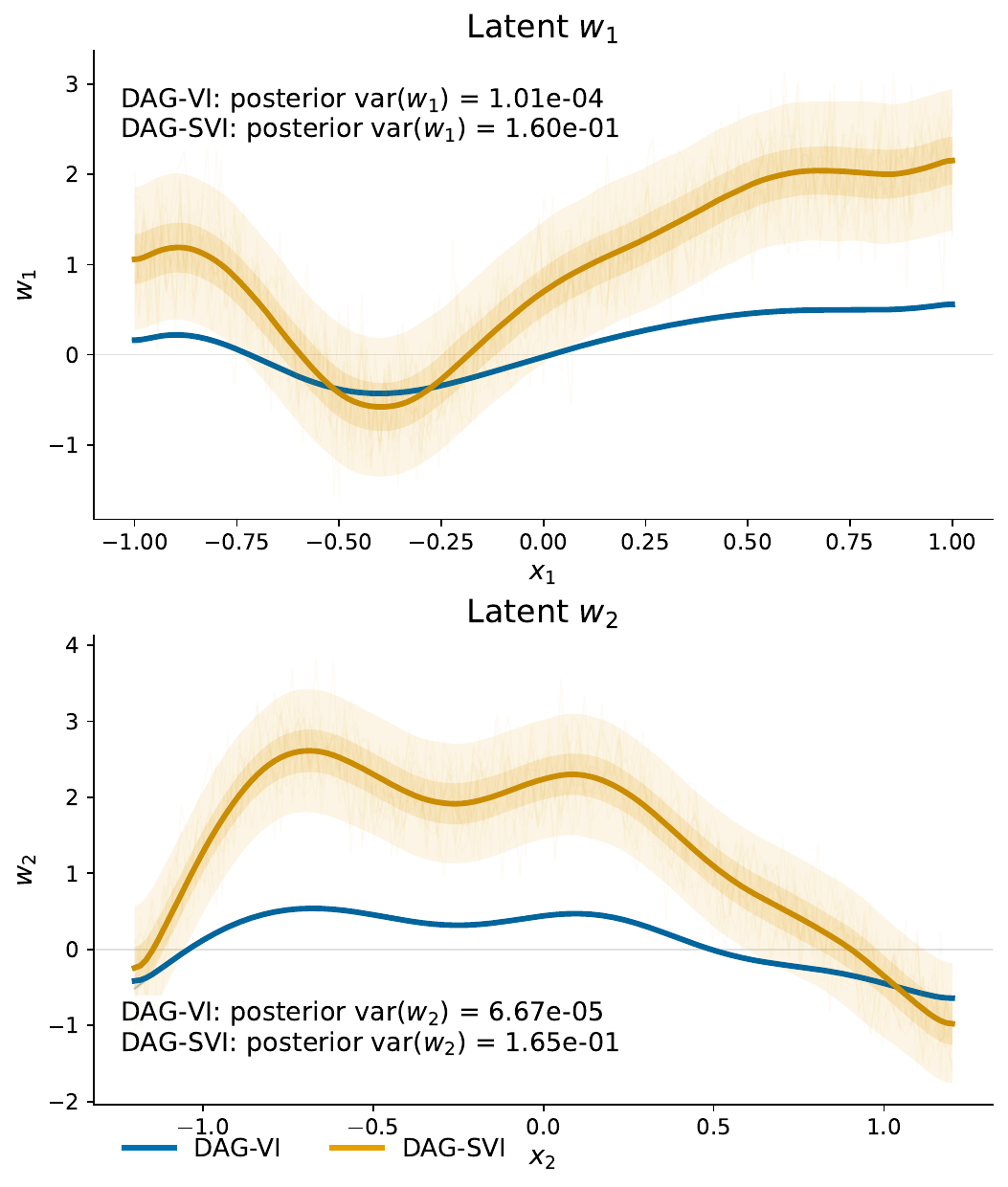}
    \end{minipage}

    \caption{Explaining-away and compositional uncertainty in the synthetic
V-structure (Section~\ref{cu_ea}). \textbf{Left:} projected and rescaled
local posterior samples of $(A,B)$ in input space. Under DAG-SVI, samples
concentrate along the true $C$-level set, recovering the explaining-away
geometry; under DAG-VI, they remain nearly isotropic. Rescaling is used
only to expose the geometric behaviour of the two posteriors on a
comparable scale---the raw variability of DAG-VI is in fact very small, as
shown on the right. \textbf{Right:} DAG-SVI retains uncertainty over the
latent factorisation of $C$, whereas DAG-VI collapses to a single
point-like representation.}
    \label{fig:explaining-away-cu}
\end{figure}
\clearpage

\subsection{Sachs Flow Cytometry Experiment}\label{app:sachs_exp}

\paragraph{Interpolation and extrapolation tasks on Sachs.}
For the interpolation task, we use the standard random 80/20 train--test split
described in the main text.
For the extrapolation task, we adapt the projection-based protocol of
\cite{lindinger2020} to the Sachs DAG. This induces a train--test split in
which the test set lies farther from the training support, requiring
generalization beyond the region covered by the training data. We use 32 inducing points for each latent node. 
Consider
\[
x_i = (\mathrm{PKC}_i,\mathrm{Plcg}_i)\in\mathbb R^2
\]
as the observed inputs with index \(i\). We standardize these two coordinates
over the full dataset, sample a random unit direction \(w\in \mathbb R^2\),
and project each case onto the scalar coordinate
\( 
z_i=\tilde x_i^\top w.
\)
We then order the observations by \(z_i\), train on the central \(80\%\), and
test on the lower and upper \(10\%\) tails, yielding an 80/20 split with train
and test separated along a one-dimensional projection of the input space. Note that, as in
the interpolation task, PKA and Raf remain observed in both training and test
sets. We report results averaged over five random projection seeds.

\paragraph{Model training and evaluation.} All nodes use an RBF kernel initialised with an ARD lengthscale of $0.1$, and observed nodes have a Gaussian likelihood with noise initialised to $0.01$.  We use Adam \citep{kingma2015adam} with a fixed learning rate of $1e-3$. For both DAG-VI and DAG-SVI we first train node-wise with a meanfield approximate posterior per node for $1000$ steps.  And then jointly train for $10000$ steps. In the joint training phase we hold the inducing locations and likelihood noise for $40\%$ of the steps. For the extrapolation task, we keep the same model and optimisation
settings, but train both methods jointly for \(10000\) steps without the
node-wise pretraining stage.

\begin{table}[ht]
\centering
\small
\caption{Full test results for the Sachs extrapolation task, averaged over five
random projection splits. Lower is better for RMSE, CRPS, and NLPD, while
higher is better for PICP.}
\label{tab:sachs_extrapolation_full}
\begin{tabular}{lcccc}
\toprule
Model & RMSE $\downarrow$ & CRPS $\downarrow$ & NLPD $\downarrow$ & PICP $\uparrow$ \\
\midrule
DAG-VI  & $0.7371 \pm 0.0655$ & $0.3252 \pm 0.0232$ & $0.9104 \pm 0.0985$ & $0.7980 \pm 0.0139$ \\
DAG-SVI & $\mathbf{0.6423 \pm 0.0307}$ & $\mathbf{0.2995 \pm 0.0065}$ & $\mathbf{0.8274 \pm 0.0496}$ & $\mathbf{0.8395 \pm 0.0115}$ \\
\bottomrule
\end{tabular}
\end{table}

Here RMSE and CRPS are the standard metrics already defined in the main text,
while NLPD denotes the negative log predictive density of the posterior
predictive distribution on the test target and PICP the empirical prediction interval coverage
probability.

\clearpage
\subsection{Heavy-ion collision}\label{app:heavyion}
\paragraph{Model training.}
We use the elicited multi-fidelity DAG of Figure~\ref{fig:heavy_ion_dag} on a
shared nine-dimensional input space, with two lower-fidelity latent nodes
\(L_1\) and \(L_2\) feeding the high-fidelity node \(H\). In all protocols,
\(L_1\) and \(L_2\) use \(200\) inducing points each, while \(H\) uses all
available high-fidelity training locations in the current split or fold:
\(25\) on the published split, \(20\) in each fold of the repeated 5-fold
protocol, and \(90\) in each fold of the pooled 10-fold protocol. Inducing
locations are fixed after initialization.

For \(H\), we consider two initialization schemes. In the \texttt{predmean}
variant, inducing inputs are initialized at sampled high-fidelity input
locations augmented with the predictive means of pre-trained lower-fidelity
models. In the \texttt{free} variant, the high-fidelity inducing inputs are
initialized at sampled high-fidelity input locations, with the lower-fidelity
coordinates initialized freely rather than from lower-fidelity predictive
means. We follow the multifidelity inducing-input construction described by
\citep{cutajar2019} (Section~4.4), which is motivated by the practical
difficulty of freely optimizing such augmented inducing representations. All
runs use full-batch Adam in double precision with learning rate \(0.01\),
likelihood learning-rate multiplier \(0.1\), and \(10\) Monte Carlo samples
per ELBO estimate.

DAG-VI first pre-trains each node independently as an SVGP and then optimizes
the joint DAG objective. DAG-SVI follows the same nodewise warmup, then trains
an auxiliary mean-field model for \(2000\) joint steps to initialize the
structured posterior, and finally optimizes the structured objective. On the
published split used in Table~\ref{tab:heavy_ion_results}, both methods use
\(500\) pretraining steps per node; DAG-VI then runs \(12000\) joint steps,
whereas DAG-SVI runs \(10000\) structured steps after the \(2000\)-step
mean-field warm start. Results are reported over five random seeds.

In the high-fidelity-scarce protocol of
Table~\ref{tab:heavy_ion_results_cv}, we perform \(10\) repeated 5-fold
cross-validation runs over the \(25\) high-fidelity observations only, using
\(2000\) pretraining steps for \(L_1\) and \(L_2\), and \(10000\) joint steps
for both methods; fold-wise metrics are averaged within each repetition and
then summarized across the \(10\) split seeds. In the pooled joint 10-fold
protocol of Tables~\ref{tab:joint_cv_energy_h_metrics}
and~\ref{tab:joint_cv_energy_h_metrics_predmean}, we use \(200\), \(200\), and
\(100\) observations at \(L_1\), \(L_2\), and \(H\), respectively, with
held-out observations across all fidelities, \(3000\) pretraining steps for
\(L_1\) and \(L_2\), \(1000\) pretraining steps for \(H\), and the same
\(12000\)-step DAG-VI / \(2000+10000\)-step DAG-SVI budgets as above. The main
paper reports the \texttt{free} initialization for this protocol, while
Table~\ref{tab:joint_cv_energy_h_metrics_predmean} gives the corresponding
validation run with \texttt{predmean} initialization. We use the \texttt{free}
initialization only in the pooled 10-fold protocol, where the larger training
set supports this more flexible procedure.

\paragraph{Predictive metrics.}
RMSE and CRPS are reported with their standard definitions, see \cite{ji2024}); it is worth noting
that, following \cite{ji2024}, the normalized RMSE (N-RMSE) is defined as
\(1-\mathrm{RMSE}/\mathrm{RMSE}_{\mathrm{base}}\), where
\(\mathrm{RMSE}_{\mathrm{base}}\) is the RMSE of a constant sample-mean
baseline predictor, so that larger values indicate better performance.
In the published-split experiment, the baseline mean is computed from the
available training targets in that protocol; in the cross-validation protocols,
it is recomputed from the corresponding training fold.

\paragraph{Joint predictive metrics.}
Let \(F\) denote the joint posterior predictive distribution of the concatenated held-out vector
\[
y_{\mathrm{val}}
=
\big[
\{L_1(x_i)\}_{i\in\mathcal{I}_{L_1}},
\{L_2(x_i)\}_{i\in\mathcal{I}_{L_2}},
\{H(x_i)\}_{i\in\mathcal{I}_{H}}
\big].
\]
We evaluate the multivariate weighted energy score
\[
\mathrm{ES}_W(F,y_{\mathrm{val}})
=
\mathbb{E}\,\|X-y_{\mathrm{val}}\|_W
-
\frac{1}{2}\mathbb{E}\,\|X-X'\|_W,
\qquad
X,X' \stackrel{iid}{\sim} F,
\]
with
\[
\|z\|_W
=
\left(
\frac{1}{n_{L_1}\sigma_{L_1,\mathrm{tr}}^2}\|z_{L_1}\|_2^2
+
\frac{1}{n_{L_2}\sigma_{L_2,\mathrm{tr}}^2}\|z_{L_2}\|_2^2
+
\frac{1}{n_H\sigma_{H,\mathrm{tr}}^2}\|z_H\|_2^2
\right)^{1/2},
\]
where \(n_{L_1}, n_{L_2}, n_H\) are the held-out block sizes in the current fold, and
\(\sigma_{L_1,\mathrm{tr}}, \sigma_{L_2,\mathrm{tr}}, \sigma_{H,\mathrm{tr}}\) are the empirical standard deviations computed from the observed training data of that fold only.

\medskip
\noindent\textbf{Results on the other two experiments.}
\par\smallskip
\begin{table}[ht]
\centering
\small
\setlength{\tabcolsep}{6pt}
\renewcommand{\arraystretch}{1.05}
\begin{tabularx}{\columnwidth}{@{}>{\raggedright\arraybackslash}Xccc@{}}
\toprule
Model & RMSE ($\times10^{-2}$) $\downarrow$ & N-RMSE $\uparrow$ & CRPS ($\times10^{-2}$) $\downarrow$ \\
\midrule
d-GMGP            & $2.44 \pm 0.30$ & $0.754 \pm 0.032$ & $1.68 \pm 0.29$ \\
DAG-VI            & $2.50 \pm 0.34$ & $0.749 \pm 0.032$ & $1.48 \pm 0.20$ \\
DAG-SVI           & $\mathbf{1.74} \pm 0.26$ & $\mathbf{0.826} \pm 0.024$ & $\mathbf{1.06} \pm 0.19$ \\
\bottomrule
\end{tabularx}
\caption{Predictive performance on the heavy-ion collision emulation task using 200 observations at each lower fidelity and 25 high-fidelity observations, with validation folds formed on the high-fidelity set only. Results are reported over 10 repeated 5-fold cross-validation runs as mean\,$\pm$\,std. RMSE and CRPS are reported in units of $\times10^{-2}$; lower is better for both, while higher is better for N-RMSE.}
\label{tab:heavy_ion_results_cv}
\end{table}

\begin{table}[H]
\centering
\small
\setlength{\tabcolsep}{6pt}
\renewcommand{\arraystretch}{1.05}
\begin{tabular}{lcccc}
\toprule
Model & Weighted ES $\downarrow$ & H RMSE ($\times 10^{-2}$) $\downarrow$ & H CRPS ($\times 10^{-2}$) $\downarrow$ & H N-RMSE $\uparrow$ \\
\midrule
DAG-VI   & $0.1624 \pm 0.0352$ & $1.15 \pm 0.58$ & $0.62 \pm 0.27$ & $0.883 \pm 0.051$ \\
DAG-SVI  & $\mathbf{0.1583} \pm 0.0317$ & $\mathbf{1.06} \pm 0.55$ & $\mathbf{0.58} \pm 0.26$ & $\mathbf{0.891} \pm 0.051$ \\
\bottomrule
\end{tabular}
\caption{Joint 10-fold cross-validation results on the multi-fidelity heavy-ion task using the \texttt{free} initialization for the high-fidelity inducing inputs in both DAG-VI and DAG-SVI. Results are reported as mean\,$\pm$\,std across folds. Lower is better for weighted energy score, RMSE, and CRPS; higher is better for N-RMSE.}
\label{tab:joint_cv_energy_h_metrics}
\end{table}

\begin{table}[H]
\centering
\small
\setlength{\tabcolsep}{6pt}
\renewcommand{\arraystretch}{1.05}
\begin{tabular}{lcccc}
\toprule
Model & Weighted ES $\downarrow$ & H RMSE ($\times 10^{-2}$) $\downarrow$ & H CRPS ($\times 10^{-2}$) $\downarrow$ & H N-RMSE $\uparrow$ \\
\midrule
DAG-VI   & $0.1635 \pm 0.0363$ & $1.15 \pm 0.56$ & $0.62 \pm 0.26$ & $0.883 \pm 0.050$ \\
DAG-SVI  & $\mathbf{0.1621} \pm 0.0353$ & $\mathbf{1.10} \pm 0.55$ & $\mathbf{0.59} \pm 0.26$ & $\mathbf{0.888} \pm 0.050$ \\
\bottomrule
\end{tabular}
\caption{Joint 10-fold cross-validation results on the multi-fidelity heavy-ion task using the \texttt{predmean} initialization for the high-fidelity inducing inputs in both DAG-VI and DAG-SVI. Results are reported as mean\,$\pm$\,std across folds. Lower is better for weighted energy score, RMSE, and CRPS; higher is better for N-RMSE.}
\label{tab:joint_cv_energy_h_metrics_predmean}
\end{table}

\medskip
\newpage
\noindent\textbf{Pretraining effect.}
\par\smallskip
\begin{figure}[H]
    \centering
    \includegraphics[width=\linewidth]{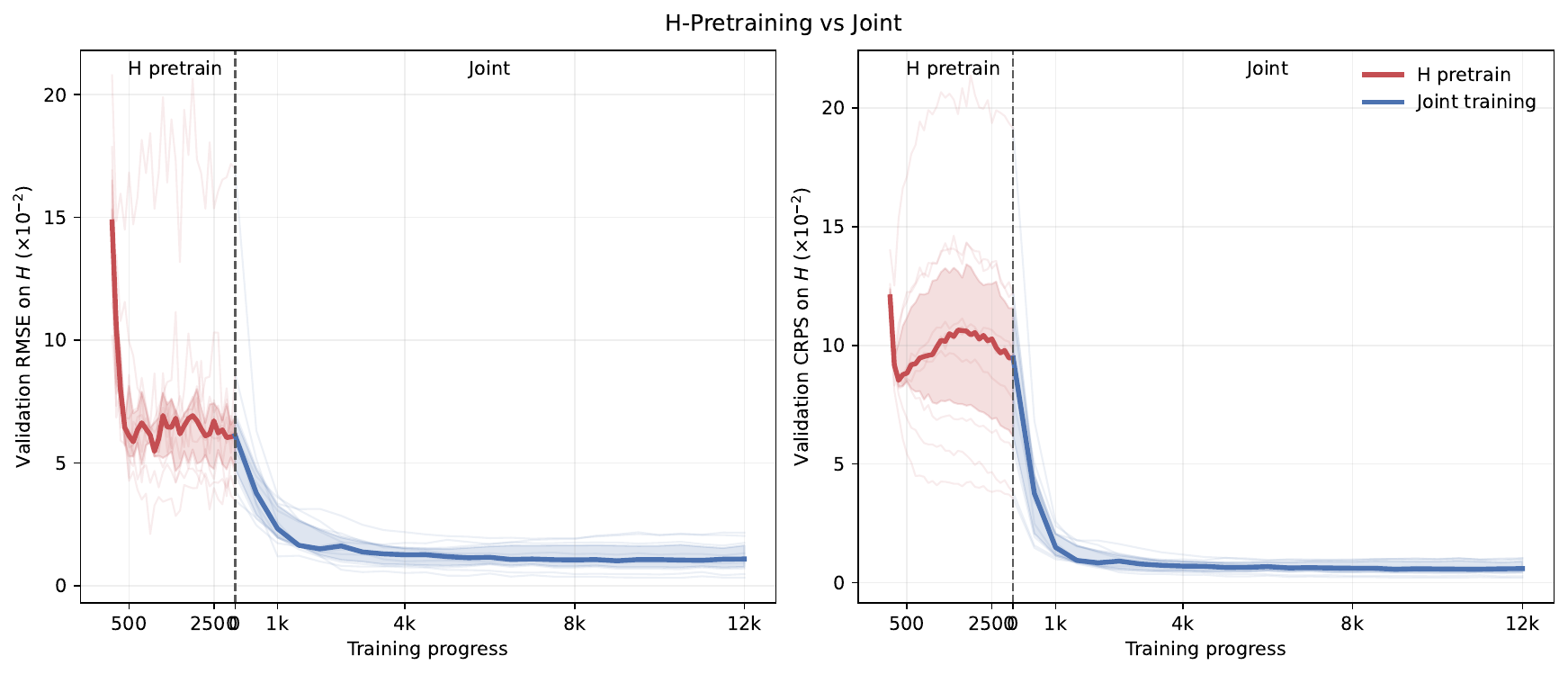}
    \caption{Effect of high-fidelity pretraining in the pooled 10-fold heavy-ion protocol. Validation RMSE and CRPS on \(H\) are shown during local \(H\)-pretraining and subsequent joint training, aggregated over folds. The same pretraining strategy is used for both DAG-VI and DAG-SVI: nodewise pretraining phase stabilizes the high-fidelity branch before full DAG optimisation. The joint training that targets the DAG-DGP variational objective is fundamental to achieve better and stable performance in the task.}

    \label{fig:pretraining-effect}
\end{figure}
\clearpage

\end{document}